\def\eqref#1{equation~\ref{#1}}
\def\floor#1{\lfloor #1 \rfloor}
\def\1{\bm{1}}
\DeclareMathAlphabet{\mathsfit}{\encodingdefault}{\sfdefault}{m}{sl}
\SetMathAlphabet{\mathsfit}{bold}{\encodingdefault}{\sfdefault}{bx}{n}
\newcommand{\softmax}{\mathrm{softmax}}
\newcommand{\Coyote}{\textsc{BaseConv}}
\newcommand{\Task}{\textsc{Mqar}}
\definecolor{codegreen}{rgb}{0,0.6,0}
\definecolor{codegray}{rgb}{0.5,0.5,0.5}
\definecolor{codepurple}{rgb}{0.58,0,0.82}
\definecolor{backcolour}{rgb}{1,1,1}
\lstdefinestyle{mystyle}{
    backgroundcolor=\color{backcolour},   
    commentstyle=\color{codegreen},
    keywordstyle=\color{magenta},
    numberstyle=\tiny\color{codegray},
    stringstyle=\color{codepurple},
    basicstyle=\ttfamily\footnotesize,
    breakatwhitespace=false,         
    breaklines=true,                 
    captionpos=b,                    
    keepspaces=true,                 
    numbers=left,                    
    numbersep=5pt,                  
    showspaces=false,                
    showstringspaces=false,
    showtabs=false,                  
    tabsize=2,
}
\algrenewcommand\algorithmicrequire{\textbf{Input:}}
\algrenewcommand\algorithmicensure{\textbf{Output:}}
\newcommand*\samethanks[1][\value{footnote}]{\footnotemark[#1]}
    \newlength{\defbaselineskip}
\newcolumntype{Y}{>{\hsize=.7\hsize}X}
\newcolumntype{Z}{>{\hsize=1.3\hsize}X}
\def\@copyrightspace{\relax}
\def\@myauthornotes{}
\def\myauthornote#1{%
  \if@ACM@anonymous\else
    \g@addto@macro\addresses{}%
    \g@addto@macro\@myauthornotes{%
      \stepcounter{footnote}\footnotetext{#1}}%
  \fi}
    \title{Zoology: Measuring and Improving  Recall in Efficient Language Models}
    \author[$\dagger$]{Simran Arora\thanks{Equal contribution, Random ordering by coin toss.}}
    \author[$\dagger$]{Sabri Eyuboglu\samethanks}
    \author[$\triangle$]{Aman Timalsina}
    \author[$\ddagger$]{Isys Johnson}
    \author[$\dagger$]{Michael Poli}
    \author[$\dagger$]{James Zou}
    \author[$\ddagger$]{Atri Rudra}
    \author[$\dagger$]{Christopher R{\'e}}
    \affil[$\dagger$]{Stanford University} 
    \affil[$\ddagger$]{University at Buffalo} \affil[$\triangle$]{Purdue University}\vspace{4pt}
    \affil[$\ddagger$]{\texttt{\{simran, eyuboglu, poli, jamesz, chrismre\}@cs.stanford.edu}}
    \affil[$\ddagger$]{\texttt{\{isysjohn,atri\}@buffalo.edu}}
    \affil[$\triangle$]{\texttt{\{atimalsi\}@purdue.edu}}
\begin{document}
\maketitle

\begin{abstract}  
Attention-free language models that combine \textit{gating} and \textit{convolutions} are growing in popularity due to their efficiency and increasingly competitive performance.
To better understand these architectures, we pretrain a suite of 17 attention and \textit{gated-convolution} language models, finding that SoTA gated-convolution architectures still underperform attention by up to 2.1 perplexity points on the Pile. 
In fine-grained analysis, we find 82\% of the gap is explained by each model's ability to recall information that is previously mentioned in-context, \textit{e.g.} \textit{Hakuna Matata means no worries Hakuna Matata it means no $\rightarrow$ \texttt{??}}.
On this task, termed \textit{associative recall}, we find that attention outperforms gated-convolutions by a large margin: a 70M parameter attention model outperforms a 1.4 billion parameter gated-convolution model on associative recall.
This is surprising because prior work shows gated convolutions can perfectly solve synthetic tests for AR capability.  
To close the gap between synthetics and real language, we develop a new formalization of the task called multi-query associative recall (${\Task}$) that better reflects actual language.
We perform an empirical and theoretical study of ${\Task}$ that elucidates differences in the parameter-efficiency of attention and gated-convolution recall.
Informed by our analysis, we evaluate simple convolution-attention hybrids and show that hybrids with input-dependent sparse attention patterns can close 97.4\% of the gap to attention, while maintaining sub-quadratic scaling. Our code is accessible at: \url{https://github.com/HazyResearch/zoology}.
\end{abstract}

\section{Introduction}

Two advances -- gating and long convolutions -- have catalyzed a wave of excitement around \textit{gated-convolution} language models~\citep[inter alia.]{dao2022hungry, ma2022mega, wang2022pretraining,poli2023hyena}. These architectures combine gating (\textit{i.e.} element-wise multiplication) with \textit{long} convolutional filters (\textit{i.e.} the length of the sequence) to enable interactions between distant tokens~\citep{dauphin2017language,gu2021efficiently}. Recent work suggests that these models, which exhibit better asymptotic scaling in input sequence length than attention, can match attention in language modeling quality~\citep{poli2023hyena, peng2023rwkv, fu2023monarch}.

We pretrain and evaluate 17 language models across 4 scales (70M - 1.4Bn) and 5 architectures on the same data and infrastructure setup. Surprisingly, we find that there is still a perplexity gap of up to 2.1 points between state-of-the-art convolution-based architectures and strong Transformer baselines  
in language modeling on the Pile (Table \ref{table:ppl-slices}). Through fine-grained analysis, we find a single, simple capability is responsible for much of the gap: recalling information seen in-context.
Consider the example below, where some tokens can be predicted by recalling an earlier association:
\begin{figure}[t]
    \includegraphics[width=\linewidth]{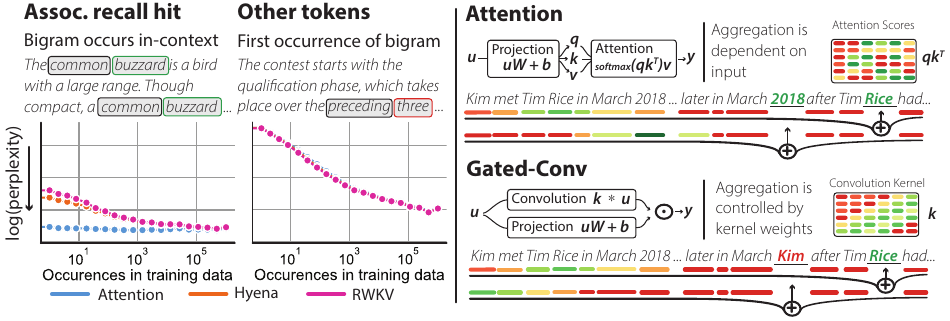}
    \caption{\textbf{The associative recall gap.} We stratify Pile validation data for models from each architecture class by whether or not the predicted token is a previously seen bigram in the example context. We plot validation perplexity versus the bigram's frequency in the training data. We can clearly see that the gap is localized to examples where bigrams occur, and are seen rarely during in training. }
    \label{fig:main}
    \vspace{-3mm}
\end{figure}

\vspace{-1mm}
$$
\underbrace{\text{Hakuna Matata!}}_{\mathclap{\textbf{Key-Value}}} \text{It means} \underbrace{\text{no  worries}}_{\mathclap{\textbf{Key-Value}}} \text{for the rest of your days!} \underbrace{\text{Hakuna}}_{\mathclap{\textbf{Query}}} \underbrace{\text{Matata}}_{\mathclap{\textbf{AR Hit}}}
\text{means} \underbrace{\text{no}}_{\mathclap{\textbf{Query}}} \rightarrow \underbrace{\text{worries}}_{\mathclap{\textbf{AR Hit}}}
$$

We find that errors on ``AR Hits'' (\textit{e.g.} \textit{worries} above) account for 82\% of the perplexity gap to attention on average, despite only representing $6.4\%$ of all tokens in the Pile dataset.\footnote{We measure AR on real data using a simple heuristic: $n$-gram tokens that are repeated in context. 
} A 70M parameter Transformer can predict AR Hits better than a 1.4Bn parameter Hyena gated convolution model ($20\times$ larger) (Table \ref{table:ppl-slices}, Table \ref{table:ppl-slices-large}). The AR gap persists at the 7Bn parameter scale when comparing RWKV and Llama-2 (\Cref{sec:scaling}). 

This task, \textit{associative recall} (AR), has  a long history in machine learning  ~\cite[inter alia.]{graves2014neural,ba2016using} (\cref{sec:related-mech}). 
Prior work argues that a model's ability to perform AR is predictive of in-context learning quality \citep{elhage2021mathematical, olsson2022context}. As a result, AR has been adopted as a tool in designing new  architectures and prior work has shown that gated convolution architectures match attention on synthetic AR tasks used as proxies for real language modeling \citep{dao2022hungry, poli2023hyena, lutati2023focus}. For this reason, the downstream AR perplexity gaps are surprising.

Through measuring recall on real data, we learn the key disparity is that prior synthetic formulations assume there is \textit{one query} per input, at a \textit{fixed position} in the sequence, where tokens come from a small vocabulary size (\textit{e.g.} $|V| <$ 50, less than model dimension). Yet,  
language modeling often requires performing  \textit{multiple recalls} (\textit{e.g.} for both ``Hakuna Matata'' and ``no worries'' above, in a single forward pass), at \textit{varying positions}, with tokens from a large vocabulary  (larger than model dimension). 
We thus propose the study of \textit{multi-query} AR (${\Task}$). 
Compared to the prior AR formulations, ${\Task}$ better captures the persisting quality gaps on synthetic and real world data (Section \ref{sec:theory-explanation}). However, it is not clear why {$\Task$} elucidates the gap.

\textbf{We formalize the the {$\Task$} gap across the architecture classes.} Gated-convolutions process variable sequences using \textit{fixed} filters defined by the model weights rather than as functions of the input data (See Figure \ref{fig:main}).
We find it is inefficient for gated-convolutions to perform the variable distance token-to-token interactions (e.g. at a distance of $10$ tokens for \textit{Hakuna Matata} and $9$ for \textit{no worries}) in a parameter and FLOPs efficient way compared to attention. Attention achieves input-dependence since it computes all token-to-token interactions when determining how to mix information in the sequence. We formally describe the limitation of gated convolutions via theory and experiments in the rest of the paper. 

We first introduce a simple operator, ${\Coyote}$, that can provably simulate the class of architectures built from gating and convolution primitives. This includes architectures such as H3, Hyena, RWKV, and RetNet. 
We show with theory and experiments that the model dimension for ${\Coyote}$ (and thus the aforementioned architectures) to solve {$\Task$} grows with the input sequence length
(Theorem \ref{thm: indep-genar}) while attention can solve {$\Task$} with model dimension independent of sequence length (Proposition \ref{prop: attention-ar}, \Cref{sec:empirical-capacity}).\footnote{Note that the {\em runtime} of Attention is quadratic, but gated convolutions is near linear.} In practice, gated-convolutions appear to encode approximate solutions to AR that support only a subset of token-interaction distances (which affects the ability to identify matching keys given $\Task$ queries). While theoretically analyzing deep learning architectures is challenging \citep{hahn2020theoretical, merrill2022saturated, keles2023on}, the fact that gating and convolutions are polynomial operations facilitates our precise analysis.

\textbf{We show that input-dependent sequence mixing is important to solve ${\Task}$ efficiently.} The scaling for gated convolutions with input-independent convolutions is undesirable so we next ask which architectural choices close the gap. We show that data-dependent sequence mixing helps an architecture solve ${\Task}$ efficiently (Theorem \ref{thm: dep-genar}). The model needs to adapt the sequence mixing weights based on the token-interaction distances required for each new example. 

Several architectural modifications could satisfy the input-dependence property. Based on our analysis, we evaluate \textit{minimal modifications} that only add input-dependent operations on exact-match repeated bigram tokens (our heuristic for measuring tokens that require recall). Note that language models often need to perform recall between fuzzier substrings (e.g. synonymous bigrams) or higher-dimensional concepts. However, we show that simply inserting input-dependent operator --- e.g., a convolution filter that shifts the sequence based on the bigram positions or sparse attention placed only on repeated bigram positions --- to the {\Coyote} architecture at $<10\%$ of layers suffices to outperform the Transformer baseline on Pile language modeling (\Cref{sec:sec5-method}) and succeed on {$\Task$} synthetic tasks (\Cref{sec:empirical-capacity}). Moreover, this closes $>80\%$ of the {$\Task$} perplexity gap on the Pile validation data. Finally, we prototype solutions that \textit{learn} the positions at which to use input-dependent operations, validating that they also close the gap to attention on the Pile.

In this work, we analyze the increasingly-popular convolution-based architectures and identify fundamental limitations. We hope the ${\Task}$ framework and our analysis of the role of input-dependence for ${\Task}$ inform the design of future architectures. We release our code for reproducability: \url{https://github.com/HazyResearch/zoology}.

\section{Background and Preliminaries}

\label{sec:ar-background}
\begin{table}[]
\centering
\small
\begin{tabular}{@{}lcccccc@{}}
\toprule
 &  &  & \textbf{Overall} & \multicolumn{2}{c}{\textbf{Slices}} & \textbf{\% of gap due to} \\ 

Model  & Param (M) & TFLOPs  & & AR Hits & Other Tokens & AR Hits\\
\midrule
Attention & 125 & 2.46 & \textbf{11.01} (2.40) & \textbf{2.16 (0.77)} & 12.45 (2.52) & ---  \\
Long Conv & 128 & 1.74 &  16.98 (2.83) & 25.62 (3.24) & 16.46 (2.80) & 40.1\% \\
H3        & 168 & 2.55 &12.06 (2.49) & 6.75 (1.91) & 12.60 (2.53) & 88.4\% \\
Hyena     & 158 & 2.41 &   11.60 (2.45) & 5.00 (1.61) & 12.28 (2.51) & 100.0\% \\
RWKV      & 169 & 2.08 &  11.64 (2.45) & 5.70 (1.74) & 12.29 (2.51) & 100.0\% \\
\midrule
Attention & 360 & 6.23 &  \textbf{9.44} (2.25) & \textbf{1.98 (0.69)} & 10.62 (2.36) &  ---  \\
Long Conv & 360  & 4.08 & 13.13 (2.57) & 13.27 (2.59) & 13.12 (2.57) & 40.5\% \\
H3  & 357 & 4.85 & 10.38 (2.34) & 4.81 (1.57) & 11.00 (2.40) & 65.8\% \\
Hyena     & 358 & 5.03 & 10.07 (2.31) & 3.83 (1.34) & 10.75 (2.38) & 98.2\% \\
RWKV      & 351 & 4.31 & 9.79 (2.28) & 3.82 (1.34) & 10.51 (2.35) & 100.0\%  \\
\bottomrule
\end{tabular}
\caption{\textbf{Language modeling validation perplexity on the Pile.} After pretraining on 10B tokens of Pile data, we report log perplexity with negative log-likelihood in parentheses.  We report overall scores, and for the AR vs. non-AR token slices defined in \Cref{sec:sec3-downstream}. FLOPs are computed for inputs of 2048 tokens based on the equations in Appendix \ref{app:expdetails}. A table with additional results is in \Cref{table:full-ppl-slices}.
}
\label{table:ppl-slices}
\vspace{-5mm}
\end{table}

In this section, we describe the setting, introduce notation, and discuss important related work. See \cref{app:related-work} for a broader discussion of related work. 

\textbf{Language modeling.} We study auto-regressive language models trained on the task of next token prediction~\citep{pile}. Given a sequence of $N$ tokens $\bm{x} = \{x_0, ..., x_{N-1}\}$ drawn from a vocabulary $C$, the model outputs a probability distribution over $C$ for each of $x_i$ given the preceding tokens  $\mathrm{P}(x_{i}|x_0, ..., x_{i-1})$. 
The language models in this work share the same high-level architecture. First, each token $x_i$ in the input is embedded in $d$-dimensional space yielding a matrix $\bm{u} \in \mathbb{R}^{N \times d}$. Next, $\bm{u}$ is passed through a stack of $L$ layers, with layer $\ell$ outputting $\bm{u}^\ell \in \mathbb{R}^{N \times d}$. Finally, the embeddings $\bm{u}^L$ output by the last layer are mapped back to logits over $C$ with a linear projection. Each layer transforms $\bm{u}$ with a \textit{sequence mixer} (\textit{e.g.} attention) followed by a \textit{state mixer} (\textit{e.g.} MLP). Unless specified, our models adhere to the implementation details of the LLaMA architecture (except  the sequence mixer, which we vary throughout)~\citep{touvron2023llama}.

\textbf{Sequence mixers.}
\label{sec:prelim-architectures}
Our work evaluates how the choice of sequence mixer affects the quality and behavior of language models. 
Most sequence mixers aggregate the token embeddings in a sequence via a weighted sum. For example, $\bm{y}[i, :] = \sum_{j=0}^{N - 1} \omega(i, j) \bm{u}[j, :])$, where $\omega$ is a function outputting scalar weights. 
We study the differences between two classes of sequence mixers, discussed next.

\textit{Attention.}~\citep{vaswani2018attention} We review attention, the \textit{de facto} language model sequence mixer. 
An attention layer is parameterized by three learnable projection matrices $\mathbf{Q}, \mathbf{K}, \mathbf{V} \in \mathbb{R}^{N \times d}$. 
To compute the output $\bm{y}$ given inputs $\bm{u}$, attention applies the projections to the input: $\bm{q} = \mathbf{Q}\mathbf{u}$, $\bm{k} = \mathbf{K}\mathbf{u}$, $\bm{v} = \mathbf{V}\mathbf{u}$. The projected embeddings are aggregated according to:  $\bm{y} = \text{softmax}(\frac{1}{\sqrt{d}}\bm{q}\bm{k}^\top) \bm{v}$ (shown in \cref{fig:main}) in $\mathcal{O}(N^2d)$ time, which is expensive for long sequences (large $N$).

\textit{Gated-convolutions.}
\label{sec:models}
A more efficient alternative to attention is the convolution, which is defined as $\bm{y}[i, :] = \sum_{j=0}^{N-1} \mathbf{k}[j, :] \odot \bm{u}[i - j, :]$  where the kernel $\mathbf{k} \in \mathbb{R}^{N \times d}$ is a learnable weight matrix.
Convolutions can be computed in time $O(Nd \log N)$ using the Fast Fourier Transform (FFT) and the convolution theorem: $\bm{y} = \mathbf{u} \ast \mathbf{k} = \mathrm{FFT^{-1}(FFT(\mathbf{u}) \odot FFT(\mathbf{k}))}$ ~\citep{cooley1965algorithm}. 
While purely convolutional architectures match or outperform attention in certain domains (\textit{e.g.} vision \citep{tay2022efficient, gu2021efficiently}, audio \citep{goel2022its}, and time-series \citep{zhang2023effectively}), they trail by a large margin on language~\citep{dao2022hungry}. 

Recent work has closed much of this gap by combining convolutions with \textit{gating} (\textit{i.e.} elementwise multiplication of the input with a transformed version of itself). \textit{Gating} was first proposed in the context of convolutions by \citet{dauphin2017language}, but more recently it has played a central role in state-of-the-art sub-quadratic language models, some of which claim attention-level quality~\cite[inter alia]{dao2022hungry, wang2022pretraining, poli2023hyena, peng2023rwkv,zhai2021attention,fu2023monarch}.
Though they may appear different on the surface, these sub-quadratic architectures can all be expressed in terms of convolutions and gating. See \cref{sec:models_appen} for detailed descriptions of these architectures and their similarities. This work analyzes the differences between attention and the broad class of \textit{gated convolution} mixers.

\section{Identifying the associative recall problem}
\label{sec:sec3-downstream}

In this section, we measure the perplexity gap between gated convolutions and attention and show that single skill termed \textit{associative recall} accounts for 82\% of the gap on average. This is surprising because prior work shows gated convolutions solve a synthetic version of associative recall perfectly. 
Informed by our analysis, we define a new synthetic formulation that better reflects real data. This task facilitates our analysis of why the gap occurs (Section \ref{sec:theory-explanation}) and how to fix it (Section \ref{sec:sec5-method}).

\subsection{Fine-grained analysis of downstream quality}
\paragraph{Perplexity Gap} We pretrain a suite of large language models with different sequence mixers across 3  scales (70M-360M) for 10B tokens on the standard Pile language modeling setting using the EleutherAI GPT-NeoX training infrastructure~\citep{pile, gpt-neox-library}. In the main paper, we compare attention to three state-of-the-art gated convolution sequence mixers:  H3, Hyena, and RWKV~\citep{dao2022hungry,poli2023hyena,peng2023rwkv}.\footnote{RWKV is commonly referred to as an RNN. We show it can be viewed as a convolution (\Cref{sec:rwkv-model}).} 
We further include a pure long-convolution model to underscore the importance of gating. 
In \cref{app:synth-results}, we include results on additional sequence mixers~\citep{hasani2022liquid,sun2023retentive}.
We use a strong Transformer attention baseline with rotary embeddings and SwiGLU MLPs following the Llama architecture \citep{touvron2023llama}. We also use this strong training recipe when training the attention-free sequence mixers. 
 For experimental details and hyperparameters for all architectures see \cref{app:expdetails}.

Across scales, we find that attention outperforms the gated convolutions by at least a third of a perplexity point on average: the minimum gaps are $+2.14$, $+0.59$, $+0.35$ PPL at 70M, 160M, and 360M parameter scales, respectively. We report overall test perplexity in \Cref{table:ppl-slices}. Though these gaps are relatively small on average, models may still perform very differently on different subsets of data~\citep{Eyuboglu2022-qz}.

\paragraph{Associative Recall Perplexity} 
To better understand the differences between attention and gated convolutions, we perform a fine-grained analysis of next token predictions and observe that convolution-based models struggle to recall associations previously seen in context. For example, in \cref{fig:main}, the model must recall the association between ``\textit{Tim}" and the last name ``\textit{Rice}".
Following a long line of prior work, we call this skill \textit{associative recall} (AR)~\citep{willshaw1969non,hopfield1982neural} (see \cref{app:related-work} for an extended discussion of AR's history in machine learning). 
In \cref{sec:pile-examples}, we provide annotated Pile examples to demonstrate the phenomenon qualitatively.

\textit{Quantifying AR performance.} It is challenging to derive a quantitative measure of associative recall performance on the Pile because we don't know which next token predictions in raw text require associative recall.
We use a simple heuristic to identify these tokens, which we refer to as \textit{AR Hits}. An AR Hit is the last token of an $n$-gram repeated in context (\textit{e.g.} the second occurence of $``Rice"$ in \cref{fig:main}). However, some common $n$-grams (\textit{e.g.} \textit{``of the"}) could have been memorized during training, so we factor in the frequency with which an $n$-grams appeared in the training data. This heuristic enables us to scale our analysis to over 10 million tokens of Pile validation data.  

We stratify Pile tokens into two slices based on this heuristic and report perplexity on each in \cref{table:ppl-slices}:
\begin{enumerate}
    \item \textbf{AR Hits}: (6.4\% of tokens) Tokens in the final position of a bigram (a pair of consecutive tokens) which previously appeared in context, but $\leq 1250 \times$ during training. 
    \item \textbf{Other tokens:} (93.6\% of tokens)  Tokens in the final position of a bigram which did not previously appear in context or it appeared  $> 1,250$ times during training. 
\end{enumerate}

In Figure \ref{fig:main}, we visualize these slices by plotting log-perplexity against the frequency with which bigrams appear during training. Strikingly, the gap between attention and gated convolutions is the largest on AR hits with the fewest occurrences. On the other tokens, there is no gap. 

In \Cref{table:ppl-slices}, we also compute the percentage of the difference in perplexity between attention and each model that is due to AR tokens: $\frac{\Delta \log(\phi_\text{AR}) \cdot |T_\text{AR}|}{\Delta \log(\phi) \cdot |T|}$, where $\phi$ is the perplexity and $T$ is the set of tokens in the test set. 
This quantity can also be interpreted as the fraction of the overall gap that would close if a model matched attention on the AR slice. 
We find that the AR slice accounts for 82\% of the average quality gap between the gated convolutions and attention. 

To evaluate the AR capacity of larger models, we train two 1.4 billion parameter attention and Hyena models for 50 billion tokens on the Pile and repeat this analysis (see \cref{table:ppl-slices-large}). 
Strikingly, a 70 million parameter attention model is a full perplexity point better in the AR slice than this 1.4B Hyena model that is 20$\times$ its size (2.41 vs. 3.43 ppl.). 
In \cref{sec:scaling}, we also evaluate open-source RWKV and attention models trained up to 7 billion parameters. We use a controlled semi-synthetic dataset to measure AR capacity and show that RWKV's performance degrades sharply as we increase the number of queries in an example while attention performs consistently well \cref{sec:scaling}.

\subsection{Formalizing the problem: Multi-Query Associative Recall}
This gap in associative recall perplexity is very surprising because prior work shows gated-convolutions can perfectly solve a formalized version of the task~\citep{dao2022hungry,poli2023hyena,olsson2022context}.  
In this synthetic task, the input $\bm{x}$ contains a sequence of bigrams representing \textit{key-value} pairs from a random dictionary followed by a \textit{single} query token. For example, the correct output for the input below would be $3$: 
\vspace{-1mm}
$$
\text{A 4} \underbrace{\text{B 3}}_{\mathclap{\textbf{Key-Value}}}  \text{C 6 E 2 F 1 C 6 G 8 } \rightarrow
 \underbrace{\text{B ?}}_{\mathclap{\textbf{Query}}}
$$
Gated convolutions (\textit{e.g.} H3, Hyena, RWKV) can solve this task perfectly for most sequence lengths. 

These conclusions are inconsistent with our findings on the Pile, as described above, so we ask how this formulation of AR differs from the way AR manifests in real language.
We identify a major difference. In real world inputs, the language model often needs to perform multiple associative recalls in a single forward pass, at varying positions in the sequence (\textit{e.g.} \textit{``Tim Rice"} and \textit{``March 2018"} in \cref{fig:main}. We refer to this as Multi-Query AR ({$\Task$}).
\label{sec:theory-ar-def}
We formally define the {$\Task$} problem as follows: \footnote{In \Cref{sec: intro-general-ar} we define a formal, general form of this definition, used in the theoretical analysis.}

\begin{definition} [Multi-Query-{AR} $\paren{\Task}$]
    \label{def: general-AR} 
    We are given an input sequence 
    $\bm{x} = \{x_0, \ldots, x_{N-1}\}$ where each $x_i \in C$ is a token drawn from a vocabulary of size $c = |C|$. 
    The task 
    is to check, for every query
    $1\le i<N$, whether there exists a 
    $0 \le j < i$ such that $\bm{u}_i \equiv \bm{u}_j$. If so, output $\bm{u}_{j+1}$. 
\end{definition}

For example, the correct output for input below would be $\text{4, 6, 1, 2, 3}$:
$$
\text{A 4 B 3 C 6}\underbrace{\text{F 1}}_{\mathclap{\textbf{Key-Value}}}  \text{E 2} \rightarrow \text{A ? C ?} \underbrace{\text{F ?}}_{\mathclap{\textbf{Query}}} \text{E ? B ?}
$$
\vspace{-1mm}
In Section \ref{sec:theory-explanation}, we use {$\Task$} to explain the quality gap between gated convolutions and attention.

\section{Explaining the associative recall problem}
\label{sec:theory-explanation}
In this section, we provide an explanation for the gap in associative recall performance by analyzing the formal ${\Task}$ task theoretically and empirically. In \cref{sec:theory-coyote-def}, we define a simple gated-convolution architecture, called \Coyote, which we show can simulate a broad class of architectures built from gating and convolutions. This allows us to make general statements that apply to popular gated-convolution architectures like Hyena, RWKV, or H3. 
In \cref{sec:theory-capacity}, we show that there exist theoretical solutions to {$\Task$} that could in principle be learned by \Coyote, and we analyze their complexity in terms of model width and depth. 
In \cref{sec:empirical-capacity}, we use experiments on synthetic data to show that solving ${\Task}$ with ${\Coyote}$ (and other gated-convolution architectures) requires model dimension to scale linearly with the sequence length. In contrast, attention solves {\Task} consistently in our experiments with model dimension scaling independently of sequence length. These empirical scaling laws provide a potential explanation for the AR gap and, alongside our theoretical analysis, point to the potential solutions discussed in \Cref{sec:sec5-method}.

\subsection{\Coyote: a minimal gated convolution operator}
\label{sec:theory-coyote-def}
In this section, we define our minimal gated-convolution architecture, called \Coyote.
Given a function, we would like to know the most efficient model (\textit{e.g.} parameters, FLOPs) that can represent the solution. In this work, we show we can precisely reason about this question for representing \textit{polynomial functions} with gated convolutions as gating and convolutions are both polynomial operations. The standard model defining computational complexity for polynomials is by the size of the smallest arithmetic circuit that can compute the polynomial. We define the {\Coyote} gated convolution operator which is exciting because (1) it is universal in that it that can simulate any arithmetic circuit $\calC$ (with only a poly-log blowup in the corresponding parameters) and (2) it is simple to implement efficiently (19 lines of pure PyTorch including imports, see \cref{app:implementation}).

\begin{definition}[{\Coyote} Operator] 
    Given an input $\hyenaInput\in \R^{N \times d}$, the {\Coyote} operator for layer $\ell$ is defined as:
    \begin{equation}
        \label{eq: coyote-recursion}
        \begin{aligned}
            \bm{y}
            &:= 
            \underbrace{\paren{\bm{u} \cdot \bm{W}^\ell+\bm{b}_1^{\ell}}}_{\mathclap{\textbf{Linear Projection}}}
            \odot 
            \underbrace{\paren{\bm{h}^{\ell} \ast \bm{u}+\bm{b}_2^{\ell}}}_{\mathclap{\textbf{Convolution}}}
            \quad 
        \end{aligned}
    \end{equation}
    where the layer is parameterized by learnable filters $\bm{h}\in\R^{N\times d}$, a linear projection $\bm{W}^\ell\in\R^{d\times d}$, 
    and `bias' matrices $\bm{b}_1,\bm{b}_2\in\R^{N\times d}$. The $\odot$ is component-wise product and convolution of two matrices is computed as convolution of the corresponding columns.
\end{definition}

In our experiments, each ${\Coyote}$ layer uses $\tilde{\calO}(Nd + d^2)$ parameters\footnote{We use $\tilde{\calO}{(\cdot)}$ to hide poly-log factors.}
and can be computed in $\tilde{\calO}(Nd^2)$ operations. For our theoretical results, we can assume the weight matrix $\bm{W}^\ell$ is restricted to a class of matrices that support near-linear time matrix multiplication (\textit{e.g.} Kaleidoscope matrices, see \cref{def: W-kmat}). Under this assumption, ${\Coyote}$ uses $\tilde{\calO}(Nd)$ parameters and $\tilde{\calO}(Nd)$ FLOPs~(\cref{prop: single-baseconv}). 
We now state the equivalency result between arithmetic circuits and \Coyote\,  a ``canonical" representation of arithmetic circuits~(\cref{thm: gen-ac} in \cref{sec: arithmetic}):

\begin{theorem}[Equivalency to Arithmetic Circuits]
    \label{thm:equiv}
    For an arithmetic circuit $\calC$ of size $s$ and depth $\Delta$ that takes $\bm{u} \in \R^{N \times d}$ as input, there exists an equivalent ${\Coyote}$ operator that uses $\tilde{\calO}(s \Delta)$ parameters and $\tilde{\calO}(\Delta)$ layers.\footnote{The formal statement in the Appendix has a sharper version of this result in terms of the circuit `width'.}
\end{theorem}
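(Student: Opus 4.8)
The plan is to build the equivalent ${\Coyote}$ operator layer by layer, mirroring the structure of the arithmetic circuit $\calC$. First I would fix a topological ordering of the gates of $\calC$ and partition them into $\Delta$ layers $L_1, \ldots, L_\Delta$, where layer $L_t$ contains the gates whose longest path from an input has length $t$. The key invariant to maintain is that after simulating the first $t$ circuit-layers, the ${\Coyote}$ activations hold (as designated coordinates of $\bm{u}^{(t)} \in \R^{N \times d'}$, for some working width $d' = \tilde{\calO}(s)$) the values computed at every gate in $L_1 \cup \cdots \cup L_t$, together with a copy of the original inputs (so that later gates can still read them). I would carry the original inputs forward unchanged via the identity map — this is why each circuit-layer costs a constant number of ${\Coyote}$ layers rather than one, since ${\Coyote}$'s single bilinear form $\bigl(\bm{u}\bm{W} + \bm{b}_1\bigr)\odot\bigl(\bm{h}\ast\bm{u} + \bm{b}_2\bigr)$ must be composed with a ``projection/copy'' layer to route the right operands into the two factors.

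The core step is showing that a single circuit-layer of addition and multiplication gates can be simulated by $\tilde{\calO}(1)$ ${\Coyote}$ layers. For addition gates $g = \sum_i \alpha_i x_i$: these are linear in the current activations, so they are handled entirely by the linear-projection factor $\bm{u}\bm{W}^\ell + \bm{b}_1^\ell$, choosing $\bm{h}^\ell$ and $\bm{b}_2^\ell$ so that the convolution factor is the all-ones matrix on the relevant coordinates (e.g. $\bm{h}^\ell = \bm{0}$, $\bm{b}_2^\ell = \bm{1}$). For multiplication gates $g = x_i \cdot x_j$: I would place $x_i$ in the appropriate coordinate of the linear factor and $x_j$ in the same coordinate of the convolution factor — the convolution here is used only as a position-wise copy/shift (a shift filter $\delta$ on the time axis, constant on the feature axis), which is legitimate since ${\Coyote}$'s convolution acts column-wise. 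The $\odot$ then produces exactly $x_i x_j$. Gates with larger (bounded) fan-in reduce to a $\log$-depth binary tree of these, contributing the poly-log blow-up hidden by $\tilde{\calO}$. One subtlety worth stating carefully: when a multiplication gate needs two operands that were produced in different earlier layers, the preceding ``routing'' ${\Coyote}$ layer must gather both into the right coordinates; this is again just a linear map, so it folds into the projection factor of the routing layer.

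The parameter/FLOP accounting then goes: working width $d' = \tilde{\calO}(s)$ since at any time we store at most $O(s)$ gate values plus the $Nd$ inputs; number of layers $\tilde{\calO}(\Delta)$ by the constant-overhead-per-circuit-layer argument above; and each ${\Coyote}$ layer at width $d'$ uses $\tilde{\calO}(N d' + d'^2) = \tilde{\calO}(s^2 + Ns)$ parameters naively, which I would sharpen to the claimed $\tilde{\calO}(s\Delta)$ by the ``width''-refined version promised in the footnote — i.e. only the gates that are actually live (inputs to a gate in a later layer) need to be retained, and by a standard pebbling/register-reuse argument the live set has size $O(w)$ where $w$ is the circuit width, giving the sharper bound. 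I would invoke \cref{prop: single-baseconv} to convert the projection matrices to the near-linear-time class (Kaleidoscope matrices) without asymptotic loss, since arbitrary linear maps on $\R^{d'}$ are exactly what Kaleidoscope factorizations express up to poly-log factors.

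The main obstacle I expect is not any single gate simulation but the bookkeeping of \emph{routing and persistence}: ensuring that every operand of every multiplication gate is simultaneously present, in matched coordinates, in the two factors of some ${\Coyote}$ layer, while simultaneously carrying forward everything still needed downstream — all without letting the width or layer count blow up. Making the pebbling argument precise enough to get $\tilde{\calO}(s\Delta)$ (rather than a looser $\tilde{\calO}(s^2)$) parameters, and checking that the shift-filter trick for multiplication respects ${\Coyote}$'s exact column-wise convolution semantics including the bias terms, is where the real care is needed; the rest is a routine induction on $t$.
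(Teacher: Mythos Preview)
Your high-level strategy --- layer the circuit, simulate addition gates by linear maps and multiplication gates by the Hadamard product, and maintain an invariant that all live values are carried forward --- is correct and matches the paper's approach. You also correctly flag routing/persistence as the main obstacle. But there is a genuine gap in how you propose to do the routing.

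You claim addition gates are ``handled entirely by the linear-projection factor $\bm{u}\bm{W}^\ell + \bm{b}_1^\ell$'' and that routing ``is again just a linear map, so it folds into the projection factor.'' This does not work as stated: in the \Coyote\ layer, $\bm{W} \in \R^{d' \times d'}$ is applied \emph{row-wise} --- the same $\bm{W}$ multiplies each of the $N'$ rows of $\bm{u}$ --- so $\bm{u}\bm{W}$ cannot combine or move values across different rows. Since the input lives in $\R^{N \times d}$ with values spread over $N$ rows, an addition gate whose operands sit in distinct rows cannot be computed by the projection factor alone, and your multiplication routing has the same problem. The column-wise convolution can shift along the sequence axis, but with a single filter per column it cannot implement an arbitrary permutation of the $N'd'$ coordinates either. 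So ``just a linear map'' is the right abstraction, but \Coyote\ does not give you an arbitrary linear map on $\R^{N'd'}$ in one layer.

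The paper closes exactly this gap with a dedicated lemma (its Section on linear arithmetic circuits): it shows that any $N'd' \times N'd'$ Kaleidoscope matrix-vector product --- hence any low-depth linear arithmetic circuit on the flattened state --- can be simulated by $\calO(\log w)$ \Coyote\ layers, by decomposing each butterfly factor into a diagonal plus two shift components and realizing the shifts via the convolution axis. That is the non-trivial technical core, and it is what produces the $\tilde{\calO}(\Delta)$ (really $\calO(\Delta \log w)$) layer count. Your invocation of \cref{prop: single-baseconv} is a misreading: that proposition is only a parameter/runtime count for a single layer; it does not establish that \Coyote\ can simulate full-space linear maps. Once you have that lemma, the paper's proof proceeds as you outline: alternate linear (routing/addition) layers, each costing $\calO(\log w)$ \Coyote\ layers via the Kaleidoscope reduction, with multiplication layers that first route operands into aligned blocks $(\bm{v}_1, \bm{v}_2, \bm{v}_3)$ and then take $\bm{v}_1 \odot \bm{v}_2$ while ``remembering'' $\bm{v}_3$.
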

In other words, any gated convolution model with small number of layers can be simulated by \Coyote\ with only a (poly)logarithmic blowup in parameters and layers. We note that arithmetic circuits are a very well studied computation model in computational complexity \cite{bürgisser1996algebraic}. Many well-known efficient algorithms on matrices (e.g. the FFT or the current best known matrix-matrix multiplication algorithm) in fact give small arithmetic circuits. However, arithmetic circuits are inherently discrete objects -- we cannot learn them via gradient descent. \cref{thm:equiv} shows that (up to poly-log loss in parameters), we can \textit{instead} learn over {\Coyote} models. This result generalizes a similar result from~\cite{dao2020kaleidoscope} for the special class of linear functions: we generalize the earlier result to the class of {\em all} polynomials.

For specific gated convolution layers, we can get rid of the poly-logarithmic factor blowup--we observe in the appendix that \Coyote\ and Hyena models can simulate each other with only a small constant blowup in parameters~(\cref{prop: coyote-hyena-equiv} in \cref{sec: arithmetic}).

\subsection{Theoretical analysis of gated convolution capacity and associative recall}
\label{sec:theory-capacity}

In this section, we provide theoretical ${\Task}$ solutions that could in principle be learned by each architecture and analyze their complexity in terms of model width and depth. 
First, we note that attention solves $\Task$ with parameters independent of sequence length (\cref{prop: app-attention}). 
\begin{proposition}[Attention]
\label{prop: attention-ar}
    Given an input $\hyenaInput\in \{0,1\}^{N \times 3c}$, Attention (even without using soft-max) solves $\Task$ for $\hyenaInput$ using {${\calO}(c^2)$} parameters, $\calO(Nc^2 + N^2c)$ time complexity and ${\calO}(1)$ layers.
\end{proposition}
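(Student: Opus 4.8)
\textbf{Proof plan for Proposition~\ref{prop: attention-ar}.}

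The plan is to construct an explicit (soft-max-free) attention layer that implements $\Task$ directly, using the fact that the input is provided in a structured one-hot encoding in $\{0,1\}^{N\times 3c}$. I read the $3c$ coordinates as three blocks of width $c$: the first block holds a one-hot encoding of the token $\bm{u}_i$ itself, and (following the general-AR setup referenced in \Cref{sec: intro-general-ar}) the remaining two blocks carry auxiliary one-hot information — in particular a ``shifted'' copy so that position $i$ also has access to a one-hot encoding of the \emph{previous} token $\bm{u}_{i-1}$, together with a positional/masking indicator. The key point is that all of this is a fixed linear re-indexing of the input, so it costs $\calO(c^2)$ parameters and no dependence on $N$.

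First I would define the projection matrices. Let $\bm{Q}$ extract, from row $i$, the one-hot encoding of $\bm{u}_i$ (the ``query token''); let $\bm{K}$ extract, from row $j$, the one-hot encoding of $\bm{u}_j$ (the ``key token''); each is a $\{0,1\}$ selector matrix, hence $\calO(c^2)$ parameters. Then $(\bm{q}\bm{k}^\top)_{ij} = \langle \text{onehot}(\bm{u}_i), \text{onehot}(\bm{u}_j)\rangle = \mathbb{1}[\bm{u}_i \equiv \bm{u}_j]$, i.e.\ the raw score matrix is exactly the boolean ``same-token'' matrix. Next I would let $\bm{V}$ extract from row $j$ the one-hot encoding of $\bm{u}_{j+1}$ (available at row $j$ via the shifted block described above — equivalently one may apply $\bm{V}$ to produce $\text{onehot}(\bm{u}_{j})$ and compose with a fixed shift-by-one convolution, which is itself expressible but the shifted-encoding route keeps everything inside one attention layer). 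Applying the causal mask so that row $i$ only attends to $j<i$, the output at row $i$ is $\sum_{j<i}\mathbb{1}[\bm{u}_i\equiv\bm{u}_j]\,\text{onehot}(\bm{u}_{j+1})$, which is supported exactly on the coordinate of the first-match value $\bm{u}_{j+1}$; reading off the argmax (or, since we control the encoding, normalizing by the count of matches) recovers $\bm{u}_{j+1}$, and yields the all-zero vector when no match exists, matching Definition~\ref{def: general-AR}. This is $\calO(1)$ layers.

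For the resource bounds: the three projections are $\calO(c^2)$ parameters total and the (optional) fixed shift adds only $\calO(N)$ non-trainable structure or can be folded into the encoding; computing $\bm{q}=\bm{Q}\bm{u}$, $\bm{k}=\bm{K}\bm{u}$, $\bm{v}=\bm{V}\bm{u}$ costs $\calO(Nc^2)$; forming $\bm{q}\bm{k}^\top$ and multiplying by $\bm{v}$ costs $\calO(N^2 c)$; the causal mask is free. Summing gives $\calO(Nc^2 + N^2 c)$ time and $\calO(c^2)$ parameters, independent of $N$, as claimed.

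The main obstacle — really the only delicate point — is handling \emph{collisions when a query token appears multiple times earlier in the sequence}: the raw attention output is then a \emph{sum} of the one-hot value vectors over all matches, not a single one. Definition~\ref{def: general-AR} asks for the value after the \emph{first} (equivalently, any) match, so I need to make sure the read-out is well-defined. Two clean fixes: (i) take a coordinatewise argmax / thresholding read-out, which is fine because every term in the sum is a $\{0,1\}$ one-hot vector so the answer value's coordinate is nonzero (and in the Pile-motivated regime repeated keys carry a consistent value, so the argmax is unambiguous); or (ii) add a monotone-in-$j$ bias into the scores (a fixed linear function of the positional indicator in the third encoding block) so that, after an appropriate nonlinearity folded into the value map, the most recent match dominates — this keeps the construction soft-max-free. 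I would present option (i) as the clean statement and remark that the $\{0,1\}$-encoding assumption on the input is exactly what makes the linear (soft-max-free) attention score matrix equal the boolean match matrix, which is the crux of the whole argument. I also need to note that the ``$3c$'' width in the hypothesis is what supplies room for the shifted value-encoding and the positional indicator; if one insists on width exactly $c$, the same bound holds after composing with one fixed shift convolution, at no asymptotic cost.
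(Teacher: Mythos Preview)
Your core idea is right and matches the paper's: with one-hot tokens, the raw score matrix $\bm{q}\bm{k}^\top$ is exactly the boolean match matrix $\mathbb{1}[\bm{u}_i\equiv\bm{u}_j]$, so multiplying by a value projection that carries the ``next token'' recovers the answer; the parameter and time counts then follow immediately from the three $3c\times 3c$ projections and the $N\times N$ score matrix. That is precisely the heart of the paper's argument.

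Where you diverge is the handling of the $3c$ encoding and the shift. You guess that the three blocks store $(\text{onehot}(\bm{u}_i),\text{onehot}(\bm{u}_{i-1}),\text{positional bit})$ so that one attention layer suffices. The paper's encoding is different: positions are laid out as $\bm{k}_0,\bm{v}_0,\bm{q}_0,\bm{k}_1,\bm{v}_1,\bm{q}_1,\ldots$ with keys, values, and queries occupying \emph{separate} rows and separate $c$-wide blocks. Because the value sits one row below its key, the paper spends a \emph{first} attention layer with $\mathbf{W}_Q=\mathbf{W}_K=\mathbf{0}$ and an ALiBi bias matrix chosen to be a shift-by-one, so that after the residual connection each key row also carries its associated value; the \emph{second} layer then does exactly your $\mathbf{QK}^\top\mathbf{V}$ match-and-retrieve. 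So the paper uses two layers where you use one, but only because it commits to a specific encoding that does not pre-bake the shift; both are $\calO(1)$ layers. Your alternative of composing with a fixed shift-by-one convolution is essentially what the paper's first layer implements via the bias matrix. One small slip: you say row $j$'s shifted block holds $\bm{u}_{j-1}$ but then ask $\mathbf{V}$ to read $\bm{u}_{j+1}$ from it --- the clean fix is to let $\mathbf{K}$ read the \emph{previous}-token block and $\mathbf{V}$ read the current-token block, which re-indexes to the same sum.

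On collisions: you are right to flag this, and in fact the paper's linear (softmax-free) construction has the identical issue --- if $\bm{q}_i$ matches several earlier keys the output is a sum of their value one-hots. The paper does not discuss it; implicitly the general-$\Task$ formulation there treats each key as appearing at most once, so the sum collapses to a single one-hot. Your argmax/threshold read-out is a fine way to make this explicit.
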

It is natural to wonder then if \textit{all} pairwise comparisons among tokens are necessary to solve $\Task$. Indeed, in the RAM setting, a sequential algorithm can simply utilize $N$ logarithmic insertion and membership queries to solve $\Task$ in subquadratic time. Unfortunately, any model attempting to emulate this would require $\Omega(N)$ layers. Instead, we observe that we can parallelize this algorithm using dyadic intervals and achieve a depth of $\tilde{\mathcal{O}}(1)$~(\cref{prop: pram-gen-ar}). We then convert this algorithm into an arithmetic circuit and apply \cref{thm:equiv} to derive an equivalent \Coyote\ model. This allows us to prove new upper bounds for \Coyote\ models applied to $\Task$, which improves upon the quadratic time complexity of attention to near-linear runtime at the cost of using poly-log layers (\cref{thm: gen-ar-coyote} in \cref{sec: gen-ar}). 

\begin{theorem}[Data-Independent Filters\footnote{
We note here that existing architectures also use data-independent convolution filters, meaning the filter is defined as a function of the model parameters, independent of the input.}]
    \label{thm: indep-genar}
    Given an input $\bm{u} \in \{0,1\}^{N \times \calO(\log{c})}$ to $\Task$ (where we assume that distinct tokens are embedded into distinct vectors in $\{0,1\}^{\calO(\log{c})}$), there exists a ${\Coyote}$ operator that solves $\Task$ for $\hyenaInput$ using $\tilde{\calO}(N\log{c})$ parameters as well as time complexity and $\tilde{\calO}(1)$ layers. 
\end{theorem}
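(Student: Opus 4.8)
The strategy, foreshadowed in the paragraph above the statement, is to split the argument into an \emph{algorithmic} step and a \emph{representational} step. First I would show that $\Task$ is computed by an arithmetic circuit $\calC$ over the input $\bm{u}\in\{0,1\}^{N\times\calO(\log c)}$ of size $s=\tilde{\calO}(N\log c)$ and depth $\Delta=\tilde{\calO}(1)$ (this is the content of \cref{thm: gen-ar-coyote}, which I outline below). Then I would invoke \cref{thm:equiv} verbatim: it turns $\calC$ into an equivalent \Coyote\ operator with $\tilde{\calO}(s\Delta)=\tilde{\calO}(N\log c)$ parameters and $\tilde{\calO}(\Delta)=\tilde{\calO}(1)$ layers, and if each layer's projection is drawn from the near-linear-time matrix class of \cref{def: W-kmat}, the FLOP count is also $\tilde{\calO}(N\log c)$ --- exactly the three quantities in the theorem.

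\textbf{Building the circuit.} For the algorithmic core I would first make $\Task$ parallelizable by observing that the answer at query $i$ is determined solely by $\mathrm{first}(i):=\min\{\,j\le i:\bm{u}_j\equiv\bm{u}_i\,\}$: the circuit should emit $\bm{u}_{\mathrm{first}(i)+1}$ when $\mathrm{first}(i)<i$ and a ``no match'' flag otherwise (note $\mathrm{first}(i)+1\le i<N$, so the index is always valid). The naive sequential dictionary algorithm computing all $\mathrm{first}(i)$ has an $\Omega(N)$ dependency chain, and the obvious parallel fix --- comparing every pair $(i,j)$ --- costs $\Theta(N^2)$ gates; the dyadic-interval construction of \cref{prop: pram-gen-ar} is what avoids both. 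Concretely I would lay a balanced binary (segment) tree over positions $0,\dots,N-1$, decompose each prefix $[0,i)$ into its $\calO(\log N)$ maximal dyadic blocks, use a bottom-up pass plus parallel-prefix (scan) primitives to identify, for all $i$ simultaneously, the leftmost block containing a match, and then descend that block's subtree to pin down $\mathrm{first}(i)$. The remaining step, the data-dependent gather $\bm{u}\mapsto\bm{u}_{\mathrm{first}(i)+1}$, I would implement with $\calO(1)$ sorting/routing-network passes rather than an $\calO(N)$-wide selection per query, so it too stays near-linear. Each primitive used --- equality of two $\calO(\log c)$-bit embeddings (a polynomial of degree $\calO(\log c)$, using exactly the injectivity hypothesis on the embedding), comparison and $\min$ over indices in $[N]$, Boolean OR, and segmented scans --- is an arithmetic circuit of $\mathrm{polylog}(Nc)$ size and depth, so over $\calO(N)$ positions the whole circuit has $s=\tilde{\calO}(N\log c)$ gates and $\Delta=\tilde{\calO}(1)$ depth.

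\textbf{Assembling.} Feeding this $\calC$ into \cref{thm:equiv} yields the claimed \Coyote\ operator with the stated parameter, layer, and FLOP bounds. What is left is routine bookkeeping: checking that the ``no match'' flag can be absorbed into the output encoding, and that restricting $\bm{W}^\ell$ to \cref{def: W-kmat} does not interfere with the compilation in \cref{thm:equiv} (only with its cost accounting).

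\textbf{Main obstacle.} Essentially all the difficulty lies in meeting the $\tilde{\calO}(1)$ depth and $\tilde{\calO}(N\log c)$ size constraints \emph{at the same time}: the RAM-style algorithm is inherently sequential, and removing its depth by brute force blows the size up to quadratic. Threading this needle --- the dyadic/segment-tree decomposition with parallel-prefix aggregation for the $\mathrm{first}(\cdot)$ computation, and the $\calO(1)$-pass sorting realization of the data-dependent gather --- is the heart of the proof; arithmetizing the comparisons and selections over the bounded ranges $[N]$ and $[c]$ with only polylog overhead, and the final appeal to \cref{thm:equiv}, are comparatively mechanical.
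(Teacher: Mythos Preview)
Your high-level architecture is exactly the paper's: reduce $\Task$ to a near-linear-size, polylog-depth arithmetic circuit via a dyadic-interval decomposition, then invoke \cref{thm:equiv} to compile into \Coyote. The paper's route to \cref{thm: gen-ar-coyote} follows precisely this two-step plan, so the skeleton of your proof is right.

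Where you diverge is inside the dyadic step. The paper does not use a segment tree with per-query prefix decomposition plus scans; instead it pairs each level-$k$ left dyadic block $I_k^{\bm{x}}$ with its right sibling $J_k^{\bm{x}}$, \emph{sorts} the keys in $I$ and the queries in $J$ with AKS sorting networks, and then runs the Akl parallel-binary-search procedure \texttt{pbs-key-values} to batch-resolve all query/key matches for that pair at once. This is the engine that achieves $\tilde{\calO}(N\log c)$ size and $\tilde{\calO}(1)$ depth simultaneously; the sorting networks and comparators are then arithmetized gate-by-gate (\cref{prop: pbs-circuit}, \cref{prop: pram-circuit}). Your ``bottom-up pass plus parallel-prefix (scan)'' is the least specified part of your plan: scans aggregate a fixed associative operation, but here each query token $\bm{u}_i$ is different, so a scan alone does not answer ``does block $B$ contain $\bm{u}_i$?'' without some per-node dictionary structure. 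If you intend each segment-tree node to carry a sorted list so that membership is a binary search, that works and lands at the same complexity as the paper --- but then you are essentially re-deriving sort + parallel search, just organized node-wise rather than level-wise. Either spell that out, or adopt the paper's \texttt{pbs} primitive, which makes the size/depth accounting immediate.

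One small correction: you use sorting networks only for the final gather, whereas the paper uses them \emph{inside} the matching step (and handles the gather through the index bookkeeping $I_{\mathrm{permuted}},J_{\mathrm{permuted}}$ carried alongside the sorts). Both placements are fine, but be aware that the gather is not where the hard work is --- the batch matching is.
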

Nevertheless, the poly-logarithmic number of layers in the above result is undesirable in practice. But, we show that using {\em input-dependent} convolution filters, one can get constant many layers (for a sub-class of inputs). Towards that end, we define the interaction distance between a query $\bm{q}_i$ and the matching key $\bm{k}_j$ as $i-j$. This then allows us to present the corresponding upper bound for data-dependent mixing~(\cref{thm: input-dep-genar} in \cref{sec: data-dep-ar}).

\begin{theorem}[Input-Dependent Filters]
    \label{thm: dep-genar}
    Given an input $\hyenaInput\in \{0,1\}^{N \times c}$ to $\Task$ (where we assume that the tokens are embedded as one-hot encoding in $\{0,1\}^c$ and there exists at most $t$ distinct interaction distances),\footnote{Note that the interaction distances can be arbitrary: there is just a bounded number of distinct distances.} there exists a ${\Coyote}$ operator that uses {\em input-dependent kernels} to solve the above case of $\Task$  using $\calO(t \cdot Nc)$ parameters and ${\calO}(1)$ layers.
\end{theorem}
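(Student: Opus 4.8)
The plan is to give an explicit $\Coyote$ network of depth $\calO(1)$ whose convolution filters are (simple) functions of the input $\bm u$ and encode exactly the at-most-$t$ interaction distances $\delta_1<\cdots<\delta_t$ that occur in the instance at hand. The enabling observation is that a causal convolution with a single nonzero tap at lag $\delta$ is precisely the shift operator, $(\mathrm{Shift}_\delta\bm u)[i,:]=\bm u[i-\delta,:]$ (zero-padded for $i<\delta$). Hence, once the relevant distances are known, a query at position $i$ whose matching key sits at $j=i-\delta_k$ needs only to ``look back'' by the two fixed lags $\delta_k$ and $\delta_k-1$ to recover the candidate key $\bm u_j$ and the value $\bm u_{j+1}$, respectively. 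A data-\emph{independent} filter cannot do this: a fixed filter must carry taps for all $\Theta(N)$ possible lags simultaneously, and the resulting superposition of matched keys cannot be disentangled by $\calO(1)$ gating layers --- this is exactly the obstruction responsible for the poly-logarithmic depth in \cref{thm: indep-genar}. Letting the filters depend on $\bm u$ lets us install only the $t$ taps that matter.

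Concretely, I would first embed the one-hot input $\bm u\in\{0,1\}^{N\times c}$ into a hidden state of width $\calO(tc)$ holding $t{+}1$ parallel copies of $\bm u$ (one linear map, foldable into the first layer), and then assemble the solution from three primitives, each realized by a constant number of $\Coyote$ layers acting on disjoint channel blocks. \emph{(i) Shift-and-match:} in block $k$ apply the input-dependent filter whose single tap is at lag $\delta_k$ to obtain $\mathrm{Shift}_{\delta_k}\bm u$, then gate ($\odot$) it against an untouched copy of $\bm u$ routed in by $\bm W$; since one-hot vectors satisfy $\bm a\odot\bm b=\bm a$ if $\bm a=\bm b$ and $\vzero$ otherwise, a linear reduction $\vone^\top(\cdot)$ produces the scalar indicator $m_k[i]=\mathbf 1[\bm u_i=\bm u_{i-\delta_k}]\in\{0,1\}$. \emph{(ii) Fetch the value:} in further blocks apply the filter with tap at lag $\delta_k-1$ (and gate against the all-ones bias) to hold $\bm s_k:=\mathrm{Shift}_{\delta_k-1}\bm u$, the token immediately after the candidate key. \emph{(iii) Conditional copy and combine:} broadcast $m_k$ across the $c$ channels with a rank-one block $\vone\vone^\top$ inside $\bm W$, gate it against $\bm s_k$ to get $m_k[i]\,\bm s_k[i,:]$, and let the final linear read-out sum over $k$ to output $\sum_k m_k[i]\,\bm u[i-\delta_k+1,:]$. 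At a position whose matching key is at distance $\delta_k$, exactly one term survives and equals $\bm u_{j+1}$; at a position with no earlier occurrence all $m_k[i]=0$ and the output is $\vzero$ (``no match''), as required by \cref{def: general-AR}.

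For the resource count, each of the $t$ distances contributes a constant number of single-tap $N\times c$ filters, giving $\calO(tNc)$ filter parameters; the projections $\bm W$ and the bias matrices $\bm b_1,\bm b_2$ are block-sparse (permutation-style routing plus a few $\vone\vone^\top$ blocks) and contribute no more, so the network uses $\calO(tNc)$ parameters and $\calO(1)$ layers. The step I expect to require the most care is the bookkeeping that packs the $t$ per-distance shift--match--fetch computations into parallel channel blocks so the depth stays constant rather than $\calO(t)$, i.e.\ choosing the block structures of $\bm W$, $\bm h$, $\bm b_1$, $\bm b_2$ consistently across the $\Coyote$ layers. A secondary subtlety is well-definedness of the output when a query token occurs more than once earlier: under the standard $\Task$ instances (distinct keys, and values from an alphabet disjoint from the keys) the surviving match is unique, and in general a constant number of extra layers that ``keep only the nearest match'', built from the same shift-and-compare primitive, removes the ambiguity.
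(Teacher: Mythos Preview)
Your proposal is correct and follows essentially the same three-step recipe as the paper's proof of \cref{thm: input-dep-genar}: (i) convolve with input-dependent single-tap shift kernels at the $t$ interaction distances and gate the shifted keys against the queries (exploiting that the Hadamard product of two one-hot vectors is nonzero iff they agree), (ii) ``smear'' the resulting match indicator across the $c$ channels via a rank-one linear map (your $\vone\vone^\top$; the paper's ${\tt Linear}_{\tbf E}$), and (iii) gate this mask against the values shifted by $\delta_k-1$ and sum over $k$. The only substantive difference is the packing of the $t$ parallel per-distance computations: you widen the \emph{channel} dimension to $\calO(tc)$ and use block-sparse $\bm W$'s, whereas the paper widens the \emph{sequence} dimension to $N'=tN/3$ and encodes all $t$ shifts in a single kernel $\bm h^k(X)=\sum_\ell X^{s_\ell+(\ell-1)N/3}$; both yield $\calO(1)$ layers and $\calO(tNc)$ parameters, and your accounting of the block-sparse projections is what keeps you from paying $(tc)^2$.
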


\subsection{Empirical analysis of gated convolution capacity and associative recall}
\label{sec:empirical-capacity}
\begin{figure}
    \includegraphics[width=\linewidth]{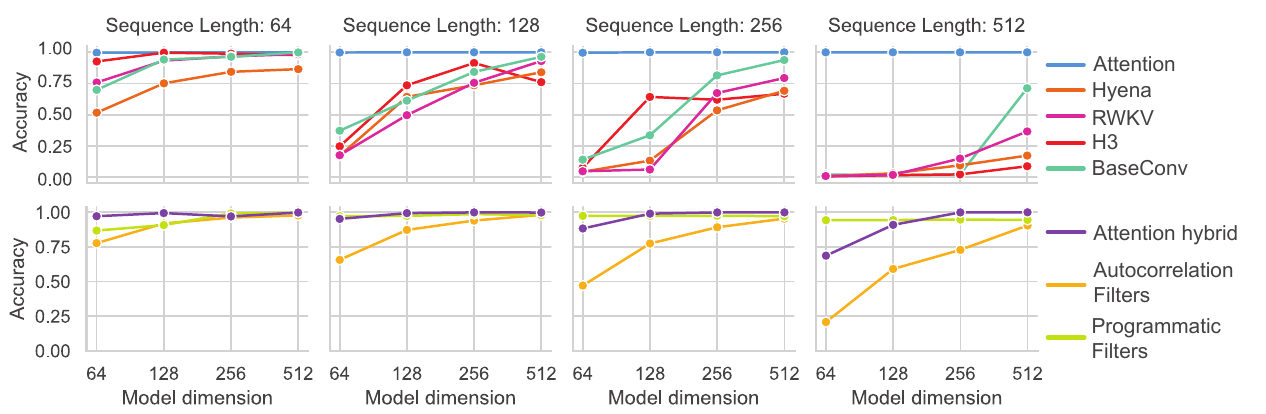}
    \caption{The x-axis is the model dimension and the y-axis is accuracy on {$\Task$}. Increasing the sequence length correlates with increased task difficulty. Both rows use the same experimental setup, but evaluate different sequence mixers. \textbf{(Top, Claim 1) Gated convolutions and attention.} We observe the gated convolutions require larger dimensionality than attention to solve the task. \textbf{(Bottom, Claim 2) Input-dependent filters.} Models with input-independent aggregation achieve improved scaling over the gated-convolutions with input-independent aggregation.}
    \label{fig:capacity}
    \vspace{-3mm}
\end{figure}

In this section, we measure empirically how model dimension must scale in order for different sequence mixers to solve \Task. 

\paragraph{Setup} We train and evaluate models on a synthetic ${\Task}$ with vocabulary size $8,192$, varying model dimension and sequence length from $64$ to $512$. \Cref{app:synthetic} provides further details on the formulation and construction of this synthetic task. 
Following \cite{olsson2022context}, we train two layer models with a Transformer backbone that interleaves sequence mixing and state mixing (MLPs). For each architecture, we sweep four learning rates from $\log(-4)$ to $\log(-2)\}$ for each architecture, and report maximum test accuracy. 

Our results, which are summarized in \Cref{fig:capacity}, support two main claims:

\textbf{Claim 1 (Gated-convolutions and attention).} \textit{Gated-convolution models with two layers require model dimension to scale at least linearly in sequence length in order to solve associative recall, while attention models can solve it with near-constant dimensionality.}
We compare attention and ${\Coyote}$ as well as three popular instantiations of gated-convolution architectures: RWKV, H3, and Hyena \citep{peng2023rwkv, fu2023simple, poli2023hyena}.
In the top row of \cref{fig:capacity}, attention solves ${\Task}$ perfectly at all sequence lengths using a constant model dimension of 64. In contrast, ${\Task}$ does not achieve accuracy $>0.9$ unless $d \geq N$.

\label{sec:data-dependent-synthetics}

\textbf{Claim 2 (Input-dependent filters).} \textit{Using input-dependent filters in gated-convolution models can close some of the gap to attention.}
In \cref{thm: dep-genar}, we show that ${\Coyote}$ with input-dependent filters could solve ${\Task}$ with improved scaling. In this solution, we construct a filter that spikes at position $j$ if matching keys are separated by $j$ tokens. We evaluate two approaches for constructing this filter: (1) programatically (\textit{i.e.} hard-coded comparisons between token ids) or (2) with autocorrelation, which could learn to perform fuzzy-matches (see \cref{sec: data-dep-ar}). In the bottom row of \cref{fig:capacity}, we see that $\Coyote$ with programmatic input-dependent filters achieves near-constant scaling in model dimension and that $\Coyote$ with autocorrelation input-dependent filters achieves improved scaling over $\Coyote$ with input-independent filters.

These input-dependent filters cannot easily be made to satisfy causality and using an $O(N \log N)$ filter per gap could be expensive if each gap applies only to a small number of bigrams. 
A simpler and perhaps more efficient way to solve the problem would be to introduce a small amount of attention to an otherwise {\Coyote{} model~\citep{dao2022hungry}. As the first natural baseline, we evaluate an attention hybrid on synthetic {\Task} and show that it achieves improved scaling in \Cref{fig:capacity}. Next, we put these empirical and theoretical insights into practice on Pile language modeling.

\section{Closing the Associative Recall Gap}
\label{sec:sec5-method}

In this section, we evaluate hybrid \Coyote-Attention models that leverage different sparsity patterns. 
We show that hybrids with input-dependent sparsity patterns can close most of the gap to attention, while maintaining sub-quadratic scaling. 
We also show that ${\Coyote}$ hybrids can outperform attention-only models by up to a full perplexity point, all while being dramatically simpler to implement and analyze than prior hybrids~\citep{dao2022hungry} (see implementation in \cref{app:implementation}. 
We describe the architectures in \Cref{sec:method-arch} and results on Pile language modeling in  \Cref{sec:data-dependent-pile}.

\paragraph{Sparse ${\Coyote}$-Attention Hybrids}
\label{sec:method-arch}
We evaluate hybrids composed primarily of ${\Coyote}$ layers and three attention layers (6.3\% of layers at 354M parameters and 10\% at 168M parameters).  
We augment the attention layers with operators that \textit{selectively} apply attention to some tokens based on a selection function $f:\mathbb{R}^{N\times d} \rightarrow \{0,1\}^N$. They take as input $\bm{u} \in \mathbb{R}^{N \times d}$ and output $\bm{y} \in \mathbb{R}^{N \times d}$:
\begin{equation}
    \label{eq:selective}
    \bm{y}[i, :] = \softmax(\frac{1}{\sqrt{d}}\bm{q}[i, :]\bm{k}^\top)\bm{v} \cdot f(\bm{u})[i] 
\end{equation}
\vspace{-2mm}

where $\bm{q}, \bm{k}, \bm{v}$ are query, key, and value projections, as in attention. We evaluate four choices for $f$:

(1) \textit{Full attention.} First, we evaluate the performance of full attention hybrids. These hybrids correspond to fixing $f(\bm{u})[i] = 1$ for all $i$ in \cref{eq:selective}. Prior work has shown full attention hybrids to be effective, but they do not explain why supplementing gated convolutions with attention is necessary~\citep{dao2022hungry}. Based on our findings in \cref{sec:sec3-downstream}, we hypothesize that sparse attention applied only to associative recall hits may suffice.

(2) \textit{Random selection.} As a control, we evaluate sparse attention randomly applied to tokens. This corresponds to a stochastic selection function where $f(\bm{u})[i]$ is drawn from a Bernoulli. Many prior works have employed sparse attention patterns like this one, which are independent of the input~\citep[inter alia.]{zaheer2020bigbird,child2019generating,beltagy2020longformer}.

(3) \textit{Programmatic selection.} Next, to evaluate our hypothesis that attention is needed for associative recall, we prototype a programmatic selection function that selects only those tokens that might be associative recall hits.  Specifically, $f(\bm{x}[i, :])$ is $1$ if the token $x_i$ previously occurred in the sequence. In practice, we compare raw token ids, not token embeddings.
\begin{equation}
    \label{eq:programmatic}
    f(\bm{x})[i] = 
    \begin{cases} 
        1 & \text{if there exists}  \quad j < i  \quad \text{such that} \quad x_i = x_j \\
        0 & \text{otherwise } 
    \end{cases}
\end{equation}
\vspace{-2mm}

\begin{table}[]
\centering
\small
\begin{tabular}{@{}lccccc@{}}
\toprule
 & & & \textbf{Overall} & \multicolumn{2}{c}{\textbf{Slices}} \\ 

Model  & Param (M) & TFLOPs &  & AR  Hits & Other Tokens\\
\midrule
\midrule
Attention & 125 & 2.46 & 11.01 (2.40) & 2.16 (0.77) & 12.45 (2.52) \\
\midrule
${\Coyote}$ & 168 & 2.46 & 12.90 (2.56) & 8.68 (2.16) & 13.29 (2.59)  \\
+ Random selection  & 162 & 2.44 & 12.13 (2.50) & 5.33 (1.67) & 12.83 (2.55) \\
+ Programmatic selection  & 162 & 2.44 & 11.06 (2.40) & 2.70 (0.99) & 12.19 (2.50) \\
+ Learned selection  & 166  & 2.44 & 11.06 (2.40) & 3.20 (1.16) & 12.14 (2.50)   \\
+ Full attention  & 166 & 2.58 & 9.57 (2.26) & 2.01 (0.70) & 10.76 (2.38) \\

\midrule
\midrule
Attention & 360  & 6.23 & 9.44 (2.25) & 1.98 (0.69) & 10.62 (2.36) \\
\midrule
${\Coyote}$ & 354 & 4.81 &  11.01 (2.40) & 5.98 (1.79) & 11.52 (2.44)  \\
+ Random selection  & 365 & 5.06 & 12.94 (2.56) & 6.17 (1.82) & 13.62 (2.61) \\
+ Programmatic selection  & 365 & 5.06 & 9.54 (2.26) &  2.35 (0.86) & 10.50 (2.35) \\
+ Learned selection  & 351 & 5.06 & 9.59 (2.26) & 2.61 (0.96) & 10.58 (2.36)  \\
+ Full attention  & 351 & 5.10 & 8.59 (2.15) & 1.95 (0.67) & 8.91 (2.19) \\

\bottomrule
\end{tabular}
\caption{\textbf{Language model perplexity on slices of the PILE.} 
We evaluate Hyena and {\Coyote} with Hybridization and Selective look-up at 160 and 355M parameters. We validate that the methods enable the gated convolutions to outperform attention.
}
\label{table:pile-input-dependent}
\vspace{-3mm}
\end{table}

(4) \textit{Learned selection.} Finally, we prototype a learned selection function $f(\bm{u})[i] = \sigma(\bm{u}[i, :] \cdot \bm{W})$ parameterized as a simple linear layer with sigmoid activation. We fix a hyperparameter $k$ and select the top-$k$ tokens with the highest score in each batch. This allows us to compute attention in $\calO(ndk)$ time. During training, we add small amount of Gaussian noise to the top-$k$ calculation to encourage exploration and use an auxiliary loss to encourage sparse selection: $\ell_f(\mathbf{u}) = \frac{1}{N}\max(0, \sum_{i=1}^N f(\bm{u})[i] - k )$. This approach is most similar to the recently proposed SeqBoat architecture~\citep{ren2023sparse}. We discuss the differences in \Cref{app:related-work}. 

\paragraph{Downstream Evaluations}
\label{sec:data-dependent-pile}
We evaluate the prototypes on Pile language modeling. We take ${\Coyote}$ architectures at the 150M and 360M parameter scales and add input-dependent selection to three layers (details in \Cref{app:exp-gated-convs}). Results are in \Cref{table:pile-input-dependent}. We validate that the prototypes close the overall and AR quality gaps to attention when added to the ${\Coyote}$ backbone. 

At 360M parameters, ${\Coyote}$ with just $3$ attention layers can outperform the Transformer, while requiring fewer FLOPs. Our hybrid attention-${\Coyote}$ models outperform attention only models by 0.85 perplexity points while enabling an 18\% reduction in total FLOPs vs attention. However, this uses full quadratic attention. We next show that sparse attention localized to potential AR tokens, is also sufficient to close the gap, validating our insights on the role of input-dependence for MQAR. At 360M parameters, programmatic selection closes 85\% of the gap between pure ${\Coyote}$ and attention on the AR slice, in contrast to the random selection control. Learned selection closes 72\% a of the gap using just $k=256$ (sub-quadratic) attention positions per example.

\section{Discussion and Conclusion}

We present an extensive analysis of gated convolution architectures in light of their recent popularity. We identify a persisting quality gap between efficient convolution and inefficient attention based architectures, largely due to a single failure mode associative recall. We design a new multi-query associative recall (${\Task}$) analysis tool, which correlates with downstream AR quality. We theoretically and empirically explain the gap is due to insufficient data-dependent mixing in gated convolutions and we show minimal architectures that  close the gap on the Pile.

Our results in analyzing gated convolutions go beyond the conventional wisdom that attention is the ``\textit{right}'' model. A significant amount of work focuses on improving the efficiency of attention \citep{dao2022flashattention, dao2023flashattention2, katharopoulos-et-al-2020} and theoretically studying the exact power of attention \citep{hahn2020theoretical, merrill2022saturated, keles2023on}. Attention is often used as the goalpost for what is needed downstream. We hope our contributions highlight the value of {$\Task$}, and more broadly tasks tied to real language modeling, as a proxy to study.

\section*{Acknowledgments}
We thank Tri Dao, Daniel Fu, Neel Guha, Stefano Massaroli, Eric Nguyen, and Michael Zhang for helpful feedback and discussion during this work. We are grateful to Together Computer for making this work possible. We gratefully acknowledge the support of DARPA under Nos. FA86501827865 (SDH) and FA86501827882 (ASED); NIH under No. U54EB020405 (Mobilize), NSF under Nos. CCF1763315 (Beyond Sparsity), CCF1563078 (Volume to Velocity), and 1937301 (RTML); ONR under No. N000141712266 (Unifying Weak Supervision); the Moore Foundation, NXP, Xilinx, LETI-CEA, Intel, IBM, Microsoft, NEC, Toshiba, TSMC, ARM, Hitachi, BASF, Accenture, Ericsson, Qualcomm, Analog Devices, the Okawa Foundation, American Family Insurance, Google Cloud, Microsoft Azure, Swiss Re,
Brown Institute for Media Innovation,
Department of Defense (DoD) through the National Defense Science and
Engineering Graduate Fellowship (NDSEG) Program, 
Fannie and John Hertz Foundation,
National Science Foundation Graduate Research Fellowship Program,
Texas Instruments Stanford Graduate Fellowship in Science and Engineering,
and members of the Stanford DAWN project: Teradata, Facebook, Google, Ant Financial, NEC, VMWare, and Infosys. The U.S. Government is authorized to reproduce and distribute reprints for Governmental purposes notwithstanding any copyright notation thereon. Any opinions, findings, and conclusions or recommendations expressed in this material are those of the authors and do not necessarily reflect the views, policies, or endorsements, either expressed or implied, of DARPA, NIH, ONR, or the U.S. Government.
AR's work is supported by NSF grant\# CCF-2247014.
IJ's work is supported by an NSF Graduate Fellowship.

\bibliographystyle{unsrtnat}
\bibliography{references} \newpage

\begin{thebibliography}{79}
\providecommand{\natexlab}[1]{#1}
\providecommand{\url}[1]{\texttt{#1}}
\expandafter\ifx\csname urlstyle\endcsname\relax
  \providecommand{\doi}[1]{doi: #1}\else
  \providecommand{\doi}{doi: \begingroup \urlstyle{rm}\Url}\fi

\bibitem[Fu et~al.(2023{\natexlab{a}})Fu, Dao, Saab, Thomas, Rudra, and R{\'e}]{dao2022hungry}
Daniel~Y. Fu, Tri Dao, Khaled~K. Saab, Armin~W. Thomas, Atri Rudra, and Christopher R{\'e}.
\newblock Hungry {H}ungry {H}ippos: Towards language modeling with state space models.
\newblock In \emph{International Conference on Learning Representations}, 2023{\natexlab{a}}.

\bibitem[Ma et~al.(2022)Ma, Zhou, Kong, He, Gui, Neubig, May, and Luke]{ma2022mega}
Xuezhe Ma, Chunting Zhou, Xiang Kong, Junxian He, Liangke Gui, Graham Neubig, Jonathan May, and Zettlemoyer Luke.
\newblock Mega: Moving average equipped gated attention.
\newblock \emph{arXiv preprint arXiv:2209.10655}, 2022.

\bibitem[Wang et~al.(2022)Wang, Yan, Gu, and Rush]{wang2022pretraining}
Junxiong Wang, Jing~Nathan Yan, Albert Gu, and Alexander~M Rush.
\newblock Pretraining without attention.
\newblock \emph{arXiv preprint arXiv:2212.10544}, 2022.

\bibitem[Poli et~al.(2023{\natexlab{a}})Poli, Massaroli, Nguyen, Fu, Dao, Baccus, Bengio, Ermon, and R{\'e}]{poli2023hyena}
Michael Poli, Stefano Massaroli, Eric Nguyen, Daniel~Y Fu, Tri Dao, Stephen Baccus, Yoshua Bengio, Stefano Ermon, and Christopher R{\'e}.
\newblock Hyena hierarchy: Towards larger convolutional language models.
\newblock \emph{arXiv preprint arXiv:2302.10866}, 2023{\natexlab{a}}.

\bibitem[Dauphin et~al.(2017)Dauphin, Fan, Auli, and Grangier]{dauphin2017language}
Yann~N Dauphin, Angela Fan, Michael Auli, and David Grangier.
\newblock Language modeling with gated convolutional networks.
\newblock In \emph{International conference on machine learning}, pages 933--941. PMLR, 2017.

\bibitem[Gu et~al.(2021)Gu, Goel, and R{\'e}]{gu2021efficiently}
Albert Gu, Karan Goel, and Christopher R{\'e}.
\newblock Efficiently modeling long sequences with structured state spaces.
\newblock \emph{arXiv preprint arXiv:2111.00396}, 2021.

\bibitem[Peng et~al.(2023)Peng, Alcaide, Anthony, Albalak, Arcadinho, Cao, Cheng, Chung, Grella, Kiran~GV, He, Hou, Kazienko, Kocon, and Kong]{peng2023rwkv}
Bo~Peng, Eric Alcaide, Quentin Anthony, Alon Albalak, Samuel Arcadinho, Huanqi Cao, Xin Cheng, Michael Chung, Matteo Grella, Kranthi Kiran~GV, Xuzheng He, Haowen Hou, Przemyslaw Kazienko, Jan Kocon, and Jiaming et~al. Kong.
\newblock Rwkv: Reinventing rnns for the transformer era.
\newblock \emph{arXiv:2305.13048}, 2023.

\bibitem[Fu et~al.(2023{\natexlab{b}})Fu, Arora, Grogan, Johnson, Eyuboglu, Thomas, Spector, Poli, Rudra, and Ré]{fu2023monarch}
Daniel~Y. Fu, Simran Arora, Jessica Grogan, Isys Johnson, Sabri Eyuboglu, Armin~W. Thomas, Benjamin Spector, Michael Poli, Atri Rudra, and Christopher Ré.
\newblock Monarch mixer: A simple sub-quadratic gemm-based architecture, 2023{\natexlab{b}}.

\bibitem[Graves et~al.(2014)Graves, Wayne, and Danihelka]{graves2014neural}
Alex Graves, Greg Wayne, and Ivo Danihelka.
\newblock Neural turing machines.
\newblock \emph{arXiv preprint arXiv:1410.5401}, 2014.

\bibitem[Ba et~al.(2016)Ba, Hinton, Mnih, Leibo, and Ionescu]{ba2016using}
Jimmy Ba, Geoffrey~E Hinton, Volodymyr Mnih, Joel~Z Leibo, and Catalin Ionescu.
\newblock Using fast weights to attend to the recent past.
\newblock \emph{Advances in neural information processing systems}, 29, 2016.

\bibitem[Elhage et~al.(2021)Elhage, Nanda, Olsson, Henighan, Joseph, Mann, Askell, Bai, Chen, Conerly, et~al.]{elhage2021mathematical}
Nelson Elhage, Neel Nanda, Catherine Olsson, Tom Henighan, Nicholas Joseph, Ben Mann, Amanda Askell, Yuntao Bai, Anna Chen, Tom Conerly, et~al.
\newblock A mathematical framework for transformer circuits.
\newblock \emph{Transformer Circuits Thread}, 1, 2021.

\bibitem[Olsson et~al.(2022)Olsson, Elhage, Nanda, Joseph, DasSarma, Henighan, Mann, Askell, Bai, Chen, et~al.]{olsson2022context}
Catherine Olsson, Nelson Elhage, Neel Nanda, Nicholas Joseph, Nova DasSarma, Tom Henighan, Ben Mann, Amanda Askell, Yuntao Bai, Anna Chen, et~al.
\newblock In-context learning and induction heads.
\newblock \emph{arXiv preprint arXiv:2209.11895}, 2022.

\bibitem[Lutati et~al.(2023)Lutati, Zimerman, and Wolf]{lutati2023focus}
Shahar Lutati, Itamar Zimerman, and Lior Wolf.
\newblock Focus your attention (with adaptive iir filters), 2023.

\bibitem[Hahn(2020)]{hahn2020theoretical}
Michael Hahn.
\newblock Theoretical limitations of self-attention in neural sequence models.
\newblock In \emph{Transactions of the Association for Computational Linguistics}, volume~8, 2020.

\bibitem[Merrill et~al.(2022)Merrill, Sabharwal, and Smith]{merrill2022saturated}
William Merrill, Ashish Sabharwal, and Noah~A. Smith.
\newblock Saturated transformers are constant-depth threshold circuits.
\newblock In \emph{Transactions of the Association for Computational Linguistics}, volume~10, 2022.

\bibitem[Keles et~al.(2023)Keles, Wijewardena, and Hegde]{keles2023on}
Feyza~Duman Keles, Pruthuvi~Mahesakya Wijewardena, and Chinmay Hegde.
\newblock On the computational complexity of self-attention.
\newblock In \emph{34th International Conference on Algorithmic Learning Theory}, volume 201, page 1–23, 2023.

\bibitem[Gao et~al.(2020)Gao, Biderman, Black, Golding, Hoppe, Foster, Phang, He, Thite, Nabeshima, Presser, and Leahy]{pile}
Leo Gao, Stella Biderman, Sid Black, Laurence Golding, Travis Hoppe, Charles Foster, Jason Phang, Horace He, Anish Thite, Noa Nabeshima, Shawn Presser, and Connor Leahy.
\newblock The {P}ile: An 800gb dataset of diverse text for language modeling.
\newblock \emph{arXiv preprint arXiv:2101.00027}, 2020.

\bibitem[Touvron et~al.(2023)Touvron, Martin, Stone, Albert, Almahairi, Babaei, Bashlykov, Batra, Bhargava, and Bhosale]{touvron2023llama}
Hugo Touvron, Louis Martin, Kevin Stone, Peter Albert, Amjad Almahairi, Yasmine Babaei, Nikolay Bashlykov, Soumya Batra, Prajjwal Bhargava, and Shruti Bhosale.
\newblock Llama 2: Open foundation and fine-tuned chat models.
\newblock \emph{arXiv:2307.09288}, 2023.

\bibitem[Vaswani et~al.(2017)Vaswani, Shazeer, Parmar, Uszkoreit, Jones, Gomez, Kaiser, and Polosukhin]{vaswani2018attention}
Ashish Vaswani, Noam Shazeer, Niki Parmar, Jakob Uszkoreit, Llion Jones, Aidan~N Gomez, Lukasz Kaiser, and Illia Polosukhin.
\newblock Attention is all you need.
\newblock volume~30, 2017.

\bibitem[Cooley and Tukey(1965)]{cooley1965algorithm}
James~W Cooley and John~W Tukey.
\newblock An algorithm for the machine calculation of complex fourier series.
\newblock \emph{Mathematics of computation}, 19\penalty0 (90):\penalty0 297--301, 1965.

\bibitem[Tay et~al.(2022)Tay, Dehghani, Bahri, and Metzler]{tay2022efficient}
Yi~Tay, Mostafa Dehghani, Dara Bahri, and Donald Metzler.
\newblock Efficient transformers: A survey.
\newblock \emph{ACM Computing Surveys}, 55\penalty0 (6):\penalty0 1--28, 2022.

\bibitem[Goel et~al.(2022)Goel, Gu, Donahue, and Ré]{goel2022its}
Karan Goel, Albert Gu, Chris Donahue, and Christopher Ré.
\newblock It's raw! audio generation with state-space models.
\newblock \emph{Proceedings of the 39 th International Conference on Machine Learning,}, 2022.

\bibitem[Zhang et~al.(2023)Zhang, Saab, Poli, Dao, Goel, and R{\'e}]{zhang2023effectively}
Michael Zhang, Khaled Saab, Michael Poli, Tri Dao, Karan Goel, and Christopher R{\'e}.
\newblock Effectively modeling time series with simple discrete state spaces.
\newblock \emph{International Conference on Learning Representations}, 2023.

\bibitem[Zhai et~al.(2021)Zhai, Talbott, Srivastava, Huang, Goh, Zhang, and Susskind]{zhai2021attention}
Shuangfei Zhai, Walter Talbott, Nitish Srivastava, Chen Huang, Hanlin Goh, Ruixiang Zhang, and Josh Susskind.
\newblock An attention free transformer.
\newblock \emph{arXiv preprint arXiv:2105.14103}, 2021.

\bibitem[Andonian et~al.(2023)Andonian, Anthony, Biderman, Black, Gali, Gao, Hallahan, Levy-Kramer, Leahy, Nestler, Parker, Pieler, Phang, Purohit, Schoelkopf, Stander, Songz, Tigges, Thérien, Wang, and Weinbach]{gpt-neox-library}
Alex Andonian, Quentin Anthony, Stella Biderman, Sid Black, Preetham Gali, Leo Gao, Eric Hallahan, Josh Levy-Kramer, Connor Leahy, Lucas Nestler, Kip Parker, Michael Pieler, Jason Phang, Shivanshu Purohit, Hailey Schoelkopf, Dashiell Stander, Tri Songz, Curt Tigges, Benjamin Thérien, Phil Wang, and Samuel Weinbach.
\newblock {GPT-NeoX: Large Scale Autoregressive Language Modeling in PyTorch}, 9 2023.
\newblock URL \url{https://www.github.com/eleutherai/gpt-neox}.

\bibitem[Hasani et~al.(2022)Hasani, Lechner, Wang, Chahine, Amini, and Rus]{hasani2022liquid}
Ramin Hasani, Mathias Lechner, Tsun-Huang Wang, Makram Chahine, Alexander Amini, and Daniela Rus.
\newblock Liquid structural state-space models.
\newblock \emph{arXiv preprint arXiv:2209.12951}, 2022.

\bibitem[Sun et~al.(2023)Sun, Dong, Huang, Ma, Xia, Xue, Wang, and Wei]{sun2023retentive}
Yutao Sun, Li~Dong, Shaohan Huang, Shuming Ma, Yuqing Xia, Jilong Xue, Jianyong Wang, and Furu Wei.
\newblock Retentive network: A successor to transformer for large language models, 2023.

\bibitem[Eyuboglu et~al.(2022)Eyuboglu, Varma, Saab, Delbrouck, Lee-Messer, Dunnmon, Zou, and R{\'e}]{Eyuboglu2022-qz}
Sabri Eyuboglu, Maya Varma, Khaled Saab, Jean-Benoit Delbrouck, Christopher Lee-Messer, Jared Dunnmon, James Zou, and Christopher R{\'e}.
\newblock Domino: Discovering systematic errors with cross-modal embeddings.
\newblock In \emph{International Conference on Learning Representations}, March 2022.

\bibitem[Willshaw et~al.(1969)Willshaw, Buneman, and Longuet-Higgins]{willshaw1969non}
David~J Willshaw, O~Peter Buneman, and Hugh~Christopher Longuet-Higgins.
\newblock Non-holographic associative memory.
\newblock \emph{Nature}, 222\penalty0 (5197):\penalty0 960--962, 1969.

\bibitem[Hopfield(1982)]{hopfield1982neural}
John~J Hopfield.
\newblock Neural networks and physical systems with emergent collective computational abilities.
\newblock \emph{Proceedings of the national academy of sciences}, 79\penalty0 (8):\penalty0 2554--2558, 1982.

\bibitem[B{\"u}rgisser et~al.(1996)B{\"u}rgisser, Lickteig, Clausen, and Shokrollahi]{bürgisser1996algebraic}
P.~B{\"u}rgisser, T.~Lickteig, M.~Clausen, and A.~Shokrollahi.
\newblock \emph{Algebraic Complexity Theory}.
\newblock Grundlehren der mathematischen Wissenschaften. Springer Berlin Heidelberg, 1996.
\newblock ISBN 9783540605829.
\newblock URL \url{https://books.google.com/books?id=dYcgjfXsYk8C}.

\bibitem[Dao et~al.(2020)Dao, Sohoni, Gu, Eichhorn, Blonder, Leszczynski, Rudra, and R{\'e}]{dao2020kaleidoscope}
Tri Dao, Nimit~S Sohoni, Albert Gu, Matthew Eichhorn, Amit Blonder, Megan Leszczynski, Atri Rudra, and Christopher R{\'e}.
\newblock Kaleidoscope: An efficient, learnable representation for all structured linear maps.
\newblock \emph{arXiv preprint arXiv:2012.14966}, 2020.

\bibitem[Fu et~al.(2023{\natexlab{c}})Fu, Epstein, Nguyen, Thomas, Zhang, Dao, Rudra, and R{\'e}]{fu2023simple}
Daniel~Y. Fu, Elliot~L. Epstein, Eric Nguyen, Armin~W. Thomas, Michael Zhang, Tri Dao, Atri Rudra, and Christopher R{\'e}.
\newblock Simple hardware-efficient long convolutions for sequence modeling.
\newblock \emph{arXiv preprint arXiv:2302.06646}, 2023{\natexlab{c}}.

\bibitem[Zaheer et~al.(2020)Zaheer, Guruganesh, Dubey, Ainslie, Alberti, Ontanon, Pham, Ravula, Wang, Yang, and et~al]{zaheer2020bigbird}
Manzil Zaheer, Guru Guruganesh, Avinava Dubey, Joshua Ainslie, Chris Alberti, Santiago Ontanon, Philip Pham, Anirudh Ravula, Qifan Wang, Li~Yang, and et~al.
\newblock Big bird: Transformers for longer sequences.
\newblock \emph{Proceedings of NeurIPS}, 2020.

\bibitem[Child et~al.(2019{\natexlab{a}})Child, Gray, Radford, and Sutskever]{child2019generating}
Rewon Child, Scott Gray, Alec Radford, and Ilya Sutskever.
\newblock Generating long sequences with sparse transformers.
\newblock \emph{arXiv preprint arXiv:1904.10509}, 2019{\natexlab{a}}.

\bibitem[Beltagy et~al.(2020)Beltagy, Peters, and Cohan]{beltagy2020longformer}
Iz~Beltagy, Matthew~E Peters, and Arman Cohan.
\newblock Longformer: The long-document transformer.
\newblock \emph{arXiv preprint arXiv:2004.05150}, 2020.

\bibitem[Ren et~al.(2023)Ren, Liu, Wang, Xu, Zhu, and Zhai]{ren2023sparse}
Liliang Ren, Yang Liu, Shuohang Wang, Yichong Xu, Chenguang Zhu, and ChengXiang Zhai.
\newblock Sparse modular activation for efficient sequence modeling.
\newblock \emph{Sparse Modular Activation for Efficient Sequence Modeling}, 2023.

\bibitem[Dao et~al.(2022)Dao, Fu, Ermon, Rudra, and R{\'e}]{dao2022flashattention}
Tri Dao, Daniel~Y. Fu, Stefano Ermon, Atri Rudra, and Christopher R{\'e}.
\newblock Flash{A}ttention: Fast and memory-efficient exact attention with {IO}-awareness.
\newblock In \emph{Advances in Neural Information Processing Systems}, 2022.

\bibitem[Dao(2023)]{dao2023flashattention2}
Tri Dao.
\newblock Flash{A}ttention-2: Faster attention with better parallelism and work partitioning.
\newblock 2023.

\bibitem[Katharopoulos et~al.(2020{\natexlab{a}})Katharopoulos, Vyas, Pappas, and Fleuret]{katharopoulos-et-al-2020}
A.~Katharopoulos, A.~Vyas, N.~Pappas, and F.~Fleuret.
\newblock Transformers are rnns: Fast autoregressive transformers with linear attention.
\newblock In \emph{Proceedings of the International Conference on Machine Learning (ICML)}, 2020{\natexlab{a}}.
\newblock URL \url{https://arxiv.org/abs/2006.16236}.

\bibitem[Olah(2022)]{olah2022mechanistic}
Chris Olah.
\newblock Mechanistic interpretability, variables, and the importance of interpretable bases: An informal note on some intuitions related to mechanistic interpretability, 2022.
\newblock URL \url{https://transformer-circuits.pub/2022/mech-interp-essay/index.html}.

\bibitem[Power et~al.(2022)Power, Burda, Edwards, Babuschkin, and Misra]{power2022grokking}
Alethea Power, Yuri Burda, Harri Edwards, Igor Babuschkin, and Vedant Misra.
\newblock Grokking: Generalization beyond overfitting on small algorithmic datasets.
\newblock \emph{arXiv:2201.02177}, 2022.

\bibitem[Cammarata et~al.(2020)Cammarata, Carter, Goh, Olah, Petrov, Schubert, Voss, Egan, and Lim]{cammarata2020thread}
Nick Cammarata, Shan Carter, Gabriel Goh, Chris Olah, Michael Petrov, Ludwig Schubert, Chelsea Voss, Ben Egan, and Swee~Kiat Lim.
\newblock Thread: circuits.
\newblock \emph{Distill}, 5\penalty0 (3):\penalty0 e24, 2020.

\bibitem[Wang and Eisner(2016)]{wang2016galactic}
Dingquan Wang and Jason Eisner.
\newblock The galactic dependencies treebanks: Getting more data by synthesizing new languages.
\newblock \emph{Transactions of the Association for Computational Linguistics}, 4:\penalty0 491--505, 2016.

\bibitem[White and Cotterell(2021)]{white2021examining}
Jennifer~C White and Ryan Cotterell.
\newblock Examining the inductive bias of neural language models with artificial languages.
\newblock \emph{arXiv preprint arXiv:2106.01044}, 2021.

\bibitem[Allen-Zhu and Li(2023)]{allen2023physics}
Zeyuan Allen-Zhu and Yuanzhi Li.
\newblock Physics of language models: Part 1, context-free grammar.
\newblock \emph{arXiv preprint arXiv:2305.13673}, 2023.

\bibitem[Ravfogel et~al.(2019)Ravfogel, Goldberg, and Linzen]{ravfogel2019studying}
Shauli Ravfogel, Yoav Goldberg, and Tal Linzen.
\newblock Studying the inductive biases of rnns with synthetic variations of natural languages.
\newblock \emph{arXiv preprint arXiv:1903.06400}, 2019.

\bibitem[Xie et~al.(2021)Xie, Raghunathan, Liang, and Ma]{xie2021incontext}
Sang~Michael Xie, Aditi Raghunathan, Percy Liang, and Tengyu Ma.
\newblock An explanation of in-context learning as implicit bayesian inference.
\newblock \emph{arXiv preprint arXiv:2111.02080}, 2021.

\bibitem[Kitaev et~al.(2020)Kitaev, Kaiser, and Levskaya]{kitaev2020reformer}
Nikita Kitaev, {\L}ukasz Kaiser, and Anselm Levskaya.
\newblock Reformer: The efficient transformer.
\newblock \emph{arXiv preprint arXiv:2001.04451}, 2020.

\bibitem[Feldman et~al.(1981)Feldman, Hinton, and Anderson]{feldman1981parallel}
JA~Feldman, GE~Hinton, and JA~Anderson.
\newblock Parallel models of associative memory, 1981.

\bibitem[Zhang and Zhou(2017)]{zhang2017learning}
Wei Zhang and Bowen Zhou.
\newblock Learning to update auto-associative memory in recurrent neural networks for improving sequence memorization.
\newblock \emph{arXiv preprint arXiv:1709.06493}, 2017.

\bibitem[Massaroli et~al.(2023)Massaroli, Poli, Fu, Kumbong, Romero, Parnichukun, Timalsina, McIntyre, Chen, Rudra, Zhang, R{\'e}, Ermon, and Bengio]{massaroli2023laughing}
Stefano Massaroli, Michael Poli, Daniel~Y Fu, Hermann Kumbong, David Romero, Rom Parnichukun, Aman Timalsina, Quinn McIntyre, Beidi Chen, Atri Rudra, Ce~Zhang, Christopher R{\'e}, Stefano Ermon, and Yoshua Bengio.
\newblock Laughing hyena distillery: Extracting compact recurrences from convolutions.
\newblock 2023.

\bibitem[Romero et~al.(2022)Romero, Kuzina, Bekkers, Tomczak, and Hoogendoorn]{romero2022ckconv}
David~W. Romero, Anna Kuzina, Erik~J. Bekkers, Jakub~M. Tomczak, and Mark Hoogendoorn.
\newblock Ckconv: Continuous kernel convolution for sequential data.
\newblock 2022.

\bibitem[Gupta et~al.(2022)Gupta, Gu, and Berant]{gupta2022diagonal}
Ankit Gupta, Albert Gu, and Jonathan Berant.
\newblock Diagonal state spaces are as effective as structured state spaces, 2022.

\bibitem[Gu et~al.(2022)Gu, Gupta, Goel, and Ré]{gu2022parameterization}
Albert Gu, Ankit Gupta, Karan Goel, and Christopher Ré.
\newblock On the parameterization and initialization of diagonal state space models, 2022.

\bibitem[Mehta et~al.(2022)Mehta, Gupta, Cutkosky, and Neyshabur]{mehta2022long}
Harsh Mehta, Ankit Gupta, Ashok Cutkosky, and Behnam Neyshabur.
\newblock Long range language modeling via gated state spaces, 2022.

\bibitem[Smith et~al.(2023)Smith, Warrington, and Linderman]{smith2023simplified}
Jimmy T.~H. Smith, Andrew Warrington, and Scott~W. Linderman.
\newblock Simplified state space layers for sequence modeling, 2023.

\bibitem[Nguyen et~al.(2023)Nguyen, Poli, Faizi, Thomas, Birch-Sykes, Wornow, Patel, Rabideau, Massaroli, Bengio, Ermon, Baccus, and Ré]{nguyen2023hyenadna}
Eric Nguyen, Michael Poli, Marjan Faizi, Armin Thomas, Callum Birch-Sykes, Michael Wornow, Aman Patel, Clayton Rabideau, Stefano Massaroli, Yoshua Bengio, Stefano Ermon, Stephen~A. Baccus, and Chris Ré.
\newblock Hyenadna: Long-range genomic sequence modeling at single nucleotide resolution, 2023.

\bibitem[toe(2023)]{toews2023transformers}
Transformers revolutionized ai. what will replace them?, 2023.
\newblock URL \url{https://www.forbes.com/sites/robtoews/2023/09/03/transformers-revolutionized-ai-what-will-replace-them/?sh=6ed698269c1f}.

\bibitem[Wang et~al.(2023)Wang, Zhu, Wang, Yu, Liu, Omar, and Hamid]{wang2023selective}
Jue Wang, Wentao Zhu, Pichao Wang, Xiang Yu, Linda Liu, Mohamed Omar, and Raffay Hamid.
\newblock Selective structured state-spaces for long-form video understanding.
\newblock In \emph{CVPR}, 2023.

\bibitem[Yang et~al.(2020)Yang, Bender, Le, and Ngiam]{yang2020condconv}
Brandon Yang, Gabriel Bender, Quoc~V. Le, and Jiquan Ngiam.
\newblock Condconv: Conditionally parametrized convolutions for efficient inference.
\newblock In \emph{33rd Conference on Neural Information Processing Systems (NeurIPS 2019)}, 2020.

\bibitem[Kosma et~al.(2023)Kosma, Nikolentzos, and Vazirgiannis]{kosma2023time}
Chrysoula Kosma, Giannis Nikolentzos, and Michalis Vazirgiannis.
\newblock Time-parameterized convolutional neural networks for irregularly sampled time series.
\newblock 2023.

\bibitem[Katharopoulos et~al.(2020{\natexlab{b}})Katharopoulos, Vyas, Pappas, and Fleuret]{katharopoulos2020transformers}
Angelos Katharopoulos, Apoorv Vyas, Nikolaos Pappas, and François Fleuret.
\newblock Transformers are rnns: Fast autoregressive transformers with linear attention, 2020{\natexlab{b}}.

\bibitem[Together(2023)]{together2023redpajama}
Together.
\newblock Redpajama: An open source recipe to reproduce llama training dataset, 2023.
\newblock URL \url{https://github.com/togethercomputer/RedPajama-Data}.

\bibitem[Sukhbaatar et~al.(2019)Sukhbaatar, Grave, Bojanowski, and Joulin]{sainbayar2019adaptive}
Sainbayar Sukhbaatar, Edouard Grave, Piotr Bojanowski, and Armand Joulin.
\newblock Adaptive attention span in transformers.
\newblock \emph{Association of Computational Linguistics}, 2019.

\bibitem[Brown et~al.(2020)Brown, Mann, Ryder, Subbiah, Kaplan, Dhariwal, Neelakantan, Shyam, Sastry, Askell, et~al.]{brown2020language}
Tom Brown, Benjamin Mann, Nick Ryder, Melanie Subbiah, Jared~D Kaplan, Prafulla Dhariwal, Arvind Neelakantan, Pranav Shyam, Girish Sastry, Amanda Askell, et~al.
\newblock Language models are few-shot learners.
\newblock \emph{Advances in neural information processing systems}, 33:\penalty0 1877--1901, 2020.

\bibitem[Anonymous(2023{\natexlab{a}})]{mamba2023}
Anonymous.
\newblock Mamba: Linear-time sequence modeling with selective state spaces.
\newblock 2023{\natexlab{a}}.
\newblock URL \url{https://openreview.net/forum?id=AL1fq05o7H}.

\bibitem[Anonymous(2023{\natexlab{b}})]{gateloop2023}
Anonymous.
\newblock Gateloop: Fully data-controlled linear recurrence for sequence modeling.
\newblock 2023{\natexlab{b}}.
\newblock URL \url{https://openreview.net/pdf?id=02Ug9N8DCI}.

\bibitem[Child et~al.(2019{\natexlab{b}})Child, Gray, Radford, and Sutskever]{child2019sparse}
Rewon Child, Scott Gray, Alec Radford, and Ilya Sutskever.
\newblock Generating long sequences with sparse transformers.
\newblock \emph{arXiv preprint arXiv:1904.10509}, 2019{\natexlab{b}}.

\bibitem[Qiu et~al.(2019)Qiu, Ma, Levy, tau Yih, Wang, and Tang]{qiu2019blockwise}
Jiezhong Qiu, Hao Ma, Omer Levy, Scott~Wen tau Yih, Sinong Wang, and Jie Tang.
\newblock Blockwise self-attention for long document understanding.
\newblock \emph{arXiv preprint arXiv:1911.02972}, 2019.

\bibitem[Heideman and Burrus(1988)]{heideman}
Michael~T Heideman and C~Sidney Burrus.
\newblock \emph{Multiplicative complexity, convolution, and the DFT}.
\newblock Springer, 1988.

\bibitem[Volkovich(2016)]{volkovich2016guide}
Ilya Volkovich.
\newblock A guide to learning arithmetic circuits.
\newblock In \emph{Conference on Learning Theory}, pages 1540--1561. PMLR, 2016.

\bibitem[Poli et~al.(2023{\natexlab{b}})Poli, Massaroli, Nguyen, Fu, Dao, Baccus, Bengio, Ermon, and R{\'e}]{hyena}
Michael Poli, Stefano Massaroli, Eric Nguyen, Daniel~Y Fu, Tri Dao, Stephen Baccus, Yoshua Bengio, Stefano Ermon, and Christopher R{\'e}.
\newblock Hyena hierarchy: Towards larger convolutional language models.
\newblock \emph{Proceedings of the 40th International Conference on Machine Learning}, 2023{\natexlab{b}}.

\bibitem[Jayram et~al.(2008)Jayram, Kumar, and Sivakumar]{jayram2008one}
Thathachar~S Jayram, Ravi Kumar, and Dandapani Sivakumar.
\newblock The one-way communication complexity of hamming distance.
\newblock \emph{Theory of Computing}, 4\penalty0 (1):\penalty0 129--135, 2008.

\bibitem[Press et~al.(2021)Press, Smith, and Lewis]{press2021train}
Ofir Press, Noah~A Smith, and Mike Lewis.
\newblock Train short, test long: Attention with linear biases enables input length extrapolation.
\newblock \emph{arXiv preprint arXiv:2108.12409}, 2021.

\bibitem[Akl and Meijer(1990)]{akl1990parallel}
Selim~G Akl and Henk Meijer.
\newblock Parallel binary search.
\newblock \emph{IEEE Transactions on Parallel \& Distributed Systems}, 1\penalty0 (02):\penalty0 247--250, 1990.

\bibitem[Cormen et~al.(2022)Cormen, Leiserson, Rivest, and Stein]{cormen2022introduction}
Thomas~H Cormen, Charles~E Leiserson, Ronald~L Rivest, and Clifford Stein.
\newblock \emph{Introduction to algorithms}.
\newblock MIT press, 2022.

\bibitem[Ajtai et~al.(1983)Ajtai, Koml{\'o}s, and Szemer{\'e}di]{ajtai19830}
Mikl{\'o}s Ajtai, J{\'a}nos Koml{\'o}s, and Endre Szemer{\'e}di.
\newblock An 0 (n log n) sorting network.
\newblock In \emph{Proceedings of the fifteenth annual ACM symposium on Theory of computing}, pages 1--9, 1983.

\bibitem[Chatfield(1995)]{chatfield1995the}
Chris Chatfield.
\newblock The analysis of time series: An introduction, fifth edition.
\newblock 1995.

\end{thebibliography}

\appendix
\section*{Appendix}
The appendix includes the following content:
\begin{enumerate}
    \item \Cref{app:related-work} provides an extended discussion of related work and concepts.
    \item \Cref{app:implementation} provides a code implementation of the {\Coyote} architecture.
    \item \Cref{app:expdetails} gives details for the experiments, including model architectures and hyperparameters.
    \item \Cref{app:mqar-downsream} provides additional analysis of how {$\Task$} appears in real data across a variety of language distributions.  
    \item \Cref{app:synthetic} provides a formal definition of the {$\Task$} problem, synthetic construction procedure, and experimental details.
    \item \Cref{app:synth-results} provides additional synthetic experiments and analysis.
    \item \Cref{sec:scaling} provides experiments and analysis for how the {$\Task$} gap changes as we scale the gated convolution and attention architectures.s
    \item \Cref{app:thry} gives proofs and additional discussion for the theoretical analysis in our work.
\end{enumerate}

\section{Extended Related Work}
\label{app:related-work}
We are inspired by and build on prior work in mechanistic interpretability (\Cref{sec:related-mech}), efficient architecture design (\Cref{sec:related-eff}), and input-dependent architectures (\Cref{sec:related-inp}).

\subsection{Mechanistic Interpretability, Synthetic Languages, and Associative Recall} 
\label{sec:related-mech}
Work in mechanistic interpretability aims to decompose the capabilities of a neural network into human-understandable algorithms that can be attributed to specific parameters in the model~\citep{olah2022mechanistic, power2022grokking,elhage2021mathematical,cammarata2020thread}. Some of these works, use synthetic data to validate mechanistic interpretations of neural networks~\citep{olsson2022context}. This relates to a broader line of work using synthetic languages to study language model architectures~\citep{wang2016galactic,white2021examining,allen2023physics,ravfogel2019studying,xie2021incontext}. 
\textit{Mechanistic design} puts mechanistic interpretations to use in designing new architectures and learning algorithms.  Several works in architecture research have used synthetic tasks to validate architecture designs~\citep{kitaev2020reformer}.

Our work is focused on one particular synthetic task: \textit{associative recall}. Motivated by psychological models of how humans associate and retrieve information, work in the early days of neural network research focused on developing systems capable of associative recall~\citep{willshaw1969non,feldman1981parallel,hopfield1982neural}.  For example, Hopfield networks, proposed in 1982, are a recurrent neural network explicitly designed to support associative (``content-addressable") memory~\citep{hopfield1982neural}. 
More recently, several notable recurrent neural network mechanisms (\textit{e.g.} neural turing machines, LSTMs, and RNNs with fast weights) were evaluated on a synthetic version of associative recall~\citep[inter alia.]{graves2014neural,ba2016using, zhang2017learning}. These works use a formulation of associative recall very similar to the single-query associative recall in our work.
Since the rise of large language models, several works have argued that LLMs ability to perform \textit{in-context} learning is due, at least in part, to the associative recall capabilities of attention~\citep{elhage2021mathematical, olsson2022context}. 

\subsection{Efficient Language Modeling Architectures}
\label{sec:related-eff}
We first briefly review the efficiency motivations for the recent excitement around gated convolution architectures. While attention requires compute that scales as $\calO(N^2)$ in sequence length $N$, convolutions scale as $\calO(N \log N)$~\citep{cooley1965algorithm}. Ideal, inference complexity is $\calO(1)$ in sequence length, as provided by recurrent neural networks. State-space models can be computed either as a convolution or recurrence, to achieve both sub-quadratic training and constant inference complexity in sequence length~\citep{gu2021efficiently}. Architectures that use \textit{implicit} convolutional filters \citep{poli2023hyena}, can be converted to an SSM via a simple distillation step~\citep{massaroli2023laughing}.

\subsection{Input-Dependence in Sequence Modeling Architectures}
\label{sec:related-inp}
In an input-dependent sequence model, the way tokens are aggregated across the sequence is controlled by the data, not just the model parameters. We highlight several prior works related to our study that explore input-dependent sequence models.
\begin{itemize}
    \item \textbf{Attention} \citep{vaswani2018attention} achieves input-dependent sequence mixng since the $Q$, $K$, and $V$ terms are achieved via linear combinations of the input $x$. For instance, $Q = xW_Q$ for some learned weight matrix $W_Q$.
    \item \textbf{Input-Independent Convolution Architectures}. Given the quadratic scaling of attention, \cite{gu2021efficiently} propose S4, an architecture that use long convolutions to process the input. CKConv \cite{romero2022ckconv} uses a convolution filter that is implicitly parametrized by an MLP of the relative positions of the observations in a time-series. A line of subsequent work improved upon S4 by changing the parametrization \citep{gupta2022diagonal, gu2022parameterization, mehta2022long, ma2022mega, fu2023simple} or changing the SISO convolutions to MIMO convolutions \citep{smith2023simplified}. The convolution filter in each of these architectures is input-independent. The linear convolution layer alone cannot perform MQAR \citep{dao2022hungry}.
    \item \textbf{Input-Dependence via Gating}. Since pure convolution architectures with input-independent filters struggle to perform associative recall, an important task in in-context learning \citep{olah2022mechanistic}, subsequent work proposed to introduce input-dependence by adding a \textit{gating} or element-wise multiplication operation to the architecture, where both the inputs to the operator are defined in terms of the input \citep{dao2022hungry, poli2023hyena}. The multiplication is generally $y = \sigma(Wx) \odot x$. The filters remain input-independent. These architectures demonstrate promising results on the associative recall synthetics proposed in prior work \cite{dao2022hungry, poli2023hyena} and provide large downstream improvements over S4.
    
    Prior work suggests the models match attention in quality \citep{poli2023hyena, fu2023monarch, peng2023rwkv} and this has led to increasing use of these architectures \citep{nguyen2023hyenadna, toews2023transformers}, 
    however our results suggest there is still a sizable gap to attention. We show the gap is largely due to the models' recall abilities and capture this behavior in a novel synthetic task. We theoretically show that dimension scales in sequence length when solving MQAR with gated convolutions.

    Selective S4 \cite{wang2023selective} learns which positions to mask prior to passing the input to the next long convolution (S4) layer. This resembles gating (which also controls which information flows forwards to the next layer). 

    \item \textbf{Input-Dependent Convolution Architectures} Prior work introduces convolution architectures with input-dependent filters \citep{yang2020condconv, kosma2023time}. For instance, \cite{yang2020condconv} parametrizes the convolution filter as a linear combination of $n$ projections of the input. Liquid S4 \cite{hasani2022liquid} is also motivated by introducing input-dependence to S4. To accomplish this, the work proposes including correlation terms between the input tokens during state mixing. We evaluate Liquid S4 and find the architecture lags attention on the Pile by 4.4 perplexity points in \Cref{table:liquid-s4}.
    \begin{table}[h!]
\centering
\small
\begin{tabular}{@{}lccccc@{}}
\toprule
 &  & \textbf{Overall} & \multicolumn{2}{c}{\textbf{Slices}} & \textbf{\% of gap due to} \\ 
Model & Param (M) & & AR Hits & Other Tokens &  \\
\midrule
Attention & 125 & \textbf{12.37} (2.52) & \textbf{2.32 (0.84)} & 14.04 (2.64) & ---  \\
Liquid S4 & 145 & \textbf{16.80} (2.82) & \textbf{22.75 (3.12)} & 16.42 (2.80) &  52.6\%  \\
\bottomrule
\end{tabular}
\caption{Validation perplexity on the Pile after pretraining on 5B tokens. We report log perplexity with negative log-likelihood in parentheses. AR vs. non-AR token slices are defined in \Cref{sec:sec3-downstream}.}
\label{table:liquid-s4}
\end{table}

    \item \textbf{Recurrent Architectures} The pure long-convolution architectures can also be computed as recurrences. Letting $s_i$ represent the hidden state at time $i$,  $u_i$ be the input at time $i$, and $A, B, C$ be projection matrices, the recurrence computes: 
    $$s_{i+1} = f(As_i + Bu_i)$$
    $$y_{i} = Cs_i$$
    In a linear RNN like S4, the $A$, $B$, and $C$ matrices are input-independent and $f$ is an identify function. However, recent work proposes input-dependent RNNs (e.g. RetNet \cite{sun2023retentive}), where a subset of these matrices is input-dependent (e.g. $A = xW_a$). Note that linear attention mechanisms can also be expressed as input-dependent RNNs \citep{katharopoulos2020transformers}. While these architectures improve over gated convolutions in {\Task} synthetic experiments and downstream quality on the Pile (\Cref{app:synth-results}), we observe and prove that the required RNN hidden state dimensionality grows with the number of key-value pairs that the model needs to recall in the sequence (\Cref{app:retnet-proof}). In contrast, attention complexity does not scale with the number of key-value pairs to recall. 
    \item \textbf{Hybrid Architectures} Finally, we can combine architectural components that provide different capabilities. Prior work proposes architectures that hybridize sub-quadratic layers and attention \citep{dao2022hungry, ma2022mega, ren2023sparse}, but does motivate this choice from a mechanistic design perspective. Further, H3 without attention \cite{dao2022hungry} and MEGA \citep{ma2022mega}, which uses blocked attention, underperform attention on language modeling. 
    
    SeqBoat \citep{ren2023sparse} shares closest resemblance to our learned selection module. SeqBoat introduces a different module that learns where to use attention however, their model can end up using full attention at a layer, providing quadratic scaling in the worst case. Different from SeqBoat, we use an auxiliary loss to encourage sparsity in the selected attention positions and select the top-$k$ positions to remain sub-quadratic. 
\end{itemize}

Overall, our work finds that previously proposed sub-quadratic models do not efficiently solve the {$\Task$} task.

\section{Code Listing}
\label{app:implementation}

In this section, we include code listings for the {\Coyote} operator. We begin with the a standard {\Coyote} implementation that uses an explicitly-parameterized long convolution. Below, we also provide the implementation for \Coyote with an implicitly-parameterized long convolution. 
\begin{lstlisting}[language=Python,style=mystyle,caption={\textbf{Explicit {\Coyote} implementation.} Implementation of the BaseConv layer with explicitly-parameterized long convolutions. The implemtnation is 19 lines excluding comments and whitespace.}]
import torch

def fft_conv(u: torch.Tensor, k: torch.Tensor):
    """
    Args:
        u (torch.Tensor): (batch_size, d_model, seq_len)
        k (torch.Tensor): (d_model, l_max)
    Return: 
        y (torch.Tensor): (batch_size, d_model, seq_len)
    """
    seqlen = u.shape[-1]
    fft_size = 2 * seqlen
    k_f = torch.fft.rfft(k, n=fft_size) / fft_size
    u_f = torch.fft.rfft(u.to(dtype=k.dtype), n=fft_size)
    y = torch.fft.irfft(u_f * k_f, n=fft_size, norm="forward")[..., :seqlen]
    return y

class BaseConv(torch.nn.Module):
    
    def __init__(self, d_model: int, l_max: int, **kwargs):
        super().__init__()
        self.d_model, l_max = d_model, l_max, 
        self.projection = torch.nn.Linear(self.d_model,  self.d_model)
        self.filter = torch.nn.Parameter(torch.randn(self.d_model, l_max), requires_grad=True)      

    def forward(self, u: torch.Tensor):
        """
        Args:
            u (torch.Tensor): (batch_size, d_model, seq_len)
        Return:
            y (torch.Tensor): (batch_size, d_model, seq_len)
        """
        u_conv = fft_conv(u.transpose(1, 2), self.filter).transpose(1, 2)
        u_proj = self.projection(u)
        y = u_conv * u_proj
        return y + u

\end{lstlisting}

In some of our experiments, we interleave {\Coyote} layers that use explicit short convolution filters (like those in \texttt{torch.nn.Conv1d}) and with \Coyote layers that use implicit long convolution filters (like those described in \citet{poli2023hyena}). 

Below we include the code for {\Coyote} with an implicit convolution.
\label{app:implementation}
\begin{lstlisting}[language=Python,style=mystyle,caption={\textbf{Implicit {\Coyote} implementation.} Implementation of the {\Coyote} layer with implicitly-parameterized long convolutions. (The implementation is 34 lines excluding comments and whitespace).}]
class PositionalEmbedding(nn.Module):
    def __init__(self, emb_dim: int, seq_len: int, **kwargs):
        """Complex exponential positional embeddings for implicit long convolution filters."""
        super().__init__()
        t = torch.linspace(0, 1, seq_len)[None, :, None]  # 1, L, 1
        bands = (emb_dim - 1) // 2
        t_rescaled = torch.linspace(0, seq_len - 1, seq_len)[None, :, None]
        w = 2 * math.pi * t_rescaled / seq_len  # 1, L, 1
        f = torch.linspace(1e-4, bands - 1, bands)[None, None]
        z = torch.exp(-1j * f * w)
        z = torch.cat([t, z.real, z.imag], dim=-1)
        self.z = nn.Parameter(z, requires_grad=False)

    def forward(self, L):
        return self.z[:, :L]

class BaseImplicitConv(nn.Module):
    """
    BaseConv with implicit filter parameterized by an MLP.

    Args:
        d_model (int): Number of expected features in input and output.
        l_max (int): The maximum sequence length.
        d_emb (int, optional): Dimension of the positional embeddings. Must be odd and $\geq$ to 3 (time, sine and cosine). Defaults to 3.
        d_hidden (int, optional): The number of features in the hidden layer of the MLP. Defaults to 16.
    """

    def __init__(self, d_model: int, l_max: int, d_emb: int=3, d_hidden: int = 16,):
        """
        Long convolution with implicit filter parameterized by an MLP.
        """
        super().__init__()
        self.pos_emb = PositionalEmbedding(d_emb, l_max)
        self.filter_mlp = nn.Sequential(nn.Linear(d_emb, d_hidden), torch.nn.ReLU(), nn.Linear(d_hidden, d_model))
        self.projection = torch.nn.Linear(d_model, d_model)


    def forward(self, u: torch.Tensor, *args, **kwargs):
        """
        Args:
            u (torch.Tensor): (batch_size, seq_len, d_model)
        Return:
            y (torch.Tensor): (batch_size, seq_len, d_model)
        """
        filter = self.filter_mlp(self.pos_emb(u.shape[1])).transpose(1, 2)
        u_conv = fft_conv(u.transpose(1, 2), filter).transpose(1, 2).to(dtype=u.dtype)
        u_proj = self.projection(u)
        y = u_conv * u_proj
        return y + u

\end{lstlisting}
\section{Downstream Experimental Details}
\label{app:expdetails}

We use A100 80GB Nvidia GPUs to run all experiments. We use the reference training infrastructure from \url{https://github.com/EleutherAI/gpt-neox} for all pretraining runs. The Pile data is tokenized using the GPT2BPETokenizer and all models see the data in the same order.

\label{sec:appendix_experiment_details}
Below we provide details on the hyperparameters and settings for training each architecture studied in the paper on the real-world Pile data. In \Cref{app:exp-measure-ar} we summarize and justify our method for measuring the quality gap due to associative recall between the convolution architectures and attention. In \Cref{app:exp-gated-convs}, we provide details on the pure gated convolution architectures, attention baseline, and hybrid architectures studied in the paper. 

\subsection{Measuring the {$\Task$} Gap on Real Language Data}
\label{app:exp-measure-ar}
Here we summarize our method for computing the amount of quality gap between the convolution and attention models that is ascribed to associative recall capability (e.g., in \Cref{table:ppl-slices}):
\begin{enumerate}
    \item Given an input sequence, we identify recurring bigrams (i.e. bigrams that have already appeared in the sequence at a prior position). Since bigrams that appear frequently during training may be memorized by the model, rather than requiring the model to perform recall at inference-time, we only measure AR log-probabilities with respect to bigrams that are seen fewer than a threshold number of times during training. The threshold used in all the experiments in our submission is $1,250$ training occurrences in the 10B tokens of pretraining data.
    \item We measure the log-probability assigned to the true bigram completion. This bigram completion is referred to as an AR Hit in our work. This protocol assumes that the model can produce the completion by recalling the prior occurrence of the bigram in the sequence. 
    \item For the model being evaluated $m$, and the attention model $M$, we measure the \% of the quality gap between $m$ and $M$ ascribed to associative recall capability as follows. 
    Let the average log-probability for all AR Hits across validation sequences be $l_{H}^m$ and $l_{H}^M$ for $m$ and $M$ respectively. Let the average log-probabilities of \textit{all tokens} in the validation sequences be $l^m$ and $l^M$ respectively. Let $p_{H}$ be the proportion of AR Hit tokens in the validation data. As the final gap ascribed to AR, we report:
    $$\min (\frac{(l_{H}^m - l_{H}^M) p_{H}}{l^m - l^M}, 1.0)$$
    Shown above, if $m$ is better than attention ($M$) overall and $M$ is better than $m$ at AR, we ascribe 100\% of the gap to AR. 
\end{enumerate}

We briefly discuss two important decisions in this protocol. First, we only measure \textbf{explicit bigrams}, i.e. bigrams are identified based on token ids in the sequence. However, intuitively, models may also perform associative recall between related \textit{concepts} produced by a contextual language model. For instance, language may contain bigrams in which one word is swapped by a synonym. As another example, a model may see a sentence such as ``The iPhone is outside my budget so I instead purchased an Android phone. ... It was much \_'' and predict ``cheaper'' for the blank, recalling that the sequence is discussing cost. Our work does not measure such fuzzy (more abstract) recall instances. 

Next, we measure the gap based on \textbf{log-probabilities} rather than perplexity. This is simply because we want to make our metrics independent of the number of tokens in each of the slices of the validation set. Approximately 6.4\% of validation tokens are AR Hits with the threshold set to consider bigrams seen less than $1,250 \times$ during training.

\subsection{Gated Convolution Downstream Architectures}
\label{app:exp-gated-convs}
We evaluate over $4$ previously proposed architectures as well as \Coyote, the theoretically ``canonical'' representation for gated convolutions, introduced in Section \ref{sec:theory-explanation}, for a total of 14 training runs. Here we provide details on the hyperaparamters and configurations used for training each architecture. We also provide details on the FLOPs computation. 

\begin{itemize}
    \item \textbf{Attention \citep{vaswani2018attention, touvron2023llama}}
We train using the the specifications in Table \ref{tab:attn-training-details}. The parameters are sourced from the Transformer implementation in \url{https://github.com/EleutherAI/gpt-neox}.

    \item \textbf{Hyena \citep{poli2023hyena}} We train using the specifications in Table \ref{tab:hyena-training-details}. The parameters are sourced from the Appendix of \cite{poli2023hyena} and the implementation is sourced from the provided reference at \url{https://github.com/HazyResearch/safari}.
    \item \textbf{H3 \citep{dao2022hungry}} We train using the specifications in \Cref{tab:h3-training-details}. The hyperparameters and implementation use the reference at \url{https://github.com/HazyResearch/H3}.

    \item \textbf{RWKV \citep{peng2023rwkv}}
We train using the specifications in Table \ref{tab:rwkv-training-details}. The parameters are sourced from the Appendix of \cite{peng2023rwkv} and the details provided in the reference implementation at \url{https://github.com/BlinkDL/RWKV-LM}. We specifically evaluate RWKV-V4. 
    \item \textbf{Pure long convolution}
We train using the specifications in Table \ref{tab:lc-training-details}.  We evaluate a simple long convolution based model with \textit{no gating} as a reference point. While this is a generic architecture, we use the reference implementation and initialziations/regularizations from recent work \cite{fu2023simple}. The implementation is provided at \url{https://github.com/HazyResearch/safari}.

    \item \textbf{{\Coyote}}
We train using the specifications in Table \ref{tab:coyote-training-details}. The implementation, amounting to 19 lines of PyTorch, is shown in \Cref{app:implementation}.
\end{itemize}

We provide the equations used to compute the FLOPs for each model, letting $D$ be the model width, $H$ the head dimension, $L$ the depth, $N$ the sequence length, $V$ the vocabulary size, and $B$ the batch size. FLOPs equations for different architecture types are provided in \Cref{tab:attn-flops} (attention), \Cref{tab:hyena-flops} (Hyena and adapted for H3), \Cref{tab:lc-flops} (pure long convolution), and \Cref{tab:coyote-flops} ({\Coyote}). We compute FLOPs for RWKV as in the Appendix of \citep{peng2023rwkv}, based on the number of linear layer parameters, plus input and language modeling head FLOPs.

\paragraph{Hybrid Architectures} In the main paper and appendix, we evaluate a series of architectures with a hybrid of gated convolution and non gated convolution layers. For these architectures, we use the same number of layers as the pure gated convolution architecture (as reported in \Cref{app:exp-gated-convs}). The hybridized (non gated convolution) layers are inserted as replacements of the gated convolution layer. For each architecture, we simply evenly intersperse the two types of layers.  We evaluate with the following replacement layers in this work:
\begin{itemize}
    \item \textbf{Full attention} following the specification of attention in \Cref{app:exp-gated-convs}.
    \item \textbf{Random selection} We evaluate sparse attention randomly applied to tokens as introduced in \cref{sec:sec5-method}.
    \item \textbf{Programmatic selection} We apply sparse attention on AR hit tokens as introduced in \cref{sec:sec5-method}. When processing the input sequence, we can causally determine if a bigram of raw token ids is repeated in the sequence to construct the attention pattern. 
    \item \textbf{Learned selection} We \textit{learn} the positions on which to use attention by introducing a simple linear layer with sigmoid activation to the architecture that is trained to output high scores if attention should be applied to the token. We can take the top-$k$ positions to control the computational complexity of the layer. This approach is introduced in \cref{sec:sec5-method}.
\end{itemize}
We use these protocols to validate that input-dependence suffices to address the associative recall gap and highlight that there are varied approaches to incorporating input-dependence in an architecture.

\section{Extended Downstream Analysis of {$\Task$}}
\label{app:mqar-downsream}

In this section, we provide additional analysis of how {$\Task$} manifests in real language data. We extend our discussion of associative recall on the Pile \cite{pile}. We also perform analysis on sources in RedPajama including ArXiv and StackOverflow \cite{together2023redpajama}.

\subsection{Additional Discussion of {$\Task$} in the Pile}
The Pile is a widely popular language modeling corpus \cite{pile}. 
\begin{enumerate}
    \item First, in \cref{sec:pile-examples}, we provide several real examples of \textit{how} associative recall occurs in the Pile to demonstrate it's role in language modeling. We color code the tokens to highlight differences in the next token predictions from Attention, RWKV and Hyena models. 
    \item Next, in \cref{sec:pile-distribution}, we explain \textit{where} associative recall hits tend to occur in sequences. This is useful to guide the design of sequence mixers --- if associative recall tends to occur in specific places, the input-dependent sequence mixer may not need to compute all $N^2$ token-to-token interactions for every sequence.
\end{enumerate}

\subsubsection{Pile Examples}

For the examples below, the legend is: tokens are colored as \textcolor{ao(english)}{both correct}, \textcolor{red}{both incorrect}, \textcolor{blue}{Attention correct and Hyena incorrect}, \textcolor{orange}{Attention incorrect and Hyena correct}. For tokens where the models disagree, the predictions of each model are provided in parentheses.

\label{sec:pile-examples}
\vspace{10pt}

\newfloat{Example}{tbhp}{lop}[section]
\definecolor{both}{HTML}{009B55}
\mdfdefinestyle{example}{
  linecolor=black,
  outerlinewidth=0.5pt,
  roundcorner=3pt,
  innertopmargin=\baselineskip,
  innerbottommargin=\baselineskip,
  innerrightmargin=20pt,
  innerleftmargin=20pt,
  backgroundcolor=gray!10
}

\begin{mdframed}[style=example]

\textcolor{red}{~while}
\textcolor{red}{~lunch}
\textcolor{both}{-ing}
\textcolor{red}{~at}
\textcolor{both}{~the}
\textcolor{red}{~Ma}
\textcolor{both}{-ison}
\textcolor{red}{~Ber}
\textcolor{red}{-gey}
\textcolor{red}{~b}
\textcolor{both}{-ist}
\textcolor{both}{-ro}
\textcolor{red}{~near}
\textcolor{red}{~his}
\textcolor{red}{~apartment}
\textcolor{red}{:}
\textcolor{red}{~he}
\textcolor{both}{~had}
\textcolor{both}{~been}
\textcolor{red}{~mus}
\textcolor{both}{-ing}
\textcolor{orange}{~about}\texttt{(rwkv=~about,~attn=~on)}
\textcolor{both}{~the}
...\texttt{(723 tokens)}...
\textcolor{both}{~the}
\textcolor{red}{~young}
\textcolor{red}{~waitress}
\textcolor{both}{-'s}
\textcolor{red}{~sigh}
\textcolor{red}{~at}
\textcolor{both}{~the}
\textcolor{red}{~Ma}
\textcolor{both}{-ison}
\textcolor{blue}{~Ber}\texttt{(rwkv=~Bl,~attn=~Ber)}
\textcolor{blue}{-gey}\texttt{(rwkv=-nd,~attn=-gey)}
\end{mdframed}
\vspace{-7pt}
\captionof{Example}{\textbf{Comparing next token predictions of a 350M parameter Attention model and a 350M parameter RWKV model.} In this example, the models need to perform associative recall to correctly predict ``Bergey" in the 4-gram Ma-ison Ber-gey. The previous mention of the 4-gram was more than 700 tokens earlier in the passage.}
\vspace{10pt}

\begin{mdframed}[style=example]
\textcolor{red}{~The}
\textcolor{blue}{~second}\texttt{(rwkv=~first,~attn=~second)}
\textcolor{red}{~section}
\textcolor{both}{~is}
\textcolor{red}{~all}
\textcolor{both}{~about}
\textcolor{red}{~Pixar}
\textcolor{red}{~Fest}
\textcolor{red}{,}
\textcolor{red}{~and}
\textcolor{blue}{~the} \texttt{(rwkv=~third,~attn=~the)}
\textcolor{red}{~final}
\textcolor{both}{~section}
\textcolor{both}{~is}
\textcolor{blue}{~all}\texttt{(rwkv=~about,~attn=~all)}
\textcolor{both}{~about}
\textcolor{red}{~Pixar}
\textcolor{red}{~Pier}
\textcolor{both}{.}
...\texttt{(480 tokens)}...
\textcolor{blue}{-If}\texttt{(rwkv=-Disney,~attn=-If)}
\textcolor{red}{~there}
\textcolor{red}{~wasn}
\textcolor{both}{-âĢ}
\textcolor{both}{}
\textcolor{both}{-t}
\textcolor{red}{~enough}
\textcolor{red}{~Pixar}
\textcolor{red}{~at}
\textcolor{red}{~Disneyland}
\textcolor{both}{,}
\textcolor{red}{~Pixar}
\textcolor{blue}{~Fest}\texttt{(rwkv=~would,~attn=~Fest)}
\textcolor{blue}{~is}\texttt{(rwkv=~at,~attn=~is)}
\textcolor{red}{~coming}
\textcolor{both}{~to}
\textcolor{red}{~the}
\textcolor{both}{~Disneyland}
\textcolor{both}{~Resort}
\textcolor{red}{~on}
\textcolor{red}{~April}
\textcolor{red}{~13}
\textcolor{red}{,}
\textcolor{red}{~2018}
\textcolor{both}{.}
\end{mdframed}
\vspace{-7pt}
\captionof{Example}{\textbf{Comparing next token predictions of a 350M parameter Attention model and a 350M parameter RWKV model.} In this example, the models need to perform associative recall to correctly predict ``Fest" in the bigram Pixar Fest.}
\vspace{10pt}

\begin{mdframed}[style=example]
\textcolor{red}{~David}
\textcolor{red}{~Mus}
\textcolor{red}{-a}
\textcolor{red}{~P}
\textcolor{red}{-id}
\textcolor{red}{-cock}
\textcolor{red}{~is}
\textcolor{red}{~an}
\textcolor{red}{~indigenous}
\textcolor{red}{~English}
\textcolor{red}{~revert}
\textcolor{blue}{~to}\texttt{(rwkv=-ant,~attn=~to)}
\textcolor{red}{~Islam}
\textcolor{orange}{,}\texttt{(rwkv=,,~attn=~who)}
\textcolor{red}{~who}
\textcolor{red}{~formed}
\textcolor{both}{~the}
\textcolor{red}{~Islamic}
\textcolor{red}{~Party}
\textcolor{both}{~of}
\textcolor{red}{~Britain}
\textcolor{both}{~in}
\textcolor{red}{~1989}
\textcolor{red}{,}
\textcolor{red}{~at}
\textcolor{red}{~London}
\textcolor{red}{~Central}
\textcolor{orange}{~Mosque}\texttt{(rwkv=~Mosque,~attn=~University)}
\textcolor{orange}{.}\texttt{(rwkv=.,~attn=~in)}
...(\texttt{711 tokens})...
\textcolor{both}{-I}
\textcolor{red}{~have}
\textcolor{red}{~just}
\textcolor{red}{~found}
\textcolor{red}{~out}
\textcolor{both}{~that}
\textcolor{red}{~David}
\textcolor{blue}{~Mus}\texttt{(rwkv=~Cameron,~attn=~Mus)}
\textcolor{both}{-a}
\textcolor{blue}{~P}\texttt{(rwkv=~(,~attn=~P)}
\textcolor{blue}{-id}\texttt{(rwkv=-asha,~attn=-id)}
\textcolor{blue}{-cock}\texttt{(rwkv=-ham,~attn=-cock)}
\textcolor{blue}{~is}\texttt{(rwkv=~(,~attn=~is)}
\textcolor{red}{~speaking}
\textcolor{both}{~at}
\textcolor{red}{~a}
\textcolor{red}{~Yorkshire}
\textcolor{red}{~Forum}
\textcolor{red}{~debate}
\textcolor{red}{~today}
\textcolor{red}{~in}
\textcolor{red}{~Bradford}
\textcolor{red}{~with}
\end{mdframed}
\vspace{-7pt}
\captionof{Example}{\textbf{Comparing next token predictions of a 350M parameter Attention model and a 350M parameter RWKV model.} In this example, the models need to perform associative recall to correctly predict a middle and last name. Note that the middle and last names span several tokens.}
\vspace{10pt}

\begin{mdframed}[style=example]
\textcolor{both}{-The}
\textcolor{red}{~most}
\textcolor{both}{~common}
\textcolor{red}{~map}
\textcolor{blue}{~elements}\texttt{(hyena=~in,~attn=~elements)}
\textcolor{red}{~in}
\textcolor{red}{~Sub}
\textcolor{red}{-Space}
\textcolor{both}{~are}
\textcolor{red}{~prizes}
\textcolor{both}{,}
\textcolor{red}{~or}
\textcolor{both}{~"}
\textcolor{blue}{-g}\texttt{(hyena=-points,~attn=-g)}
\textcolor{red}{-reens}
\textcolor{blue}{"}\texttt{(hyena=-",,~attn=")}
\textcolor{red}{~(}
\textcolor{red}{-for}
\textcolor{red}{~their}
\textcolor{red}{~green}
\textcolor{red}{~color}
\textcolor{red}{-).}
\textcolor{both}{~}
\textcolor{red}{~Pri}
\textcolor{both}{-zes}
\textcolor{red}{~allow}
\textcolor{both}{~players}
\textcolor{both}{~to}
\textcolor{red}{~upgrade}
\textcolor{both}{~their}
\textcolor{red}{~ships}
\textcolor{red}{~and}
\textcolor{blue}{~gain}\texttt{(hyena=~other,~attn=~gain)}
\textcolor{red}{~special}
\textcolor{red}{~weapons}
\textcolor{red}{~or}
\textcolor{orange}{~abilities}\texttt{(hyena=~abilities,~attn=~upgrades)}
\textcolor{both}{.}
\textcolor{both}{~}
\textcolor{red}{~While}
\textcolor{red}{~prizes}
\textcolor{both}{~are}
\textcolor{red}{~generally}
\textcolor{red}{~plent}
\textcolor{red}{-ifully}
\textcolor{red}{~scattered}
\textcolor{both}{~throughout}
\textcolor{both}{~the}
\textcolor{orange}{~map}\texttt{(hyena=~map,~attn=~game)}
\textcolor{both}{,}
\textcolor{red}{~the}
\textcolor{red}{~upgrades}
\textcolor{red}{~or}
\textcolor{blue}{~abilities}\texttt{(hyena=~bonuses,~attn=~abilities)}
\textcolor{red}{~they}
\textcolor{red}{~award}
\textcolor{both}{~are}
\textcolor{red}{~randomly}
\textcolor{red}{~selected}
\textcolor{red}{~by}
\textcolor{both}{~the}
\textcolor{red}{~zone}
\textcolor{red}{.}
\textcolor{red}{}
\textcolor{both}{}
\textcolor{red}{-Energy}
\textcolor{red}{~}
\textcolor{both}{}
\textcolor{red}{-Rather}
\textcolor{both}{~than}
\textcolor{red}{~dealing}
\textcolor{both}{~with}
\textcolor{red}{~ammunition}
\textcolor{red}{~counts}
\textcolor{red}{~and}
\textcolor{red}{~hit}
\textcolor{both}{~points}
\textcolor{red}{~separately}
\textcolor{both}{,}
\textcolor{red}{~Sub}
\textcolor{both}{-Space}
\textcolor{red}{~combines}
\textcolor{red}{~both}
\textcolor{red}{~of}
\textcolor{both}{~these}
\textcolor{both}{~elements}
\textcolor{both}{~into}
\textcolor{both}{~a}
\textcolor{both}{~single}
\textcolor{red}{~unit}
\textcolor{red}{~of}
\textcolor{red}{~measure}
\textcolor{red}{:}
\textcolor{red}{~energy}
\textcolor{both}{.}
\textcolor{both}{~}
\textcolor{red}{~Each}
\textcolor{red}{~ship}
\textcolor{red}{~is}
\textcolor{both}{~equipped}
\textcolor{both}{~with}
\textcolor{both}{~a}
\textcolor{red}{~certain}
\textcolor{both}{~amount}
\textcolor{both}{~of}
\textcolor{both}{~energy}
\textcolor{both}{,}
\textcolor{red}{~from}
\textcolor{orange}{~which}\texttt{(hyena=~which,~attn=~the)}
\textcolor{both}{~it}
\textcolor{red}{~must}
\textcolor{red}{~draw}
\textcolor{red}{~its}
\textcolor{red}{~health}
\textcolor{red}{~as}
\textcolor{both}{~well}
\textcolor{both}{~as}
\textcolor{both}{~its}
\textcolor{red}{~weapons}
\textcolor{red}{~power}
\textcolor{both}{.}
...\texttt{(1{,}526 tokens)}...
\textcolor{red}{~Speed}
\textcolor{red}{~Zone}
\textcolor{red}{~proved}
\textcolor{both}{~to}
\textcolor{both}{~be}
\textcolor{red}{~less}
\textcolor{red}{~popular}
\textcolor{both}{~than}
\textcolor{red}{~the}
\textcolor{red}{~Jack}
\textcolor{both}{-pot}
\textcolor{red}{-/}
\textcolor{red}{-Running}
\textcolor{red}{,}
\textcolor{red}{~Chaos}
\textcolor{both}{,}
\textcolor{red}{~or}
\textcolor{red}{~"}
\textcolor{red}{-flag}
\textcolor{both}{"}
\textcolor{red}{~zone}
\textcolor{red}{~games}
\textcolor{red}{~and}
\textcolor{red}{~support}
\textcolor{red}{~was}
\textcolor{red}{~discontinued}
\textcolor{red}{~shortly}
\textcolor{both}{~after}
\textcolor{red}{~Sub}
\textcolor{blue}{-Space}\texttt{(hyena=-space,~attn=-Space)}
\textcolor{red}{~went}
\textcolor{red}{~to}
\textcolor{red}{~retail}
\textcolor{both}{.}
\end{mdframed}
\vspace{-7pt}
\captionof{Example}{\textbf{Comparing next token predictions of a 350M parameter Attention model and a 355M parameter Hyena model.} In this example, the models need to perform associative recall to correctly predict the name of the game ``SubSpace". Note that both models correctly perform the recall when there is a short gap between tokens, but only Attention correctly performs recall when the gap is greater than 1{,}000 tokens.}
\vspace{10pt}

\begin{mdframed}[style=example]
\textcolor{red}{~Thus}
\textcolor{red}{~far}
\textcolor{both}{,}
\textcolor{red}{~no}
\textcolor{red}{~systematic}
\textcolor{both}{~study}
\textcolor{both}{~has}
\textcolor{both}{~been}
\textcolor{red}{~published}
\textcolor{both}{~on}
\textcolor{red}{~is}
\textcolor{orange}{-olation}\texttt{(hyena=-olation,~attn=-ot)}
\textcolor{red}{~via}
\textcolor{red}{~sequential}
\textcolor{both}{~centrif}
\textcolor{both}{-ug}
\textcolor{both}{-ation}
\textcolor{red}{,}
\textcolor{both}{~and}
\textcolor{red}{~no}
\textcolor{red}{~systematic}
\textcolor{red}{~analysis}
\textcolor{red}{~is}
...\texttt{(491 tokens)}...
\textcolor{red}{Fig}
\textcolor{both}{.}
\textcolor{orange}{~1}\texttt{(hyena=~1,~attn=-Âł)}
\textcolor{both}{-is}
\textcolor{both}{-olation}
\textcolor{red}{~via}
\textcolor{blue}{~sequential}\texttt{(hyena=~centrif,~attn=~sequential)}
\textcolor{both}{~centrif}
\textcolor{both}{-ug}
\textcolor{both}{-ation}
\textcolor{both}{.}
\end{mdframed}
\vspace{-7pt}
\captionof{Example}{\textbf{Comparing next token predictions of a 350M parameter Attention model and a 355M parameter Hyena model.}  In this example, the models need to perform associative recall to correctly predict the technique ``sequential centrifugation".}
\vspace{10pt}

\begin{mdframed}[style=example]
\textcolor{red}{Miss}
\textcolor{orange}{-ouri}\texttt{(hyena=-ouri,~attn=-iss)}
\textcolor{red}{~Southern}
\textcolor{red}{~has}
\textcolor{red}{~had}
\textcolor{red}{~14}
\textcolor{red}{~Major}
\textcolor{both}{~League}
\textcolor{red}{~Baseball}
\textcolor{red}{~draft}
\textcolor{red}{~selections}
\textcolor{red}{~since}
\textcolor{both}{~the}
\textcolor{red}{~draft}
\textcolor{red}{~began}
\textcolor{both}{~in}
\textcolor{red}{~1965}
\textcolor{both}{.}
...\texttt{(149 tokens)}...
\textcolor{red}{~Fred}
\textcolor{red}{~G}
\textcolor{both}{.}
\textcolor{red}{~Hughes}
\textcolor{orange}{~Stadium}\texttt{(hyena=~Stadium,~attn=~Field)}
\textcolor{both}{~(}
\textcolor{both}{-opened}
\textcolor{both}{~in}
\textcolor{red}{~1975}
\textcolor{both}{)}
\textcolor{both}{~is}
\textcolor{blue}{~named}\texttt{(hyena=~the,~attn=~named)}
\textcolor{both}{~after}
\textcolor{blue}{~former}\texttt{(hyena=~the,~attn=~former)}
\textcolor{red}{~J}
\textcolor{both}{-op}
\textcolor{both}{-lin}
\textcolor{red}{~Globe}
\textcolor{red}{~publisher}
\textcolor{orange}{~and}\texttt{(hyena=~and,~attn=~Fred)}
\textcolor{red}{~Missouri}
\textcolor{blue}{~Southern}\texttt{(hyena=~State,~attn=~Southern)}
\textcolor{red}{~board}
\textcolor{red}{~of}
\textcolor{red}{~reg}
\textcolor{both}{-ents}
\textcolor{red}{~member}
\end{mdframed}
\vspace{-7pt}
\captionof{Example}{\textbf{Comparing next token predictions of a 350M parameter Attention model and a 355M parameter Hyena model.} In this example, the models need to perform associative recall to correctly predict the name of the university ``Missouri Southern".}
\vspace{10pt}

\subsubsection{Distribution of {$\Task$} Hit Positions in Sequences}
\label{sec:pile-distribution}
In Figure \ref{fig:pile_gaps}, we compute and plot the distances between AR hits and its prior bigram (key-value) occurrence in the sequence across the Pile training data. The distances follow a power law distribution where most AR hits are within 100 token positions from the prior bigram occurrence, and a long tail of AR hits requires long-range interactions.

\paragraph{Implications} This suggests that architectures which compute token-to-token interactions within windows, where the window size is less than the full sequence length, may suffice to handle {$\Task$} in real data. For instance, sliding window attention may help with AR \cite[inter alia.]{beltagy2020longformer, sun2023retentive}. Prior work leverages the observation that local attention captures most of the token dependencies to improve efficiency \cite{sainbayar2019adaptive}.

\begin{figure}[h!]
    \centering
    \includegraphics[width=0.5\linewidth]{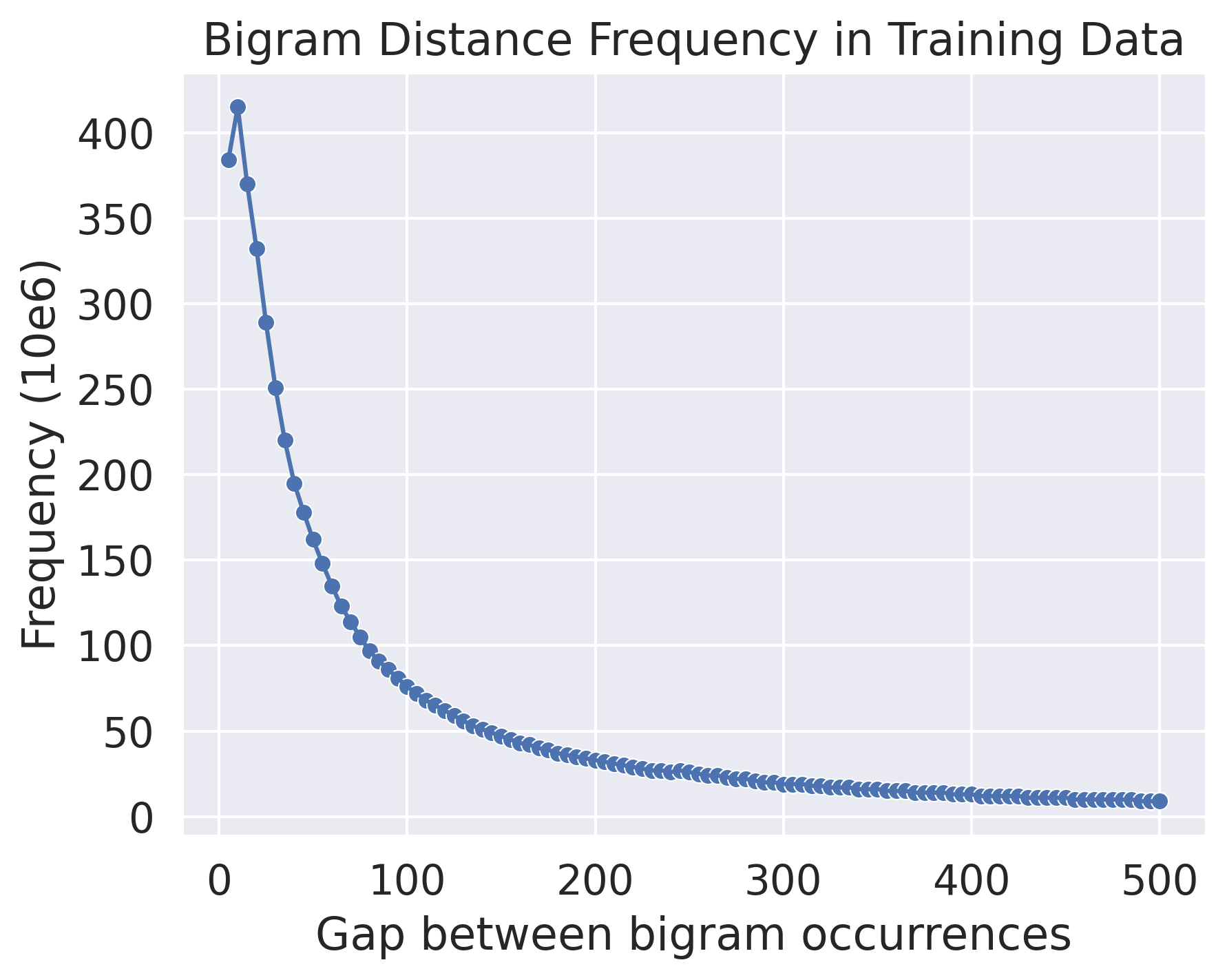}
    \caption{Across the Pile training data, we measure the distances between $n$-grams and their prior occurrences in the provided context. We plot the frequency across distances, finding it follows a power law distribution.}
    \label{fig:pile_gaps}
    \vspace{-2mm}
\end{figure}

\subsection{Analyzing {$\Task$} across Varied Language Distributions}

We use the HuggingFace sample of the popular RedPajama language modeling corpus \cite{together2023redpajama} to understand how the prevalence of {$\Task$} hits and and ways in which the hits appear vary across language distributions. We analyze text sequences from the following sources: ArXiv papers, Books, C4, Common Crawl, GitHub, Stack Exchange, and Wikipedia. \newline

First we profile the prevalence of {$\Task$} hits by computing the number of repeated bigram occurrences per sequence. Each sequence is $2,048$ tokens in length and exclude bigrams containing NLTK stop words or punctuation from the computation. The prevalence of hits and the distances between hit tokens and the prior bigram occurrence in context vary across distributions as shown in Figure \ref{fig:rp-data} and \ref{fig:rp-data-dist}. \newline
\begin{figure}
    \includegraphics[width=\linewidth]{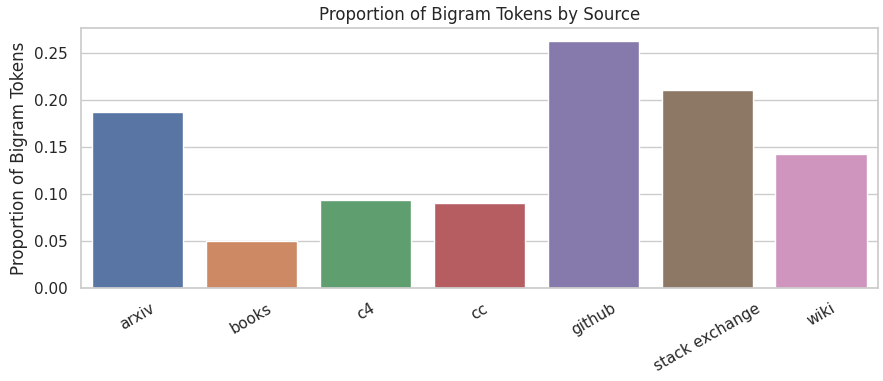}
    \caption{{$\Task$} hits by sub-source of the RedPajama language corpus \cite{together2023redpajama}. Hits are measured  using the repeated bigrams in each sequence of length .}
    \label{fig:rp-data}
    \vspace{-2mm}
\end{figure}

\begin{figure}
    \includegraphics[width=\linewidth]{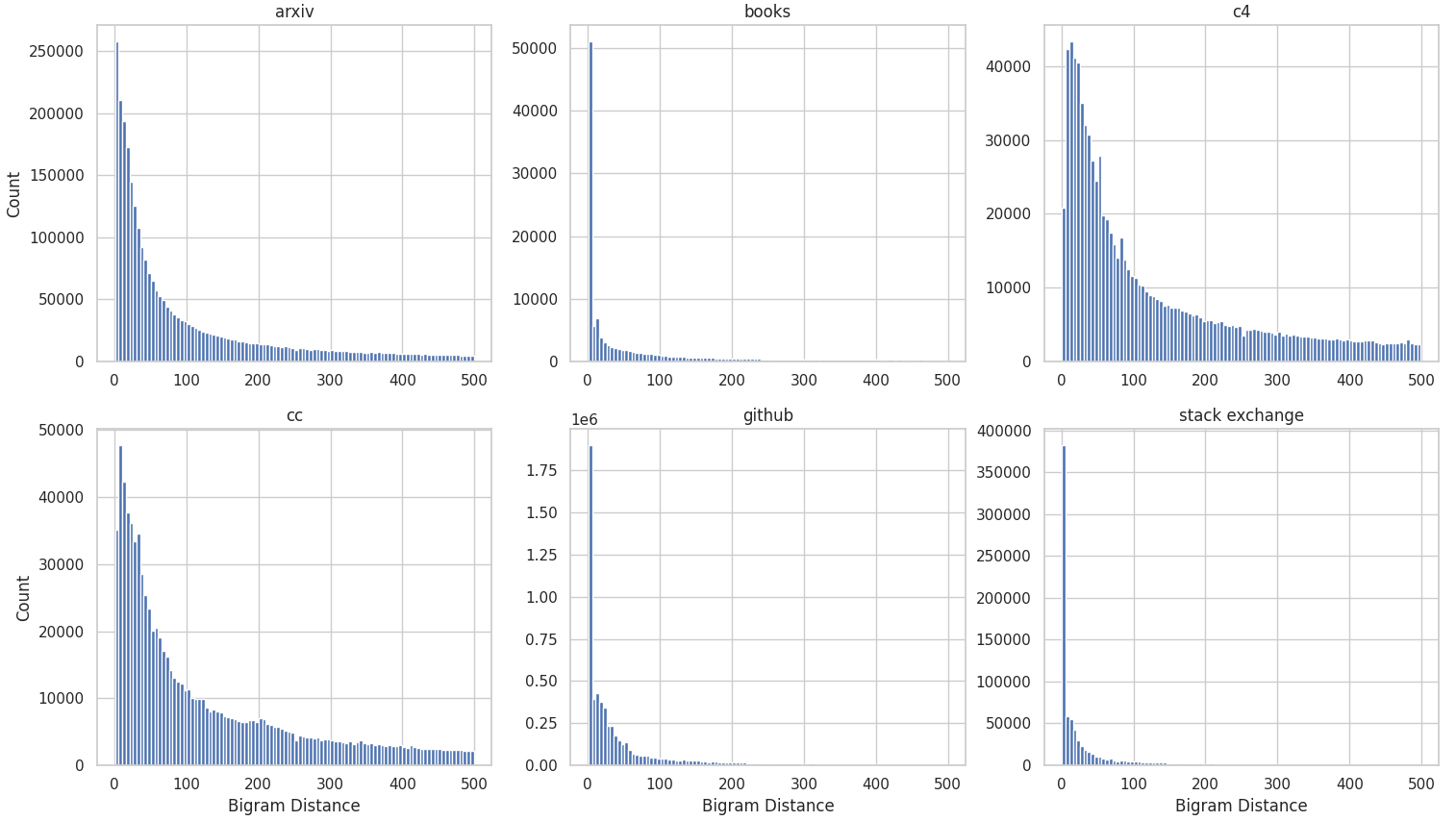}
    \caption{We measure the distance between $n$-grams and their prior occurrences in the provided context across sequences from each source. We plot the frequency across distances, finding it follows a power law distribution.}
    \label{fig:rp-data-dist}
    \vspace{-2mm}
\end{figure}

Arxiv, Github, and Stack Exchange contain relatively structured or richly formatted langauge data. We observe that the prevalence of {$\Task$} hits is relatively high in these three sources in Figure \ref{fig:rp-data}. We inspect why these documents contain many bigrams. In Arxiv, we find domain-specific terminology, groups of related citations, Latex commands, and mathematical equations are frequently repeated within the same document. In GitHub and Stack Exchange, we find function and variable names are frequently reused. Meanwhile, the C4, CC, Books, and Wikipedia bigrams are highly variable. The topics and documents in these corpora are relatively diverse and unstructured. 
\vspace{3mm}
\begin{mdframed}[style=example]
If the placement $V$ of the hyperplanes is not generic, then the induced subdivision $\Sigma$ is not a triangulation. 
However, the above bijection is unaffected and, in
particular, the fine type of a vertex of $CD_A$ can be read off the corresponding facet of $\Sigma$. 
We define the \textcolor{red}{\emph{crosscut complex}} \textcolor{red}{$\ ccut{\Sigma}$} $\subseteq 2^{[n] \times
[d]}$ to be the unique simplicial complex with the same vertices-in-facets incidences as
the \textcolor{red}{polyhedral complex} $\Sigma$.  
The \textcolor{blue}{crosscut complex} is a standard notion in
combinatorial topology and can be defined in more generality (see Bj\"{o}rner).  
The following observation is immediate from the definitions.\newline

\begin{prop}
  For $V$ an ordered sequence of $n$ points in $T^{d-1}$ let $A = A(V)$ be the
  corresponding tropical arrangement and let $\Sigma$ be the regular subdivision of
  $\Delta_{n-1} \times \Delta_{d-1}$ induced by $V$. 
  Then the fine cotype ideal $\ fcI$ is Alexander dual to the Stanley-Reisner ideal of \textcolor{blue}{$\ ccut{\Sigma}$}.\newline
\end{prop}

The \textcolor{blue}{crosscut complex} encodes the information of which collections
of vertices lie in a common face. 
Hence, the \textcolor{blue}{crosscut complex} is a purely combinatorial object and does not see the affine structure of the
underlying \textcolor{blue}{polyhedral complex}.
\end{mdframed}
\vspace{-7pt}
\captionof{Example}{ArXiv sequence where the initial bigram occurrence is highlighted in \textcolor{red}{red} and the repeated bigram with the {$\Task$} hit is highlighted in \textcolor{blue}{blue}.}
\vspace{10pt}


\begin{lstlisting}[language=Java,style=mystyle]
/**
     * Adds a directed edge to the graph from pt1 to pt2. Precondition: Both
     * GeographicPoints have already been added to the graph
     *
     * @param from The starting point of the edge
     * @param to The ending point of the edge
     * @param roadName The name of the road
     * @param roadType The type of the road
     * @param length The length of the road, in km
     * @throws IllegalArgumentException If the points have not already been
     * added as nodes to the graph, if any of the arguments is null, or if the
     * length is less than 0.
     */
    public void addEdge(GeographicPoint from, GeographicPoint to,
            String roadName, String roadType, double length)
            throws IllegalArgumentException {

        if (from == null || to == null || roadName == null
                || roadType == null || length < 0 || !mapNodes.containsKey(from)
                || !mapNodes.containsKey(to)) {
            throw new IllegalArgumentException();
        }

        if (!mapNodes.get(from).hasEdge(to)) {
            mapNodes.get(from).addNeighbor(to, roadName, roadType, length);
            numEdges++;
        }
    }
\end{lstlisting}
\vspace{-7pt}
\captionof{Example}{GitHub sequence where substrings such as ``IllegalArgumentException'', ``roadType'' (and other variable names), ``GeographicPoint'' (and other data types), ``containsKey'' (and other function calls) are repeated throughout.}
\vspace{10pt}

\section{Synthetic {$\Task$} Experimental Details}

In this paper, we propose {$\Task$} as a useful tool to help explain gaps between three popular sequence modeling layers --- Transformers or Sparse Transformers, Convolutions, and Recurrences. In this section, we detail the motivation behind the design of {$\Task$} and include extended experiments on additional architectures. We provide a procedure for generating {$\Task$} synthetic data, which can help in the development of new architectures. 

\label{app:synthetic}
\subsection{{$\Task$} Generation Procedure}
Here we provide additional discussion on the properties of our {$\Task$} synthetic data. The objective of the synthetic analysis is to help explain the differences in language modeling behavior between different classes of language modeling architectures, as observed on real-world data \cref{sec:sec3-downstream}. Synthetic recall tasks were used in the development of Hyena \cite{poli2023hyena} and H3 \cite{dao2022hungry}, and \cite{olsson2022context} to study Transformer in-context learning behavior. 
\begin{enumerate}
    \item \textbf{Attention}: Attention can  perform recall trivially (\cref{prop: attention-ar}). The bound is independent of the sequence length $N$.
    \item \textbf{Convolutions}: The gated convolution models use input-independent filters. We show in \cref{sec:theory-explanation} that the dimensionality for solving recall grows with the input sequence length. Real language modeling can require performing $\calO(N)$ recalls in one forward pass. Our intuition is that it is difficult to efficiently compute \textbf{all token-interaction distances} with such a filter.
    \item \textbf{Recurrences}: Recurrent models include a single hidden state as an information bottleneck. We show in \cref{app:retnet-proof} that the gated recurrence requires $\Omega(N)$ bits to solve ${\Task}$ for $d \leq \sqrt{N}$ for model dimension $d$. Our intuition is that it is difficult to store \textbf{large numbers of key-value pairs} seen in the sequence in low-dimensional hidden states.  
\end{enumerate}

Motivated by this analysis, we propose Procedure \ref{alg:synthetic} for generating synthetic {$\Task$} data. This procedure allows toggling two simple properties of the synthetic dataset, specified below, to reveal the differences between the sequence modeling architecture families. We also discuss deviations from the prior work that uses synthetic recall data.
\begin{enumerate}
    \item \textbf{Size of the Key-Value Map}: The number of unique key-value pairs appearing in each example in the dataset. In Procedure \ref{alg:synthetic}, we use the input parameter $D$ to toggle this value.
    \item \textbf{Number of Gaps in the Data}: The number of unique token-interaction distances required to perform the recall task. Prior work assumes there is a single query token that requires recall per example, failing to test whether the architecture can support multiple token-interaction distances. In Procedure \ref{alg:synthetic}, we use the parameters $N$ (sequence length), $D$ (size of the key-value map per example), and $\alpha$ to toggle this property. We enforce that the token-interaction distances appearing in the synthetic examples follow a power-law distribution specified by $\alpha$ (based on Figure \ref{fig:pile_gaps}). Keeping $\alpha$ and $D$ constant while varying $N$ changes the number of unique token-interaction distances appearing in the example.
\end{enumerate}

Finally, noting that the vocabulary size appears in the theoretical bounds for all three architecture families (\cref{sec:theory-explanation}), we increase the vocabulary size to be much larger than the model dimension as typically seen in language modeling (30k - 50k tokens). Prior work uses vocab sizes $\leq 40$ tokens.

\begin{algorithm}[H]
	\caption{{$\Task$} Synthetic Procedure}
	\begin{algorithmic}[1]
		\Require Vocabulary $C$, Sequence length $N$, Power-law parameter $\alpha$, Number of Key-Value Pairs $D$\newline 
        \textbf{Output: Synthetic sequence}
		\State Let the first half of $C$ be keys $K$ and the second half be values $V$. 
        \State Pair each key token $k \in K$ with a \textit{random} value token $v \in V$.
        \State Sub-select $D$ random key-value pairs to include in the sequence.
        \State Place the $D$ key-value pairs at the start of the sequence (i.e. consuming the first $2D$ positions).
        \State Place a second occurrence of each $d \in D$ at a distance from the first occurrence in the sequence. The distance for each $d \in D$ is selected at random from the positions $[2D..N]$, where the probability of choosing each position follows the power law distribution specified by $\alpha$.
        \State Output the synthetic sequence.
	\end{algorithmic}
    \label{alg:synthetic}
\end{algorithm}

\subsection{Training Details}
We first describe the architectures evaluated on {$\Task$} synthetic data and then the training hyperparameters. We evaluate the following four architecture categories in the experiments: 
\begin{enumerate}
    \item \textbf{Attention} Standard GPT-2 style multi-headed Transformer architecture with learned positional embeddings \cite{brown2020language}. Attention, the core building block of Transformers, is defined in \Cref{sec:prelim-architectures}. The architecture for synthetics has 1 attention head.
    \item \textbf{Gated convolutions} (Hyena~\cite{poli2023hyena}, RWKV~\cite{peng2023rwkv}). Gating, convolutions, and the class of gated convolutions are defined in \Cref{sec:prelim-architectures}.
    \item \textbf{Gated recurrences} (RetNet~\cite{sun2023retentive}). RetNet proposes computing attention over chunks of the sequence and combining this with a recurrence over the sequence. We evaluate this architecture (``Chunked RNN'') with chunk sizes of $32$ and $8$ in Figure \ref{fig:kv_storage}. 
    
    In RetNet, hidden states are updated as: $$S_n = \gamma S_{n-1} + A_n^T V_n$$
    Outputs at each timestep are defined as $$O_n = C_nS_n$$
    where $\gamma \in \mathbb{R}^1$, $A = xW_a$ for input $x \in \mathbb{N \times d}$ and learned weight matrix $W_a \in \mathbb{R}^{d \times d}$, $C = xW_c$ for $W_c \in \mathbb{R}^{d \times d}$, $V = xW_v$ for $W_v \in \mathbb{R}^{d \times d}$. 

    RetNet similar to a state-space-model (SSM) \cite{gu2021efficiently}, except for that the matrices $A$ and $C$ are \textit{input-dependent} (i.e. they are functions of the input $x$). We note that RetNet is a special case of the recently proposed Mamba \cite{mamba2023} and GateLoop \cite{gateloop2023} architectures, which replace the $\gamma$ term in RetNet with yet another input-dependent matrix.

    The RetNet model is formally defined in \Cref{sec:retnet-model}.
    
    \item \textbf{Sparse attention} Several works have proposed methods for efficiently computing attention \cite{tay2022efficient}. We evaluate two classical methods: sliding window attention \cite{beltagy2020longformer, zaheer2020bigbird} and blocked window attention \cite{child2019sparse, qiu2019blockwise}. 
    
    Consider a window size $w$. Sliding window attention permits each token to attend to the prior $w$ tokens in the sequence. Blocked window attention first splits the sequence into blocks of size $w$ tokens, and then computes standard causal attention within the block. \newline
\end{enumerate}

For all synthetic runs in the main paper and the appendix, we use the following training protocol:
\begin{itemize}
    \item \textbf{Optimizer and schedule}: Weight decay $0.1$, warmup duration $10\%$, linear warmup, AdamW optimizer. For each run, we sweep the learning rates in $\mathrm{np.logspace(-4, -2, 4)}$. We train for $64$ epochs.   
    \item \textbf{Training duration}: The global batch size is $8$ for input sequence length or model dimension $\geq 512$, $16$ for input sequence length or model dimension $\geq 256$, and $64$ otherwise.
    \item \textbf{Width and depth}: For all synthetic experiments, we use exactly two layers (each with one sequence mixer and one MLP, interleaved with layer normalization).  The model dimension, sequence length, and number of KV pairs are varied based on the relevant experiment (see \Cref{app:synth-results}).
    \item \textbf{Position information}: No position embeddings are used for the pure convolution runs (input-dependent nor input-independent filters). Position embeddings are used for the runs with any attention variant.
    \item \textbf{Data}: We train and evaluate each model on $100,000$ and $3,000$ data points respectively. The data and data order for all runs are constant.
\end{itemize}

\section{Extended Results on {$\Task$} Across Architectures}
\label{app:synth-results}

In this section, we provide further validation that  {$\Task$} is a useful diagnostic tool to explain the behaviors of a wide variety of architecture families. We show:
\begin{enumerate}
    \item \textbf{Gated convolutions} In the main paper, we claim the gated convolution models with input-independent filters use larger dimensionality than attention as the number of token-interaction distances required for the task increases ($N$, holding $\alpha$ and $D$ constant in Procedure \ref{alg:synthetic}).
    \item \textbf{Gated recurrences} In this section, we show gated recurrrent models use larger dimensionality than attention to solve the task as the number of unique key-value pairs to store ($D$ in Procedure \ref{alg:synthetic}) increases.
    \item \textbf{Sparse attention} In this section, we validate that sliding window attention helps close the {$\Task$} gap when the token-interaction distances are relatively short, and degrades on longer distances. Meanwhile, blocked window attention struggles to close the gap (intuitively tokens earlier in a block are able to interact with few other tokens in the sequence). Concretely, in \cref{app:mqar-downsream}, we observe that repeated bigrams in the Pile and RedPajama corpora tend to occur within neighboring tokens in the context.
\end{enumerate}

We finally show the architectures that perform well on {$\Task$} also perform well on downstream real-world AR on the Pile. We use the experimental protocols outlined in \cref{app:synthetic} for the synthetic experiments and \cref{app:expdetails} for the downstream experiments in this section.

\subsection{Synthetic Experiments}

\paragraph{Experiment 1: Increasing the number of token-interaction distances per example.}
To construct the synthetic dataset following Procedure \ref{alg:synthetic}, we set $\alpha=0.1$ and $|C|=8,192$ for all experiments. We vary $N \in \{64, 128, 256, 512\}$ to increase the number of token-interaction distances that will appear in the examples and we vary the model dimension $\in \{64, 128, 256, 512\}$. The results of this experiment are shown in \cref{fig:capacity}, validating that the gated convolutiosn use larger dimensionality than attention as the number of token-interaction distances required for the task increases.

\paragraph{Experiment 2: Increasing the number of key-value pairs occurring per example.} To construct the synthetic dataset following Procedure \ref{alg:synthetic}, we set $\alpha=0.1$ and $|C|=8,192$ for all experiments. We fix $N = 256$ and vary $D \in \{\frac{N}{32}, \frac{N}{16}, \frac{N}{8}, \frac{N}{4}\}$ to increase the number of key-value pairs occurring per example. For each number of key-value pairs per example, we additionally vary the model dimension $\in \{64, 128, 256\}$. The results are shown in \cref{fig:kv_storage}. We validate that as the number of associative recall keys and values exceeds the window size, the model again uses increased dimensionality relative to full $O(N^2)$ attention to solve the task. While \cref{fig:kv_storage} highlights RetNet, we note that it is a special case of the recently released Mamba \cite{mamba2023} and GateLoop \cite{gateloop2023} architectures. Corresponding to this experiment, we theoretically show in \cref{app:retnet-proof} that the gated recurrence uses $\Omega(N)$ bits to solve ${\Task}$ for $d \leq \sqrt{N}$. Intuitively, it is difficult to store a large number of key-value pairs in low dimensional hidden states.
\begin{figure}
    \includegraphics[width=\linewidth]{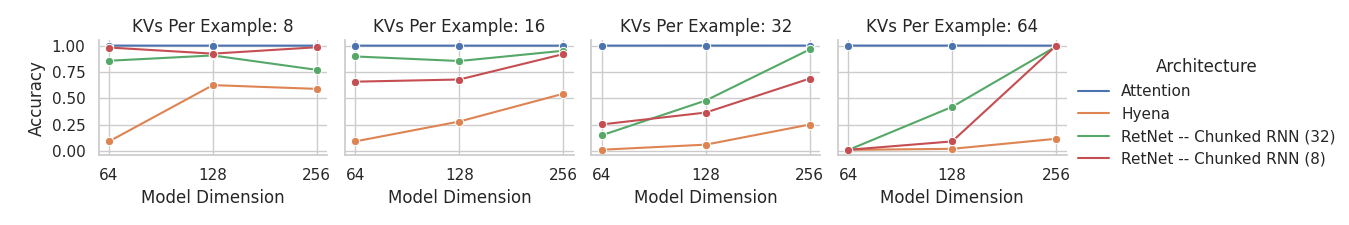}
    \includegraphics[width=\linewidth]{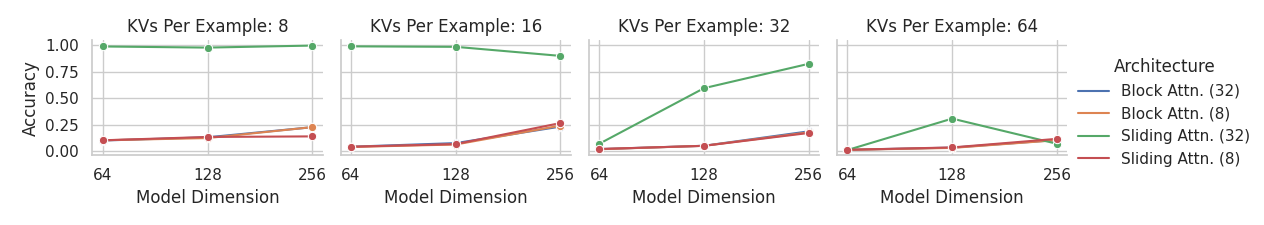}
    \caption{{$\Task$} quality as we vary the model dimension for synthetic datasets that contain different numbers of key-value pairs per example. (\textbf{Top}) We focus on the evaluation of the input-dependent gated recurrent RetNet~\cite{sun2023retentive} architecture. The architecture combines chunked attention with recurrence and we evaluate at two different chunk sizes (8 and 32). For reference, we include gated convolution (Hyena) and attention. (\textbf{Bottom}) We evaluate two efficient attention variants -- sliding window and blocked window attention -- at two window sizes (8 and 32).}
    \label{fig:kv_storage}
    \vspace{-2mm}
\end{figure}

\paragraph{Experiment 3: Increasing the range of token-interaction distancs per example.} To construct the synthetic dataset following Procedure \ref{alg:synthetic}, we set $\alpha=0.1$ and $|C|=8,192$ for all experiments. We fix $N = 256$ and vary $D \in \{\frac{N}{32}, \frac{N}{16}, \frac{N}{8}, \frac{N}{4}\}$ to increase the number of key-value pairs occurring per example and the range of token-interaction distances per example. For each number of key-value pairs per example, we additionally vary the model dimension $\in \{64, 128, 256\}$.

The results are shown in \cref{fig:kv_storage}, validating that sliding window attention closes the gap to attention when the number of KV pairs is within the window size. I.e., $16$ KVs means that there are $16$ keys and $16$ values, for $32$ total tokens. Sliding attention with window size $32$ performs well up until $16$ KVs and degrades beyond this point relative to attention models of the same dimensionality. Sliding window attention with window size $8$ does not perform well on any setting, as expected. Blocked attention also not perform well -- intuitively tokens at the beginning of a block lack sufficient prior context to attend to. 

\subsection{Demonstrating {$\Task$} as a Tool for Architecture Development}

In the main paper, we provide experiments showing that there is gap between the gated convolution and attention models on synthetic {$\Task$} data and correspondingly in the downstream Pile results. In \Cref{table:full-ppl-slices}, we show that the trends of RetNet and sparse attention on {$\Task$} also carry to the downstream results. We hope {$\Task$} synthetic data may be useful in future architectural development.

\begin{table}[]
\centering
\small
\begin{tabular}{@{}lcccccc@{}}
\toprule
 &  &  & \textbf{Overall} & \multicolumn{2}{c}{\textbf{Slices}} & \textbf{\% gap due to} \\ 

Model  & Param (M) & TFLOPs  & & AR Hits & Other Tokens & AR Hits\\
\midrule
Attention  & 73 & 1.52 & \textbf{12.99 (2.56)} & \textbf{2.41 (0.88)} & 14.76 (2.69) & --- \\
Long Conv  & 76 & 1.20 & 20.28 (3.01) & 40.25 (3.70) & 19.25 (2.96) & 44.4\%\\
H3  & 72 & 1.33 & 15.78 (2.76) & 13.86 (2.63) & 15.94 (2.77) & 63.2\% \\
Hyena      & 72 & 1.34  & 15.13 (2.72) & 9.00 (2.20) & 15.74 (2.76) & 60.8\%  \\
RWKV       & 72 & 1.89 &   16.10 (2.78) & 14.11 (2.65) & 16.26 (2.79) & 57.9\%\\
\midrule
Attention & 125 & 2.46 & \textbf{11.01} (2.40) & \textbf{2.16 (0.77)} & 12.45 (2.52) & ---  \\
Long Conv & 128 & 1.74 &  16.98 (2.83) & 25.62 (3.24) & 16.46 (2.80) & 40.1\% \\
H3        & 168 & 2.55 &12.06 (2.49) & 6.75 (1.91) & 12.60 (2.53) & 88.4\% \\
RWKV      & 169 & 2.08 &  11.64 (2.45) & 5.70 (1.74) & 12.29 (2.51) & 100.0\% \\
RetNet (128) & 152 & 2.60 & 11.15 (2.41) & 3.01 (1.10) & 12.45 (2.55) & 100\% \\
Hyena     & 160 & 2.41 &   11.60 (2.45) & 5.00 (1.61) & 12.28 (2.51) & 100\% \\
 + Slide Attn. (64) & 159 & 2.28 &  11.57 (2.45) &  5.23 (1.65) &  12.30 (2.51) & 100\% \\
 + Slide Attn. (256) & 159 & 2.29 & \textbf{10.65} (2.37) &  3.42 (1.23) &  11.52 (2.45) & 100\% \\
\midrule
Attention & 360 & 6.23 &  \textbf{9.44} (2.25) & \textbf{1.98 (0.69)} & 10.62 (2.36) &  ---  \\
Long Conv & 360  & 4.08 & 13.13 (2.57) & 13.27 (2.59) & 13.12 (2.57) & 40.5\% \\
H3  & 357 & 4.85 & 10.38 (2.34) & 4.81 (1.57) & 11.00 (2.40) & 65.8\% \\
Hyena     & 358 & 5.03 & 10.07 (2.31) & 3.83 (1.34) & 10.75 (2.38) & 98.2\% \\
RWKV      & 351 & 4.31 & 9.79 (2.28) & 3.82 (1.34) & 10.51 (2.35) & 100.0\%  \\
\bottomrule
\end{tabular}
\caption{\textbf{Language modeling validation perplexity on the Pile.} After pretraining on 10B tokens of Pile data, we report log perplexity with negative log-likelihood in parentheses.  We report overall scores, and for the AR vs. non-AR token slices defined in \Cref{sec:sec3-downstream}. FLOPs are computed for inputs of 2048 tokens based on the equations in Appendix \ref{app:expdetails}. 
}
\label{table:full-ppl-slices}
\vspace{-5mm}
\end{table}

\section{{\Task} Perplexity Gap and Model Size}
\label{sec:scaling}

In this section, we investigate how the associative recall gap changes as we increase the model size to a billion parameters and beyond. Overall, the results suggest that the larger models improve in memorizing bigrams from the training data, but do not rapidly improve in in-context learning (perplexity on rare bigrams).

Below, we pretrain at the 1.4Bn parameter scale and observe that the 70M parameter attention model is a full perplexity point better on AR than this 1.4Bn Hyena model (Table \ref{table:ppl-slices}). We discuss these results further in (\Cref{sec:sec3-downstream}).
\begin{table}[h!]
\centering
\small
\begin{tabular}{@{}lcccccc@{}}
\toprule
 &  &  & \textbf{Overall} & \multicolumn{2}{c}{\textbf{Slices}} & \textbf{\% of gap due to} \\ 

Model  & Param (B) & TFLOPs  & & AR Hits & Other Tokens & AR Hits\\
\midrule
Attention  & 1.4 & 1.52 & 8.19 (2.10) & 1.91 (0.65) & 9.86 (2.29) & --- \\
Hyena  & 1.4 & 1.20 & 9.65 (2.27) & 3.43 (1.23) & 11.01 (2.40) & 40.3\%\\
\bottomrule
\end{tabular}
\caption{\textbf{Large-scale language model validation perplexity on the Pile.} After pretraining on 50B tokens of Pile data, we report log perplexity with negative log-likelihood in parentheses.  We report overall scores, and for the AR vs. non-AR token slices defined in \Cref{sec:sec3-downstream}. FLOPs are computed for inputs of 2048 tokens based on the equations in Appendix \ref{app:expdetails}.
}
\label{table:ppl-slices-large}
\end{table}

\paragraph{Open-source 7B parameter models}
We next evaluate the RWKV (gated convolution) \cite{peng2023rwkv} and Llama 2 (attention) pretrained models at the 7B parameter scale \cite{touvron2023llama}.\footnote{We specifically evaluate the RWKV-Raven model downloaded from \url{https://huggingface.co/docs/transformers/model_doc/rwkv} and the Llama 2 model downloaded from \url{https://github.com/facebookresearch/llama}.} These are both popular models that took a significant amount of effort to train, towards maximizing quality. We find that there is a gap between RWKV and attention at the 7B scale and that it increases as the model needs to conduct more recalls per sequence ($P$ below).

We summarize the experimental protocol below. 
\begin{enumerate}
    \item \textbf{Justifying the experimental protocol.} Since frequent bigrams may just be memorized by the model and not require in-context recall, our work measures AR quality on infrequent bigrams in validation sequences (\Cref{fig:main}, \Cref{app:exp-measure-ar}). We do not have access to the custom training data mixtures used in training RWKV or Llama 2 to measure bigram frequencies, so we use a synthetic test to fairly measure the AR capabilities. 
    \item \textbf{Evaluation dataset.} We construct the  synthetic dataset following \Cref{alg:synthetic}, where each sequence contains $P$ token pairs, or ``bigrams''. Each bigram, containing a “key” token and a “value” token, appears twice in the sequence. On the second occurrence of a “key” token, the model should look back to the prior occurrence to output the corresponding “value” token. We measure the AR perplexity of the model based on its ability to predict the correct “value” token as the next word for these repeated keys. The sequences are constructed using the models' vocabulary tokens. 
    \item \textbf{Evaluation details.} We evaluate (inference only, no training) on sequence lengths of $1024$ tokens. We measure the AR perplexity when the sequences contain $P \in \{16, 32, 64, 128, 256\}$ key-value pairs, using $1000$ samples per $P$ value. The tokens that do not contain a key or value are simply filled with a fixed token id (so this token is repeated frequently in the sequence). We plot perplexity for AR and non-AR tokens (fixed token) vs. $P$. We find RWKV quality degrades with $P$ on the AR slice (blue line), while all other lines remain flat. MQAR remains problematic at scale.
\end{enumerate}
\begin{figure}
    \centering
    \includegraphics[width=0.6\linewidth]{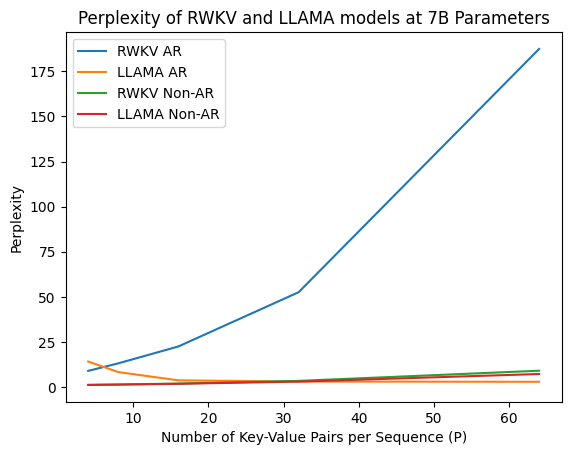}
    \caption{Perplexity of RWKV 7B and Llama2 7B parameter models on synthetic MQAR data as a function of the number of recalls required per input / number of key-value pairs per input ($P$). Inputs are constructed using each model's full vocabulary.}
    \label{fig:synth-7b}
    \vspace{-2mm}
\end{figure}

Overall our results suggest that simply scaling the model size may not close the MQAR quality gap between the gated convolutions and attention.

\paragraph{Measuring AR Gap with Scale} In the main paper, we measure the AR perplexity on downstream data based on bigrams that are seen $< 1,250\times$ during pretraining. However, we hypothesize that larger models may memorize a larger number of bigram pairs seen in the training dataset, but do not rapidly gain the capability to perform associative recall as well as attention in-context.  In-context learning is defined as learning from examples \textit{provided in context} \citep{xie2021incontext}.

Concretely, the gap between the Hyena / RWKV and attention models at the 350m scale is 1.85  / 1.84 perplexity points when we focus on bigrams seen $< 1,250\times$ during pretraining (\Cref{table:ppl-slices}). If we instead focus on bigrams seen $1\times$ during pretraining, the gap to attention quality is 12.0 / 13.2 perplexity points respectively. The gated convolutions appear to struggle in the regime of rare bigrams that require the model to use the context. 

\section{Details on Theoretical Analysis}
This section provides proofs and extensions to the theoretical results in the main paper. 
\label{app:thry}

\subsection{Preliminaries and Notation}
\ignore{
\label{sec:gcm-intro-append}
We demonstrate the representation power of a newly defined layer, {\em Coyote}, with respect to similar gated convolution layers such as {\em Hyena} \citep{poli2023hyena}, and Attention \citep{vaswani2018attention}. We are interested in how the differences in structure between {\em Coyote} and Attention translates to solving what we define as the {\em multiple-query associative recall} problem.  In particular, we prove the following: 
\begin{itemize}
    \item general expressivity of \Coyote via arithmetic circuits,
    \item a sub-quadratic solution to mu associative recall.
    \item 
\end{itemize}
}
\subsubsection{Notation}
We denote the all \(1\) row vector of size $k$, given by \(\begin{bmatrix}1&1&\ldots&1&1\end{bmatrix}\),  and the all \(0\) row vector of size $k$, given by \(\begin{bmatrix}0&0&\ldots&0& 0\end{bmatrix}\), as \(\bm{1}^{k}\) and \(\bm{0}^{k}\), respectively. We also construe the standard basis vector $\mathbf{e}_i$ as a column vector in these notes, and adhere to the following matrix indexing convention: $\tbf{M}[i,j]$ is the entry in the $i$th row and the $j$th column,  $\tbf{M}[i,:] \in \F^{1 \times n}$ denotes the $i$th row, and $\tbf{M}[:,j] \in \F^{m \times 1}$ denotes the $j$th column of $\tbf{M} \in \F^{m \times n}$. We then use $\bm{1}^{m \times n}, \mathbf{0}^{m \times n} \in \F^{m \times 1}$ to denote the matrix of all $1$s and $0$s, respectively.

Next, we denote the {\em Hadamard product} of vectors $\tbf{u}, \tbf{v} \in \F^n$ as $\tbf{u}\odot \tbf{v}$; the operation can be extended to matrices by applying the Hadamard product column-wise across the matrices. This is commonly referred to as {\em(element-wise) gating}. For vectors $\tbf{u}, \tbf{v} \in \F^n$, we also denote their {\em linear (or acyclic) convolution} as $\tbf{u} \ast \tbf{v}$ and {\em cyclic convolution} as $\tbf{u} \circledast \tbf{v}$.

\paragraph{Polynomial Notation.} 

Because convolution is intimately tied to operations on polynomials, it is convenient to use them to discuss the inputs and outputs of gated convolution models. Let us define maps $\poly,\poly^* : \F^n \to \F[X]/(X^{n}) $ such that
\[
\begin{aligned}
\poly(\bm{u}) &= \sum_{i=0}^{n-1} \bm{u}[i]X^{i},\text{ and } \poly^*(\bm{u})  = \sum_{i=0}^{n-1} \bm{u}[i]X^{n-1-i}.
\end{aligned}
\]
This allows us to map between vectors and polynomial.
Accordingly, we also define $\coeff:\F[X]/(X^{n+1}) \to \F^n$ as the map converting polynomials back to vectors:
\(
 \coeff(\bm{u}(X)) = \bm{u} 
\)
with $\bm{u}[i]$ defined as the coefficient in $\bm{u}(X)$ at degree $i$.

These operations allow us to interpret the convolution of vectors  in terms of polynomial multiplication~\citep{heideman}. More specifically, we have 
\[
\begin{aligned}
\bm{u} \ast \bm{v} &= \coeff\paren{\bm{u}(X) \cdot \bm{v}(X) \mod{X^{n}}},\text{ and}\\
\bm{u} \circledast \bm{v}
&= \coeff\paren{\bm{u}(X) \cdot \bm{v}(X) \mod{X^{n}-1}}.\\
\end{aligned}
\]
We can similarly interpret the Hadamard product of vectors $\bm{u} \odot \bm{v}$ as the Hadamard product of polynomials $\bm{u}(X) \odot \bm{v}(X)$:
\[
\bm{u} \odot \bm{v} = \coeff\paren{\bm{u}(X) \odot \bm{v}(X)} = \coeff\paren{\sum_{i=0}^{n-1} (\bm{u}[i] \cdot \bm{v}[i]) \cdot X^i}.
\]
\paragraph{Arithmetic Circuit Notation.}  We briefly introduce the notation of arithmetic circuits~\citep{volkovich2016guide}, the focus of \cref{sec: arithmetic}. An {\em arithmetic circuit} $\calC$ with variables $X\triangleq \{x_1, x_2, \ldots, x_n\}$ over a field $\F$ is interpreted as a directed acyclic graph, where the input nodes are labelled by either the variables from $X$ or constants from $\F$ and the internal nodes are labelled by $+$ or $\times$ with the output being the polynomial computed at the output node. 
    
We shall also refer to the {\em size} of the circuit as the number of nodes, the {\em depth} of the circuit as the length of the longest path between an input node and the output node, and the {\em width} of the circuit as the number of parallel operations in the circuit, or `wires' which will be intersected by a horizontal `cut' through the circuit. Moreover, the {\em degree} of a circuit is defined as the degree of the polynomial computed by the circuit. We summarize this with the following definition:

\begin{definition}
\label{def: circuit-tuple}
   An arithmetic circuit $\calC$ is an {\em $(n,s,\circuitDegree,w)$-circuit} if $\calC$ is an $n$-variate arithmetic circuit of size $s$ and of depth at most $\circuitDegree$, and width $w$.
   \end{definition}

\paragraph{Model Notation.} Now we introduce the notation we will be using for defining layers. In what follows, we denote $\hyenaInput \in \R^{\inputDim}$ as the model input; $\inputLength$ as the sequence length; 
$\hyenaDepth$ as the number of stacked layers, indexed by $\hyenaLayerIndex$;
and $\headDim$ as the input (embedding) dimension.

\subsubsection{Summary of the Results}
The outline of our results are as follows: In \cref{sec:models_appen} we introduce \emph{gated convolution models} and define H3, Hyena, RWKV, RetNet, and {\Coyote}. In \cref{sec: primitives} we introduce a set of primitive operations that {\Coyote} can implement. We use these as tools in the subsequent proofs. 
Then, in \cref{sec: arithmetic}, we show that a general arithmetic circuit of size $s$ and degree at most $\circuitDegree$ can be simulated by {\Coyote}. We use the above results to show all models built from gating and convolutions can be simulated by {\Coyote}. This helps us analyze the \textit{class} of gated convolutions, beyond a specific proposal (e.g. Hyena).

Next, we study the representational power that gated convolutions and attention use to solve ${\Task}$. In \cref{sec: gen-ar}, we derive a {\Coyote} model inspired from dyadic intervals to show the dimensionality for gated convolutions (with input-independent filters) solve ${\Task}$. Next, we analyze the dimensionality for a {\Coyote} model with data-dependent kernels to solve ${\Task}$ in \cref{sec: data-dep-ar}.

\subsection{Gated Attention-Free Models}
\label{sec:models_appen}
We now present formal definitions of gated convolution and recurrence models with respect to the 5-tuple of parameters $\paren{\inputLength,\hyenaDepth,\headDim,\innerN,\innerD}$.

\begin{definition}
\label{def: gated-conv}
   An $\modelTuple{\inputLength}{\hyenaDepth}{\headDim}{\innerN}{\innerD}{\text{Gated Convolution Model}}$ is a stacked sequence to sequence model with $L$ layers such that:

   \begin{enumerate}
     \item  Input and output are $\inputLength\times \headDim$ matrices, 
    \item  Each layer's operations consist of element-wise gating, convolution, linear projection,  and
    \item All the individual gated convolution layers take in $\innerN \times \innerD$ matrices and output $\innerN \times \innerD$ matrices. We refer to the tuple $\innerDim$ as the \emph{inner dimension} of the model.  
\end{enumerate}
   \end{definition}

We define the Hyena, RWKV, RetNet, and \Coyote\ layers to make step 2 more concrete. We also assume that the input $\hyenaInput \in \R^{\inputDim}$ is embedded into $\hyenaInput' \in \R^{\innerN \times \innerD}$  such that

\[
    \hyenaInput'[n,t] = \begin{cases}
    \hyenaInput[n,t] \ \ \text{ if } n < \inputLength, \ t < \headDim \ \\
    0 \ \ \text{ otherwise. }
    \end{cases}
\]

The output from the last layer $\bm{z} \in \R^{\innerN \times \innerD}$ is transformed into output $\bm{y} \in R^{\inputDim}$ by extracting the top left $\inputDim$ entries in $\bm{z}$.

Next, we define the class of weight matrices that we will use in the linear projections in the models:

\begin{definition}\label{def: W-kmat}
    The linear projection ${\tt Linear}_{m,m}$ has its matrix representation as the weight matrix\footnote{I.e. ${\tt Linear}_{m,m}(\bm{z})=\bm{W}\cdot\bm{z}$.}$\mathbf{W} \in \mathbb{R}^{m \times m}$ taken to be a {\bf K-matrix} for $\mathbf{W} \in (\calB\calB^{*})^{\text{poly-}\log{m}}_{\text{poly-}\log{m}}$ (\cref{def: kaleidoscope}). Consequently, each matrix $\mathbf{W}$ has $\tilde{\calO}(m)$ parameters and runtime for matrix vector multiplication, and allows us to represent general linear transformations with low-depth linear arithmetic circuits~\citep{dao2020kaleidoscope}. We will also need linear maps ${\tt Linear}_{m,n}$ for $m<n$, where each take the corresponding square matrices from ${\tt Linear}_{m,m}$ and note that such matrices  has $\tilde{\calO}(n)$ parameters and runtime for matrix vector multiplication.
\end{definition}
\begin{remark}
    We note that the weight matrix $\mathbf{W} \in \mathbb{R}^{d \times d}$ above is taken to be a dense matrix in the experiments. However, for our theoretical results, we restrict the linear projection in our models as per \cref{def: W-kmat}. In the rest of the paper, unless mentioned otherwise all linear projections will follow \cref{def: W-kmat}.
\end{remark}

Next, we define three popular gated convolution models that we study in our work: Hyena \cite{poli2023hyena}, RWKV \cite{peng2023rwkv}, and RetNet \cite{sun2023retentive}.

\subsubsection{The Hyena Layer}
\label{sec:hyena-model}
We will now outline the Hyena layer~\citep{hyena}. Hyena takes a sequence $\bm{u} \in \R^{\inputLength\times \headDim}$ as input and produces $\hyenaDepth+1$ projections $p^1, \ldots, p^L, v$ by passing $\bm{y}$ though a linear layer and applying a short convolution afterwards.
The algorithm then recursively 
performs a point-wise multiplication of the projection with the convolution of the filter $h^l$ with the previous output. We summarize this process in \cref{algo: hyena}. 

\begin{algorithm}[H]
		\caption{$\texttt{Projection}\paren{\hyenaInput,\convFilter}$}
		\begin{algorithmic}[1]\label{algo: projection}
            \Require Input sequence $\bm{u} \in \R^{\inputLength \times \headDim}$, a short convolution filter $\convFilter  \in \R^{\inputLength}$.
		  \State In parallel for $0 \leq  n <\inputLength : \hat{\bm{z}}[n,:] \gets \LinearD{\headDim}{(\hyenaDepth+1)\headDim}{\hyenaInput[n,:]}$
    so that  $\hat{\bm{z}} \in \R^{\inputLength \times (\hyenaDepth+1)\headDim}$
            \State In parallel for $0 \leq t < (\hyenaDepth + 1)\headDim: \bm{z}[:,t] \gets \convFilter \ast \hat{\bm{z}}[:,t]$
            \State Reshape and split $\bm{z} \in \R^{\inputLength \times (\hyenaDepth+1)\headDim}$ into $\bm{p}^{1}, \ldots, \bm{p}^{\hyenaDepth}, \bm{v}$, where $\bm{p}^{\ell}, \bm{v} \in \R^{\inputLength \times \headDim}$ for $\ell \in [\hyenaDepth]$.
            \State \Return $\bm{p}^{1}, \ldots, \bm{p}^{\hyenaDepth}, \bm{v}$.
		\end{algorithmic}
\end{algorithm}
\begin{algorithm}[H]
		\caption{$\Hyena{\hyenaInput}{\convFilter,\convFilter_s}$}
		\begin{algorithmic}[1]\label{algo: hyena}
            \Require Input sequence $\bm{u} \in \R^{\inputLength \times \headDim}$, set of convolution filters $\convFilter^1,\dots,\convFilter^L \in \R^{\inputDim}$, short convolution filter $\convFilter_s  \in \R^{\inputLength}$.
		  \State$\bm{p}^{1}, \ldots, \bm{p}^{\hyenaDepth}, \bm{v} \gets \texttt{Projection}(\bm{u}, \convFilter_s)$. 
            \State $\bm{z}^0 \gets \bm{v}$
            \For{$\ell = 1, \ldots, L$}
                \State In parallel for $0 \leq t <\headDim: \bm{z}^{\ell}[:,t] \gets \bm{p}^\ell[t,:] \odot \paren{\convFilter^\ell[:,t] \ast \bm{z}^{\ell-1}[:,t]}$.
            \EndFor
            \State \Return $\bm{z}^{L}$
		\end{algorithmic}
\end{algorithm}
\begin{remark}
    In \cref{algo: hyena}, $\hyenaDepth$ is used to denute the nuber of recursive applications of the Hadamard product and convolutions, not the number of layers  Note that asymptotically, the recursive step does not make a difference.
\end{remark}
Henceforth, we will refer to a model consisting of $\hyenaDepth$ Hyena layers is a gated convolution model with associated tuple $\paren{\inputLength,\hyenaDepth,\headDim,\inputLength,(\hyenaDepth+1)\headDim}$ as $\hyenaTupleN$.

\subsubsection{The RWKV Layer}
\label{sec:rwkv-model}
We will now describe the RWKV layer~\citep{peng2023rwkv}. RWKV is typically referred to as an RNN, but, like some other recurrent models (\textit{e.g.} S4~\citep{gu2021efficiently}), it can also be viewed as a convolution. Here, we present the convolutional view of RWKV. We will see that it is closely related to the Hyena layer. 

RWKV takes a sequence $\bm{u} \in \R^{\inputLength\times \headDim}$ as input and applies a short convolution. Next, it produces three projections $q, k, v$ by passing $\bm{u}$ through a linear layer. Then, it performs a convolution sandwiched by element-wise multiplication. We summarize this process in \cref{algo: rwkv}. 

\begin{algorithm}[H]
\caption{$\texttt{RWKVProjection}\paren{\hyenaInput,\convFilter}$}
		\begin{algorithmic}[1]\label{algo: rwkvprojection}
            \Require Input sequence $\bm{u} \in \R^{\inputLength \times \headDim}$, a short convolution filter $\convFilter  \in \R^{\inputLength}$.
            \State In parallel for $0 \leq t < \headDim: \hat{\bm{z}}[:,t] \gets \convFilter \ast \bm{u}[:,t]$
		  \State In parallel for $0 \leq  n <\inputLength : \bm{z}[n,:] \gets \LinearD{\headDim}{3\headDim}{\hat{\bm{z}}[n,:]}$
    so that  $\bm{z} \in \R^{\inputLength \times 3\headDim}$
            \State Reshape and split $\bm{z} \in \R^{\inputLength \times 3\headDim}$ into $\bm{q}, \bm{k}, \bm{v} \in \R^{\inputLength \times \headDim}$.
            \State \Return $\bm{q}, \bm{k}, \bm{v}$.
		\end{algorithmic}
\end{algorithm}
In practice, the short convolution filter $\convFilter_s$ is restricted to a length two filter with $\convFilter_s[0] = \mu$ and $\convFilter_s[1] = 1 - \mu$, where $\mu \in [0,1]$ is learned parameter. In the RWKV paper, the short convolution is referred to as a ``time shift". 

\begin{algorithm}[H]
\caption{$\texttt{RWKV}\paren{\hyenaInput, \convFilter,\convFilter_s}$}
		\begin{algorithmic}[1]\label{algo: rwkv}
            \Require Input sequence $\bm{u} \in \R^{\inputLength \times \headDim}$, set of convolution filters $\convFilter^1,\dots,\convFilter^L \in \R^{\inputDim}$, short convolution filter $\convFilter_s  \in \R^{\inputLength}$.
		  \State$\bm{q}, \bm{k}, \bm{v} \gets \texttt{Projection}(\bm{u}, \convFilter_s)$.
            \State In parallel for $0 \leq t <\headDim: \bm{z}^{\ell}[:,t] \gets \sigma(\bm{q}[:,t]) \odot \paren{\convFilter^\ell[:,t] \ast (\softmax(\bm{k}) \odot \bm{v})[:,t]}$.
            \State \Return $\bm{z}^{L}$
		\end{algorithmic}
\end{algorithm}

In practice, the long convolution filter $\convFilter$ is also restricted to $\convFilter[:, t] = e^{w(t-1)}$, where $w\in \mathbb{R}$ is a learnable parameter that controls how quickly the magnitudes of the filter decreases as $t$ grows.

To summarize, the differences between Hyena and RWKV are: (1) Hyena applies the short convolution \textit{after} the linear projection whereas RWKV applies it \textit{before}, (2) RWKV includes non-linear activations (sigmoid and softmax) while Hyena does not, (3) RWKV and Hyena use different parameterizations for the convolutional filters, and (4) Hyena recursively performs a point-wise multiplication of the projections with the convolution filter whereas RWKV performs this operation only once (though, in practice, Hyena uses a recurrence depth of one, making them equivalent in this regard).

\subsubsection{The Retnet Layer}
\label{sec:retnet-model}
In this section, we introduce the RetNet layer~\cite{sun2023retentive}. To this end, we take an input sequence $\bm{u} \in \R^{N \times d}$ and project it using learnable weight matrices. We then compute the states $\bm{z}^n$ recurrently as in line \ref{step: recurrence-retnet} and the output ${\tt Out}[n,:]$ as in line \ref{step: output-retnet} for each $n \in [N]$ (see \cref{algo: retnet} below).
\begin{algorithm}[H]
\caption{$\texttt{RetNet}\paren{\hyenaInput, \convFilter,\convFilter_s}$}
		\begin{algorithmic}[1]\label{algo: retnet}
            \Require Input sequence $\bm{u} \in \R^{\inputLength \times \headDim}$, a scalar $\gamma \in \R$, learnable weight matrices $\tbf{W}_A, \tbf{W}_C, \tbf{W}_V \in  \R^{d\times d}$. 
		    \State $\tbf{A}, \tbf{C}, \tbf{V} \gets \bm{u}\tbf{W}_A, \bm{u}\tbf{W}_C, \bm{u}\tbf{W}_V$ so that $\tbf{A}, \tbf{C}, \tbf{V} \in \R^{N \times d}$.
            \State Initialize the output ${\tt Out} \in \R^{N \times d}$
            \State Initialize the state $\bm{z}^0 \gets \paren{\tbf{A}[0,:]}^{\top}\tbf{V}[0,:]$.
            \For{$1 \le n < N$}
                \State $\bm{z}^{n} \gets \gamma \bm{z}^{n-1} + \paren{\tbf{A}[n, :]}^{\top}\tbf{V}[n,:]$ for $\bm{z}^n \in \R^{d \times d}$ \label{step: recurrence-retnet}
                \State ${\tt Out}[n,:] \gets \tbf{C}[n,:]\bm{z}^{n}$ \label{step: output-retnet}
            \EndFor
            \State \Return ${\tt Out}$
		\end{algorithmic}
\end{algorithm}

\subsubsection{\Coyote}
\label{sec:simplified-hyena-append}
Finally, we introduce the {\Coyote} here as follows:
\begin{equation}
\label{eq:simplified_hyena}
        \mathbf{Y} = (\hyenaInput\mathbf{W} + \bm{b}_1) \odot (\hyenaInput \ast \bm{h} + \bm{b}_2),
\end{equation}
with input $\hyenaInput \in \R^{\innerN \times \innerD}$, weight matrix $\mathbf{W} \in \mathbb{R}^{\innerD \times \innerD}$  and bias matrices $\bm{b}_i \in \mathbb{R}^{\innerN \times \headDim'}$ defining linear projections of the input sequence, and $\bm{h} \in \R^{\innerN \times \innerD}$ is the a set of the $\innerD$ mixed length filters.

The corresponding pseudocode for $\SHyenaName$ is as follows: 
\begin{algorithm}[H]
		\caption{$\SHyena{\hyenaInput}{\bm{W},\tbf{b}_1,\convFilter,\tbf{b}_2}$}
		\begin{algorithmic}[1]\label{algo: simp-hyena}
            \Require Input sequence $\hyenaInput \in \R^{\innerN \times \headDim'}$, linear mapping $\bm{W} \in \R^{\innerD \times \innerD}$ so that $\bm{W}\in (\calB\calB^{*})^{\text{poly}\log{\innerD}}_{\text{poly}\log{\innerD}}$, convolution filter $\convFilter \in \R^{\innerN \times \headDim'
            }$, bias matrices $\tbf{b}_1,\tbf{b}_2 \in \R^{\innerN \times \headDim'}$.
            \State In parallel for $0 \leq  n <\innerN: \bm{x}[n,:] = \LinearD{d'}{d'}{\hyenaInput[n,:]}$
            \State In parallel for $0 \leq t <\headDim': \bm{z}[:,t] = \convFilter[:,t] \ast \hyenaInput[:,t]$
            \Statex
            \State In parallel for $0 \leq t <\headDim': \bm{y}[:,t] \gets \paren{\bm{x}[:,t] + \bm{b}_1[:,t]} \odot \paren{\bm{z}[:,t] + \bm{b}_2[:,t]}$.\Comment{See \eqref{eq:simplified_hyena}}
            \State \Return $\bm{y}$
		\end{algorithmic}
\end{algorithm}

Note that the convolution in \cref{algo: simp-hyena} is not limited to causal convolution and allows circular convolution as well. For simplicity, we use $\ast$ here and will disambiguate when required. We will start by specifying the parameter count and complexity of a single layer of \Coyote\ below.
\begin{proposition}\label{prop: single-baseconv}
    A single layer of \Coyote\ requires $\tilde{O}(\innerN\innerD)$ parameters and runtime.
\end{proposition}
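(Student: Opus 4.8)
The plan is to simply read off the parameter count and runtime from the definition of the \Coyote\ layer in \eqref{eq:simplified_hyena}, accounting for each of its three constituent operations in turn. Recall that a single \Coyote\ layer acting on $\hyenaInput \in \R^{\innerN \times \innerD}$ is $\mathbf{Y} = (\hyenaInput \bm{W} + \bm{b}_1) \odot (\hyenaInput \ast \bm{h} + \bm{b}_2)$, parameterized by $\bm{W} \in \R^{\innerD \times \innerD}$, $\bm{h} \in \R^{\innerN \times \innerD}$, and $\bm{b}_1, \bm{b}_2 \in \R^{\innerN \times \innerD}$.

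First I would handle the linear projection $\hyenaInput \mapsto \hyenaInput\bm{W}$. By \cref{def: W-kmat}, $\bm{W}$ is taken to be a K-matrix in $(\calB\calB^{*})^{\text{poly-}\log{\innerD}}_{\text{poly-}\log{\innerD}}$, so it has $\tilde{\calO}(\innerD)$ parameters and supports matrix-vector multiplication in $\tilde{\calO}(\innerD)$ time; applying it to all $\innerN$ rows of $\hyenaInput$ costs $\tilde{\calO}(\innerN \innerD)$ time. Second, the convolution $\bm{z}[:,t] = \bm{h}[:,t] \ast \hyenaInput[:,t]$ is performed column-wise: there are $\innerD$ filters each of length $\innerN$, giving $\innerN\innerD$ filter parameters, and each length-$\innerN$ convolution is computable in $\calO(\innerN \log \innerN)$ time via FFT (the convolution theorem), so all $\innerD$ columns cost $\calO(\innerN \innerD \log \innerN) = \tilde{\calO}(\innerN\innerD)$. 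Third, the two bias matrices $\bm{b}_1, \bm{b}_2$ contribute $2\innerN\innerD$ parameters, and the two additions plus the final Hadamard product $\odot$ are $\calO(\innerN\innerD)$ elementwise operations. Summing the three contributions gives $\tilde{\calO}(\innerN \innerD + \innerD) = \tilde{\calO}(\innerN\innerD)$ parameters and $\tilde{\calO}(\innerN\innerD)$ runtime, as claimed.

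There is no real obstacle here; the statement is essentially bookkeeping, and the only point requiring a citation rather than direct computation is the near-linear parameter count and multiplication time for K-matrices, which is exactly the content of \cref{def: W-kmat} (following \citet{dao2020kaleidoscope}). One minor subtlety worth a sentence in the write-up: if one instead uses a dense $\bm{W}$ (as in the experiments, per the remark after \cref{def: W-kmat}), the projection costs $\Theta(\innerD^2)$ parameters and $\Theta(\innerN\innerD^2)$ time, which is why the theoretical results specifically adopt the K-matrix restriction; under that restriction the bound stated in the proposition holds.
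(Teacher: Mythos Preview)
Your proposal is correct and follows essentially the same approach as the paper's proof: decompose the layer into the linear projection (handled via \cref{def: W-kmat}), the $\innerD$ column-wise convolutions (each $\calO(\innerN)$ parameters and $\calO(\innerN\log\innerN)$ time), and the Hadamard/addition steps ($\calO(\innerN\innerD)$), then sum. Your write-up is in fact slightly more explicit than the paper's (you account for the bias matrices and spell out the FFT step), and your closing remark about the dense-$\bm{W}$ case is a helpful clarification that the paper relegates to the remark after \cref{def: W-kmat}.
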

\begin{proof}
    We know the parameters and runtime for the linear part of \Coyote\ via \cref{def: W-kmat} is $\tilde{\calO}(\innerN\innerD)$. Further, each convolution operation requires $\calO(\innerN)$ parameters and $\calO(\innerN\log{\innerN})$ runtime, and we employ $\innerD$ such operations. Finally the Hadamard product step takes $O(nd)$ time.
\end{proof}

Similar to Hyena, we will refer to a model consisting of $\hyenaDepth$ $\SHyenaName$ layers as $\coyoteTupleN$. In our experiments, we extend $\SHyenaName$ by adding an MLP after \cref{algo: simp-hyena}. For simplicity we will denote $\SHyena{\hyenaInput}{\convFilter,\bm{b}_1,\bm{b}_2}$ as $\SHyena{\hyenaInput}{\convFilter}$ when $\bm{b}_1 = \bm{b}_2 = \tbf{0}$. 

We will now show that there exists a \Coyote\ model that can emulate each of the basic operations in \cref{algo: simp-hyena}.
\begin{lemma}\label{lmm:primitives}
    The  functions $\LinearD{d}{d}{\hyenaInput}$ (\cref{def: W-kmat}), with $d,d'$ defined as in \cref{algo: simp-hyena},
    convolution with filter $\convFilter \in \R^{\inputDim}$, and element-wise gating can be computed with \cref{algo: simp-hyena} via a $\coyoteTuple{\inputLength}{1}{\headDim}{\inputLength}{\innerD}$.
\end{lemma}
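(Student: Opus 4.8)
The statement asks that each of the three basic operations used inside Algorithm~\ref{algo: simp-hyena} — a linear projection $\LinearD{d}{d}{\cdot}$, a convolution with a filter $\convFilter$, and element-wise gating — can each be realized \emph{individually} by a single \Coyote{} layer $\SHyena{\hyenaInput}{\bm{W},\bm{b}_1,\convFilter,\bm{b}_2}$ (i.e.\ a $\coyoteTuple{\inputLength}{1}{\headDim}{\inputLength}{\innerD}$). The plan is simply to instantiate the free parameters $\bm{W},\convFilter,\bm{b}_1,\bm{b}_2$ of \eqref{eq:simplified_hyena} in three ways, one per primitive, using the fact that one of the two factors in the Hadamard product can always be forced to the all-ones matrix so that $\bm{y}$ collapses to the other factor.

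First I would handle \textbf{convolution}. Set $\bm{W} = \mathbf{0}$ (which is a trivial K-matrix, so it satisfies \cref{def: W-kmat}) and $\bm{b}_1 = \bm{1}^{\innerN\times\innerD}$, so the left factor $(\hyenaInput\bm{W} + \bm{b}_1)$ is the all-ones matrix; take $\bm{b}_2 = \mathbf{0}$ and keep the filter $\convFilter$ as given. Then \eqref{eq:simplified_hyena} yields $\bm{y} = \bm{1}\odot(\hyenaInput\ast\convFilter) = \hyenaInput\ast\convFilter$, exactly the convolution primitive. (If the intended convolution is causal/acyclic vs.\ circular, this is inherited directly from whatever $\ast$ means in Algorithm~\ref{algo: simp-hyena}, so nothing extra is needed.)

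Next, \textbf{element-wise gating}: here the ``input'' to the primitive is really a pair of $\innerN\times\innerD$ matrices whose Hadamard product we want, but in the layer both factors are computed from the single input $\hyenaInput$. The clean way is to assume (as is standard in these simulation lemmas, and consistent with the embedding convention fixed just above Algorithm~\ref{algo: simp-hyena}) that the two operands to be gated are stacked in disjoint column-blocks of $\hyenaInput$; then choose $\bm{W}$ to be a (K-matrix-representable) permutation/selection matrix that copies the first block into the output columns, set $\convFilter$ to the discrete delta $\delta_0$ (a length-one filter with a single $1$), so that $\hyenaInput\ast\convFilter = \hyenaInput$, and use $\bm{b}_2=\mathbf{0}$, a suitable $\bm{b}_1$; then $\bm{y}[:,t] = \hyenaInput[:,t_1]\odot\hyenaInput[:,t_2]$ on the relevant columns. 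Finally, \textbf{linear projection} is the easiest: take $\bm{W}$ to be the desired K-matrix from \cref{def: W-kmat}, $\bm{b}_1=\mathbf{0}$, filter $\convFilter=\delta_0$ so the right factor is $\hyenaInput$, and $\bm{b}_2 = \bm{1}-\hyenaInput$... — no, cleaner: set the right factor to all-ones by choosing $\convFilter=\delta_0$ and $\bm{b}_2 = \bm{1} - \hyenaInput$ is input-dependent and not allowed, so instead just reuse the symmetric trick — set $\bm{W}$ to the target matrix, $\bm{b}_1 = \mathbf{0}$, and make the \emph{right} factor the all-ones matrix by taking $\convFilter = \mathbf{0}$ and $\bm{b}_2 = \bm{1}^{\innerN\times\innerD}$. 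Then $\bm{y} = (\hyenaInput\bm{W})\odot\bm{1} = \hyenaInput\bm{W}$.

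The only genuinely delicate point — the ``main obstacle'' — is the gating case, because \eqref{eq:simplified_hyena} multiplies two functions of the \emph{same} input, whereas an abstract gating primitive takes two independent arguments. The resolution is the column-block encoding above plus the observation that the linear map $\bm{W}$ (being permitted to be any K-matrix, hence any matrix up to poly-log overhead) can route an arbitrary column of $\hyenaInput$ into the left factor while the identity convolution $\delta_0$ leaves the matching column untouched in the right factor; the embedding convention preceding Algorithm~\ref{algo: simp-hyena} already guarantees the extra columns are available (zero-padded), so the construction stays within a single $\coyoteTuple{\inputLength}{1}{\headDim}{\inputLength}{\innerD}$. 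Parameter and runtime bounds are then immediate from \cref{prop: single-baseconv}. I would close by remarking that since each primitive is realized by one layer, composing $t$ of them gives a $t$-layer \Coyote{} model computing any composition — which is exactly how the subsequent circuit-simulation results (\cref{sec: arithmetic}) will use this lemma.
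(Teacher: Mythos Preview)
Your constructions for the convolution and linear-projection primitives are exactly the paper's: zero out one factor of the Hadamard product via the biases so that the other factor passes through unchanged (your visible stumble on the linear case is cosmetic---you land on $\convFilter=\mathbf{0}$, $\bm{b}_2=\bm{1}^{\innerN\times\innerD}$, which is precisely what the paper does).

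The gating case is where you diverge. The paper interprets ``element-wise gating'' narrowly as $\bm{v}\odot\hyenaInput$ with $\bm{v}$ a \emph{fixed} matrix, and realizes it by the one-liner $\bm{W}=\mathbf{0}$, $\bm{b}_1=\bm{v}$, $\convFilter=\bm{e}_0$, $\bm{b}_2=\mathbf{0}$, i.e.\ the fixed operand simply lives in the bias. You instead read the primitive as multiplying two \emph{input-dependent} operands and therefore need the column-block encoding plus a K-matrix permutation to route one block into the left factor while $\delta_0$ leaves the other in place. Your construction is correct under that reading, and it is in a sense the ``right'' generality for the downstream circuit-simulation arguments (multiplication gates have two data-dependent inputs). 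But it is more than the lemma as stated requires: the paper defers the input-dependent case to the later \texttt{remember} primitive (\cref{prop: prim-remember}) and the proof of \cref{thm: gen-ac}, keeping \cref{lmm:primitives} itself to the trivial fixed-factor version. So your route is a genuine (and self-consistent) alternative, at the cost of front-loading the column-routing argument that the paper postpones.
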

\begin{proof}
For each operation from \cref{def: gated-conv} and \cref{algo: simp-hyena}:
\begin{enumerate}
    \item For any input $\hyenaInput \in \R^{\innerN \times \innerD}$, $\LinearD{d}{d'}{\hyenaInput}$ with matrix representation $\bm{W} \in \R^{\innerN \times \innerD}$ can be performed by a  single $\SHyenaName$ layer computing $\SHyena{\bm{y}}{\bm{W},\convFilter,\bm{b}_1,\bm{b}_2}$ with $\bm{b}_1$ and $\bm{b}_2$ being the matrix of all $0$s and all $1$s, respectively while and the convolution with the zero filter. That is, we have
    \[
    \mathbf{Y} = (\hyenaInput\mathbf{W} + \mathbf{0}^{N' \times d'}) \odot (\hyenaInput \ast \bm{0}^{N' \times d'} + \mathbf{1}^{N' \times d'}) = (\hyenaInput\mathbf{W}) \odot \mathbf{1}^{N' \times d'} = \hyenaInput\mathbf{W} = \LinearD{d}{d'}{\hyenaInput}.
    \]

    \item For any input $\hyenaInput \in \R^{\inputDim}$, convolution  with filter $\convFilter \in \R^{\inputDim}$ can be performed by a single $\SHyenaName$ layer computing $\SHyena{\bm{y}}{\bm{W},\convFilter,\bm{b}_1,\bm{b}_2}$ where $\tbf{W}, \bm{b}_2$ are all zeroes, and $\bm{b}_1$ is the matrix of all 1s so that we get
    \[
    \mathbf{Y} = (\hyenaInput\mathbf{0}^{N' \times d'} + \mathbf{1}^{N' \times d'}) \odot (\hyenaInput \ast \bm{h} + \mathbf{0}^{N' \times d'}) = \mathbf{1}^{N' \times d'} \odot (\hyenaInput \ast \bm{h}) = \hyenaInput \ast \bm{h}.
    \]
    \item We may compute element-wise gating between matrices $\hyenaInput,\bm{v} \in \R^{\inputDim}$, where $\bm{v}$ is some fixed factor, with a single layer computing $\SHyena{\bm{y}},\mathbf{0}^{N' \times d'}, {\bm{e}_0,\bm{v},\mathbf{0}^{N' \times d'}}$ where $\bm{e}_1$ is the identity filter, respectively, by \cref{def: gated-conv}.
    \[
    \mathbf{Y} = (\hyenaInput\mathbf{0}^{N' \times d'} + \bm{v}) \odot (\hyenaInput \ast \bm{e}_0 + \mathbf{0}^{N' \times d'}) =\bm{v} \odot \bm{u}.
    \]
\end{enumerate}
\end{proof}

\subsection{Primitives}
\label{sec: primitives}
In this section, we will establish some additional basic primitives that we expect a gated convolution model to emulate: $\shiftN, \copyPrimitiveN$ and $\addPrimitiveN$. We specify them below:
\begin{enumerate}
    \item {\bf Shift} an sequential input of length $\inputLength$ up or down by $s$ entries:
    {\primitive{
    \boxed{\texttt{shift\_up($\bm{y}, s$), shift\_down($\bm{y}, s$)}}
    \begin{itemize}
        \item 
        {\bf Input:} $\bm{y} \in \R^{\inputDim}$, $s \geq 0 $
        \item {\bf Output: } $\bm{z} \in \R^{\inputDim}$ where $\bm{z}^+ =\texttt{shift\_down}(\bm{y}, s) $ and $\bm{z}^- =\texttt{shift\_up}(\bm{y}, s) $ 

        \[
            \bm{y} \equiv  \begin{pmatrix}
               \leftarrow \bm{y}_{0} \rightarrow \\
                \hline \\
                \vdots\\
                \hline \\
                \arrows{\bm{y}_{i-1}}\\
                \hline \\
                \arrows{\bm{y}_{i}}\\
                 \hline \\
                \vdots\\
                \hline \\
                \arrows{\bm{y}_{N-1}}
            \end{pmatrix}
             \qquad  \bm{z}^+ \equiv  \begin{pmatrix}
               \leftarrow \bm{0} \rightarrow \\
                \hline \\
                \vdots\\
                \hline \\
                \arrows{\bm{0}} \\
                \hline \\
               \arrows{\bm{y}_{0}}\\
                 \hline \\
                \vdots\\
                 \hline \\
               \arrows{ \bm{y}_{N-1-s}}
            \end{pmatrix}
             \qquad  \bm{z}^- \equiv  \begin{pmatrix}
               \leftarrow \bm{y}_{s} \rightarrow \\
                \hline \\
                \vdots\\
                \hline \\
               \arrows{ \bm{y}_{N-1}}\\
                \hline \\
               \arrows{\bm{0}}\\
                 \hline \\
                \vdots\\
                 \hline \\
               \arrows{\bm{0}}
            \end{pmatrix}
        \]
    \end{itemize}
    }}
    \item {\bf Add} a sequence $\bm{x} \in \R^{n \times d}$ to a running sum $\bm{S} \in  \R^{n \times d}$ for some $2n \le N'$ 
    with both $\bm{x}$ and $\bm{S}$ contained as subvectors in  $\bm{y} \in \R^{N \times d}$.
    {\primitive{
    {
     $\texttt{add}_{n}(\bm{y}:\bm{x}, \bm{S})$:
    }
    \begin{itemize}
        \item 
        {\bf Input:} sequence $\bm{y}$ containing $\bm{x}, \bm{S} \in \R^{n \times d}$ for $2n \le N$ such that $\bm{y}[0:n-1] \equiv \bm{x}, \bm{y}[n:2n-1] \equiv \bm{S}$ and $\bm{y}[2n: N-1] \equiv \bm{0}^{N-2n}$.
        \item {\bf Output: } $\bm{z} \in \R^{N \times d}$ containing the sum $\bm{y}+\bm{S}$ such that $\bm{y}[0:n-1] \equiv \bm{1}^n, \bm{y}[n:2n-1] \equiv \bm{S}+\bm{x}$ and $\bm{z}[2n: N-1] \equiv \bm{0}^{N-2n}$.
            \[
            \bm{y} \equiv  \begin{pmatrix}
               \arrows{ \bm{x}}\\
                 \hline \\
                \arrows{\bm{S}}\\
                \hline\\
                \longleftarrow \bm{0} \longrightarrow \\
                \hline \\
                \vdots\\
                \hline \\
                \arrows{\bm{0}}\\
            \end{pmatrix}
            \qquad\bm{z} \equiv  
             \begin{pmatrix}
               \arrows{ \bm{1}^n}\\
                 \hline \\
                \arrows{\bm{S}+\bm{x}}\\
                \hline\\
                \longleftarrow \bm{0} \longrightarrow \\
                \hline \\
                \vdots\\
                \hline \\
                \arrows{\bm{0}}\\
            \end{pmatrix}
            \]
    \end{itemize}}}
    
    \item {\bf Remember} $\bm{v} \in \R^{m \times d}$ as part of a sequence of input $\bm{y} \in \R^{\inputDim}$ while performing gated convolution {\em only} on $\bm{x} \in \R^{n \times d}$ for some $m,n \le \inputDim$. 
    {\primitive{
    {$\texttt{remember}_{n,m,s,t}(\bm{y:x,v,h,p})$:}
    \begin{itemize}
        \item 
        {\bf Input:}  sequence $\bm{y} \in \R^{\inputDim}$ containing $\bm{x} \in  \R^{n \times d}, \bm{v} \in \R^{m \times d}$, and modifiers $\bm{p, h} \in  \R^{n \times d}$ 
        such that $\bm{y}[0:n-1] \equiv \bm{x}, \bm{y}[n+s:n+s+m-1] \equiv \bm{v}$ and $\bm{y}[i] = 0$ otherwise  with $\bm{x} \ast \bm{h} \in \R^{(n+s) \times d}$ and $\bm{v} \ast \bm{h} \in \R^{(m+t) \times d}$. 
         \item {\bf Output: } $\bm{z} \in \R^{\inputDim}$ containing $\paren{\bm{p} \odot (\bm{x}\ast \bm{h})} \in \R^{(n+s)\times d}$, 
         such that:
            \[
            \bm{y} \equiv
                    \begin{pmatrix}       
                     \arrows{ \bm{x}} \\
                        \hline\\
                        \bm{0}^s\\
                        \hline\\
                        \leftarrow\bm{v}\rightarrow\\
                        \hline\\
                        \vdots\\
                        \hline\\
                        \bm{0}
                    \end{pmatrix} 
                    \qquad\qquad \bm{z} \equiv
                    \begin{pmatrix}       
                     \arrows{\bm{p} \odot (\bm{x} \ast \bm{h})} \\
                        \hline\\
                        \leftarrow\bm{v}\rightarrow\\
                        \hline\\
                        \vdots\\
                        \hline\\
                        \bm{0}
                    \end{pmatrix} 
            \] 
    \end{itemize}
    }}
\end{enumerate}
These primitives are building blocks of our proofs in the sequel. We will show that each of these primitives can be solved by some $\coyoteTupleN$ model with a small constant $\hyenaDepth$.
\begin{proposition}[The Shift Primitive]
\label{prop: prim-shift}
For any $\bm{y} \in \R^{N \times d}$, there exist $\coyoteTuple{\inputLength}{1}{\headDim}{\inputLength}{\headDim}$ and $\coyoteTuple{\inputLength}{3}{\headDim}{\inputLength}{\headDim}$ that computes $\texttt{shift\_down}({\bm{y}}, {s})$ and $\texttt{shift\_up}({\bm{y}}, {s})$ for any $s \le N$.
\end{proposition}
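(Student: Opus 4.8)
The plan is to observe that a columnwise convolution with a Kronecker (delta) filter is nothing but a translation of each column, so each shift operation is produced by one {\Coyote} layer (\cref{eq: coyote-recursion}) whose convolutional part uses such a delta filter and whose gating part is used either trivially (for \texttt{shift\_down}) or as a coordinate mask (for \texttt{shift\_up}). Write $\bm{e}_k\in\R^{N}$ for the $k$-th standard basis column; recall from the polynomial notation in the preliminaries that $\bm{e}_k$ corresponds to the monomial $X^k$, so convolving a column by $\bm{e}_k$ shifts it down by $k$. I would carry out \texttt{shift\_down} first (it is immediate), then \texttt{shift\_up}.

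\emph{\texttt{shift\_down}.} I would realize \texttt{shift\_down}$(\bm{y},s)$ with a single layer: take $\bm{W}=\bm{0}$, $\bm{b}_1=\mathbf{1}^{N\times d}$, $\bm{b}_2=\mathbf{0}^{N\times d}$, and the (acyclic) filter $\bm{h}[:,t]=\bm{e}_s$ for every column $t$. The {\Coyote} output is then $\mathbf{1}^{N\times d}\odot(\bm{h}\ast\bm{y})=\bm{h}\ast\bm{y}$, and truncating the length-$(2N-1)$ columnwise convolution back to length $N$ gives $(\bm{e}_s\ast\bm{y}[:,t])[i]=\bm{y}[i-s,t]$ for $i\ge s$ and $0$ otherwise, which is exactly $\bm{z}^{+}$; the entries $\bm{y}[N-s],\dots,\bm{y}[N-1]$ are precisely the ones pushed past index $N-1$ and discarded by the truncation. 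This gives the claimed $\coyoteTuple{\inputLength}{1}{\headDim}{\inputLength}{\headDim}$, and the degenerate case $s=N$ is handled by outputting $\mathbf{0}^{N\times d}$ (e.g. $\bm{h}=\bm{0}$, $\bm{b}_1=\bm{0}$).

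\emph{\texttt{shift\_up}.} Here is the main obstacle: \texttt{shift\_up} is \emph{anti-causal}, since the output at position $i$ must equal $\bm{y}[i+s]$, which sits at a later position, so it cannot be produced by any composition of \emph{acyclic} (causal) convolutions, feature mixing, and gating --- at position $0$ such a map can only depend on $\bm{y}[0,:]$. I would therefore use \emph{cyclic} convolution, which a {\Coyote} layer is permitted to use (\cref{algo: simp-hyena}): with $\bm{h}[:,t]=\bm{e}_{N-s}$ one gets $(\bm{e}_{N-s}\circledast\bm{y}[:,t])[i]=\bm{y}[(i+s)\bmod N,\,t]$, i.e. each column is cyclically rotated up by $s$. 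The only discrepancy with $\bm{z}^{-}$ is that positions $N-s,\dots,N-1$ now carry the wrapped-around values $\bm{y}[0,:],\dots,\bm{y}[s-1,:]$ instead of zeros, so I would clear them with the coordinate mask $\bm{m}\in\{0,1\}^{N}$ defined by $\bm{m}[i]=1\iff i\le N-1-s$: a layer with $\bm{W}=\bm{0}$, $\bm{b}_1[:,t]=\bm{m}$, identity filter $\bm{h}[:,t]=\bm{e}_0$ and $\bm{b}_2=\mathbf{0}$ outputs $\bm{m}\odot(\text{rotated }\bm{y})=\bm{z}^{-}$. In fact the mask can be folded into $\bm{b}_1$ of the rotation layer itself, so one layer already suffices; the remaining layers of the stated $\coyoteTuple{\inputLength}{3}{\headDim}{\inputLength}{\headDim}$ budget can be filled with identity layers --- $(\bm{u}\bm{I}+\mathbf{0})\odot(\bm{0}\ast\bm{u}+\mathbf{1}^{N\times d})=\bm{u}$, where $\bm{0},\bm{I}$ are admissible weight matrices by \cref{def: W-kmat} --- or, if strictly causal filters are insisted upon, used to realize the cyclic rotation as two acyclic shifts (isolating the wrap-around block via a \texttt{shift\_down} by $N-s$ and recombining with the main block). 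The boundary cases $s\in\{0,N\}$ are the identity and the zero map, both trivially in the class, so I expect the write-up to be routine once the cyclic-convolution point for \texttt{shift\_up} is in place.
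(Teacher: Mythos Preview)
Your treatment of \texttt{shift\_down} is essentially identical to the paper's: both pick the delta filter $\bm{e}_s$, set $\bm{W}=\bm{0}$, $\bm{b}_1=\mathbf{1}$, $\bm{b}_2=\mathbf{0}$, and read off the shift from the polynomial identity $\bm{y}(X)\cdot X^s \bmod X^N$.

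For \texttt{shift\_up} you take a genuinely different route. The paper spends its three layers on a reverse--shift--reverse scheme: a first cyclic-convolution layer that flips each column, then a \texttt{shift\_down} by $s$, then another cyclic-convolution layer to flip back. You instead do a single cyclic rotation by $N-s$ together with a position mask folded into $\bm{b}_1$, and pad with identity layers. Your construction is correct and in fact sharper, since it shows one {\Coyote} layer already suffices; the paper's detour through reversal is what actually consumes the three-layer budget. Both arguments rest on the same permission (cyclic convolution is allowed in \cref{algo: simp-hyena}), so there is no hidden assumption gap.

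One small caveat: your parenthetical about realizing the cyclic rotation with ``two acyclic shifts (isolating the wrap-around block via a \texttt{shift\_down} by $N-s$ and recombining)'' does not work as written---getting $\bm{y}[s],\dots,\bm{y}[N-1]$ into positions $0,\dots,N-1-s$ is exactly the anti-causal move you just argued is impossible with acyclic filters alone. Since your main argument already uses cyclic convolution (as does the paper), this aside is not load-bearing, but I would drop it.
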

\begin{proof}
Define the following kernel dependent on $s \le N$\:
\[
\bm{h}_s[n,:] \equiv
\begin{cases}
    \bm{1}^d &\text{if }n = s+1\\
    \bm{0}^d &\text{otherwise}.
\end{cases}
\]
We now deal with the down and up shifts separately:
\begin{enumerate}
    \item We define $\tbf{W}:= \mathbf{0}^{N \times d}, \bm{b}_1:= \mathbf{1}^{N \times d}, \bm{b}_2:= \mathbf{0}^{N \times d}$. Then, for input $\bm{y} \in \R^{N \times d}$, 
    $\SHyena{\bm{y}}{\mathbf{0}^{N \times d},\convFilter_{s},\mathbf{1}^{N \times d},\mathbf{0}^{N \times d}}$ for $\SHyenaName$ in \cref{algo: simp-hyena} is given by \eqref{eq:simplified_hyena} as
    \begin{align*}
        &\tbf{Y} \equiv   \bm{y} \ast \convFilter_s.
    \end{align*}
    Now, to perform \texttt{shift\_down}$(\bm{y}, s)$,
        we note that 
    \begin{align*}
        \tbf{Y}[:,j] = \bm{y}[:,j] \ast \convFilter_s[:,j] &= \coeff(\bm{y}[:,j](X)\cdot \bm{h}_s[:,j](X))\\
                                                         &=  \coeff\paren{\paren{\sum_{i=0}^{N-1}\bm{y}[i,j]\cdot X^i}\cdot X^s\mod{X^N}}\\
                                                         &= \coeff\paren{\sum_{i=0}^{N-1}\bm{y}[i,j]\cdot X^{i+s}\mod{X^N}}\\
                                                         &= \coeff\paren{\sum_{i=s}^{N-1+s}\bm{y}[i-s,j]\cdot X^{i}\mod{X^N}}\\
                                                         &= \coeff\paren{\sum_{i=s}^{N-1}\bm{y}[i-s,j]\cdot X^{i}},
    \end{align*}
    which implies that we exactly get what is specified in the output.
    \item We again define $\tbf{W}:= \mathbf{0}^{N \times d}, \bm{b}_1:= \mathbf{1}^{N \times d}, \bm{b}_2:= \mathbf{0}^{N \times d}$. Then, for input $\bm{y} \in \R^{N \times d}$, 
    $\SHyena{\bm{y}}{\mathbf{0}^{N \times d},\bm{e}_0,\mathbf{1}^{N \times d},\mathbf{0}^{N \times d}}$ for $\SHyenaName$ in \cref{algo: simp-hyena} is given in \eqref{eq:simplified_hyena} as
    \begin{align*}
        \tbf{Y}_0 \equiv   \bm{y} \circledast \bm{e}_0.
    \end{align*}
    Now, to perform \texttt{shift\_up}$(\bm{y}, s)$, as before, we first apply the circular convolution to reverse the input
    \begin{align*}
        \tbf{Y}_0[:,j] = \bm{y}[:,j] \circledast  \bm{e}_0 &= \coeff\paren{\sum_{i=0}^{N-1}\bm{y}[N-1-i,j]\cdot X^{i}},
    \end{align*}
    We then apply $\bm{Y}_1\equiv \texttt{shift\_down}(\bm{Y}_0, s)$ to get 
    \begin{align*}
    \bm{Y}_1[:,j] 
    &\equiv  \coeff\paren{\sum_{i=s}^{N-1}\bm{Y}_0[N-1-(i-s),j]\cdot X^{i}},\\
    &\equiv  \coeff\paren{\sum_{i=s}^{N-1}\bm{Y}_0[N-1-i+s,j]\cdot X^{i}}.
    \end{align*}
    Finally, we apply another circular convolution with the identity filter to replace $N-1-i$ with $i$ to get
     \begin{align*}
        \tbf{Y}_2[:,j] = \bm{Y}_1[:,j] \circledast  \bm{e}_0 &= \coeff\paren{\sum_{i=0}^{N-1}\bm{y}[i+s,j]\cdot X^{i}},
    \end{align*}
    Here, we note that we can compute both of these primitives in one and three layers, respectively~(see \cref{lem: stacking-layers}). 
\end{enumerate}

\end{proof}

Now, we present a $\SHyenaName$ model with two 
layers that implements the $\texttt{add}_n({\bm{y}:\bm{x}},{\bm{S}})$, the purpose of which is to add some window of computation $\bm{x}$ to a running sum $\bm{S}$. 
\begin{proposition}[The Running Sum Primitive]
\label{prop: prim-add}
     For any $\bm{x, S} \in \R^{n \times d}$ contained in some $\bm{y} \in \R^{N \times d}$, there exists a $\coyoteTuple{\inputLength}{2}{\headDim}{\inputLength}{\headDim}$ that computes $\texttt{add}_n({\bm{y}:\bm{x}},{\bm{S}})$
     for $\SHyenaName$ as in~\cref{algo: simp-hyena}.
\end{proposition}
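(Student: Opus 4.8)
The plan is to exhibit an explicit two-layer $\SHyenaName$ model, built around the single observation that convolution by a two-spike filter realizes (shifted) addition. I would write $\bm{e}_k \in \R^{N \times d}$ for the filter whose $k$-th row is $\bm{1}^d$ and whose remaining rows are $\bm{0}^d$, so that $(\bm{w} \ast \bm{e}_k)[i,:] = \bm{w}[i-k,:]$ for every $\bm{w} \in \R^{N\times d}$, under the convention that rows of negative index contribute $\bm{0}^d$ (acyclic convolution, which \cref{algo: simp-hyena} allows); in particular $\bm{e}_0$ is the identity filter. Then $\bm{h} := \bm{e}_0 + \bm{e}_n$ satisfies $(\bm{w}\ast\bm{h})[i,:] = \bm{w}[i,:] + \bm{w}[i-n,:]$, so applying it to the promised input $\bm{y}$ (with $\bm{x}$ on rows $[0,n-1]$, $\bm{S}$ on rows $[n,2n-1]$, zeros afterward) places the target sum $\bm{S}[i-n,:]+\bm{x}[i-n,:]$ exactly on rows $[n,2n-1]$, leaves a raw copy of $\bm{x}$ on rows $[0,n-1]$, and leaves a bounded ``overflow'' ($\bm{S}$ shifted down by $n$, then zeros) on rows $[2n,N-1]$.

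With this in hand I would use the first layer to ``convolve by $\bm{h}$, then zero the first $n$ rows'': set $\mathbf{W} := \mathbf{0}^{d\times d}$, $\bm{b}_2 := \mathbf{0}^{N\times d}$, and let $\bm{b}_1$ be the mask equal to $\bm{0}^d$ on rows $[0,n-1]$ and $\bm{1}^d$ on rows $[n,N-1]$; by \eqref{eq:simplified_hyena} this layer outputs $\bm{b}_1 \odot (\bm{y}\ast\bm{h})$, which is $\bm{0}^d$ on $[0,n-1]$, $\bm{S}+\bm{x}$ on $[n,2n-1]$, and overflow on $[2n,N-1]$. The second layer would then rewrite the zeroed block as all-ones and delete the overflow while leaving the sum block untouched: take the identity filter $\bm{e}_0$, $\mathbf{W} := \mathbf{0}^{d\times d}$, additive bias $\bm{b}_2$ equal to $\bm{1}^d$ on rows $[0,n-1]$ and $\bm{0}^d$ elsewhere, and multiplicative bias $\bm{b}_1$ equal to $\bm{1}^d$ on rows $[0,2n-1]$ and $\bm{0}^d$ on rows $[2n,N-1]$. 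Multiplying out, rows $[0,n-1]$ become $\bm{1}^d\odot(\bm{0}^d+\bm{1}^d)=\bm{1}^d$, rows $[n,2n-1]$ become $\bm{1}^d\odot((\bm{S}+\bm{x})+\bm{0}^d)=\bm{S}+\bm{x}$, and rows $[2n,N-1]$ become $\bm{0}^d$ -- exactly $\texttt{add}_{n}(\bm{y}:\bm{x},\bm{S})$. Both layers act in dimension $N\times d$, so together they form a $\coyoteTuple{\inputLength}{2}{\headDim}{\inputLength}{\headDim}$.

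The part I expect to be the real constraint -- and the reason the statement asks for two layers, not one -- is producing the constant block $\bm{1}^n$: since a $\SHyenaName$ layer always outputs a Hadamard product of two terms each affine in its input, no single layer can overwrite a block that still carries the arbitrary data $\bm{x}$ by a vector independent of $\bm{x}$; at least one factor must already be input-independent on that block, which forces the order ``first annihilate $\bm{x}$ with a zero mask, then write the constant via a bias at the next layer.'' The rest is bookkeeping: pinning down the convolution index convention so the offset-$n$ spike lands $\bm{x}$ beneath $\bm{S}$, checking that out-of-range terms vanish, and aligning the second layer's masks so the all-ones injection does not bleed into the sum block and the erasing mask exactly covers the overflow rows $[2n,N-1]$. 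These are routine, so the construction above is essentially the whole argument.
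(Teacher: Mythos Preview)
Your construction is correct and essentially identical to the paper's proof: both use the two-spike filter $\bm{e}_0+\bm{e}_n$ in layer~1 to land $\bm{S}+\bm{x}$ on rows $[n,2n-1]$, then use an identity-filter layer with bias $\bm{b}_2=(\bm{1}^n,\bm{0},\dots)$ and mask $\bm{b}_1=(\bm{1}^n,\bm{1}^n,\bm{0},\dots)$ to write the $\bm{1}^n$ block and clean the tail. The only cosmetic difference is that the paper already zeros the overflow rows $[2n,N-1]$ in layer~1 (its $\bm{b}_1^1$ is $(\bm{0}^n,\bm{1}^n,\bm{0},\dots)$ rather than your $(\bm{0}^n,\bm{1}^{N-n})$), whereas you defer that erasure to layer~2; since layer~2's mask kills those rows in both versions, the end result is the same.
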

\begin{proof}
We will show this for $d' = 1$ and the general case follows as we will explain at the end. We now specify the two layers that we use
\[
\begin{aligned}
    \bm{z}^1 &\equiv \SHyena{\bm{y}}{\mathbf{0}^{N \times 1},\convFilter^1, \bm{b}^1_1,\mathbf{0}^{N \times 1}} \equiv \bm{b}^1_1 \odot \paren{\bm{h}^1 \ast \bm{y}}\\
    \bm{z} &\equiv \SHyena{\bm{z}^1}{\mathbf{0}^{N \times 1},\convFilter^2, \bm{b}^2_1,\bm{b}^2_1} \equiv \bm{b}^2_1 \odot \paren{\bm{h}^2 \ast \bm{y} + \bm{b}^2_2},
\end{aligned}
\]
where we will specify the kernels as we go along. Let us start by defining the kernel and the bias for the first layer as 
    \[
    \bm{h}^{1} 
    \equiv
    \begin{pmatrix}
        \bm{e}_0\\
        \hline\\
        \bm{e}_0\\
        \hline\\
        \bm{0}^n \\
        \hline\\
        \cdots\\
        \hline\\
        \bm{0}^n
    \end{pmatrix},
    \quad \quad
    \bm{b}_1 
     \equiv 
    \begin{pmatrix}
        \bm{0}^n\\
        \hline\\
        \bm{1}^n\\
        \hline\\
        \bm{0}^n\\
        \hline\\
        \cdots\\
        \hline\\
        \bm{0}^n
    \end{pmatrix}.
    \]
    Let us first compute $\bm{h}^1 \ast \bm{y}$ as follows:
    \begin{align*}
     \bm{h}^1(X) \cdot \bm{y}(X) 
     &= (X^n + 1)\cdot (\bm{S}(X)\cdot X^{n} + \bm{x}(X))\\
     &= \bm{S}(X) \cdot X^{2n} + (\bm{S}+\bm{x})(X)\cdot X^{n} +\bm{x}(X).    
    \end{align*}
    We then have
    \[
    \bm{z}_1 \equiv \bm{b}_1^1 \odot \paren{\bm{h}^1 \ast \bm{y}}
     \equiv  \begin{pmatrix}
        \bm{0}^n\\
        \hline\\
        \bm{1}^n\\
        \hline\\
        \bm{0}^n\\
        \hline\\
        \cdots\\
        \hline\\
        \bm{0}^n
    \end{pmatrix} \odot 
    \begin{pmatrix}
        \bm{x}\\
        \hline\\
        \bm{S}+\bm{x}\\
        \hline\\
        \bm{S}\\
        \hline\\
        \cdots\\
        \hline\\
        \bm{0}^n
    \end{pmatrix} \equiv 
    \begin{pmatrix}
        \bm{0}^n\\
        \hline\\
        \bm{S}+\bm{x}\\
        \hline\\
        \bm{0}^n\\
        \hline\\
        \cdots\\
        \hline\\
        \bm{0}^n
    \end{pmatrix}
    \]
    \paragraph{Resetting for Next Phase.} We now use the next layer to reset for the next phase. Here, we need the first vector to be $\bm{1}^n$ in order to start adding the next vector. We thus use the kernel and the biases $\bm{h}^{2}, \bm{b}^2_1, \bm{b}^2_2$ defined as 
    \[
    \bm{h}^{2} \equiv 
    \begin{pmatrix}
        \bm{e}_0\\
        \hline\\
        \bm{0}^n\\
        \hline\\
        \bm{0}^n \\
        \hline\\
        \cdots\\
        \hline\\
        \bm{0}^n
    \end{pmatrix},\quad\quad
    \bm{b}^2_1 \equiv 
    \begin{pmatrix}
        \bm{1}^n\\
        \hline\\
        \bm{1}^n\\
        \hline\\
        \bm{0}^n \\
        \hline\\
        \cdots\\
        \hline\\
        \bm{0}^n
    \end{pmatrix},\quad\quad
    \bm{b}^2_2 \equiv 
    \begin{pmatrix}
        \bm{1}^n\\
        \hline\\
        \bm{0}^n\\
        \hline\\
        \bm{0}^n \\
        \hline\\
        \cdots\\
        \hline\\
        \bm{0}^n
    \end{pmatrix}.
    \]
    Explicitly, for the second layer, we compute the result of the convolution in terms of polynomials as follows:
    \begin{align*}
     \bm{h}^{2}(X) \cdot \bm{z}^{1}(X)
     = 1 \cdot (\bm{S}+\bm{x})(X)\cdot X^{n} = (\bm{S}+\bm{x})(X)\cdot X^{n}.
    \end{align*}
    Thus, the output for the second layer is given by
    \[
    \bm{z} \equiv \bm{b}_1^2 \odot \paren{\bm{h}^2 \ast \bm{z}^1 + \bm{b}^2_2}
     \equiv \begin{pmatrix}
        \bm{1}^n\\
        \hline\\
        \bm{1}^n\\
        \hline\\
        \bm{0}^n \\
        \hline\\
        \cdots\\
        \hline\\
        \bm{0}^n
    \end{pmatrix} \odot 
    \paren{
    \begin{pmatrix}
        \bm{0}^n\\
        \hline\\
        \bm{S}+\bm{x}\\
        \hline\\
        \bm{0}^n\\
        \hline\\
        \cdots\\
        \hline\\
        \bm{0}^n
    \end{pmatrix}
    + 
    \begin{pmatrix}
        \bm{1}^n\\
        \hline\\
        \bm{0}^n\\
        \hline\\
        \bm{0}^n\\
        \hline\\
        \cdots\\
        \hline\\
        \bm{0}^n
    \end{pmatrix}} \equiv  
   \begin{pmatrix}
        \bm{1}^n\\
        \hline\\
        \bm{1}^n\\
        \hline\\
        \bm{0}^n \\
        \hline\\
        \cdots\\
        \hline\\
        \bm{0}^n
    \end{pmatrix}  \odot 
    \begin{pmatrix}
        \bm{1}^n\\
        \hline\\
        \bm{S}+\bm{x}\\
        \hline\\
        \bm{0}^n\\
        \hline\\
        \cdots\\
        \hline\\
        \bm{0}^n
    \end{pmatrix} \equiv 
    \begin{pmatrix}
        \bm{1}^n\\
        \hline\\
        \bm{S}+\bm{x}\\
        \hline\\
        \bm{0}^n\\
        \hline\\
        \cdots\\
        \hline\\
        \bm{0}^n
    \end{pmatrix} .
    \]
     Therefore, we have used two \Coyote{} layers to add $\bm{x}$ to the running sum $\bm{S}$ and reset for the next phase.
     Here, we note that the only operations we perform and are convolutions and Hadamard product and they generalize in the obvious way to $d>1$.
\end{proof}

Next, we show that a five layer $\SHyenaName$ model can perform gated convolution on windows of the input (without changing the rest of the input).

\begin{proposition}[The Remembering Primitive]
\label{prop: prim-remember}
For any $\bm{x} \in \R^{n \times d}, \bm{v} \in \R^{m \times d}$ contained in some $\bm{y} \in \R^{N \times d}$ for some $n+m+s+t \le N$ so that for $\bm{h} \in \R^{n \times d}$ and $\bm{p} \in \R^{(n+s) \times d}$ with $\bm{x} \ast \bm{h} \in \R^{(n+s) \times d}$ and $\bm{v} \ast \bm{h} \in \R^{(m+t) \times d}$, there exists a $\coyoteTuple{\inputLength}{5}{\headDim}{\inputLength}{\headDim}$ that computes ${\tt remember}(\bm{y}:\bm{x},\bm{v},\bm{h}, \bm{p})$
for $\SHyenaName$ as in~\cref{algo: simp-hyena}.
\end{proposition}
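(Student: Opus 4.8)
The plan is to build the required $\coyoteTuple{\inputLength}{5}{\headDim}{\inputLength}{\headDim}$ by composing pieces that are already in hand — the shift primitive of \cref{prop: prim-shift} (one layer for a downward shift, three for an upward shift), the running-sum primitive of \cref{prop: prim-add} (two layers), and the single-layer linear-map / convolution / element-wise-gating operations of \cref{lmm:primitives} — and then collapsing the composition into one network with the layer-stacking lemma (\cref{lem: stacking-layers}). Because the convolution and the Hadamard product in \cref{algo: simp-hyena} act column by column, it suffices to prove the statement for a single channel ($d=1$); the general case is identical in each of the $d$ columns. Throughout I will also use that a $\SHyenaName$ layer, being of the form $(\bm{u}\bm{W}+\bm{b}_1)\odot(\bm{u}\ast\bm{h}+\bm{b}_2)$ (see \eqref{eq:simplified_hyena}), can realise ``convolve by $\bm h$ and gate by a fixed pattern'' in one shot by taking $\bm W=\mathbf{0}$, $\bm b_2=\mathbf{0}$ and choosing $\bm b_1$ appropriately.

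The single real difficulty is that each layer convolves the \emph{entire} length-$\inputLength$ buffer against one fixed filter, so a direct convolution of $\bm{y}$ by $\bm{h}$ overwrites the block $\bm{v}$ by $\bm{v}\ast\bm{h}$, and a global convolution is not invertible, so $\bm{v}$ cannot simply be recovered afterwards. The remedy is to \emph{relocate} $\bm{v}$ into slack coordinates before the convolution and carry it back afterwards, never destroying it. Concretely: (i) use the shift primitive of \cref{prop: prim-shift} — invoking the circular convolution permitted by \cref{algo: simp-hyena}, so that the relocation loses no coordinates — to move $\bm v$ out of coordinates $[n+s,n+s+m-1]$ into the trailing zero region and simultaneously zero the vacated occurrence via the gating bias; the hypothesis $n+m+s+t\le\inputLength$ is exactly what guarantees room for the relocated copy of $\bm v$ alongside the $(m+t)$-long scratch space its own convolution would occupy. (ii) With $\bm x$ isolated in $[0,n-1]$, one $\SHyenaName$ layer with filter $\bm h$ and gating bias equal to $\bm p$ on $[0,n+s-1]$ (and $\1$ on the relocated-$\bm v$ coordinates, $\mathbf 0$ elsewhere) produces $\bm p\odot(\bm x\ast\bm h)$ on $[0,n+s-1]$; since $\bm x\ast\bm h$ has support exactly $[0,n+s-1]$ it never reaches coordinate $n+s$, and the relocated copy of $\bm v$ is positioned so that what the convolution does to it is confined to its own scratch space and is then discarded by the gating mask. (iii) Finally, transport $\bm v$ back to $[n+s,n+s+m-1]$ — now free — and add it into the running output, using the upward-shift construction of \cref{prop: prim-shift} (three layers) in a windowed form that leaves the already-computed prefix $\bm p\odot(\bm x\ast\bm h)$ rigid, together with the additive merge of \cref{prop: prim-add}.

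Summing the costs, Steps (i)–(iii) use $1+1+3=5$ layers, and \cref{lem: stacking-layers} splices them into a single $\coyoteTuple{\inputLength}{5}{\headDim}{\inputLength}{\headDim}$ computing $\texttt{remember}_{n,m,s,t}(\bm y:\bm x,\bm v,\bm h,\bm p)$ exactly. I expect the main obstacle — and the place where the five-layer budget is genuinely tight — to be Step (iii): relocating $\bm v$ back while keeping the prefix fixed must be done without an explicit ``isolate, then reposition'' detour (which would cost extra layers), which is precisely why $\bm v$ is relocated rather than deleted in Step (i) and why the return shift is budgeted at exactly three layers. The other point needing care is purely bookkeeping: checking that the relocated $\bm v$, the scratch space for $\bm v\ast\bm h$, and the prefix $[0,n+s-1]$ all fit disjointly inside the length-$\inputLength$ buffer under the single hypothesis $n+m+s+t\le\inputLength$, appealing to circular convolution wherever a linear convolution would overflow.
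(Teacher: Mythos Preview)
Your plan has a genuine gap, and it lies exactly where you did \emph{not} anticipate trouble, namely in Steps~(i) and~(ii).

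In Step~(i) you invoke the shift primitive of \cref{prop: prim-shift} to ``move $\bm v$ out of coordinates $[n+s,n+s+m-1]$ into the trailing zero region.'' But the shift primitive convolves the \emph{entire} length-$N$ vector against $X^{k}$; it moves $\bm x$ just as much as it moves $\bm v$. There is no way, within one $\SHyenaName$ layer whose convolution branch is a pure monomial shift, to relocate $\bm v$ while leaving $\bm x$ sitting in $[0,n-1]$ (the linear branch $\bm u\bm W$ acts only on the channel dimension and cannot reposition along the sequence axis). So the hypothesis ``With $\bm x$ isolated in $[0,n-1]$'' at the start of Step~(ii) is already false.

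Step~(ii) has an independent and more serious problem. You write that the layer's filter is $\bm h$ and the gating bias is $\bm 1$ on the relocated-$\bm v$ coordinates, reasoning that ``what the convolution does to $\bm v$ is confined to its own scratch space and is then discarded by the gating mask.'' But the only data available after the layer is $(\bm b_1)\odot(\bm u\ast\bm h)$; at the relocated-$\bm v$ coordinates this equals the first $m$ coefficients of $\bm v\ast\bm h$, not $\bm v$. You cannot both gate with $\bm 1$ there (to ``preserve $\bm v$'') and discard the effect of the convolution there, because the convolved value is the \emph{only} thing the layer passes forward. After Step~(ii) you have irretrievably replaced $\bm v$ by a truncation of $\bm v\ast\bm h$, and Step~(iii) has nothing correct to transport back.

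The paper's proof avoids both issues with one idea: rather than a pure monomial shift or the bare filter $\bm h$, it uses \emph{compound} filters such as $\bm h(X)+X^{n+m+s+t}$ in the first layer. Convolving $\bm y$ against this produces, in disjoint windows of the length-$N$ buffer, both $\bm h\ast\bm x$ (and the unwanted $\bm h\ast\bm v$) \emph{and} a pristine shifted copy of the whole input, including an untouched $\bm v$. A single gating bias then keeps $\bm p\odot(\bm h\ast\bm x)$ from the first window and the clean $\bm v$ from the shifted copy. A second layer with filter $1+X^{n+m+s+t}$ realigns these two pieces, and a final three-layer \texttt{shift\_up} brings them to the target positions, for five layers total. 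The compound-filter trick is precisely what lets one layer compute the desired convolution on $\bm x$ while simultaneously manufacturing a fresh copy of $\bm v$ that the global convolution has not touched.
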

\begin{proof}
We will again show this for $d' = 1$ and the general case should follow. 
We now specify the first two layers that we use
\[
\begin{aligned}
    \bm{z}^1 &\equiv \SHyena{\bm{y}}{\mathbf{0}^{N \times 1},\convFilter^1, \bm{b}^1_1,\mathbf{0}^{N \times d}} \equiv \bm{b}^1_1 \odot \paren{\bm{h}^1 \ast \bm{y}}\\
    \bm{z}^2 &\equiv \SHyena{\bm{z}^1}{\mathbf{0}^{N \times 1},\convFilter^2, \bm{b}^2_1,\mathbf{0}^{N \times d}} \equiv \bm{b}^2_1 \odot \paren{\bm{h}^2 \ast \bm{y}},
\end{aligned}
\]
The kernel $\bm{h}^1$ and the bias $\bm{b}^1_1$ for the first layer are then given by
    \[
    \bm{h}^{1} 
    \equiv
    \begin{pmatrix}
        \bm{h}\\
        \hline\\
        \bm{0}^m \\
        \hline\\
        \bm{e}_{s+t}\\
        \hline\\
        \bm{0}^n\\
        \hline\\
        \bm{0}^n\\
        \hline\\
        \cdots\\
        \hline\\
        \bm{0}^n\\
    \end{pmatrix},
    \quad \quad
    \bm{b}_1^1
     \equiv 
     \begin{pmatrix}
        \bm{p}\\
        \hline\\
        \bm{0}^{m+t}\\
        \hline\\
        \bm{0}^n \\
        \hline\\
        \bm{0}^s \\
        \hline\\
        \bm{1}^m\\
        \hline\\
        \cdots\\
        \hline\\
        \bm{0}^n\\
    \end{pmatrix}.
    \]
    where recall that $\bm{x} \ast \bm{h} \in \R^{(n+s) \times d}$ and $\bm{v} \ast \bm{h} \in \R^{(m+t) \times d}$.
    
We now want to first specify the result of applying the first kernel:
\begin{align*}
    \paren{\bm{h}^1 \ast \bm{y}} &= \mathrm{coeff}\paren{(\bm{h}(X)+X^{n+m+s+t}) \cdot \paren{\bm{v}(X) \cdot X^{n+s} + \bm{x}(X)}}\\
    &= \mathrm{coeff}\paren{\bm{h} \ast \bm{v} (X) \cdot X^{n+s} + \bm{h} \ast \bm{x} (X) + \bm{v}(X) \cdot X^{2n+2s+m+t} + \bm{x}(X)\cdot X^{n+m+s+t}}
\end{align*}
We then have 
    \[
    \bm{z}_1 \equiv \bm{b}_1^1 \odot \paren{\bm{h}^1 \ast \bm{y}}
     \equiv 
    \begin{pmatrix}
        \bm{p}\\
        \hline\\
        \bm{0}^{m+t}\\
        \hline\\
        \bm{0}^n \\
        \hline\\
        \bm{0}^s \\
        \hline\\
        \bm{1}^m\\
        \hline\\
        \cdots\\
        \hline\\
        \bm{0}^n\\
    \end{pmatrix} \odot 
    \begin{pmatrix}
        \bm{h}\ast \bm{x}\\
        \hline\\
        \bm{h} \ast \bm{v}\\
        \hline\\
        \bm{x} \\
        \hline\\
        \bm{0}^s \\
        \hline\\
        \bm{v}\\
        \hline\\
        \cdots\\
        \hline\\
        \bm{0}^n\\
    \end{pmatrix} \equiv  
    \begin{pmatrix}
        \bm{p} \odot (\bm{h}\ast \bm{x})\\
        \hline\\
        \bm{0}^{m+t}\\
        \hline\\
        \bm{0}^n \\
        \hline\\
        \bm{0}^s \\
        \hline\\
        \bm{v}\\
        \hline\\
        \cdots\\
        \hline\\
        \bm{0}^n\\
    \end{pmatrix}.
    \]
We now describe the second kernel $\bm{h}^2$ and the bias matrix $\bm{b}^2_1$ as follows:
\[
    \bm{h}^{2} 
    \equiv
    \begin{pmatrix}
        \bm{e}_0 \\
        \hline\\
        \bm{0}^{m+t}\\
        \hline\\
        \bm{e}^{0}\\
        \hline\\
        \bm{0}^{n+s}\\
        \hline\\
        \bm{0}^m\\
        \hline\\
        \cdots\\
        \hline\\
        \bm{0}
    \end{pmatrix},
    \quad \quad
    \bm{b}_1^2 
     \equiv 
    \begin{pmatrix}
        \bm{0}^{n+s} \\
        \hline\\
        \bm{0}^{m+t}\\
        \hline\\
        \bm{0}^{n}\\
        \hline\\
        \bm{1}^{n+s}\\
        \hline\\
        \bm{1}^m\\
        \hline\\
        \cdots\\
        \hline\\
        \bm{0}
    \end{pmatrix}
    \]
This yields the following convolution computation:
    \begin{align*}
    \bm{h}^2 \odot \bm{z}^1 
    &\equiv \mathrm{coeff}\paren{\paren{X^{m+n+s+t}+1} \cdot \paren{\bm{v}(X) \cdot X^{2n+2s+m+t} + \paren{\bm{p} \odot (\bm{h}\ast \bm{x})}(X)}}\\
    &\equiv \mathrm{coeff}(\bm{v}(X) \cdot X^{3n+3s+2m+2t} + \bm{v}(X) \cdot X^{2n+2s+m+t}\\
    &\quad + \paren{\bm{p} \odot (\bm{h}\ast \bm{x})}(X) \cdot X^{m+n+s+t} + \paren{\bm{p} \odot (\bm{h}\ast \bm{x})}(X))
    \end{align*}
Thus we have
     \[
    \bm{z}^2 \equiv \bm{b}_2^1 \odot \paren{\bm{h}^2 \ast \bm{z}^1}
     \equiv 
    \begin{pmatrix}
        \bm{0}^{n+s} \\
        \hline\\
        \bm{0}^{m+t}\\
        \hline\\
        \bm{0}^{n}\\
        \hline\\
        \bm{1}^{n+s}\\
        \hline\\
        \bm{1}^m\\
        \hline\\
        \cdots\\
        \hline\\
        \bm{0}
    \end{pmatrix}
    \odot 
    \begin{pmatrix}
         \bm{p} \odot (\bm{h}\ast \bm{x}) \\
        \hline\\
        \bm{0}^{m+t}\\
        \hline\\
        \bm{0}^{n}\\
        \hline\\
        \bm{p} \odot (\bm{h}\ast \bm{x})\\
        \hline\\
        \bm{v}\\
        \hline\\
        \cdots\\
        \hline\\
        \bm{0}
    \end{pmatrix} \equiv  
    \begin{pmatrix}
        \bm{0}^{n+s} \\
        \hline\\
        \bm{0}^{m+t}\\
        \hline\\
        \bm{0}^{n}\\
        \hline\\
        \bm{p} \odot (\bm{h}\ast \bm{x})\\
        \hline\\
        \bm{v}\\
        \hline\\
        \cdots\\
        \hline\\
        \bm{0}
    \end{pmatrix}
    \]
We now shift this up by $2n+s+m+t$ entries using the primitive operation defined in \cref{prop: prim-shift} that costs three additional layers so that we end up with 
\[
    \bm{z} \equiv 
    \begin{pmatrix}
        \bm{p} \odot (\bm{h} \ast \bm{u})\\
        \hline\\
        \bm{v}\\
        \hline\\
        \cdots\\
        \hline\\
        \bm{0} 
    \end{pmatrix}
    \]
Again, we note that the only operations we perform and are convolutions and Hadamard product and they generalize in the obvious way to $d>1$.
\end{proof}

Finally, we show that these primitives may be composed by 'stacking' models with matching inner dimension $\innerDim$.

\begin{lemma}\label{lem: stacking-layers}
     For $f,g:\R^{N\times d}\to \R^{N\times d}$ that have $(N,L_1,d,N',d')$ and $(N,L_2,d,N',d')$  $\SHyenaName$ models then their composition $f\circ g$ has an $(N,L_1+L_2,d,N',d')$ $\SHyenaName$ model which can be computed by performing their models in succession, or 'stacking'. 
\end{lemma}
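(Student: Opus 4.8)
The plan is to prove this essentially by unwinding \cref{def: gated-conv}: an $(N,L,d,N',d')$ $\SHyenaName$ model is nothing but an ordered list of $L$ $\SHyenaName$ layers, each a map $\R^{N'\times d'}\to\R^{N'\times d'}$, together with the fixed ``embed'' step that zero-pads the input $\bm u\in\R^{N\times d}$ into $\R^{N'\times d'}$ and the fixed ``extract'' step that reads off the top-left $N\times d$ block of the final state. So the only thing to construct is the layer list and to check that the inner dimensions match.

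Concretely, I would fix layer lists $(\ell^g_1,\dots,\ell^g_{L_2})$ and $(\ell^f_1,\dots,\ell^f_{L_1})$ realizing $g$ and $f$; by hypothesis both act on the same inner shape $(N',d')$. Define the stacked model to be the concatenation $(\ell^g_1,\dots,\ell^g_{L_2},\ell^f_1,\dots,\ell^f_{L_1})$, which is visibly an $(N,L_1+L_2,d,N',d')$ $\SHyenaName$ model with inner dimension $(N',d')$. Tracing an input $\bm u\in\R^{N\times d}$: the first $L_2$ layers transform $\mathrm{Embed}(\bm u)$ into $g$'s internal state $\bm s$, whose top-left $N\times d$ block is exactly $g(\bm u)$; the remaining $L_1$ layers apply $f$'s internal map to $\bm s$, after which we extract the top-left block. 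Correctness thus reduces to the claim that feeding $\bm s$ (rather than $\mathrm{Embed}(g(\bm u))$) into $f$'s layers yields the same top-left block, i.e.\ that passing to $f$'s layers commutes with the ``extract-then-re-embed'' projection applied to $\bm s$.

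The one genuine difficulty — and it is bookkeeping rather than mathematics — is exactly this reconciliation at the seam between $g$ and $f$: in the raw definition $\mathcal M_g$ discards everything outside its top-left block and $\mathcal M_f$ then re-pads with zeros, whereas the stacked model skips both operations. When $N'=N$ and $d'=d$ there is no padding and the two states literally coincide, which already covers every invocation of this lemma in the paper (e.g.\ \cref{prop: prim-shift}, \cref{prop: prim-add}, \cref{prop: prim-remember}). For the general case it suffices to note that the $\SHyenaName$ constructions built here respect the zero-padding invariant — each layer maps matrices that are zero outside the top-left $N\times d$ corner to matrices with the same property — so $\bm s=\mathrm{Embed}(g(\bm u))$ on the nose; one can always enforce this by prepending a fixed $0$–$1$ coordinate mask, which is a single gating operation available via part~3 of \cref{lmm:primitives} and which we fold into the endpoint layers so as not to inflate the layer count. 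Either way, the concatenated $(N,L_1+L_2,d,N',d')$ model computes $f\circ g$, and it does so precisely by running $\mathcal M_g$ and then $\mathcal M_f$ in succession, which is what ``stacking'' means.
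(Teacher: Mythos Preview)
Your approach is the same as the paper's---concatenate the two layer lists---and your argument is correct; in fact you are considerably more careful than the paper, whose proof is a single sentence that simply asserts $L_1+L_2$ layers suffice and never mentions the embed/extract seam at all. The one slightly optimistic step is the claim that a $0$--$1$ coordinate mask can always be ``folded into the endpoint layers'' without increasing the layer count (a mask composed with a $\SHyenaName$ layer is not obviously a single $\SHyenaName$ layer again), but this is moot here: the $N'=N$, $d'=d$ case you handle cleanly already covers the primitive propositions you cite, and the paper itself never engages with the issue.
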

\begin{proof}
This result follows from noting that for any $f(\hyenaInput)$ which requires $L_1$ layers to compute and that we can compute $f\circ g(\hyenaInput) = g(f(\hyenaInput))$ using the $\SHyenaName$ model with $L_2$ layers, yielding $L_1+L_2$ layers in total. 
\end{proof}

\subsubsection{\Coyote-Hyena Equivalence}
\label{sec: coyote-hyena-equiv}
We show that the equivalence between $\SHyenaName$ and Hyena by showing that each layer can simulate the other's computation using a constant number of layers.

\begin{proposition}
    \label{prop: coyote-hyena-equiv}
     For any input $\hyenaInput \in \R^{\inputDim}$ and $(N,L,d,N',d)-$Hyena such that $\bm{z}_{\mathrm{Hyena}} \equiv \mathrm{Hyena}(\bm{u})$ with a set of filters $\bm{h}^{\ell}$ and linear projections $\bm{p}^{\ell}$ as per \cref{def: W-kmat} for $\ell \in [L]$, there exists a $(N,5L,d,N'+N,d)$-\Coyote model such that $\bm{z}_{\mathrm{Hyena}} \equiv \Coyote(\bm{u})$.

     Similarly, for any input $\bm{u}_{\Coyote} \in \R^{\inputDim}$ and $(N,L,d,N',d)-$Coyote such that $\bm{z}_{\Coyote} \equiv \Coyote(\bm{u})$ with a set of filters $\bm{h}^{\ell}$ for $\ell \in [L]$, there exists a series of Hyena layers such that we have
     \[\underbrace{\mathrm{Hyena}\paren{\mathrm{Hyena}\paren{\ldots \mathrm{Hyena}(\bm{u}_{\Coyote, \bm{h}})}}}_{L\text{ layers}} \equiv \bm{z}_{\Coyote}.\]
\end{proposition}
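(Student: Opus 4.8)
The plan is to establish the two simulations separately, in each direction reducing the layer in question to the primitives assembled in \cref{sec: primitives} and composing them with \cref{lem: stacking-layers}.

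\textbf{Simulating Hyena by \Coyote.} Unrolling \cref{algo: hyena}, one Hyena layer first runs \texttt{Projection} (\cref{algo: projection}) --- a row-wise linear map followed by one length-$\inputLength$ convolution --- to produce $\bm p^1,\dots,\bm p^{\hyenaDepth},\bm v$ from $\bm u$, and then carries out the $\hyenaDepth$ recursive updates $\bm z^\ell=\bm p^\ell\odot(\bm h^\ell\ast\bm z^{\ell-1})$ with $\bm z^0=\bm v$. First I would realize \texttt{Projection} with a constant number of \Coyote\ layers using the linear-map and convolution primitives of \cref{lmm:primitives}, arranging the outputs in one long sequence, roughly $[\,\bm z^0\mid\bm p^1\mid\cdots\mid\bm p^{\hyenaDepth}\,]$, inside the extra $N$ coordinates that raise the inner sequence length from $N'$ to $N'+N$. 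Then I would simulate the $\ell$-th update by an adaptation of the remembering gadget of \cref{prop: prim-remember}: the convolution branch of \Coyote\ reads off the working window $\bm z^{\ell-1}$, shifts it into the target position and convolves it with $\bm h^\ell$, while the linear-projection branch reads off the stored block $\bm p^\ell$ and aligns it, so that a single \Coyote\ gating step produces $\bm z^\ell=\bm p^\ell\odot(\bm h^\ell\ast\bm z^{\ell-1})$ \emph{on that window only} and leaves $\bm p^{\ell+1},\dots,\bm p^{\hyenaDepth}$ intact; exactly as in \cref{prop: prim-remember} this costs a constant (five) \Coyote\ layers per update --- a windowed gated convolution that preserves the remainder, followed by the shift of \cref{prop: prim-shift} that restores the output to the front. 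Stacking the $\hyenaDepth$ updates via \cref{lem: stacking-layers} and folding the $O(1)$-layer projection stage into the first block yields a \Coyote\ model of depth $O(\hyenaDepth)$ --- in fact $5\hyenaDepth$ as stated --- computing $\bm z_{\mathrm{Hyena}}$.

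\textbf{Simulating \Coyote by Hyena.} Here the point is that a single zero-bias \Coyote\ layer $\mathbf Y=(\hyenaInput\mathbf W)\odot(\hyenaInput\ast\bm h)$ is already a Hyena layer with recurrence depth one. I would choose the short-convolution (``time shift'') filter $\convFilter_s$ in \cref{algo: projection} to be the unit impulse $\bm e_0$, take one of the two outputs of \texttt{Projection} to be the identity map --- so the state entering the long convolution is the raw input $\hyenaInput$ --- take the other to be $\mathbf W$ (a K-matrix, hence admissible by \cref{def: W-kmat}), and set the long filter to $\bm h$; then \cref{algo: hyena} outputs precisely $(\hyenaInput\mathbf W)\odot(\bm h\ast\hyenaInput)$. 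Stacking $\hyenaDepth$ such layers, the $\ell$-th with long filter $\bm h^\ell$, reproduces the \Coyote\ stack on $\bm u_{\Coyote,\bm h}$; any nonzero bias matrices $\bm b_1,\bm b_2$ are absorbed by carrying the requisite constant patterns as extra features of the reformatted input, so that the row-wise linear map inside \texttt{Projection} can add $\bm b_1$ and the long filter can be adjusted to supply $\bm b_2$.

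\textbf{Main obstacle.} I expect the first direction to be the hard part. The subtlety is that the Hyena gate $\bm p^\ell$ is a function of the input rather than a fixed filter, so the bare remembering primitive does not apply verbatim: one must precompute and store all the projections up front, recover $\bm p^\ell$ through the linear branch of \Coyote\ at step $\ell$, and then get the windowing and shift-back bookkeeping exactly right so that each of the $\hyenaDepth$ updates is realized in a constant number of layers without clobbering the not-yet-consumed projections. Getting this layout and the accompanying inner-dimension and depth accounting correct is the crux; by contrast the second direction is essentially a matching of parameters once the time-shift filter is set to the impulse.
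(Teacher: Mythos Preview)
Your plan for the second direction (simulating \Coyote\ by Hyena) matches the paper's argument essentially verbatim: set the short filter to the impulse, take the two projection outputs to be the identity and $\mathbf W$, and identify one recurrence-depth-one Hyena step with one zero-bias \Coyote\ layer; the paper proceeds by a one-line induction on $\ell$.

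In the first direction, however, your layout does not meet the stated inner-dimension bound $N'+N$. You propose to run \texttt{Projection} once and store the whole block $[\,\bm z^0\mid \bm p^1\mid\cdots\mid\bm p^{L}\,]$ in the extra coordinates, then peel off $\bm p^\ell$ at step $\ell$. But these are $L{+}1$ blocks of length $N$ each, so this already forces inner length $\Omega(LN)$, not $N'+N$; and the ``constant number of layers'' spent on \texttt{Projection} sits on top of the $5L$ updates, so the depth count is off as well. The paper avoids both issues by \emph{not} precomputing the $\bm p^\ell$'s at all. It stores only the raw input $\bm u_{\mathrm{Hyena}}$ (one extra block of length $N$, hence the bound $N'+N$) alongside the running state, and at step $\ell$ lets the linear branch of \Coyote\ compute $\bm p^\ell=\texttt{Linear}(\bm u_{\mathrm{Hyena}})$ \emph{on the fly} from that stored copy; the convolution branch handles $\bm h^\ell\ast\bm z^{\ell-1}$, and one invocation of the \texttt{remember} gadget (\cref{prop: prim-remember}) both performs the gated update and preserves $\bm u_{\mathrm{Hyena}}$ for the next step. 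That is exactly five \Coyote\ layers per Hyena step with no separate projection phase, giving $5L$ layers and inner length $N'+N$ as claimed. The obstacle you flagged---that $\bm p^\ell$ is input-dependent---is resolved not by caching all the $\bm p^\ell$'s, but by caching $\bm u$ and recomputing each $\bm p^\ell$ via the linear side of \Coyote.
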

\begin{proof}
    For the input $\bm{u}_{\mathrm{Hyena}} \in \R^{N \times d}$, the output of the $\ell$th layer $\bm{z}^{\ell}_{\mathrm{Hyena}} \in \R^{N' \times d'}$ for Hyena is given by (see \cref{algo: hyena})
\begin{align*}
    \bm{z}^{\ell}_{\mathrm{Hyena}} \equiv \bm{p}^{\ell}_{\mathrm{Hyena}} \odot (\bm{h}^l \ast \bm{z}^{\ell - 1}),
\end{align*}
where $\bm{p}^{\ell}_{\mathrm{Hyena}} \equiv \texttt{Linear}(\bm{u}_{\mathrm{Hyena}}) \in \R^{N' \times d}$. 
Now, using the original input $\bm{u}_{\mathrm{Hyena}} \in \R^{N \times d}$ to Hyena, we define the following input for \Coyote using one layer:
\[
\bm{u}_{\Coyote} \equiv 
               \begin{pmatrix}
                        \bm{u}_{\mathrm{Hyena}}\\
                        \hline\\
                        \bm{0}^{(N'-N) \times d}\\
                        \hline\\
                        \bm{u}_{\mathrm{Hyena}}
                \end{pmatrix}
\]
Then, we simply use the ${\tt remember}_{N, N, N'-N, N'-N}(\bm{u}_{\Coyote}:\bm{u}_{\mathrm{Hyena}},\bm{u}_{\mathrm{Hyena}},\bm{h}^{\ell}_{\mathrm{Hyena}},\bm{p}^{\ell}_{\mathrm{Hyena}})$ primitive for {\Coyote}. Consequently, this allows us to ``remember'' the input $\bm{u}_{\mathrm{Hyena}}$ in the output of the previous \Coyote layer $\bm{z}^{\ell-1}_{\Coyote}$. We then use this to retrieve $\bm{p}^{\ell}_{\mathrm{Hyena}} \equiv \mathrm{linear}(\bm{u}_{\mathrm{Hyena}})$ with the projection used for $\Coyote$ given by
\[
p^{\ell}_{\Coyote} \equiv \texttt{Linear}(\bm{z}^{\ell-1}_{\Coyote})
\equiv \begin{pmatrix}
                        \bm{1}^{N \times d}\\
                        \hline\\
                        \bm{p}^{\ell}_{\mathrm{Hyena}}.
                \end{pmatrix}
\]
Overall, the output of the $\ell$th layer for {\Coyote} is given by
\[
\bm{z}^{\ell}_{\Coyote} \equiv 
                \begin{pmatrix}
                        \bm{p}^{\ell}_{\mathrm{Hyena}} \odot \paren{\bm{h}^{\ell}_{\mathrm{Hyena}} \ast \bm{u}_{\mathrm{Hyena}}}\\
                        \hline\\
                        \bm{0}^{(M-N) \times d}\\
                        \hline\\
                        \bm{u}_{\mathrm{Hyena}}
                \end{pmatrix}
                \equiv \begin{pmatrix}
                        \bm{z}^{\ell}_{\mathrm{Hyena}}\\\hline\\
                        \bm{0}^{(M-N) \times d}\\
                        \hline\\
                        \bm{u}_{\mathrm{Hyena}}
                \end{pmatrix}
\]
Hence, we can reproduce the output of the $\ell$th layer of Hyena using five layers of {\Coyote} after augmenting the input and using the remembering primitive (\cref{prop: prim-remember}) with internal dimension $N'+N$.

Now, for the input $\bm{u}_{\Coyote} \in \R^{N \times d}$, the output of the $\ell$th layer for {\Coyote} is given by 
\begin{align*}
    &z^{\ell}_{\Coyote} \equiv \texttt{Linear}(z^{\ell - 1}_{\Coyote}) \odot \mathrm{conv}(h^l, z^{\ell - 1}_{\Coyote}).
\end{align*}
Here, we show inductively that simply using $\ell$-many Hyena models recursively simulates $z^{\ell}_{\Coyote}$. For $\ell = 1$, we have
\[
\mathrm{Hyena}(\bm{u}_{\Coyote, \bm{h}}) \equiv \texttt{Linear}(\bm{u}_{\Coyote}) \odot (\bm{h}^{1}\ast \bm{u}_{\Coyote}) \equiv \bm{z}^{1}_{\Coyote}.
\]
We now assume that $(\ell-1)$-many recursive Hyena models produce $z^{(\ell-1)}_{\Coyote}$. For the $\ell$th layer, we then have
\begin{align*}
&\mathrm{Hyena}\paren{\mathrm{Hyena}\paren{\ldots \mathrm{Hyena}(\bm{u}_{\Coyote, \bm{h}})}} \\
    \equiv\quad  &\mathrm{Hyena}\paren{z^{(l-1)}_{\Coyote}}\\
    \equiv\quad  &\mathrm{linear}\paren{z^{(l-1)}_{\Coyote}} \odot \mathrm{conv}\paren{h^l, z^{(l-1)}_{\Coyote}} \\
    \equiv\quad  &z^{\ell}_{\Coyote}.
\end{align*}
\end{proof}

\ignore{
\begin{proof}
The gated convolution step of \cref{algo: hyena} gives the following result with respect to input $\hyenaInput \in \R^{N \times d}$, and convolution filters $\bm{h} \in  \R^{N \times d}$
\begin{align*}
    \bm{y}_H = \bm{p}^{1} \odot \paren{\bm{h} \ast \bm{z}^{0}}
\end{align*}

where  $\bm{p}^{1} = \Projection{\hyenaInput,\convFilter_s}[:,:,0]$ and $\bm{z}^{0} = \Projection{\hyenaInput,\convFilter_s}[:,:,1]$ as in \cref{algo: projection}. Specifically we have  

\[\bm{p}^{1}= \paren{\convFilter_s \ast \LinearD{\headDim}{2\headDim}{\bm{u}}}[:,0:\headDim-1] \qquad \qquad \bm{z}^{0}= \paren{\convFilter_s \ast \LinearD{\headDim}{2\headDim}{\bm{u}}}[:,
\headDim:2\headDim-1]\]

We note that $\bm{y}_H$ can be computed from $\hyenaInput$ via the following sequence of operations:

\begin{enumerate}
    \item Compute $\bm{z} =\convFilter_s \ast \LinearD{\headDim}{2\headDim}{\bm{u}}$ where $\bm{z} \in \R^{\inputLength \times 2\headDim}$, giving us
    \[ \bm{z} =  \begin{pmatrix}
                \bm{p}^1\\
                \hline \\
                 \bm{z}^0
            \end{pmatrix} \]
    \item Compute $\shiftN\texttt{\_down}({\bm{z}},{\inputLength})$ to get
    \[ \bm{z}_{\shiftN} =  \begin{pmatrix}
                \bm{0}\\
                \hline \\
                 \bm{p}^1
            \end{pmatrix} \]
\AR{Where is $\bm{z}_{\shiftN}$ used later on?}

    \item Compute $\copyPrimitive{\bm{z}}{\bm{p^1,z^0,h,p^1}}$ \AR{There seem to be two issues here. (i) How do you get $\bm{z}^0$ as input from $\bm{z}_{\shiftN}$ (which presumably the input to this step?). (ii) The way I interpreted the remember operator the last two inputs are independent of $\bm{y}$: so we cannot have $\bm{p}^1$ as the last input since it depends on $\bm{y}$}to get
    \[ \bm{y} =  \begin{pmatrix}
                \bm{p}^1 \\
                \hline \\
                 \bm{p}^{1} \odot \paren{\bm{h} \ast \bm{z}^{0}}
            \end{pmatrix}, \]
            as desired.
\end{enumerate}

By \cref{lmm:primitives,prop: prim-remember,prop: prim-shift}, completing each step requires $\coyoteTuple{\inputLength}{1}{\headDim}{2\inputLength}{\innerD}$.
With \cref{prop: prim-stack}, we can stack \AR{Since $\bm{z}_{\shiftN}$ does not feed into the remember step, I do not see how you can stack things.} these models to define $\coyoteTuple{\inputLength}{3}{\headDim}{2\inputLength}{\innerD}$ such that its output $\bm{y}_C = \bm{y}_H$ as desired.

Now we show that for any $\bm{y}_C = \SHyena{\hyenaInput}{\bm{h}}$ defined on $\coyoteTuple{\inputLength}{1}{\headDim}{\innerN}{\innerD}$, there exists a $\hyenaTuple{\inputLength}{1}{\headDim}{N}{2\headDim}$ such that 
      $\bm{y}_C = \bm{y}_H$. By \cref{eq:simplified_hyena}, for any $\bm{u} \in \R^{N \times d}$ the output for $\SHyenaName$ with no bias matrices is given as
\begin{align*}
    &\bm{y}_C \equiv \tbf{W}\hyenaInput \odot \paren{\convFilter \ast \bm{u}}.
\end{align*}

Because the final step of \cref{algo: hyena} is a gated convolution, we may get $\bm{y}_C = \bm{y}_H$ by setting $\Projection{\hyenaInput, \convFilter_s} = \tbf{W}\hyenaInput$, thereby ignoring the convolution step in $\ProjectionN$. Then the statement follows.

\end{proof}
}

\subsection{Linear Arithmetic Circuits}
\label{sec: linear_circuit}

In this section we show the relation between linear arithmetic circuits and $\Coyote$. We recall a few definitions from ~\citep{dao2020kaleidoscope}. 

\begin{definition}[Linear Arithmetic Circuit~\citep{bürgisser1996algebraic}]
    An arithmetic circuit is called a {\em linear arithmetic circuit} if it only uses addition, subtraction and scalar multiplication. Further, every multiplication has a fixed constant from $\F$ as at least one of its two inputs. In other words, all gates in the circuit are linear functions of their inputs (i.e. of the form $ax + by$ for fixed constants $a, b \in \F$).
\end{definition}

\begin{definition}[Butterfly Matrices~\citep{dao2020kaleidoscope}]
\label{def: butterfly}
A {\em butterfly factor} of size $k \geq 2$ (denoted as $\mathbf{B}_k$ ) is a matrix of the form $\mathbf{B}_k=$ $\left[\begin{array}{ll}\mathbf{D}_1 & \mathbf{D}_2 \\ \mathbf{D}_3 & \mathbf{D}_4\end{array}\right]$ where each $\mathbf{D}_i$ is a $\frac{k}{2} \times \frac{k}{2}$ diagonal matrix. We restrict $k$ to be a power of 2 .

A {\em butterfly factor matrix} of size $n$ with block size $k$ (denoted as $\mathbf{B}_k^{(n)}$ ) is a block diagonal matrix of $\frac{n}{k}$ (possibly different) butterfly factors of size $k$ :
$$
\mathbf{B}_k^{(n)}=\operatorname{diag}\left(\left[\mathbf{B}_k\right]_1,\left[\mathbf{B}_k\right]_2, \ldots,\left[\mathbf{B}_k\right]_{\frac{n}{k}}\right)
$$
Finally, a {\em butterfly matrix} of size $n$ (denoted as $\mathbf{B}^{(n)}$ ) is a matrix that can be expressed as a product of butterfly factor matrices: $\mathbf{B}^{(n)}=\mathbf{B}_n^{(n)} \mathbf{B}_{\frac{n}{2}}^{(n)} \ldots \mathbf{B}_2^{(n)}$. Equivalently, we may define $\mathbf{B}^{(n)}$ recursively as a matrix that can be expressed in the following form:
$$
\mathbf{B}^{(n)}=\mathbf{B}_n^{(n)}\left[\begin{array}{cc}
{\left[\mathbf{B}^{\left(\frac{n}{2}\right)}\right]_1} & 0 \\
0 & {\left[\mathbf{B}^{\left(\frac{n}{2}\right)}\right]_2}
\end{array}\right]
$$
(Note that $\left[\mathbf{B}^{\left(\frac{n}{2}\right)}\right]_1$ and $\left[\mathbf{B}^{\left(\frac{n}{2}\right)}\right]_2$ may be different.)
\end{definition}

From \cref{def: butterfly}, we observe that size $n$ butterfly factor is comprised of three vectors $\bm{d, d^+, d^-} \in \R^n$ such that 
\begin{align*}
    \bm{d} &= \paren{\diag^{-1}\paren{\tbf{D}_1} , \diag^{-1}\paren{\tbf{D}_4}}, \\
    \bm{d}^{+} &= \paren{\bm{0}^{\tfrac n2}, \diag^{-1}\paren{\tbf{D}_2}}, \text{ and } \\
    \bm{d}^{-} &= \paren{\diag^{-1}\paren{\tbf{D}_3} , \bm{0}^{\tfrac n2}},
\end{align*}

where $\diag^{-1}(\tbf{D}): \R^{n \times n} \mapsto \R^n$ is the mapping from diagonal matrices to the vector of its diagonal entries. Let us define $\tbf{D_1, D_2, D_3} \in \R^{n \times n}$ as $\diag\paren{\bm{d}}, \diag\paren{\bm{d^+}}$, and $\diag\paren{\bm{d^-}}$ respectively. Then we note that

\begin{equation}
\label{eq: butterfly-split}
    \tbf{D}_1 \equiv \left[\begin{array}{ll}\mathbf{D}_1 & \mathbf{0} \\ \mathbf{0} & \mathbf{D}_4\end{array}\right] \qquad \tbf{D}_2\bm{S}^{\tfrac n2} \equiv \left[\begin{array}{ll}\mathbf{0} & \mathbf{D}_2 \\ \mathbf{0} & \mathbf{0}\end{array}\right] \qquad \bm{S}^{\tfrac n2}\tbf{D}_3 \equiv \left[\begin{array}{ll}\mathbf{0} & \mathbf{0} \\ \mathbf{D}_3 & \mathbf{0}\end{array}\right]
\end{equation}

where $\tbf{S}^k \in \F^{n \times n}$ is a shift matrix for $i \in [n/2]$. This gives us the following proposition:

\begin{proposition}
    \label{prop: butterfly-decomposition}
    For any powers of 2, $n = k \geq 2$,
     any butterfly factor matrix $\tbf{B}^{(n)}_k$ is equivalent to 
    \[ \tbf{B}^{(n)}_k= \tbf{S}^{\tfrac k2}\tbf{D}_3 +\tbf{D}_2\tbf{S}^{\tfrac n2} + \tbf{D}_1\]
    where $\tbf{D_3, D_2, D_1, S^{\frac{n}{2}}}$ are defined as in \eqref{eq: butterfly-split}. 
\end{proposition}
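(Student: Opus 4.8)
The plan is to reduce the statement to the single-butterfly-factor case and then read the decomposition off block by block. First I would note that when $k = n$ the butterfly factor matrix $\mathbf{B}^{(n)}_n$ is, by \cref{def: butterfly}, a block-diagonal matrix of $n/n = 1$ butterfly factor of size $n$; that is, $\mathbf{B}^{(n)}_n = \mathbf{B}_n = \begin{bmatrix}\mathbf{D}_1 & \mathbf{D}_2\\ \mathbf{D}_3 & \mathbf{D}_4\end{bmatrix}$ with each $\mathbf{D}_i$ an $\tfrac n2\times\tfrac n2$ diagonal matrix. So it suffices to show this $2\times 2$ block matrix equals $\mathbf{S}^{n/2}\mathbf{D}_3 + \mathbf{D}_2\mathbf{S}^{n/2} + \mathbf{D}_1$, where now $\mathbf{D}_1 = \diag(\bm{d})$, $\mathbf{D}_2 = \diag(\bm{d}^{+})$, $\mathbf{D}_3 = \diag(\bm{d}^{-})$ are the $n\times n$ diagonal matrices built from the diagonals of the four blocks.

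Next I would verify the three block identities collected in \eqref{eq: butterfly-split}, which carry the only computational content. Writing each $n\times n$ matrix in $2\times 2$ block form with $\tfrac n2\times\tfrac n2$ blocks: $\diag(\bm{d})$ is block-diagonal with blocks $\mathbf{D}_1,\mathbf{D}_4$, which is the first identity immediately; $\diag(\bm{d}^{-})$ carries $\mathbf{D}_3$ in its top-left corner and zeros elsewhere, and left-multiplying by the shift matrix $\mathbf{S}^{n/2}$ moves that block into the bottom-left corner, so $\mathbf{S}^{n/2}\mathbf{D}_3 = \begin{bmatrix}\mathbf{0} & \mathbf{0}\\ \mathbf{D}_3 & \mathbf{0}\end{bmatrix}$; symmetrically $\diag(\bm{d}^{+})$ carries $\mathbf{D}_2$, and right-multiplying by $\mathbf{S}^{n/2}$ places it in the top-right corner, so $\mathbf{D}_2\mathbf{S}^{n/2} = \begin{bmatrix}\mathbf{0} & \mathbf{D}_2\\ \mathbf{0} & \mathbf{0}\end{bmatrix}$. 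Each of these is a single block-matrix multiplication once the convention for $\mathbf{S}^{n/2}$ (the $n\times n$ matrix with $\mathbf{I}_{n/2}$ on the appropriate off-diagonal $\tfrac n2\times\tfrac n2$ block, as used for $i\in[n/2]$) is pinned down.

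Finally I would add the three identities. Their right-hand sides have pairwise disjoint support among the four $\tfrac n2\times\tfrac n2$ blocks — the two diagonal blocks come from $\mathbf{D}_1$, the $(1,2)$ block from $\mathbf{D}_2\mathbf{S}^{n/2}$, and the $(2,1)$ block from $\mathbf{S}^{n/2}\mathbf{D}_3$ — so their sum is exactly $\begin{bmatrix}\mathbf{D}_1 & \mathbf{D}_2\\ \mathbf{D}_3 & \mathbf{D}_4\end{bmatrix} = \mathbf{B}_n = \mathbf{B}^{(n)}_n$. Since $n = k$ we have $\tfrac k2 = \tfrac n2$, so this is precisely the asserted formula $\mathbf{B}^{(n)}_k = \mathbf{S}^{k/2}\mathbf{D}_3 + \mathbf{D}_2\mathbf{S}^{n/2} + \mathbf{D}_1$.

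I expect the main obstacle to be purely bookkeeping: making the shift-matrix conventions consistent so that $\mathbf{D}_2\mathbf{S}^{n/2}$ genuinely lands in the top-right block and $\mathbf{S}^{n/2}\mathbf{D}_3$ in the bottom-left block — equivalently, confirming \eqref{eq: butterfly-split} with the correct handedness of $\mathbf{S}^{n/2}$ and the correct placement of the nonzero diagonal segments of $\bm{d}^{+}$ and $\bm{d}^{-}$. Once \eqref{eq: butterfly-split} is established, the proposition is a one-line block-matrix addition, so no real difficulty remains beyond this verification.
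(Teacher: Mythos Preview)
Your proposal is correct and follows exactly the paper's (implicit) approach: the paper does not give a separate proof of this proposition, but simply records the three block identities in \eqref{eq: butterfly-split} immediately before the statement and lets the proposition follow by summing them. Your write-up fills in precisely those details --- reducing to the single butterfly factor $\mathbf{B}_n$, checking each identity in \eqref{eq: butterfly-split} blockwise, and observing the supports are disjoint --- so there is nothing to add beyond the bookkeeping caution you already flag about the handedness of $\mathbf{S}^{n/2}$.
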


We use \cref{prop: butterfly-decomposition} to show that butterfly matrices can easily be computed by $\SHyenaName$ .
\begin{lemma}\label{prop: butterfly-hyena}
For any $n,\headDim \geq 2$, $k \geq 1$, and arbitrary vector $\bm{x} \in \R^{n\headDim}$:
\begin{enumerate}[label=(\arabic*)]
      \item  there exists a $\coyoteTupleN$ that  can represent $\tbf{B}^{(nd)}_{k}\cdot \bm{x}$ with $N = n, \innerN = \calO(\inputLength)$, $\hyenaDepth = \calO(1)$, and $\innerD = \calO(\headDim)$, and 
      \item  there exists a $\coyoteTupleN$ that  can represent $\tbf{B}^{(nd)}\cdot \bm{x}$ with $N = n, \innerN = \calO(\inputLength)$, $\hyenaDepth = \calO(\log nd)$, and $\innerD = \calO(\headDim)$.
\end{enumerate}
    
\end{lemma}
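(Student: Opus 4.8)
The plan is to reduce both parts to the single–butterfly‑factor decomposition of \cref{prop: butterfly-decomposition}, together with the primitive building blocks already established: gating by a fixed vector and linear mixing of the embedding dimension (\cref{lmm:primitives}), shifts (\cref{prop: prim-shift}), running sums/additions (\cref{prop: prim-add}), and composition by stacking (\cref{lem: stacking-layers}). The slogan is that a butterfly factor acting on a vector is, up to a constant‑depth rearrangement, a gated sum of two shifts of the input.

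\textbf{Part (1).} Arrange the input $\bm{x}\in\R^{nd}$ as a matrix $\bm{X}\in\R^{n\times d}$ in column‑major order, so flat coordinate $jn+i$ lives at $\bm{X}[i,j]$. Fix a block size $k$ (a power of $2$) and set $s=k/2$. Recall from \cref{def: butterfly} that $\mathbf{B}^{(nd)}_{k}$ is block‑diagonal with $nd/k$ butterfly factors, each of the two‑by‑two diagonal‑block form used in \cref{prop: butterfly-decomposition}. Scattering the diagonal entries of the $\mathbf{D}_1,\mathbf{D}_4$ blocks (resp. the $\mathbf{D}_2$ blocks, resp. the $\mathbf{D}_3$ blocks) of all the factors into the corresponding flat coordinates, and padding with zeros, yields fixed vectors $\bm{g}_{14},\bm{g}_2,\bm{g}_3\in\R^{nd}$ for which one checks directly that
\[
\mathbf{B}^{(nd)}_{k}\,\bm{x}\;=\;\bm{g}_{14}\odot\bm{x}\;+\;\bm{g}_2\odot\texttt{shift\_up}(\bm{x},s)\;+\;\bm{g}_3\odot\texttt{shift\_down}(\bm{x},s).
\]
The intra‑block shift of a butterfly factor would naively mix coordinates across block boundaries, but $\bm{g}_2$ is supported exactly on the top halves and $\bm{g}_3$ exactly on the bottom halves of the blocks, so gating after a single \emph{global} shift recovers the block‑local behavior (the out‑of‑range reads at the two ends are likewise killed by the supports of $\bm{g}_2,\bm{g}_3$).

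It remains to implement each ingredient with a constant‑depth $\Coyote$ model of inner dimension $\calO(n)\times\calO(d)$. Each diagonal scaling $\bm{g}\odot(\cdot)$ is the ``gating by a fixed factor'' primitive of \cref{lmm:primitives}, one layer. A shift of the length‑$nd$ flattened vector by $s$, written $s=qn+r$ with $0\le r<n$, is realized as a within‑column shift by $r$ (a convolution along the $N'$ axis with a shifted delta kernel, \cref{prop: prim-shift}); a column shift by $q$ (right‑multiplication by a fixed shift/permutation matrix, an instance of \cref{lmm:primitives}(1) and a valid K‑matrix, cf. \cref{def: W-kmat}); plus a ``carry'' term for the at most $r$ coordinates per column that cross a column boundary, itself a gate, a within‑column shift, a column shift and one addition — all $\calO(1)$ layers, where it suffices to take $\innerN=2n,\innerD=2d$ to give the shifts room. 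Finally the three gated summands are combined with two applications of the running‑sum primitive \cref{prop: prim-add} (equivalently: stack them in a longer vector, convolve with a two‑tap delta kernel, and gate out the overlap). Composing everything via \cref{lem: stacking-layers} gives a $\coyoteTupleN$ with $N=n$, $\innerN=\calO(n)$, $\hyenaDepth=\calO(1)$, $\innerD=\calO(d)$, which is Part (1).

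\textbf{Part (2)} is then immediate: by \cref{def: butterfly}, $\mathbf{B}^{(nd)}=\mathbf{B}^{(nd)}_{nd}\mathbf{B}^{(nd)}_{nd/2}\cdots\mathbf{B}^{(nd)}_{2}$ is a product of $\log_2(nd)$ butterfly factor matrices; applying Part (1) to each factor and \cref{lem: stacking-layers} to their composition produces a $\coyoteTupleN$ with $N=n$, $\innerN=\calO(n)$, $\hyenaDepth=\calO(\log nd)$, $\innerD=\calO(d)$. The main obstacle I anticipate is precisely the shift step: turning an arbitrary shift of the \emph{flattened} $nd$‑vector — whose amount $s=k/2$ may exceed $n$ and whose action straddles column boundaries — into an honest constant‑depth $\Coyote$ computation without inflating the inner dimension, i.e. correctly bookkeeping the across‑column carries and the parity/divisibility conditions relating $n$, $d$, and the power‑of‑two $k$. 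Everything else — the blockwise decomposition, the disjointness of the gating supports, the two additions, and the final stacking — is routine given \cref{prop: butterfly-decomposition}, \cref{lmm:primitives}, \cref{prop: prim-shift}, \cref{prop: prim-add}, and \cref{lem: stacking-layers}.
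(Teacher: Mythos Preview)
Your proposal is correct and follows essentially the same approach as the paper: decompose $\mathbf{B}^{(nd)}_k$ via \cref{prop: butterfly-decomposition} into a diagonal term plus two shifted diagonal terms, realize each with the gating/shift primitives, combine with the running‑sum primitive, and for Part~(2) stack the $\log(nd)$ factors via \cref{lem: stacking-layers}. If anything, you are more explicit than the paper about the one nontrivial point --- implementing a shift of the flattened $nd$-vector inside the $n\times d$ layout by splitting $s=qn+r$ into a within-column shift, a column permutation (a K-matrix), and a carry correction --- which the paper's own argument treats rather tersely.
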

\begin{proof}

\begin{enumerate}[label=(\arabic*)]
\item   Given $\bm{x} \in \R^{n\headDim}$, construct $\hyenaInput \in \R^{n\times d}$ 
where $\bm{x}$ is the row-major form of $\hyenaInput$. We show that $\Coyote$ can compute $\tbf{B}_{nd}\cdot \bm{x}$ column by column.

Let $\tbf{A}=\tbf{S}^{\tfrac k2}\tbf{D}'_3, \tbf{C} = \tbf{D}'_2\tbf{S}^{\tfrac n2}$, and $\tbf{D} = \tbf{D}_1$ for  $\tbf{D}_i,\tbf{S}^{\tfrac{k}{2}} \in \R^{nd \times nd}$ for $1\le i\le 3$ 
as defined in \cref{prop: butterfly-decomposition}. We take $\bm{d}_1 = \bm{1}^{nd}\tbf{D}, \ \bm{d}_2 = \bm{1}^{nd}\tbf{C}_2, \bm{d}_3 = \bm{1}^{nd}\tbf{A}$, which extracts the diagonal entries of $\tbf{D}_i$. With this we construct $\tbf{D}'_i \in \R^{n\times d}$ where $\bm{d}_i$ is the row major form of $\bm{D}'_i$. This implies that
    \[
    \tbf{D}_i\bm{x} \equiv \tbf{D}'_i \odot \bm{u}.
    \]
    
Then we can decompose   $\tbf{B}_{nd}\cdot \bm{x}$ into 
    \[\tbf{B}_{nd}\bm{x} \equiv \tbf{D}_1 \odot \bm{u} +  \tbf{D}_2 \odot \bm{u} + \tbf{D}_3 \odot \bm{u}.\]

   By \cref{lmm:primitives}, each Hadamard product $\tbf{A} \odot \bm{u}, \tbf{B} \odot \bm{u}, \tbf{C} \odot \bm{u}$ can be trivially be performed with a single layer $\Coyote$ model. Let each of these model outputs be denoted $\bm{y_1,y_2,y_3}$, respectively. Finally all that remains is to compute the $\bm{y_1 + y_2 + y_3}$. We achieve this using layers of add primitives\footnote{Recall that $\texttt{add}_{n}(\bm{y}: \bm{x}, \bm{S})$ adds the subvector $\bm{x}$ to $\bm{S}$ for the input $\bm{y}$.}:
   \[
   \begin{aligned}
     &\texttt{add}_{n}(\bm{y}^1: \bm{y}_1, \bm{0})\\
     &\texttt{add}_{n}(\bm{y}^2: \bm{y}_2, \bm{y}_1)\\
     &\texttt{add}_{n}(\bm{y}^3: \bm{y}_3, \bm{y}_1+\bm{y}_2),
   \end{aligned}
   \]
   where using by \cref{prop: prim-add} and \cref{lem: stacking-layers}, this requires six more layers, and we get
   \[
   \bm{y}^3 \equiv \bm{y_1 + y_2 + y_3} \equiv \tbf{B}_{nd}\bm{x}.
   \]
   Then we can construct the $\coyoteTupleN$ as desired with $L = O(1)$ layers. 

\item From \cref{def: butterfly}, $\mathbf{B}^{(nd)}=\mathbf{B}_{nd}^{(nd)} \mathbf{B}_{\frac{nd}{2}}^{(nd)} \ldots \mathbf{B}_2^{(nd)}$. From (1), $\Coyote$ can compute any butterfly matrix by simulating the $\log(nd)$ butterfly factor matrices which comprise $\mathbf{B}^{(nd)}$. With \cref{lem: stacking-layers}, this creates a $\Coyote$ with $5 \cdot \log(nd) = \calO( \log(nd))$ layers. \cref{lem: stacking-layers}
\end{enumerate}
 
\end{proof}
Butterfly matrices comprise the kaleidoscope hierarchy, which we define below:

\begin{definition}[The Kaleidoscope Hierarchy~\citep{dao2020kaleidoscope}]
\label{def: kaleidoscope}
\quad
\begin{itemize}
    \item Define $\calB$ as the set of all matrices that can be expressed in the form $\tbf{B}^{(n)}$ (for some $n$).
    \item Define $\paren{\cal{BB}^*}$ as the set of matrices $\tbf{M}$ of the form $\tbf{M} = \tbf{M}_1\tbf{M}_2^*$ for some $\tbf{M_1}, \tbf{M}_2 \in \calB$.
    \item Define $\paren{\cal{BB}^*}^w$ as the set of matrices $\tbf{M}$ that can be expressed as $\tbf{M} = \tbf{M}_w\ldots \tbf{M}_2\tbf{M}_1$, with each $\tbf{M}_i \in \paren{\cal{BB}^*} (1 \le i \le w).$ (The notation $w$ represents width.)
    \item Define $\paren{\cal{BB}^*}^w_e$ as the set of $n \times n$ matrices $\tbf{M}$ that can be expressed as $\tbf{M} = \tbf{SES}^{\top}$ for some $en \times en$ matrix $\tbf{E} \in \paren{\cal{BB}^*}^w$, where $\tbf{S} \in \F^{n \times en} = \begin{bmatrix}
        \tbf{I}_n & 0 &\ldots & 0
    \end{bmatrix}]$ (i.e. $\tbf{M}$ is the upper-left corner of $\tbf{E}$). (The notation $e$ represents expansion relative to $n$.)
\end{itemize}  
\end{definition}

We similarly show how $\Coyote$ can simulate any kaleidoscope matrix.

\begin{lemma}
\label{lmm: kaleido-coyote}
  Given $n,\headDim \geq 2$,  $e > 0$ for any $nd \times nd$ matrix $ \tbf{M} \in \paren{\cal{BB}^*}^{w}_{e}$, and $\bm{x} \in \R^{n\headDim}$  there exists a $\coyoteTupleN$ that  can represent $\tbf{M}\cdot \bm{x}$ 
  with $N = n, \hyenaDepth = \calO(w\log(end))$,
  $\innerN= en,$ 
  and $\  \innerD=d$. 
\end{lemma}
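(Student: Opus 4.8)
The plan is to reduce the statement to the butterfly simulation of \cref{prop: butterfly-hyena} and then chain the pieces together with the stacking lemma (\cref{lem: stacking-layers}). By \cref{def: kaleidoscope}, an $nd\times nd$ matrix $\tbf{M}\in\paren{\cal{BB}^*}^{w}_{e}$ factors as $\tbf{M}=\tbf{S}\,\tbf{E}\,\tbf{S}^{\top}$, where $\tbf{S}\in\F^{nd\times end}$ is the coordinate embedding $\begin{bmatrix}\tbf{I}_{nd} & \bm{0}\end{bmatrix}$ and $\tbf{E}\in\paren{\cal{BB}^*}^{w}$ is an $end\times end$ matrix, i.e.\ $\tbf{E}=\tbf{M}_{w}\cdots\tbf{M}_{1}$ with each $\tbf{M}_{i}=\tbf{P}_{i}\tbf{Q}_{i}^{*}$ for butterfly matrices $\tbf{P}_{i},\tbf{Q}_{i}\in\calB$ of size $end$. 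Throughout I would identify $\R^{end}$ with $\R^{en\times d}$ in row-major form, so that the input $\bm{x}\in\R^{nd}\cong\R^{n\times d}$ occupies the top $n$ rows of an $en\times d$ buffer; this is what forces the inner dimension $\innerN=\calO(en)$, $\innerD=d$.

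First I would observe that the two outer projections are free: applying $\tbf{S}^{\top}$ to $\bm{x}$ is exactly padding the $n\times d$ block with $(e-1)n$ zero rows, which is the input-embedding convention of a gated convolution model (\cref{def: gated-conv}) once we set $\innerN=en$, and left-multiplying by $\tbf{S}$ at the end is exactly ``extract the top-left $nd$ entries'', the model's output convention. So it suffices to realize $\tbf{E}$ with a \Coyote\ model.

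Next I would realize $\tbf{E}$ one factor at a time, right to left. A butterfly matrix $\tbf{P}_{i}$ of size $end$ is simulated by a \Coyote\ model with $\calO(\log(end))$ layers and inner dimension $\calO(en)\times d$ directly from \cref{prop: butterfly-hyena}(2). For the conjugate transpose $\tbf{Q}_{i}^{*}$, the key point is that transposing a butterfly factor keeps it a butterfly factor: the four sub-blocks of $\tbf{B}_{k}$ are diagonal, hence symmetric, so $\tbf{B}_{k}^{\top}$ just swaps the two off-diagonal blocks and has the same size and block size. Thus $\tbf{Q}_{i}^{*}=\bigl(\tbf{B}^{(end)}_{end}\cdots\tbf{B}^{(end)}_{2}\bigr)^{*}=\bigl(\tbf{B}^{(end)}_{2}\bigr)^{*}\cdots\bigl(\tbf{B}^{(end)}_{end}\bigr)^{*}$ is a product of $\log(end)$ butterfly factor matrices, each simulable in $\calO(1)$ layers by \cref{prop: butterfly-hyena}(1); stacking them via \cref{lem: stacking-layers} realizes $\tbf{Q}_{i}^{*}$ in $\calO(\log(end))$ layers. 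Hence each $\tbf{M}_{i}$ costs $\calO(\log(end))$ layers with inner dimension $\calO(en)\times d$, and stacking the $w$ of them, prefixed and suffixed by the cost-free projections, yields the desired $\coyoteTupleN$ for $\tbf{M}\bm{x}$ with $N=n$, $\hyenaDepth=\calO(w\log(end))$, $\innerN=\calO(en)$, and $\innerD=d$.

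The main obstacle will be bookkeeping rather than a new idea. One has to check that the row-major identification $\R^{end}\cong\R^{en\times d}$ used inside \cref{prop: butterfly-hyena} is literally the same reshaping at every stage, so the output of one stage feeds the next with no extra transposition layers, and that the temporary length blow-up incurred by the \texttt{add} primitive inside \cref{prop: butterfly-hyena} still fits the $\calO(en)$ inner-dimension budget (so that $\innerN=en$ can be taken after absorbing the constant). The only genuinely new step beyond the earlier results is the symmetry observation that a transposed butterfly factor is still a butterfly factor, which is what lets $\tbf{Q}_{i}^{*}$ be handled by exactly the same construction as $\tbf{P}_{i}$.
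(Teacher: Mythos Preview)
Your proposal is correct and follows essentially the same route as the paper: decompose $\tbf{M}$ via the kaleidoscope definition into $\tbf{S}\tbf{E}\tbf{S}^{\top}$ with $\tbf{E}$ a width-$w$ product of $\paren{\calB\calB^{*}}$ factors, observe that each such factor is a product of $2\log(end)$ butterfly factor matrices, invoke \cref{prop: butterfly-hyena} on each, and stack via \cref{lem: stacking-layers}. Your write-up is in fact slightly more explicit than the paper's in two places---you spell out why the transpose of a butterfly factor is again a butterfly factor (the paper simply asserts the $2\log(end)$ count), and you justify why the outer $\tbf{S},\tbf{S}^{\top}$ are absorbed by the model's input/output embedding conventions---but these are elaborations of the same argument, not a different one.
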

\begin{proof}
By \cref{def: kaleidoscope}, $\tbf{M}$ can be decomposed with respect to size $end \times end$ matrix 
\[\tbf{E} = \tbf{E}_1 \cdot \tbf{E}_2 \cdots \tbf{E}_{w}.\] 

 Further, any $\tbf{E}_i \in \paren{\cal{BB}^*}$ can be expressed as a product of $2 \log {end}$ 
 butterfly factor matrices. Then by \cref{prop: butterfly-hyena} and \cref{lem: stacking-layers} we can compute  $\tbf{E}_i\bm{x'}$ in by stacking $2 \log {end}$ $\coyoteTuple{n}{d}{L}{en}{d}$ models each with $L =\calO(1)$. Because $\tbf{E}$ has width $w$, \cref{lem: stacking-layers} implies that composing with each $\tbf{E}_i$ for $1 \leq i \leq w$ constructs a final model with $\calO(w\log(end))$ layers.
\end{proof}
Finally, the kaleidoscope hierarchy is related to linear arithmetic circuits via the following result. We note that in \citep{dao2020kaleidoscope} it is assumed that $w = s$, yet inspection of the proof yields the following stronger result:

\begin{theorem}[~\citep{dao2020kaleidoscope}]\label{thm: k-linear-ac}
Let $\tbf{M}$ be an $n \times n$ matrix such that multiplication of $\tbf{M}$ times an arbitrary vector $\bm{u}$ can be
be represented as $(n,s,\circuitDegree,w)$-linear arithmetic circuit $\calC$. Then, $\tbf{M} \in \paren{\cal{BB}^*}^{\calO(\circuitDegree)}_{\calO(w/n)}$.
\end{theorem}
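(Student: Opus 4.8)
The plan is to re-run the argument of \citet{dao2020kaleidoscope} while tracking the depth $\circuitDegree$ and the width $w$ of $\calC$ separately, rather than collapsing them into the size bound $s=\calO(w\circuitDegree)$. First I would put $\calC$ into a \emph{levelled} form: break every wire that skips levels into a chain of identity (pass-through) gates, and make fan-out explicit via copy gates, so that each gate at level $j$ reads its at most two inputs from level $j-1$. Neither operation increases the depth, and since the width is by definition the maximum number of wires crossed by a horizontal cut (\cref{def: circuit-tuple}), every level now carries at most $\calO(w)$ values; pad each level to a common ambient dimension $m=\calO(w)$. The computation of $\tbf{M}\bm{u}$ then takes the form
\[
   \tbf{M}\bm{u} \;=\; \tbf{S}\,\tbf{L}_{\circuitDegree}\cdots\tbf{L}_{1}\,\tbf{S}^{\top}\bm{u},
\]
where $\tbf{S}=[\tbf{I}_n \mid \mathbf 0]\in\F^{n\times m}$ both embeds the inputs and reads off the outputs, and each $\tbf{L}_j\in\F^{m\times m}$ has $\calO(1)$ nonzero entries per row and per column (the incoming wires of each fresh gate, plus a single $1$ for each pass-through or copy). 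Since $m=en$ with $e=\calO(w/n)$, this is exactly the expansion permitted by the subscript in $\paren{\cal{BB}^*}^{\cdot}_{e}$.

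Second, I would show that each sparse $m\times m$ factor $\tbf{L}_j$ lies in $\paren{\cal{BB}^*}^{\calO(1)}_{\calO(1)}$. Because $\tbf{L}_j$ has $\calO(1)$ nonzeros per row and per column, its support bipartite (multi)graph has bounded degree, so by an edge-colouring argument its support splits into $\calO(1)$ partial matchings; equivalently $\tbf{L}_j$ is a sum of $\calO(1)$ generalized permutation (monomial) matrices. Each monomial matrix is a permutation composed with a diagonal, a permutation on $m$ coordinates is a product of $\calO(\log m)$ butterfly factor matrices (a Bene\v{s} network), hence lies in $\calB\subseteq\paren{\cal{BB}^*}$ (\cref{def: butterfly}, \cref{def: kaleidoscope}, \cref{prop: butterfly-decomposition}), and the diagonal is absorbed into one butterfly factor. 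Finally, a sum of $\calO(1)$ elements of $\paren{\cal{BB}^*}$ is again in $\paren{\cal{BB}^*}^{\calO(1)}$ after a further $\calO(1)$ expansion (stack the summands block-diagonally, apply them in parallel, and recombine with a fixed $\calO(1)$-sparse ``add'' matrix, itself of the simple type just handled); all these constant expansions are subsumed in $e=\calO(w/n)$.

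Composing the per-level factorizations over $j=1,\dots,\circuitDegree$ — each composition costing only $\calO(1)$ extra $\paren{\cal{BB}^*}$ factors — yields $\tbf{L}_{\circuitDegree}\cdots\tbf{L}_{1}\in\paren{\cal{BB}^*}^{\calO(\circuitDegree)}_{\calO(w/n)}$, and conjugating by $\tbf{S}$ (again of the simple sparse type) places $\tbf{M}$ in $\paren{\cal{BB}^*}^{\calO(\circuitDegree)}_{\calO(w/n)}$, as claimed. I expect the main obstacle to be the bookkeeping in the levelling step: one must check that inserting pass-through chains and explicit copy gates never inflates any cut of the circuit beyond $\calO(w)$ — this is what keeps the ambient dimension, and hence the expansion, at $\calO(w/n)$ rather than $\calO(s/n)$ — and that after padding and the monomial-matrix splitting the per-level matrices genuinely retain $\calO(1)$ nonzeros per row and column. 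Verifying precisely this is where the sharpened parameters (depth $\circuitDegree$ in place of size $s$, expansion $w/n$ in place of $s/n$) fall out of the proof in \citet{dao2020kaleidoscope}.
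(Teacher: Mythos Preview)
The paper does not provide its own proof of this theorem: it is stated as a citation to \citet{dao2020kaleidoscope}, preceded only by the remark that ``in \citep{dao2020kaleidoscope} it is assumed that $w = s$, yet inspection of the proof yields the following stronger result.'' Your proposal is therefore not competing with a proof in the paper but rather supplying the ``inspection'' the paper alludes to, and in that role it is essentially correct and matches the structure of the argument in the cited work: level the circuit, realise each level as an $\calO(w)\times\calO(w)$ matrix with bounded row/column sparsity, factor each such matrix into $\calO(1)$ monomial summands lying in $\paren{\cal{BB}^*}^{\calO(1)}$, and compose across the $\circuitDegree$ levels.

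One small point worth tightening: a permutation matrix is not in $\calB$ but in $\calB\calB^{*}$ (the Bene\v{s} network uses butterfly factors in \emph{both} orderings), so your parenthetical ``hence lies in $\calB$'' should read $\calB\calB^{*}$; this does not affect the final $\paren{\cal{BB}^*}^{\calO(1)}$ bound. The bookkeeping caveat you flag about levelling not inflating the width beyond $\calO(w)$ is exactly the place where the sharpening from $s$ to $w$ and from $s$ to $\circuitDegree$ occurs, and your reasoning there --- that pass-through and copy gates do not add wires to any horizontal cut already counted by $w$ --- is the right observation.
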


 We combine \cref{thm: k-linear-ac} and \cref{lmm: kaleido-coyote} to show that $\Coyote$ can compute any linear arithmetic circuit with polylogarithmic factors in $\circuitDegree$.

\begin{corollary}\label{cor: lin-arith}
 For any $(nd,s,\circuitDegree,w)$-linear arithmetic circuit $\calC$ that can be represented by a matrix $\tbf{M} \in \R^{n\headDim \times n\headDim}$ multiplied by a vector $\bm{x} \in \R^{n\headDim}$, there exists an equivalent $\coyoteTuple{n}{\circuitDegree'}{\headDim}{w}{d}$ with  
 $\circuitDegree'=\calO(\Delta\log(w))$ such that  $\tbf{M}\bm{x} = \SHyena{\hyenaInput}{\convFilter}$ where $\bm{x}$ is the row major form of $\hyenaInput \in \R^{n \times d}$.
\end{corollary}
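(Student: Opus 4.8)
The plan is to chain together the two results that precede this statement: Theorem \ref{thm: k-linear-ac} (from \citep{dao2020kaleidoscope}), which places the matrix of a linear arithmetic circuit inside the kaleidoscope hierarchy, and Lemma \ref{lmm: kaleido-coyote}, which shows that any kaleidoscope matrix times a vector can be computed by a $\coyoteTupleN$ model with a controlled number of layers. Concretely, given the $(nd,s,\circuitDegree,w)$-linear arithmetic circuit $\calC$ computing $\tbf{M}\bm{x}$ for $\tbf{M}\in\R^{nd\times nd}$, I would first invoke Theorem \ref{thm: k-linear-ac} to conclude $\tbf{M}\in(\calB\calB^*)^{\calO(\circuitDegree)}_{\calO(w/(nd))}$; that is, $\tbf{M}$ lies in the kaleidoscope class with width parameter $\wKal=\calO(\circuitDegree)$ and expansion parameter $\eKal=\calO(w/(nd))$, so that $\eKal\cdot nd = \calO(w)$.

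Next I would apply Lemma \ref{lmm: kaleido-coyote} with this $\tbf{M}$, the vector $\bm{x}\in\R^{nd}$ reshaped into $\hyenaInput\in\R^{n\times d}$ in row-major form, and the parameters $\eKal,\wKal$ just identified. The lemma produces a $\coyoteTupleN$ computing $\tbf{M}\bm{x}$ with $N=n$, inner dimension $\innerN=\eKal\cdot n=\calO(w/d)$ — which after accounting for the embedding convention gives inner length $\calO(w)$ as stated — inner embedding dimension $\innerD=d$, and depth $\calO(\wKal\log(\eKal\cdot nd))$. Substituting $\wKal=\calO(\circuitDegree)$ and $\eKal\cdot nd=\calO(w)$, the depth becomes $\calO(\circuitDegree\log w)$, matching the claimed $\circuitDegree'=\calO(\Delta\log w)$. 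Since $\SHyena{\hyenaInput}{\convFilter}$ is by definition the output of stacking these {\Coyote} layers, we get $\tbf{M}\bm{x}=\SHyena{\hyenaInput}{\convFilter}$ as desired; the reshaping between $\bm{x}$ and $\hyenaInput$ is exactly the one already used inside the proof of Lemma \ref{lmm: kaleido-coyote}, so no new bookkeeping is needed.

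The only genuinely delicate point — and the one I would spell out carefully — is the tracking of the three size parameters $(\eKal,\wKal)$ through the two black boxes so that the final $(\innerN,\hyenaDepth,\innerD)$ come out as advertised; in particular one must be careful that the "$e$ represents expansion relative to $n$" convention of Definition \ref{def: kaleidoscope} is applied to $nd$, not $n$, when invoking Theorem \ref{thm: k-linear-ac}, and then correctly translated back into the {\Coyote} inner dimension $(\innerN,\innerD)=(en,d)$ used in Lemma \ref{lmm: kaleido-coyote}. Everything else is a direct composition of stated results, so I expect the proof to be short — essentially two sentences of substitution once the parameter dictionary is fixed. A secondary subtlety worth a remark is that Theorem \ref{thm: k-linear-ac} as originally stated in \citep{dao2020kaleidoscope} takes $w=s$, whereas here we want the sharper dependence on the circuit width $w$ separately from its size $s$; I would note (as the excerpt already does before the theorem statement) that inspection of the original proof yields the width-parametrized version, so no extra argument is required.
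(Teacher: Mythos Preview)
Your proposal is correct and mirrors exactly what the paper does: the corollary is stated without an explicit proof, immediately after the sentence ``We combine \cref{thm: k-linear-ac} and \cref{lmm: kaleido-coyote} to show that $\Coyote$ can compute any linear arithmetic circuit with polylogarithmic factors in $\circuitDegree$,'' so the intended argument is precisely the two-step composition you describe. Your parameter bookkeeping (applying the expansion $e$ relative to $nd$, then reading off $L=\calO(\Delta\log w)$) and your remark on the sharpened width-vs-size version of Theorem~\ref{thm: k-linear-ac} match the paper's treatment.
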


\subsection{General Arithmetic Circuits}
\label{sec: arithmetic}
We are now ready to prove the result that yields the equivalency between arithmetic circuits and \Coyote.
\begin{theorem}
    For any {\em $(n\headDim,s,\circuitDegree,w)$-arithmetic circuit} $\calC$, there exists an equivalent $\coyoteTuple{\inputLength}{\circuitDegree'}{\headDim}{\innerN}{\innerD}$ with $N = n, \circuitDegree'=\calO(\Delta\log{w})$, $N'=\calO(w), d'= d$ that simulates $\calC$.
    \label{thm: gen-ac}
\end{theorem}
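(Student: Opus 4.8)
The plan is to simulate $\calC$ \emph{one gate-level at a time}: the addition and scalar-multiplication gates at a level form a sparse linear map, which \cref{cor: lin-arith} turns into a few \Coyote\ layers, while the fan-in-two multiplication gates are realized by a single element-wise gating step (\cref{lmm:primitives}).

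First I would normalize $\calC$. Assuming, as is standard, that all gates have fan-in two, I would make the circuit \emph{leveled} by inserting pass-through gates (a $+$ gate with one input), so that every wire runs between consecutive levels, there are $\Delta$ levels, and each level has $\calO(w)$ gates. Let $\bm{v}^{(\ell)} \in \F^{\calO(w)}$ be the vector of values of all gates at level $\ell$. Going from level $\ell$ to level $\ell+1$ decomposes into (i) the new $+$/scalar-multiplication gates and pass-throughs, which together with the routing of the level-$(\ell+1)$ multiplication gates' two operands into an ``operand layout'' form a matrix $A_\ell$ of dimension $\calO(w)\times\calO(w)$ with \emph{at most two nonzeros per row}, and (ii) the new $\times$ gates, which are a coordinate-wise product of two aligned blocks of $A_\ell \bm{v}^{(\ell)}$.

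Second, I would simulate one transition with $\calO(\log w)$ \Coyote\ layers and then stack the $\Delta$ transitions using \cref{lem: stacking-layers}. For (i): the matrix $A_\ell$ has a depth-$\calO(1)$, width-$\calO(w)$ linear arithmetic circuit (each row is a copy or a sum of two entries), so by \cref{cor: lin-arith} it is computed by a $\coyoteTuple{n}{\calO(\log w)}{d}{\calO(w)}{d}$; I would have this sub-model also lay down a second copy of the ``carry'' coordinates, positioned so that a single uniform shift simultaneously aligns the right-operand block onto the left-operand block and this copy onto the carry block. For (ii): one self-gating \Coyote\ layer finishes the job --- its linear factor is a diagonal $0/1$ projection (a K-matrix) plus a $1$-valued bias, which keeps the left operands and places $1$s on the carry coordinates, and its convolution factor performs the alignment shift (\cref{prop: prim-shift}) with zero bias; the product of the two factors is $\bm{v}^{(\ell+1)}$ in a fixed layout, with the carried values preserved (multiplied by $1$). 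A first \Coyote\ layer loads the inputs of $\calC$ and its constants from $\F$ into the state via the bias matrices. Stacking the initial layer and the $\Delta$ transitions yields a single $\coyoteTuple{n}{\Delta'}{d}{\calO(w)}{d}$ with $\Delta' = \calO(\Delta\log w)$ whose output, read off the designated output coordinates, equals $\calC(\hyenaInput)$.

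The main obstacle is the normalization step: it must be done so that the width stays $\calO(w)$ (not $\calO(\Delta w)$) and --- crucially for obtaining $\log w$ rather than $\log^2 w$ in the depth --- so that each level-to-level linear map remains a bounded-row matrix, i.e.\ a depth-$\calO(1)$ linear circuit, rather than collapsing several addition levels into one dense $\calO(w)\times\calO(w)$ matrix (which \cref{cor: lin-arith} would only simulate in depth $\calO(\log^2 w)$ and inner dimension $\calO(w^2)$). A secondary subtlety is the operand alignment for the multiplication levels: placing an extra shifted copy of the carry coordinates is exactly what lets a single \Coyote\ layer (one linear factor, one convolution factor) implement an entire multiplication level together with the pass-throughs. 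Everything else is routine assembly of the primitives established above via \cref{lem: stacking-layers}.
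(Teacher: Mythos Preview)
Your proposal is correct and follows essentially the same strategy as the paper: level the circuit into $\calO(\Delta)$ homogeneous layers, simulate each linear layer via \cref{cor: lin-arith} in $\calO(\log w)$ \Coyote\ layers, realize each multiplication layer by an element-wise gating step while carrying forward the values needed later, and stack via \cref{lem: stacking-layers}. The only differences are packaging details---the paper isolates the operand rearrangement before multiplication as its own sparse linear map $\mathbf{R}^{\ell}$ and then invokes the {\tt remember} primitive (\cref{prop: prim-remember}), whereas you absorb the rearrangement into the preceding linear stage and implement the gating directly in one layer---and your explicit insistence that each level-to-level linear map stay depth-$\calO(1)$ (so that \cref{cor: lin-arith} yields $\calO(\log w)$ rather than $\calO(\log^2 w)$), which the paper handles only implicitly.
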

For the reader's convenience, we begin with a proof sketch and then provide the details afterwards.
\begin{proof}[Proof Sketch of \cref{thm: gen-ac}]
Let us layer $\calC$ so that each layer $\calC_{\ell}$ for $\ell \in [L_{\calC}]$ either only has linear gates or multiplication gates, where the number of such layers $L_{\calC} = \calO(\Delta)$. The composition of all $\calC_{\ell}$ layers results in $\calC$. We use $\bm{z}^{\ell+1} \in \R^{w}$ to denote the output of the $\ell$-th layer $\calC_{\ell}$ which feeds as the input to the $(\ell+1)$-th layer $\calC_{\ell+1}$. Here, we note that if we can simulate each $\calC_{\ell}$ with \Coyote, then we can simulate the entire layered circuit $\calC$ due to \cref{lem: stacking-layers}. 

Now, if the layer $\calC_{\ell}^{\mathrm{lin}}$ is a linear layer (with only addition gates), then it can be represented by a matrix $\bm{M} \in \R^{w \times w}$ multiplied by $\bm{z}^{\ell} \in \R^{w}$ (We can append with $0$s if necessary so that the input from the previous gates can be written as $w$-length vector). 
Thus, we can apply \cref{cor: lin-arith} to simulate $\calC_{\ell}^{\mathrm{lin}}$ with an equivalent $\coyoteTuple{n}{\log{w}}{\headDim}{\calO(\log{w})}{d}$ model.

Next, if $\calC_{\ell}^{\mathrm{mult}}$ instead consists of only the multiplication gates. Then, we note here that the output $\bm{z}^{\ell}$ may not exactly equal the input to $\calC_{\ell}^{\mathrm{mult}}$. Nevertheless, we can apply a $\calO(w)$ sparse 
linear map $\tbf{R} \in \R^{w\times w}$ so that $\tbf{R}\bm{z}^{\ell}$ yields vectors $\bm{v}^1, \bm{v}^2,$ and $\bm{v}^3$, where $\bm{v}_1$ constitutes the ``first" input to all the multiplication gates and $\bm{v}_2$ constitutes all the ``second" inputs while $\bm{v}^3$ consists of all entries needed as inputs in the subsequent layers. That is, for the $i$th gate in $\calC_{\ell}^{\mathrm{mult}}$, we compute $\bm{v}^1_i \cdot \bm{v}^2_i$. This implies that for all the gates in $\calC_{\ell}^{\mathrm{mult}}$, we can simply compute $\bm{v}_1\odot \bm{v}_2$. To this end, we use the ${\tt remember}$ primitive with constant number of layers from \cref{prop: prim-remember} to define a $\coyoteTuple{n}{\calO(\log{w})}{\headDim}{w}{d}$ model that remembers $\bm{v}^3$ while performing the Hadamard product of $\bm{v}^1$ with $\bm{v}^2$. 

Overall, we can then collect all the resulting \Coyote\ layers and compose them as in \cref{lem: stacking-layers} to simulate $\calC$. Overall, the number of layers used is given by $\calO(\Delta\log{w})$ while the internal dimension remains fixed at $w$.
\end{proof}
Using the outline in the proof sketch, we now delve into a detailed analysis of the arithmetic circuit $\calC$ to an equivalent \Coyote model.
\begin{proof}[Proof of \cref{thm: gen-ac}]
Let $\bm{u} \in \R^{nd}$ be the input to the arithmetic circuit $\calC$ with depth $\Delta$ and width $w$. We begin by rearranging the circuit into layers of addition and multiplication gates. That is, each layer $\calC^{\ell}$ has either all addition gates or all multiplication gates. This allows us to readily apply the results from \cref{sec: linear_circuit}. Note here that we can assert that the number of such layers $L_{\calC} = \calO(\Delta)$. Moreover, we construe the input to each such circuit as a vector of length ${w}$ by appending with extra 0s if necessary so that the composition of the layers results in a circuit equivalent to $\calC$. See \cref{fig: circuit-decomp} for an example of such decomposition

\begin{figure}[!h]
    \centering
\tikzset{every picture/.style={line width=0.75pt}} 

\begin{tikzpicture}[x=0.75pt,y=0.75pt,yscale=-1,xscale=1]

\draw   (229.67,105.17) .. controls (229.67,97.71) and (235.71,91.67) .. (243.17,91.67) .. controls (250.62,91.67) and (256.67,97.71) .. (256.67,105.17) .. controls (256.67,112.62) and (250.62,118.67) .. (243.17,118.67) .. controls (235.71,118.67) and (229.67,112.62) .. (229.67,105.17) -- cycle ;
\draw   (285,104.33) .. controls (285,96.97) and (290.97,91) .. (298.33,91) .. controls (305.7,91) and (311.67,96.97) .. (311.67,104.33) .. controls (311.67,111.7) and (305.7,117.67) .. (298.33,117.67) .. controls (290.97,117.67) and (285,111.7) .. (285,104.33) -- cycle ;
\draw   (357.67,103.17) .. controls (357.67,95.71) and (363.71,89.67) .. (371.17,89.67) .. controls (378.62,89.67) and (384.67,95.71) .. (384.67,103.17) .. controls (384.67,110.62) and (378.62,116.67) .. (371.17,116.67) .. controls (363.71,116.67) and (357.67,110.62) .. (357.67,103.17) -- cycle ;
\draw   (413,103.33) .. controls (413,95.97) and (418.97,90) .. (426.33,90) .. controls (433.7,90) and (439.67,95.97) .. (439.67,103.33) .. controls (439.67,110.7) and (433.7,116.67) .. (426.33,116.67) .. controls (418.97,116.67) and (413,110.7) .. (413,103.33) -- cycle ;
\draw   (264.67,183.17) .. controls (264.67,175.71) and (270.71,169.67) .. (278.17,169.67) .. controls (285.62,169.67) and (291.67,175.71) .. (291.67,183.17) .. controls (291.67,190.62) and (285.62,196.67) .. (278.17,196.67) .. controls (270.71,196.67) and (264.67,190.62) .. (264.67,183.17) -- cycle ;
\draw   (320,182.33) .. controls (320,174.97) and (325.97,169) .. (333.33,169) .. controls (340.7,169) and (346.67,174.97) .. (346.67,182.33) .. controls (346.67,189.7) and (340.7,195.67) .. (333.33,195.67) .. controls (325.97,195.67) and (320,189.7) .. (320,182.33) -- cycle ;
\draw   (383.67,183.17) .. controls (383.67,175.71) and (389.71,169.67) .. (397.17,169.67) .. controls (404.62,169.67) and (410.67,175.71) .. (410.67,183.17) .. controls (410.67,190.62) and (404.62,196.67) .. (397.17,196.67) .. controls (389.71,196.67) and (383.67,190.62) .. (383.67,183.17) -- cycle ;
\draw   (247.67,261.17) .. controls (247.67,253.71) and (253.71,247.67) .. (261.17,247.67) .. controls (268.62,247.67) and (274.67,253.71) .. (274.67,261.17) .. controls (274.67,268.62) and (268.62,274.67) .. (261.17,274.67) .. controls (253.71,274.67) and (247.67,268.62) .. (247.67,261.17) -- cycle ;
\draw   (303,261.33) .. controls (303,253.97) and (308.97,248) .. (316.33,248) .. controls (323.7,248) and (329.67,253.97) .. (329.67,261.33) .. controls (329.67,268.7) and (323.7,274.67) .. (316.33,274.67) .. controls (308.97,274.67) and (303,268.7) .. (303,261.33) -- cycle ;
\draw   (365.67,261.17) .. controls (365.67,253.71) and (371.71,247.67) .. (379.17,247.67) .. controls (386.62,247.67) and (392.67,253.71) .. (392.67,261.17) .. controls (392.67,268.62) and (386.62,274.67) .. (379.17,274.67) .. controls (371.71,274.67) and (365.67,268.62) .. (365.67,261.17) -- cycle ;
\draw    (243.17,118.67) -- (278.17,169.67) ;
\draw    (298.33,118) -- (278.17,169.67) ;
\draw    (298.33,118) -- (397.17,169.67) ;
\draw    (243.17,118.67) -- (333.33,169) ;
\draw    (426.33,116.67) -- (333.33,169) ;
\draw    (371.17,117.33) -- (397.17,169.67) ;
\draw    (243.17,118.67) .. controls (229.33,186.83) and (229.33,189.83) .. (261.17,247.67) ;
\draw    (333.33,195.67) -- (316.33,248) ;
\draw    (397.17,196.67) -- (379.17,247.67) ;
\draw    (278.17,196.67) -- (263.5,246.75) ;
\draw    (333.33,195.67) -- (379.17,247.67) ;
\draw    (426.33,116.67) .. controls (425.33,152.83) and (427,252.25) .. (425,305.25) ;
\draw    (233.5,66.92) -- (244.17,92.67) ;
\draw    (256.5,67.92) -- (244.17,92.67) ;
\draw    (287.5,64.92) -- (298.17,90.67) ;
\draw    (310.5,65.92) -- (298.17,90.67) ;
\draw    (360,63.42) -- (370.67,89.17) ;
\draw    (383,64.42) -- (370.67,89.17) ;
\draw    (416,63.92) -- (426.67,89.67) ;
\draw    (439,64.92) -- (426.67,89.67) ;
\draw    (261.17,274.67) -- (261.5,307.08) ;
\draw    (316.33,274.67) -- (316.67,307.08) ;
\draw    (379.17,274.67) -- (379.5,307.08) ;
\draw    (278.17,196.67) -- (316.33,248) ;

\draw (323,41.4) node [anchor=north west][inner sep=0.75pt]    {$\dotsc $};
\draw (232,96.73) node [anchor=north west][inner sep=0.75pt]    {$\times $};
\draw (289,95.73) node [anchor=north west][inner sep=0.75pt]    {$\times $};
\draw (361,93.73) node [anchor=north west][inner sep=0.75pt]    {$\times $};
\draw (416,93.73) node [anchor=north west][inner sep=0.75pt]    {$\times $};
\draw (270,173.73) node [anchor=north west][inner sep=0.75pt]    {$+$};
\draw (326,172.73) node [anchor=north west][inner sep=0.75pt]    {$+$};
\draw (391,175.73) node [anchor=north west][inner sep=0.75pt]    {$+$};
\draw (251,251.73) node [anchor=north west][inner sep=0.75pt]    {$\times $};
\draw (306,251.73) node [anchor=north west][inner sep=0.75pt]    {$\times $};
\draw (369,251.73) node [anchor=north west][inner sep=0.75pt]    {$\times $};
\draw (331,309.29) node [anchor=north west][inner sep=0.75pt]    {$\dotsc $};
\draw (454.67,94.96) node [anchor=north west][inner sep=0.75pt]    {$\mathcal{C}^{\ell}_{\mathrm{mult}}$};
\draw (453.33,174.96) node [anchor=north west][inner sep=0.75pt]    {$\mathcal{C}^{\ell+1}_{\mathrm{lin}}$};
\draw (452.67,252.29) node [anchor=north west][inner sep=0.75pt]    {$\mathcal{C}^{\ell+2}_{\mathrm{mult}}$};
\end{tikzpicture}
\caption{An example decomposition of an arithmetic circuit as layers of only addition or multiplication gates.}
\label{fig: circuit-decomp}
\end{figure}
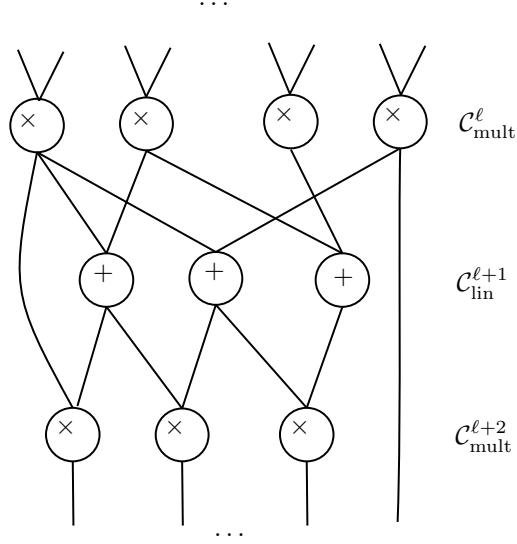

Let $\bm{z}^{\ell} \in \R^{w}$ denote the input to each layer $\calC_{\ell}$ for $\ell \in [L_{\calC}]$ with $\bm{z}^1 \equiv (\bm{u}, \bm{0}^{w-nd})$. It is important to note here that we may {\it not} have $\bm{z}^{\ell+1} \equiv \calC_{\ell}(\bm{z}^{\ell})$ since the inputs for gates at the $(\ell+1)$-th layer may come from any of the previous layers. 
We now handle the case of layers with addition gates, multiplication gates, and the intermediate stage separately.

\paragraph{Addition Gates.} Let $\calC_{\ell}^{\mathrm{lin}}$ denote an arbitrary linear layer which only contains the addition gates that takes in $\bm{z}^{\ell}$ as input. We know that there exists a matrix $\tbf{M} \in \R^{w \times w'}$ such that we have
\[
\calC_{\ell}^{\mathrm{lin}}(\bm{z}^{\ell}) \equiv \tbf{M}\bm{z}^{\ell}.
\]
Here, note here that we may need entries from the vector $\bm{z}^{\ell}$ in subsequent layers. Let $s$ be the total number of such entries, then for at each index for such entries $i \in [w]$, we have the corresponding index $i_s \in [s]$. We then append the $i$th standard row vector into the matrix $\tbf{M}$ to get the matrix $\tbf{M}' \in \R^{w \times (w'+s)}$ so that we have
\[
\paren{\tbf{M}'\bm{z}^{\ell}}[j] = \begin{cases}
    \paren{\tbf{M}\bm{z}^{\ell}}[j] &\text{if } j < w'\\
     \bm{z}^{\ell}[i] &\text{if } j = w'+i_s
\end{cases}
\]
Here, note that we must have $w'+s \le w$ as $w$ is the width of the circuit. If needed, we then append the matrix $\tbf{M}'$ with all-zero rows $\bm{0}^w$ so that we get the matrix $\tbf{M}'' \in \R^{w \times w}$. Note here that we have thus preserved all entries needed in subsequent layers and incorporated the output of the $\ell$th layer with the output $\tbf{M}''\bm{z}^{\ell} \equiv \bm{z}^{\ell+1}$ serving as the input of the $(\ell+1)$-th layer. Applying \cref{cor: lin-arith} then yields an equivalent $\coyoteTuple{n}{\calO(\log{w})}{\headDim}{w}{d}$ model.

\paragraph{Multiplication Gates.} 
Next, we deal with the layer $\calC^{\ell}_{\mathrm{mult}}$ of multiplication gates by collecting the two inputs to each multiplication gates as $\bm{v}_1^{\ell}, \bm{v}_2^{\ell}$. Note that for the input $\bm{z}^{\ell} \in \R^{w}$ from the previous layer, we will again have entries that need to be used in the subsequent layers that we will denote as $\bm{v}_3^{\ell}$. We thus need to permute the entries of $\bm{z}^{\ell}$ to get the vector $[\bm{v}_1^{\ell} :: \bm{v}_2^{\ell} :: \bm{v}_3^{\ell}]$ so that we can remember $\bm{v}_3^{\ell}$ while taking the Hadamard product $\bm{v}_1^{\ell} \odot \bm{v}_2^{\ell}$. To this end, we can achieve this permutation of entries using a $\calO(w)$-sparse linear matrix\footnote{Each row of $\tbf{R}^{\ell}$ has exactly one non-zero entry.} $\tbf{R}^{\ell} \in \R^{w \times w}$ with which is equivalently represented by an  $(w,w,1,w)$-linear arithmetic circuit that simply moves the appropriate wires from the previous layer. This can again be achieved by a equivalent $\coyoteTuple{n}{\calO(\log{w})}{\headDim}{w}{d}$ model. That is, $\tbf{R}^{\ell}\bm{z}^{\ell}$ has the following form:
\[
\tbf{R}^{\ell}\bm{z}^{\ell} \equiv 
                \begin{pmatrix}
                        \bm{v}_1^{\ell}\\
                        \hline\\
                        \bm{v}_2^{\ell}\\
                        \hline\\
                        \bm{v}_3^{\ell}
                \end{pmatrix}.
\]
Next, we can now define a $\coyoteTuple{n}{1}{\headDim}{n}{d}$ which extracts $\bm{v}_1^{\ell}$ as the projection with the input to the remember primitive given by $\bm{y}^{\ell} \equiv (\bm{v}_2^{\ell}, \bm{v}_3^{\ell}, \bm{0})$. We then specify the {\tt remember} primitive as ${\tt remember}\paren{\bm{y}^{\ell}: \bm{v}_2^{\ell}, \bm{v}_3^{\ell}, \bm{e}_0, \bm{v}_1^{\ell}}$ which computes the following
\begin{align*}
    \bm{z}^{\ell+1} 
    &\equiv \begin{pmatrix}
                        \bm{v}_1^{\ell} \odot (\bm{e}_0 \ast \bm{v}_2^{\ell})\\
                        \hline\\
                        \bm{v}_3^{\ell}\\
                        \hline\\
                        \bm{0}
                \end{pmatrix}    \\  
    &\equiv \begin{pmatrix}
                        \bm{v}_1^{\ell} \odot \bm{v}_2^{\ell} \\
                        \hline\\
                        \bm{v}_3^{\ell} \\
                        \hline\\
                        \bm{0}
                \end{pmatrix}    \\ 
    &\equiv \calC^{\ell}_{\mathrm{mult}}(\bm{z}^{\ell}) \in \R^{w}.
\end{align*}
Using \cref{prop: prim-remember}, we know that this requires a $\coyoteTuple{n}{\calO(\log{w})}{\headDim}{w}{d}$ which remembers the entries that will serve as inputs in subsequent layers while performing the Hadamard product $\bm{v}^1 \odot \bm{v}^2$.

Overall, we can now stack all the resulting  \Coyote\ layers and compose them as in \cref{lem: stacking-layers} to simulate $\calC$. Overall, the number of layers blows up by $\calO(\Delta\log{w})$ while the internal dimension remains fixed at $w$.
\end{proof}

\subsection{The Retnet Reduction and Space Complexity for AR}
\label{app:retnet-proof}

We are now in a position to show that there exists an equivalent \Coyote\ model that can simulate a Retnet layer from \cref{algo: retnet}. Recall that for an input sequence $\bm{u} \in \R^{N \times d}$, we project the input using weight matrices $\tbf{W}_A, \tbf{W}_C, \tbf{W}_V\in \R^{d \times d}$ to $\tbf{A}, \tbf{C}, \tbf{V} \in  \R^{N \times d}$ and compute the following recurrence (see line \ref{step: recurrence-retnet} in \cref{algo: retnet}):
\begin{equation}
    \label{eq: retnet-recurrence}
    \bm{z}^{n} \equiv \gamma \bm{z}^{n-1} + \paren{\tbf{A}[n,:]}^{\top}\tbf{V}[n,:],
\end{equation}
where $\bm{z}^n \in \R^{d \times d}$. We unroll this recurrence to express each state $\bm{z}^n$ in terms of $\bm{u}$ with coefficients given by $\gamma$ and the weight matrices $\tbf{W}_A, \tbf{W}_C, \tbf{W}_V$ as follows:
\begin{align*}
    \bm{z}^{0} 
    &\equiv \paren{\paren{\bm{u}\tbf{W}_A}[0,:]}^{\top}\paren{\bm{u}\tbf{W}_V}[0,:]\\
    &\equiv \paren{\bm{u}[0,:]\tbf{W}_A}^{\top}\paren{\bm{u}[0,:]}\tbf{W}_V\\
    &\equiv \tbf{W}_A^{\top}\paren{\bm{u}[0,:]}^{\top}\paren{\bm{u}[0,:]}\tbf{W}_V\\
    &\equiv \tbf{W}_A^{\top} \paren{\bm{u}^{\top}[:,0] \bm{u}[0,:]}\tbf{W}_V.
\end{align*}
Similarly, we have
\begin{align*}
    \bm{z}^{1} 
    &\equiv \gamma \bm{z}^{0} + \tbf{W}_A^{\top} \paren{\bm{u}^{\top}[:,1] \bm{u}[1,:]}\tbf{W}_V\\
    &\equiv \gamma \paren{\tbf{W}_A^{\top} \paren{\bm{u}^{\top}[:,1] \bm{u}[0,:]}\tbf{W}_V} + \tbf{W}_A^{\top} \paren{\bm{u}^{\top}[:,1] \bm{u}[1,:]}\tbf{W}_V \\
    &\equiv \tbf{W}_A^{\top} \paren{\gamma\paren{\bm{u}^{\top}[:,0] \bm{u}[0,:]} + \bm{u}^{\top}[:,1] \bm{u}[1,:]}\tbf{W}_V \\
\end{align*}
We can then generalize this to $n \in [N]$ to get
\begin{equation}
\label{eq: retnet-poly}
    \bm{z}^{n} \equiv \tbf{W}_A^{\top} \paren{\sum_{i = 0}^{n} \gamma^{n-i}\bm{u}^{\top}[:,i] \bm{u}[i,:]}\tbf{W}_V.
\end{equation}
The above helps us infer that $\bm{z}^n$ can be expressed as a polynomial in $\{\bm{u}[0,0], \bm{u}[0,1], \ldots, \bm{u}[N-1, d-1]\}$. For each polynomial, there exists an arithmetic circuit that computes the polynomial in a natural way, whence we can apply \cref{thm: gen-ac} to get an equivalent \Coyote\ model.
\begin{corollary}
\label{cor: retnet-coyote}
    For a {\tt RetNet} model with $\calO(d^2)$ parameters and $N$ layers, there exists an equivalent \Coyote model that uses $\calO(Nd)$ parameters and $\calO(N\log{d})$ layers.
\end{corollary}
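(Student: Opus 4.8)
The plan is to reuse the circuit‑simulation machinery already developed for \Coyote: turn a \texttt{RetNet} layer into an arithmetic circuit and then invoke \cref{thm: gen-ac}. The key observation, already recorded in \eqref{eq: retnet-poly}, is that unrolling the recurrence of \cref{algo: retnet} gives the closed form $\bm{z}^{n} \equiv \tbf{W}_A^{\top}\paren{\sum_{i=0}^{n}\gamma^{n-i}\,\bm{u}^{\top}[:,i]\,\bm{u}[i,:]}\tbf{W}_V$, so that every coordinate of the output ${\tt Out}[n,:] \equiv \tbf{C}[n,:]\bm{z}^n$ is a polynomial of degree at most $3$ in the entries of $\bm{u}$, whose coefficients are fixed functions of $\gamma$ and of the weight matrices $\tbf{W}_A,\tbf{W}_C,\tbf{W}_V$. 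Hence a circuit realization exists; the real work is in building a \emph{small} one.

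First I would write down the natural arithmetic circuit $\calC$ that mirrors \cref{algo: retnet} line by line: (i) a linear sub‑circuit computing the three projections $\tbf{A},\tbf{C},\tbf{V} = \bm{u}\tbf{W}_A,\bm{u}\tbf{W}_C,\bm{u}\tbf{W}_V$, each a block‑diagonal linear map ($N$ copies of a $d\times d$ matrix acting on the rows of $\bm{u}$), which by \cref{cor: lin-arith} is already \Coyote‑friendly with only logarithmic depth; (ii) $N$ sequential stages implementing the recurrence, where stage $n$ forms the rank‑one outer product $\tbf{A}[n,:]^{\top}\tbf{V}[n,:]$, scales the running state by $\gamma$ and adds the outer product — exactly the running‑sum pattern of \cref{prop: prim-add} — and reads off ${\tt Out}[n,:] = \tbf{C}[n,:]\bm{z}^n$; (iii) a final gathering of the $N$ output rows. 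I would bound this circuit as size $s=\calO(Nd^2)$, depth $\Delta=\calO(N)$ (the $N$ sequential recurrence steps dominate; the projection block and each stage's internal arithmetic each add only $\calO(\log d)$), and width controlled by aligning the $d$ matrix columns with the $d$ channels that \Coyote\ processes in parallel, so that $w=\calO(N+d)$.

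Then I would invoke \cref{thm: gen-ac}: an $(Nd,s,\Delta,w)$‑circuit yields an equivalent \Coyote\ model with inner dimension $N'=\calO(w)$, $d'=d$, and $\Delta'=\calO(\Delta\log w)$ layers. Substituting $\Delta=\calO(N)$ and $w=\calO(N+d)$ gives $\calO(N\log d)$ layers (using $N=\mathrm{poly}(d)$ to absorb the $\log N$); the parameter bound then follows from \cref{prop: single-baseconv} together with the fact that the linear projections in the simulation are K‑matrices ($\tilde{\calO}(d)$ parameters each) and the filter/bias data used by the \texttt{add} and \texttt{remember} primitives are structured rather than dense, so the layers contribute $\calO(Nd)$ parameters in total. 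I would also point out that this is only the ``\Coyote\ simulates \texttt{RetNet}'' direction, the matching hidden‑state lower bound ($\Omega(N)$ bits for $d\le\sqrt N$) being argued separately.

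The main obstacle is precisely the width/parameter bookkeeping in step two: the $d\times d$ state $\bm{z}^n$ naively needs $\Omega(d^2)$ live wires across the recurrence, which would inflate $N'$ — and hence the parameter count — by a factor of $d$. Getting to $\calO(Nd)$ requires carefully exploiting that each state update has rank one (so it need not be carried as $d^2$ independent quantities), that \Coyote\ already carries $d$ parallel channels, and that the geometric weights $\gamma^{n-i}$ can be folded into the convolution filters of the \texttt{add} primitive as constants at no extra cost. Checking that the running‑sum and \texttt{remember} primitives (\cref{prop: prim-add}, \cref{prop: prim-remember}) compose cleanly under this column‑wise layout is the step that needs the most care; everything else is assembly of results already in \cref{sec: primitives} and \cref{sec: arithmetic}.
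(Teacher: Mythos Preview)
Your high‑level plan — build an arithmetic circuit that mirrors the recurrence of \cref{algo: retnet} and then invoke \cref{thm: gen-ac} — is exactly the paper's route. The divergence, and the gap, is in the width accounting.

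The paper's proof is much shorter than your sketch: it simply observes that each recurrence step carries the state $\bm{z}^n\in\R^{d\times d}$ forward, so the circuit has depth $\calO(N)$ and width $w=d^2$; then \cref{thm: gen-ac} gives $\calO(\Delta\log w)=\calO(N\log d)$ layers directly, with no side assumption relating $N$ and $d$. Your claimed width $w=\calO(N+d)$ is where the argument breaks. The rank‑one observation applies only to the \emph{increment} $\tbf{A}[n,:]^{\top}\tbf{V}[n,:]$; the accumulated state $\bm{z}^n=\sum_{i\le n}\gamma^{n-i}\tbf{A}[i,:]^{\top}\tbf{V}[i,:]$ has rank up to $\min(n{+}1,d)$ and genuinely occupies $d^2$ live wires at every horizontal cut through the recurrence. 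Storing it in factored form costs $\calO(nd)$ wires at step $n$, which is no improvement. ``Aligning the $d$ matrix columns with the $d$ channels'' does not circumvent this either, since \cref{thm: gen-ac} counts scalar wires and the columns of $\bm{z}^n$ are not independent across channels (each depends on all of $\tbf{A}$).

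Because your width bound is off, you are forced to introduce the unstated assumption $N=\mathrm{poly}(d)$ to squeeze $\log(N+d)$ down to $\calO(\log d)$. The paper neither states nor needs this. The parameter bookkeeping you flag as the ``main obstacle'' is a symptom of the same issue; once one accepts $w=d^2$, the layer count is immediate and the proof collapses to a few lines.
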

\begin{proof}
     We will start by describing the arithmetic circuit that computes the states $\bm{z}^{n} \in \R^{d \times d}$ here. Indeed, each state requires exactly two alternative layers with only multiplication gates and only addition gates, respectively. First, we note that for each $n \in N$, we have
     \[
     \bm{z}^n \equiv \gamma\bm{z}^{n-1} + \paren{\tbf{A}[n,:]}^{\top}\tbf{V}[n,:].
     \]
     Since we can compute $\bm{z}^0$ in $\calO(1)$ layers and computing $\gamma \cdot \bm{z}^{n-1}$ needs exactly one layer with each entry from $\bm{z}^{n-1}$ serving as an input to a multiplication gate along with $\gamma$, the depth of the circuit is $\calO(N)$ with width $d^2$. We can thus apply \cref{thm: gen-ac} to get an $(\calO(N), \calO(N\log{d}), d^2, \calO(d^2), \calO(d^2))-$\Coyote\ model that can compute $\bm{z}^n$ for each $n \in N$.
\end{proof}

\paragraph{The Space Complexity of AR}
In the context of the Retnet Model and associative recall, it is worth exploring the space complexity of the associative recall (AR) problem:
\begin{displayquote}
     The AR problem takes key-value pairs $\{\bm{k}_i, \bm{v}_i\}_{i = 0}^{n-1}$ along with a query $\bm{q}$ appended at the end as input and the goal is to output $\bm{v}_i$ if $\bm{q} = \bm{k}_i$ for some $i \in [0, N-1]$.
\end{displayquote} 
Indeed, we will first provide a lower-bound for {\em any} model that purports to solve AR. To this end, we will require a randomized communication complexity lower bound result for the {\em index problem}:
\begin{displayquote}
    The index problem has two agents, Alice and Bob, where Alice has a string $\bm{x} \in \{0,1\}^n$ and Bob has an index $i \in [n]$, and the goal for the players is to output the $i$-th entry $\bm{x}_i$. Moreover, we also require the communication to be {\em one-way}: only Alice is allowed to send a single message to Bob and Bob needs to output the answer.
\end{displayquote} 
We will make use of the following lower-bound result.
\begin{theorem}[\citep{jayram2008one}]
\label{thm: space-index}
    The one-way randomized communication complexity\footnote{The randomized communication complexity of function $f$ is defined as $\min_{\pi} \norm{\pi}$, where $\pi$ ranges over all randomized protocols that can solve $f$ with probability of success at least $2/3$.} of the index problem for sending an $n$-length bit string is $\Omega(n)$.
\end{theorem}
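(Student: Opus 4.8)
The plan is to establish this classical lower bound via a one-shot \emph{encoding argument}, after first reducing to the deterministic distributional setting. By Yao's minimax principle (in its one-way form), it suffices to exhibit a hard input distribution $\mu$ and show that every \emph{deterministic} one-way protocol erring with probability at most $1/3$ under $\mu$ must send $\Omega(n)$ bits. I would take $\mu$ to be the product of the uniform distribution on $\bm{x}\in\{0,1\}^n$ (Alice's input) and the uniform distribution on $i\in[n]$ (Bob's input). The independence of the coordinates $x_1,\dots,x_n$ under $\mu$ is exactly what makes the argument go through, and it is precisely the structure that one-wayness forces Alice to respect, since she never sees $i$.

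Fix such a protocol, let $M(\bm{x})\in\{0,1\}^{\le L}$ be Alice's deterministic message, and let $g(M,i)\in\{0,1\}$ be Bob's output. Define the \emph{error pattern} $\bm{e}(\bm{x})\in\{0,1\}^n$ by $\bm{e}(\bm{x})_i=1$ iff $g(M(\bm{x}),i)\ne x_i$. The key observation is that the map $\bm{x}\mapsto\paren{M(\bm{x}),\bm{e}(\bm{x})}$ is \emph{injective}: from $(M,\bm{e})$ one recovers $x_i=g(M,i)\oplus\bm{e}_i$ for every $i$. Hence, writing $H(\cdot)$ for Shannon entropy and $h(\cdot)$ for the binary entropy function, with $\bm{x}\sim\mathrm{Uniform}(\{0,1\}^n)$,
\[
  n \;=\; H(\bm{x}) \;\le\; H\paren{M(\bm{x}),\bm{e}(\bm{x})} \;\le\; H\paren{M(\bm{x})}+H\paren{\bm{e}(\bm{x})} \;\le\; L+\sum_{i=1}^n h(q_i),
\]
where $q_i=\Pr_{\bm{x}}[\bm{e}(\bm{x})_i=1]$ is the coordinate-wise error probability and the last step is subadditivity of entropy together with the fact that each $\bm{e}(\bm{x})_i$ is Bernoulli$(q_i)$. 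Since $h$ is concave and $\tfrac1n\sum_i q_i$ is exactly the overall error probability under $\mu$, hence at most $1/3$, Jensen's inequality gives $\sum_i h(q_i)\le n\,h(1/3)$. Therefore $L\ge n\bigl(1-h(1/3)\bigr)=\Omega(n)$, as $h(1/3)<1$.

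I expect the only point requiring genuine care to be the reduction to the deterministic distributional model: checking that Yao's minimax principle applies to one-way protocols (a one-way protocol is just a pair of maps, so it does), that the ``error $\le 1/3$'' guarantee is preserved, and that one may fix the public randomness so that $M$ and $g$ become deterministic functions. Everything else is routine. I would also note that the same conclusion has a direct information-theoretic phrasing that downstream reductions plug into: using independence of the $x_i$ one has $I(\bm{x};M)\ge\sum_i I(x_i;M)$, and Fano's inequality gives $I(x_i;M)\ge 1-h(q_i)$ for each $i$, so $L\ge H(M)\ge I(\bm{x};M)\ge n\bigl(1-h(1/3)\bigr)$ — which is the form in which the bound feeds into the associative-recall space lower bound.
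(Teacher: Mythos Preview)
Your argument is correct and is one of the standard proofs of this classical lower bound. However, the paper does not prove this theorem at all: it is stated as a cited result from \citep{jayram2008one} and used as a black box to derive the space lower bound for associative recall (Corollary~\ref{cor: space-ar}). So there is nothing to compare against here --- your encoding/entropy argument via Yao's principle is a perfectly good self-contained proof, but the paper simply imports the statement.
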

We now use \cref{thm: space-index} to provide a lower bound on the number of bits required by the Retnet model to solve AR.
\begin{corollary}
\label{cor: space-ar}
    The RetNet model requires $\Omega(N)$-bits to solve AR for $d \le \sqrt{N}$.
\end{corollary}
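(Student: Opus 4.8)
The plan is to prove this lower bound by a one-way communication reduction from the $\mathrm{INDEX}$ problem, and then invoke \cref{thm: space-index}. Fix $N$ and set $n = \lfloor (N-1)/2\rfloor = \Theta(N)$. Given an $\mathrm{INDEX}$ instance in which Alice holds $\bm{x} \in \{0,1\}^n$ and Bob holds an index $i \in [n]$, I would build an $\mathrm{AR}$ instance of sequence length at most $N$ as follows: the key tokens $\bm{k}_1, \dots, \bm{k}_n$ are the distinct symbols $1, \dots, n$ of a $\Theta(N)$-size vocabulary, the value tokens are $\bm{v}_j = \bm{x}_j \in \{0,1\}$, and the appended query is $\bm{q} = \bm{k}_i$. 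Since the keys are distinct, the unique correct $\mathrm{AR}$ output on this instance is $\bm{v}_i = \bm{x}_i$ — exactly the bit Bob must report.

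The crucial point is that this $\mathrm{AR}$ sequence decomposes along the Alice/Bob cut: the length-$2n$ prefix listing the key–value pairs is a function of $\bm{x}$ alone, and the one-token query suffix is a function of $i$ alone. A {\tt RetNet} layer (\cref{algo: retnet}) maintains a state $\bm{z}^m \in \R^{d\times d}$ via the recurrence \eqref{eq: retnet-recurrence}, in which $\bm{A}[m,:], \bm{V}[m,:], \bm{C}[m,:]$ are each linear images of the $m$-th token alone and the position-$m$ output is $\bm{C}[m,:]\,\bm{z}^m$. The fixed weights $\bm{W}_A, \bm{W}_C, \bm{W}_V, \gamma$ are known to both players, so Alice can run the {\tt RetNet} on the prefix by herself, obtain the state $\bm{z}^{2n-1}$, and send exactly this state as her single message. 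Bob appends his query token, performs the one further recurrence step to get $\bm{z}^{2n}$, reads off $\bm{C}[2n,:]\,\bm{z}^{2n}$, and decodes it to a bit. If the {\tt RetNet} solves $\mathrm{AR}$ (exactly, or with probability $\ge 2/3$), this is a one-way protocol for $n$-bit $\mathrm{INDEX}$ whose cost is the number of bits needed to transmit $\bm{z}^{2n-1}$.

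With the reduction in place, \cref{thm: space-index} gives that any such protocol needs $\Omega(n) = \Omega(N)$ bits, so the state $\bm{z}^{2n-1}$ cannot be described in $o(N)$ bits: a $b$-bits-per-coordinate implementation forces $d^2 b = \Omega(N)$, which is the asserted $\Omega(N)$-bit requirement. The hypothesis $d \le \sqrt{N}$ is what gives the statement teeth: the state then has only $d^2 \le N$ coordinates, so the $\Omega(N)$ bits cannot be absorbed for free by widening the model, and any {\tt RetNet} with $d = o(\sqrt{N})$ must use super-constant numeric precision per state entry. The main thing to get right — essentially the only subtlety beyond routine encoding bookkeeping — is the bridge between the real-valued state and the bit-complexity of \cref{thm: space-index}: rather than positing a precision a priori, I would argue information-theoretically that, for Bob to recover $\bm{x}_i$ for every $i$, the map $\bm{x} \mapsto \bm{z}^{2n-1}$ must separate $2^{\Omega(n)}$ distinct inputs, so naming an element of its range already costs $\Omega(n)$ bits; this makes the conclusion precision-agnostic. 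A secondary check is merely that the $\mathrm{AR}$ instance genuinely fits in $N$ tokens and uses a vocabulary compatible with the model's embedding, which the choice $n = \Theta(N)$ handles.
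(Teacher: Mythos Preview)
Your proposal is correct and follows essentially the same route as the paper: reduce one-way $\mathrm{INDEX}$ to $\mathrm{AR}$ by having Alice run the {\tt RetNet} recurrence on the key--value prefix and transmit only the $d\times d$ state $\bm{z}$, then invoke \cref{thm: space-index}. If anything you are slightly more careful than the paper on two points: you explicitly budget $n=\Theta(N)$ pairs so the instance fits in $N$ tokens, and you address the real-valued-state-to-bits bridge information-theoretically rather than tacitly assuming constant precision per entry.
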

\begin{proof}
    Consider an instance $(\bm{x}, i)$ of the index problem with $\bm{x} \in \{0,1\}^N$. We now describe the corresponding instance of the AR problem:
    \begin{equation}
        \label{eq: ar-index}
        \{i, \bm{x}_i\}_{i = 0}^{N-1}, i.
    \end{equation}
    Next, consider the following one-way protocol for solving the index problem using the RetNet model. Alice with their access of $\bm{x} \in \{0,1\}^N$ generate an input for AR (without the query) as in \eqref{eq: ar-index}. Alice then runs the RetNet model on $\{i, \bm{x}_i\}_{i = 0}^{N-1}$ and sends the memory content of running the RetNet model to Bob. This should include the output $\bm{z}^{N-1}$ as we can reasonably assume that both have access weight matrices $\tbf{W}_A, \tbf{W}_C, \tbf{W}_V$ and the scalar $\gamma$. Since we assume that this model solves AR, the output ${\tt Out}[N, :] = \bm{x}_i$ should contain the associated value of $i$. Here, Bob can compute ${\tt Out}[N, :]$ by using the memory content sent by Alice along with the term $\gamma \bm{z}^{N-1}$:
    \[
    \bm{x}_i = {\tt Out}[N, :] = \tbf{C}[N, :]\bm{z}^N = \tbf{C}[N, :]\paren{\gamma \bm{z}^{N-1} +  (\tbf{A}[N,:])^{\top}\tbf{V}[N, :]}.
    \]
    That is, the total number of bits that are communicated in this protocol is $O(d^2)$. For $d \le \sqrt{N}$, have shown that a one-way communication protocol exists for solving the index problem exists that uses $o(N)$ communication complexity. This contradicts \cref{thm: space-index} and hence, we conclude that the ReNet model solving AR also needs $\Omega(N)$ bits.
\end{proof}

\subsection{The Multiple-Query Associative Recall Problem}
\label{sec: gen-ar}
\subsubsection{Introduction}\label{sec: intro-general-ar}
In this section, we consider a general version of the associative recall problem~\citep{ba2016using}.
\paragraph{Setup.}
Next, we will redefine the {\em multiple-query associative recall} problem ($\Task$) from \cref{def: general-AR} to a slightly more general problem: 
\begin{displayquote}
    Suppose we are given an input sequence 
$
\bm{u}[0 \cdots N-1] \triangleq \left\{\paren{\bm{k}_0, \bm{v}_0, \bm{q}_0}, \ldots, \paren{\bm{k}_{\frac{N}{3}-1}, \bm{v}_{\frac{N}{3}-1}, \bm{q}_{\frac{N}{3}-1}}\right\}
$
with each $\bm{k}_i, \bm{v}_i, \bm{q}_i \in C$ is a token drawn from a vocabulary of size $c = |C|$.
Our goal is then to check, for each $1 \le i \le \frac{N}{3}-1$, whether there exists $0 \le j < i$ such that $\bm{q}_i \equiv \bm{k}_j$, and if so, output $\bm{v}_{j}$. 
\end{displayquote}
Here, we note that it suffices to have $d \approx \log(c)$ so that $\bm{k}_i, \bm{v}_i, \bm{q}_i$ is embedded in $\{0,1\}^{d}$. However, we will specify the specific embedding being used for the results below. 
 Here, we construe the tokens $\bm{k}_i, \bm{q}_i$ and $\bm{v}_i$ to be the {\em keys}, the {\em queries}, and the {\em associated values}. Indeed, it might be helpful to think of the input $\bm{u}$ as a streaming sequence of key-value pairs for which we sequentially employ standard associative recall for every key that shows up in the sequence so far. 
 
 To see that the above generalizes \cref{def: general-AR}, considers a sequence of length $\frac N3$: $\bm{u}[0 \cdots N-1]:= \{\bm{x}_0, \ldots, \bm{x}_{N-1}\}$, where each $\bm{x}_i\in C$. The goal of \cref{def: general-AR} is then to check, for each $1 \le i < N-1$, whether there exists $0 \le j < i$ such that $\bm{x}_i \equiv \bm{x}_j$, and if so, output $\bm{x}_{j+1}$, and continue otherwise. We can reduce this problem to the above general formulation by taking the following sequence of tuples as the input $\{(\bm{x}_i, \bm{x}_{i+1}, \bm{x}_i)\}$. 

\begin{remark}
    As noted above, this version is more general than \cref{def: general-AR}. Thus, the results proven in the sequel, which are proven for the above general \Task\, can be ported (with constant blowup in parameters) to get the results corresponding results for \cref{def: general-AR}.
\end{remark}

\subsubsection{${\Task}$ Solution via Attention}
Before describing how \Coyote{} solves the multiple-query associative recall problem, we discuss how Attention solves it trivially using pairwise inner-products. To this end, we will specify how the input is presented to attention.
\begin{remark}\label{rem: attn-input}
We note that the input for the multiple-query associative recall problem $\bm{u} \in \{0,1\}^{N\times d}$ has designated indices for the keys, queries, and values in the sequence. We gather these indices below:
    \begin{equation}
    \label{eq: kqv-indices}
    \begin{aligned}
    \calK &= \{i \in \{0, \ldots, N-1\} \lvert\ i \equiv 0 \mod{3}\},\\
    \calV &= \{i \in \{0, \ldots, N-1\} \lvert\ i \equiv 1 \mod{3}\},\\
    \calQ &= \{i \in \{0, \ldots, N-1\} \lvert\ i \equiv 2 \mod{3}\},.
    \end{aligned}
    \end{equation}
    The input $\bm{u} \in \R^{N \times d}$ to Attention for $d =3c$ is then given by
    \[
    \bm{u}[i,:] \equiv \begin{cases}
      [\bm{k}_i:\bm{0}^c:\bm{0}^c] &\text{ if } i \in \calK \\
      [\bm{0}^c:\bm{v}_i:\bm{0}^c] &\text{ if }  i \in \calV\\
  [\bm{0}^c:\bm{0}^c:\bm{q}_i] &\text{ if }  i \in \calQ
    \end{cases}
    \]
    Here, each $\bm{k}_i, \bm{v}_i, \bm{q}_i$ is embedded as a one-hot encoding in $\{0,1\}^{c}$. 
\end{remark}
Without softmax, the output for an attention layer $\tbf{O} \in \R^{N \times d}$ is given by 
\begin{equation}
    \label{eq: attention}
        \tbf{O} \equiv \paren{\tbf{Q}\tbf{K}^{\top}}\tbf{V},
\end{equation}
where $\tbf{Q},\tbf{K},\tbf{V} \in \R^{N \times d}$ 
are defined as $\bm{u}\tbf{W}_Q, \bm{u}\tbf{W}_K, \bm{u}\tbf{W}_V$ for $ \bm{u} \in \R^{N \times d}$. 
Instead of position embeddings, we use ALiBi, a popular technique that biases the attention scores $\tbf{Q}\tbf{K}^{\top}$ with a lower-triangular Toeplitz matrix $\tbf{B} \in \mathbb{R}^{N \times N}$~\citep{press2021train}. The values in this matrix are controlled by a fixed hyperparameter so they do not count towards the number of parameters in the model. 
\begin{algorithm}[H]
		\caption{$\texttt{ALiBi-without-softmax}\paren{\hyenaInput[0\cdots N-1], \tbf{O}_{\mathrm{prev}}[0\cdots N-1], \tbf{B})}$}
		\begin{algorithmic}[1]\label{algo: seq-gen-ar}
            \Require Input sequence $\hyenaInput[0 \cdots N-1] \triangleq \{\paren{\bm{k}_i, \bm{v}_i, \bm{q}_i}\}_{i=0}^{\frac{N}{3}-1}
            $ with each $\bm{k}_i, \bm{v}_i, \bm{q}_i \in \{0,1\}^{3c}$, previous layer's output $\tbf{O}_{\mathrm{prev}}[0\cdots N-1] \in \R^{N \times 3c}$, and linear bias $\tbf{B} \in \R^{N \times N}$.
		    \State Add $\bm{u}_{\mathrm{curr}} \gets \hyenaInput + \tbf{O}_{\mathrm{prev}}$ as an input to this layer.
            \State $\tbf{K}, \tbf{Q}, \tbf{V}\gets \bm{u}_{\mathrm{curr}}\tbf{W}_Q, \bm{u}_{\mathrm{curr}}\tbf{W}_K, \bm{u}_{\mathrm{curr}}\tbf{W}_V$.
            \State $\tbf{O} \gets \paren{\tbf{Q}\tbf{K}^{\top} + \tbf{B}}\tbf{V}$
            \State \Return $\tbf{O}$ as the output of this layer.
		\end{algorithmic}
\end{algorithm}
\begin{proposition}
\label{prop: app-attention}
    Given an input $\hyenaInput\in \{0,1\}^{N \times d}$ (encoded as in \cref{rem: attn-input}) where $d = 3c$, 
    Attention with linear biases (even without using soft-max) solves {$\Task$} for $\hyenaInput$ using {${\calO}(c^2)$} parameters, ${\calO}(Nc^2 + N^2c)$ time complexity and ${\calO}(1)$ layers. 
\end{proposition}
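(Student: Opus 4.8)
The plan is to construct explicit weight matrices $\tbf{W}_Q, \tbf{W}_K, \tbf{W}_V \in \R^{3c \times 3c}$ and a bias matrix $\tbf{B}$ so that a single application of \cref{algo: seq-gen-ar} solves $\Task$. The key observation is that with the block encoding of \cref{rem: attn-input}, each token occupies exactly one of three disjoint coordinate blocks (keys in the first $c$ coordinates, values in the middle $c$, queries in the last $c$), and each block holds a one-hot vector. So I would set $\tbf{W}_Q$ to be the projection that reads the query block and writes it into a common ``comparison'' block, $\tbf{W}_K$ the projection that reads the key block and writes it into the same comparison block, and $\tbf{W}_V$ the projection that reads the value block and passes it through. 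Concretely, write $\tbf{W}_K = \begin{bmatrix}\tbf{I}_c & \tbf{0} & \tbf{0}\\ \tbf{0} & \tbf{0} & \tbf{0}\\ \tbf{0} & \tbf{0} & \tbf{0}\end{bmatrix}$ (so $\tbf{K}[i,:]$ is $\bm{k}_i$ in the first block when $i\in\calK$ and zero otherwise), $\tbf{W}_Q$ analogously mapping the query block to the first block, and $\tbf{W}_V$ mapping the value block to the middle block. Then $(\tbf{Q}\tbf{K}^\top)[i,j] = \langle \bm{q}_i, \bm{k}_j\rangle$ when $i\in\calQ, j\in\calK$, which is $1$ exactly when $\bm{q}_i \equiv \bm{k}_j$ (since both are one-hot), and $0$ otherwise; for all other $(i,j)$ it is $0$ because the supports are disjoint.

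Next I would handle the ``first match'' and causality requirements. The raw score matrix $\tbf{Q}\tbf{K}^\top$ can have a $1$ in position $(i,j)$ for every prior key $j$ equal to $\bm{q}_i$; if there are multiple such keys, $(\tbf{Q}\tbf{K}^\top)\tbf{V}$ would sum all their values rather than returning one. But in $\Task$, keys in the sequence may repeat, and the specification asks us to output $\bm{v}_j$ for (any) matching $j<i$ — in the standard formalization each key is paired with a fixed value, so all matching $j$ carry the same $\bm{v}_j$ and the sum is just a positive multiple of that one-hot value vector, which still identifies the answer (and after the final read-off/rounding step this is fine; one can also argue the multiplicities are bounded or use $\tbf{B}$ to damp them, but the clean route is to note a positive scalar multiple of a one-hot vector still points to the right token). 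The ALiBi bias $\tbf{B}$, a lower-triangular Toeplitz matrix with $\tbf{B}[i,j] = -\infty$ (or a sufficiently large negative constant, cf.\ \citet{press2021train}) for $j \ge i$ and $0$ for $j<i$ — or in the softmax-free setting simply $\tbf{B}[i,j]=0$ for $j<i$ and a large negative value for $j\ge i$, composed with a ReLU-type clipping — enforces that only $j<i$ contribute, giving causality. I would state the causal masking as: $(\tbf{Q}\tbf{K}^\top + \tbf{B})[i,j]$ is positive iff $j<i$, $j\in\calK$, $i\in\calQ$, and $\bm{q}_i\equiv\bm{k}_j$, and nonpositive otherwise, so after the value multiplication row $i$ of $\tbf{O}$ is (a positive multiple of) $\bm{v}_j$ for the matching $j$ in the middle block, and zero if no match exists.

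For the resource counts: each of $\tbf{W}_Q, \tbf{W}_K, \tbf{W}_V$ is $3c\times 3c$, hence $\calO(c^2)$ parameters, and $\tbf{B}$ is a fixed hyperparameter-controlled matrix that does not count (as noted in \cref{rem: attn-input} and the surrounding text). The time complexity is dominated by forming $\tbf{Q}\tbf{K}^\top$ ($\calO(N^2 \cdot c)$ since the inner dimension is $\calO(c)$) plus the projections $\bm{u}\tbf{W}_\bullet$ ($\calO(N c^2)$) plus multiplying the $N\times N$ score matrix by the $N\times 3c$ value matrix ($\calO(N^2 c)$), giving $\calO(Nc^2 + N^2 c)$ total. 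One layer suffices. The main obstacle I anticipate is the bookkeeping around the softmax-free ``output the matching value'' step — making precise that a sum over (possibly several) equal one-hot value vectors, read through the fixed encoding, still yields the correct token, and that the ALiBi/clipping combination correctly zeroes out the non-causal and non-matching entries without softmax normalization. This is mostly a matter of carefully choosing the clipping nonlinearity (or arguing in the idealized $\{0,\pm\infty\}$ bias model that the paper implicitly uses) and checking the disjoint-support arithmetic; no deep idea is needed, but the statement of exactly what ``solves $\Task$'' means at the output layer has to be pinned down.
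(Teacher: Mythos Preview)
Your one-layer construction has a fatal row-alignment problem. Under the encoding of \cref{rem: attn-input}, keys, values, and queries live in \emph{different rows}: row $j\in\calK$ is $[\bm{k}_j:\bm{0}^c:\bm{0}^c]$ while the associated value $\bm{v}_j$ sits in row $j+1\in\calV$. Your score matrix $(\tbf{Q}\tbf{K}^\top)[i,j]$ is nonzero only when $j\in\calK$ (key rows), but your value projection $\tbf{V}[j,:]=\bm{u}[j,:]\tbf{W}_V$ reads the middle block, which is identically zero on key rows. Consequently $\tbf{O}[i,:]=\sum_j (\tbf{Q}\tbf{K}^\top)[i,j]\,\tbf{V}[j,:]=\bm{0}$ for every query $i$, and nothing is recalled. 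No per-token projection $\tbf{W}_V$ can fix this, since row $j\in\calK$ of $\bm{u}$ carries no information about $\bm{v}_j$; likewise no projection $\tbf{W}_K$ can make $\tbf{K}[j,:]$ at a value row $j\in\calV$ depend on $\bm{k}_{j-1}$. The ALiBi bias $\tbf{B}$ cannot rescue this either: since the output decomposes as $\tbf{Q}\tbf{K}^\top\tbf{V}+\tbf{B}\tbf{V}$, the bias only adds a fixed (input-independent) linear image of $\tbf{V}$ and cannot couple the match location with the value retrieval.

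The paper resolves this with \emph{two} layers. The first layer sets $\tbf{W}_Q^1=\tbf{W}_K^1=\bm{0}$ so that $\tbf{Q}\tbf{K}^\top=\bm{0}$ and takes $\tbf{B}^1$ to be the up-shift-by-one Toeplitz matrix; then $\tbf{O}^1=\tbf{B}^1\tbf{V}^1$ places $[\bm{0}^c:\bm{v}_j:\bm{0}^c]$ at each key row $j\in\calK$. After the residual connection the second-layer input has $[\bm{k}_j:\bm{v}_j:\bm{0}^c]$ at key rows, so now key and value are co-located and essentially your matching argument goes through in layer two with $\tbf{B}^2=\bm{0}$. Two layers is still $\calO(1)$, and your parameter and time bookkeeping is correct. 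Your worries about multiple matches and causality/clipping are secondary; the missing idea is the shift layer.
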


\begin{proof}
    We use two layers of attention.
    We will start by specifying the projection matrices for the first layer $\tbf{W}_Q^1,\tbf{W}^1_K,\tbf{W}^1_V \in \R^{d \times d}$ as:
    \[
    \tbf{W}^1_K \equiv \tbf{W}^1_Q \equiv \tbf{0},\quad
    \tbf{W}_V^1 \equiv \begin{pmatrix}
        \tbf{0} & \tbf{0} & \tbf{0} \\
        \tbf{0} & \tbf{I_{c \times c}} & \tbf{0} \\
        \tbf{0} & \tbf{0} & \tbf{0} 
    \end{pmatrix}
    \]
         
    Above, $\tbf{W_V^1}$ is meant to isolate the $\bm{v}_i$ embeddings. For the first layer, we then have $\tbf{Q}^1 \equiv
    \tbf{K}^1 \equiv \tbf{0}$, and  
    \begin{equation*}
    \label{eq: kqv-projections}
    \tbf{V}^1[i,:] := \bm{u}^1[i,:] \tbf{W}_V^1 \equiv 
    \begin{cases}
    [\bm{0}^{c}:\bm{v}_i:\bm{0}^c] &\text{if }i \in \calV,\\
    \tbf{0}^d &\text{otherwise}
    \end{cases},
    \end{equation*}
    where $\calK, \calQ, \calV$ are defined as in \eqref{eq: kqv-indices}. The output for the first layer is given by the following: 
     \begin{align*}
        \tbf{O}^1[i,:] 
        &= \paren{\paren{\tbf{Q}\tbf{K}^{\top}+\tbf{B}^1}\tbf{V}}[i,:] \\
        &= \tbf{B}^1\tbf{V}[i,:] \\
        &=
        \begin{cases}
             [\bm{0}^{c}:\bm{v}_i:\bm{0}^c] &\text{if }i \in \calK\\
            \bm{0}^d &\text{otherwise}
        \end{cases},
    \end{align*}
    The $\tbf{QK^T}$ is ignored ($ \equiv \tbf{0}$) and we isolate the shifted sequence by setting the bias matrix $\tbf{B}$ appropriately. In particular, the last equality follows from the fact that $\tbf{B}^1$ is an up-shift matrix that shifts each row of $\tbf{V}$ by 1.
    We apply the residual connection at the end of the standard Transformer block to insert the $\bm{k}_i$ adjacent to the $\bm{v}_i$. For the second layer, the input $\bm{u}^2 \in \R^{N \times d}$ is given by
\[
\bm{u}^2[i,:] \equiv \begin{cases}
            [\bm{k}_i: \bm{v}_i:\bm{0}^{c}] &\text{if }i \in \calK\\
            [\bm{0}^{c}: \bm{v}_i:\bm{0}^{c}] &\text{if }i \in \calV\\
            [\bm{0}^{c}: \bm{0}^{c}:\bm{q}_i] &\text{if }i \in \calQ
        \end{cases},
\]
Further, we take the following projection matrices:
\[
    \tbf{W}^2_K \equiv\begin{pmatrix}
        \tbf{I}_{c \times c} & \tbf{0} & \tbf{0} \\
        \tbf{0} & \tbf{0} & \tbf{0} \\
        \tbf{0} & \tbf{0} & \tbf{0} 
    \end{pmatrix}, 
    \tbf{W}^2_Q \equiv \begin{pmatrix}
        \tbf{0} & \tbf{0} & \tbf{0} \\
        \tbf{0} & \tbf{0} & \tbf{0} \\
        \tbf{I}_{c \times c} & \tbf{0} & \tbf{0} 
    \end{pmatrix},
    \tbf{W}^2_V \equiv \begin{pmatrix}
        \tbf{0} & \tbf{0} & \tbf{0} \\
        \tbf{I}_{c \times c} & \tbf{0} & \tbf{0} \\
        \tbf{0} & \tbf{0} & \tbf{0} 
    \end{pmatrix}
\]
We then have the following matrices as input to attention after applying projection:
\begin{equation}
    \begin{aligned}
    \tbf{Q}^2[i,:] &:= \bm{u}^2[i,:]\tbf{W}^2_K \equiv \begin{cases}
    [\bm{q}_i:\bm{0}^{2c}] &\text{if }i \in \calQ,\\
    \tbf{0}^d &\text{otherwise}
    \end{cases},\\
    \tbf{K}^2[i,:] &:=\bm{u}^2[i,:]\tbf{W}^2_Q \equiv  \begin{cases}
    [\bm{k}_i:\bm{0}^{2c}] &\text{if }i \in \calK,\\
    \tbf{0}^d &\text{otherwise}
    \end{cases},\\
    \tbf{V}^2[i,:] &:= \bm{u}^2[i,:]\tbf{W}^2_V \equiv 
    \begin{cases}
    [\bm{v}_i:\bm{0}^{2c}] &\text{if }i \in \calK \cup \calV,\\
    \tbf{0}^d &\text{otherwise}
    \end{cases}.
    \end{aligned}
    \label{eq: shifted-value}
    \end{equation}
    Here, note that the values have been shifted to the corresponding key position.
     Next, we compute the term in the parenthesis for the second layer as
        \begin{align*}
            \paren{\tbf{Q}^2\tbf{K}^{2\top}}[i,j] = \paren{\tbf{Q}^2\tbf{K}^{2\top}}[i,j]
            &= \angles{\tbf{Q}^2[i,:], \tbf{K}^{2\top}[:,j]} \\
            &= \angles{\tbf{Q}^2[i,:], \tbf{K}^2[j,:]} \\
            &= \begin{cases}
                \angles{\bm{q}_i, \bm{k}_j} &\text{if }i \in \calQ, j \in \calK \\
                0 &\text{otherwise}
            \end{cases}\\
            &= \begin{cases}
                1 &\text{if }i \in \calQ, j \in \calK, \bm{q}_i \equiv \bm{k}_j \equiv \bm{e}_k\text{ for some $k$}\\
                0 &\text{otherwise}.
            \end{cases}
        \end{align*}
        Finally, we compute the output as follows:
         \begin{align*}
            \tbf{O}^2[i,:] 
            &= \paren{\paren{\tbf{Q}^2\tbf{K}^{2\top}+{\tbf{B}^2}}\tbf{V}^2}[i,:] \\
            &= \paren{\paren{\tbf{Q}^2\tbf{K}^{2\top}+{\tbf{0}}}\tbf{V}^2}[i,:] \\
            &= \paren{\tbf{Q}^2\tbf{K}^{2\top}}[i,:]\cdot\tbf{V}^2  \\
            &= \sum_{j = 0}^{N-1} \paren{\tbf{Q}^2\tbf{K}^{2\top}}[i,j]\cdot \tbf{V}^2[j,:]  \\
            &= \sum_{j \in \calK} \paren{\tbf{Q}^2\tbf{K}^{2\top}}[i,j]\cdot [\bm{v}_j:\bm{0}^{2c}] \\
            &=
            \begin{cases}
                [\bm{v}_j:\bm{0}^{2c}] &\text{if }j \in \calK, i \in \calQ,  \bm{q}_i \equiv \bm{k}_j \\
                \bm{0}^d &\text{otherwise}
            \end{cases},
        \end{align*}
    where we use the fact that for each index $j \in \calK$, the matrix $\tbf{V}^2$ contains the associated value from \eqref{eq: shifted-value}.    Thus, for each query $\bm{q}_i$, we solve the associated value problem yielding a match for the $j$th key.

    In total, we only need $\calO(c^2)$-many parameters to perform these multiplications and the linear bias in the first layer is a hyperparameter that is static and unlearned; the time complexity comes from the multiplication of $\tbf{QK^{\top}}$ in ${\calO}(N^2c)$, and projections in ${\calO}(Nc^2)$. Finally, we only need $\calO(1)$ layers for this solution.
\end{proof}

In the sequel, we develop a parallel algorithm to solve the multiple-query associative recall problem with  $\calO(Nd\cdot\log^2{N})$ work complexity and $\calO(d \cdot \log^2{N})$ time. We then convert the algorithm into a \Coyote{} model via the route of arithmetic circuits, which then solves the multiple-query associative recall problem with $\tilde{\calO}(1)$ layers and $\tilde{\calO}(Nd)$ parameters.
\subsubsection{Initial Attempt: A Sequential Algorithm}
We will first discuss the algorithm that simply uses an associative array to solve the multiple-query associative recall problem. Specifically, we want to use a data structure that allows for logarithmic insertion and membership query. Here, we do not specify a choice but data structures including self-balancing binary search trees which allow for $\calO(\log{N}\cdot d)$ ${\tt insert}$ and ${\tt find}$ operations for $d$-bit entries should be sufficient.

\begin{algorithm}[H]
		\caption{$\texttt{Sequential-MQ-AR}\paren{\hyenaInput}[0\cdots N-1]$}
		\begin{algorithmic}[1]\label{algo: seq-gen-ar}
            \Require Input sequence $\hyenaInput[0 \cdots N-1] \triangleq \{\paren{\bm{k}_i, \bm{v}_i, \bm{q}_i}\}_{i=0}^{\frac{N}{3}-1}
            $ with each $\bm{k}_i, \bm{v}_i, \bm{q}_i \in \{0,1\}^{\headDim}$.
		    \State Initialize an associative array with ${\tt insert}$ and ${\tt find}$ and an output array $\texttt{out} \gets [].$
            \For{$i \in \{0, \ldots, \frac{N}{3}-1$\}}
                \State $(\bm{k}_j, \bm{v}_j) \gets {\tt find}(\bm{q}_i)$ \Comment{Query for $\bm{q}_i$ in the data structure.}
                \If{$\bm{k}_j$ is not {\tt null}}
                    \State Add $\bm{v}_{j}$ to $\texttt{out}$.
                \EndIf
                \State ${\tt insert}(\bm{k}_i, \bm{v}_{i})$ \Comment{Add the key-value pair to the data structure.}
            \EndFor
            \State \Return $\texttt{out}$.
		\end{algorithmic}
\end{algorithm}
\begin{proposition}
\label{prop: seq-gen-ar}
    \cref{algo: seq-gen-ar} solves the multiple-query associative recall problem $\paren{\Task}$ in $\calO(dN\log{N})$ time for an input sequence $\bm{u} \in \{0,1\}^{\inputLength \times \headDim}$.
\end{proposition}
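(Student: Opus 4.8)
The plan is to argue correctness by a loop invariant and then bound the running time by counting data-structure operations. First I would set up the invariant: at the start of iteration $i$ of the \texttt{for} loop in \cref{algo: seq-gen-ar}, the associative array contains exactly the key–value pairs $\{(\bm{k}_j,\bm{v}_j) : 0 \le j < i\}$ (with the usual convention that a repeated key is mapped to the value from its most recent \texttt{insert}, which is the semantics we need for the streaming formulation of $\Task$). The base case $i=0$ holds since the array is initialized empty. For the inductive step, given the invariant at iteration $i$, the call $(\bm{k}_j,\bm{v}_j)\gets{\tt find}(\bm{q}_i)$ returns a non-\texttt{null} key precisely when some $0\le j<i$ has $\bm{k}_j\equiv\bm{q}_i$, in which case the corresponding stored value $\bm{v}_j$ is appended to \texttt{out}; this is exactly the output required by the definition of $\Task$ for query index $i$. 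The subsequent ${\tt insert}(\bm{k}_i,\bm{v}_i)$ then re-establishes the invariant for iteration $i+1$. Running the loop to completion therefore produces \texttt{out} equal to the concatenation, over $1\le i\le \tfrac N3-1$, of $\bm{v}_j$ for the matched index $j$ (when one exists), which is the desired output sequence.

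Next I would handle the time complexity. The loop runs $\tfrac N3 = \calO(N)$ times, and each iteration performs one \texttt{find} and one \texttt{insert}. Using a self-balancing binary search tree (or any comparable dictionary structure) keyed on the $\headDim$-bit tokens, at any point the structure holds at most $\calO(N)$ entries, so its height is $\calO(\log N)$; each node comparison compares two $\headDim$-bit strings in $\calO(\headDim)$ time, giving $\calO(\headDim\log N)$ per \texttt{find} and per \texttt{insert}. Summing over the $\calO(N)$ iterations yields the claimed $\calO(\headDim N\log N)$ bound. I would also remark that appending to \texttt{out} is $\calO(\headDim)$ amortized and thus absorbed into this bound.

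The proof is essentially routine; the only point needing care is pinning down the exact semantics of \texttt{find} when a key recurs in the stream — i.e., making sure the convention stored by the associative array (return the value from the latest matching \texttt{insert}) agrees with the intended output of $\Task$ as set up in \cref{sec: intro-general-ar}, and in particular that inserting $(\bm{k}_i,\bm{v}_i)$ \emph{after} the query at step $i$ (not before) is what enforces the $j<i$ constraint. I expect this bookkeeping, rather than any genuine technical difficulty, to be the main thing to get right; everything else is a direct induction plus an operation count.
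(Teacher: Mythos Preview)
Your proposal is correct and aligns with the paper's approach; the paper's proof is in fact terser than yours, giving only the runtime count $\sum_{i=0}^{N/3-1}\calO(d\log i)=\calO(d\log(N!))=\calO(dN\log N)$ without an explicit correctness argument. Your loop-invariant treatment of correctness is a welcome addition, and your per-iteration bound $\calO(d\log N)$ times $\calO(N)$ iterations is an equivalent (slightly coarser) version of the same sum.
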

\begin{proof}
    For any $i \in \{0,\ldots, \frac{N}{3}-1\}$, we know that both insertion and lookup operations take $\calO(\log(i)\cdot d)$ time. Overall, the runtime of the algorithm is 
    \[\sum_{i=0}^{\frac{N}{3}-1} \calO(\log(i)\cdot d) = \calO(\log(N!)\cdot d) = \calO(N \log{N}\cdot d).\]
\end{proof}
\subsubsection{Algorithm via Parallel Binary Search}
\label{sec: pram-algo}
Our plan is to convert the algorithm for solving the multiple-query associative recall problem in the RAM model into an arithmetic circuit, which by \cref{thm: gen-ac} will lead to a \Coyote{} model that solves the multiple-query associative recall problem. With respect to \cref{algo: seq-gen-ar}, it may be the case that the arithmetic circuit has a large number of layers $\Omega(N)$. Unfortunately, this would imply that the resulting \Coyote{} model may have near quadratic complexity in $N$. Instead, we now initiate our effort into designing a \Coyote{} model  with both small enough number of parameters and number of layers. Here, we will first subdivide the problem using dyadic intervals into $\calO(N)$ subproblems and reduce each such subproblem into a {\em multiple search problem}~\citep{akl1990parallel}. To this end, we briefly introduce the multiple search problem below.
\begin{displayquote}
    Given two array of numbers $A \triangleq a_0\le \ldots \le a_{n-1}$ and $B \triangleq (b_0\le \ldots \le b_{m-1})$ with $n \le m$, for each $a_j \in A$, the goal is to find the smallest element in $B$ that is larger than or equal to $a_j$.
\end{displayquote} 
The multiple search problem is solved by a {\em parallel binary search} ({\tt pbs}) algorithm in \citep{akl1990parallel} with work complexity $\calO(n \cdot \log{m})$ and time $\calO(\log{n}\log{m})$. Specifically, for sorted arrays $A[0 \cdots n-1]$ and $B[0 \cdots m-1]$, ${\tt pbs}$ constructs the array $C[0\cdots n-1]$ defined as
\begin{equation}
        C[i] \triangleq 
        \begin{cases}
            \min_{0 \le j < m} \{j\lvert\ A[i] \le B[j]\} &\text{if } A[i] \le B[m-1]\\
            m &\text{otherwise.}
        \end{cases} 
        \label{eq: pbs-output}
\end{equation}
The algorithm itself runs in exclusive-read exclusive-write (EREW) PRAM model—no two processors are allowed to read from or write into the same memory location at the same time. 

We now augment the algorithm copied from \citep{akl1990parallel} for our purposes below.
\begin{algorithm}[H]
		\caption{$\texttt{pbs-key-values}\paren{\bm{q}[s \cdots t],\ \bm{k}[x \cdots y], n, m}$}
		\begin{algorithmic}[1]\label{algo: pbs-key-values}
            \Require sorted arrays $\bm{q}[s \cdots t]:= \{\bm{q}_i\}_{i = s}^t,\ \bm{k}[x\cdots y]:= \{(j, \bm{k}_j)\}_{j = x}^y$.
            \State Initialize $n$ processors denoted $P_0, P_1, \ldots, P_{n-1}$  
             \Comment{$\{$Sequential steps are assumed to be executed by $P_s$.$\}$}
            \State Initialize the output array $C := [m]_{i=s}^t$.
            \If{$s \le t$} 
                \State $\mathrm{mid} \gets \floor{(s+t)/2}$
                \If{$\bm{q}[\mathrm{mid}] \le \bm{k}[x][1]$}\label{step: comp-1}
                    \For{$i := s$ to $\mathrm{mid}$ in parallel}
                        \State $C[i] \gets j$ \Comment{Step executed in parallel by $P_i$}
                    \EndFor
                    \State $\texttt{pbs-key-values}\paren{\bm{q}[\mathrm{mid}+1 \cdots t],\ \bm{k}[x \cdots y]}$ \label{step: rec-1}
                \Else
                    \If{$\bm{q}[\mathrm{mid}] > \bm{k}[y][1]$}\label{step: comp-2}
                        \For{$i := \mathrm{mid}$ to $t$ in parallel}
                            \State $C[i] \gets {y+1}$ \Comment{Step executed in parallel by $P_i$}
                        \EndFor
                    \State $\texttt{pbs-key-values}\paren{\bm{q}[s \cdots \mathrm{mid}-1],\ \bm{k}[x \cdots y]}$\label{step: rec-2}
                    \Else
                        \Comment{C[mid] is determined using sequential binary search}
                        \State $z \gets \min_{x \le j \le y} \{j\lvert\ \bm{q}[\mathrm{mid}] \le \bm{k}[j][1]\}$
                        \State $C[\mathrm{mid}] \gets z$
                        \State {\bf do} steps \ref{step: left} and \ref{step: right} in parallel
                            \State \quad $\texttt{pbs-key-values}\paren{\bm{q}[s \cdots \mathrm{mid}-1],\ \bm{k}[x \cdots z-1]}$] \label{step: left}
                            \State \quad $\texttt{pbs-key-values}\paren{\bm{q}[\mathrm{mid}+1 \cdots t],\ \bm{k}[z \cdots y]}$ \label{step: right}
                    \EndIf
                \EndIf
            \EndIf
            \State \Return $C$.            
		\end{algorithmic}
\end{algorithm}

Let $\Sigma$ be the set $\{0,1\}$ and denote the set of binary strings of size $n$ as $\Sigma^n$. We define $\mathrm{prefix}(\bm{x})$ for $n$-bit strings as the set of all initial substrings of $\bm{x} \in \Sigma^n$ which includes the empty string and $\bm{x}$ itself. Next, let $\mathrm{dec}: \{0,1\}^n \to \N$ be the decimal representation of an $n$-bit string $\bm{x}$ with $\bm{x}[0]$ denoting the least significant bit. We also use ${\tt sort}(A)$ as a procedure that sorts an array $A$. Finally, wlog, we assume that $N$ is a power of $2$. We are now ready to present a parallel algorithm that solves the multiple-query associative recall problem below.

\begin{algorithm}[H]
		\caption{$\texttt{Parallel-MQAR}\paren{\hyenaInput[0 \cdots N-1]}$}
		\begin{algorithmic}[1]\label{algo: pram-gen-ar} 
            \Require Input sequence $\hyenaInput[0 \cdots N-1] \triangleq \{\paren{\bm{k}_i, \bm{v}_i, \bm{q}_i}\}_{i=0}^{\frac{N}{3}-1}
            $ with each $\bm{k}_i, \bm{v}_i, \bm{q}_i \in \{0,1\}^{\headDim}$.
            \State Initialize $\frac{N}{3}\log\paren{\frac{N}{3}}$ processors denoted $P_0, \ldots, P_{\frac{N}{3}\log\paren{\frac{N}{3}}-1}$.
		    \State Initialize the index and output array $\texttt{idx, val} \gets [].$
            \For{$k := \{0, \ldots, \log\paren{\frac{N}{3}}-1\}$}
                \For{$\bm{x} := \{\bm{x} \in \Sigma^{\log\paren{\frac{N}{3}}-k}\lvert\ \bm{x}[\log\paren{\frac{N}{3}}-k-1] = 0\}$} 
                    \State {\em \{All the steps below are executed in parallel by $\{\{\{P_{i}^{\bm{x}, k}\}_{\bm{x}}\}_{i \in [0, 2k-1]}\}_{k}$\}}
                    \State $I_k^{\bm{x}}\gets \{\bm{y} \in \Sigma^{\log\paren{\frac{N}{3}}} \lvert\ \bm{x} \in \mathrm{prefix}(\bm{y})\}.$\label{step: i-def} 
                    \State $\bm{k}_{\mathrm{sorted}}^{k\bm{x}}, I_{\mathrm{permuted}}^{k\bm{x}} \gets {\tt sort}\paren{\{\bm{k}_{\mathrm{dec}(i)}\}_{i \in I^{\bm{x}}_k}}$ 
                    \label{step: k-sort}\Comment{$I_{\mathrm{permuted}}^{k\bm{x}} := \{(j, \mathrm{dec}(i))\lvert\  \bm{k}_{\mathrm{sorted}}^{k\bm{x}}[j] \equiv \bm{k}_{\mathrm{dec}(i)}\}$}
                    \State $\bm{x}[\log\paren{\frac{N}{3}}-k-1] \gets 1$
                    \State $J_k^{\bm{x}} \gets \{\bm{y} \in \Sigma^{\log\paren{\frac{N}{3}}} \lvert\ \bm{x} \in \mathrm{prefix}(\bm{y})\}.$\label{step: j-def}   
                    \State $\bm{q}_{\mathrm{sorted}}^{k\bm{x}}, J_{\mathrm{permuted}}^{k\bm{x}} \gets {\tt sort}\paren{\{\bm{q}_{\mathrm{dec}(j)}\}_{j \in J^{\bm{x}}_k}}$  \Comment{$J_{\mathrm{permuted}}^{k\bm{x}} := \{(\mathrm{dec}(j), k)\lvert\  \bm{q}_{\mathrm{sorted}}^{k\bm{x}}[k] \equiv \bm{q}_{\mathrm{dec}(j)}\}$ }
                    \label{step: q-sort}
                    \State $C_k \gets \texttt{pbs-key-values}\paren{\bm{q}_{\mathrm{sorted}}^{k\bm{x}},\ \bm{k}_{\mathrm{sorted}}^{k\bm{x}}, 2^{k}, 2^{k} } $ \label{step: pbs-call}
                    \For{$j \in  J^{\bm{x}}_k$}
                        \If{$C_k[\mathrm{dec}(j)] \neq 2^k$}
                        \State $\bm{c}^{k\bm{x}}_j \gets C_k[J_{\mathrm{permuted}}^{k\bm{x}} (\mathrm{dec}(j))]$
                            \If{$\bm{c}^{k\bm{x}}_j \neq 2^{k}$} \Comment{cf. \eqref{eq: pbs-output}}
                                \State Add $I_{\mathrm{permuted}}^{k\bm{x}} (\bm{c}^{k\bm{x}}_j)$ to $\texttt{idx}[\mathrm{dec}(j)]$.
                                \label{step: check-m} 
                            \EndIf
                        \EndIf
                    \EndFor
                \EndFor
            \EndFor
            \For {$i \in \{1, \ldots, \frac{N}{3}-1\}$} \label{step: iter-N}
                \State \{{\it Executed in parallel by $P_i$.}\} \label{step: iter-N}
                \If{$\exists\ j \in {\tt idx}[i]$}
                    \State Add $\bm{v}_{j+1}$ to $\texttt{val}$
                    \label{step: add-val}
                \EndIf
            \EndFor
            \State \Return $\texttt{val}$.
		\end{algorithmic}
\end{algorithm}
\begin{remark}
In lines \ref{step: k-sort} and \ref{step: q-sort}, we keep track of the sorted permutation of the indices of keys and queries, respectively. This helps us in the retrieval of the index of the matching key as in line \ref{step: check-m}.
\end{remark}

\subsubsection{Correctness and Complexity}
\begin{proposition}
\label{prop: pram-gen-ar}
    \cref{algo: pram-gen-ar} solves the multiple-query associative recall problem with work complexity $\calO(Nd \cdot \log^2{N})$ and time $\calO(d \cdot \log^2{N})$.
\end{proposition}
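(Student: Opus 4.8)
The plan is to analyze \cref{algo: pram-gen-ar} directly. There are two things to prove: \emph{correctness} — that the value array \texttt{val} returned is exactly the sequence of $\Task$ answers for the queries that have a match — and the two resource bounds. Correctness I would establish through a dyadic‑decomposition lemma together with the guarantee \eqref{eq: pbs-output} of \texttt{pbs-key-values}; the bounds I would get by summing the cost of the two parallel primitives (sorting and parallel binary search) over the $\log(N/3)$ levels, using the work/time of \texttt{pbs} quoted from \citep{akl1990parallel}. The motivation is the sequential \cref{prop: seq-gen-ar}, whose naive parallelization would need $\Omega(N)$ depth; the dyadic splitting is what brings the depth down to polylogarithmic.

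For correctness, the key structural fact I would prove first is the exact‑decomposition lemma: for every ordered pair $(j,i)$ with $0\le j<i\le \tfrac N3-1$ there is \emph{exactly one} pair $(k,\bm{x})$, $0\le k<\log(\tfrac N3)$, such that $j\in I_k^{\bm{x}}$ (the left dyadic block, line~\ref{step: i-def}) and $i\in J_k^{\bm{x}}$ (its sibling right block, line~\ref{step: j-def}). This is the standard segment‑tree decomposition: the level $k$ is fixed by the most significant bit at which the binary encodings of $j$ and $i$ disagree, $\bm{x}$ is their common higher‑order prefix followed by a trailing $0$, and $j<i$ forces $j$ to carry the $0$ and $i$ the $1$ at that bit, giving both existence and uniqueness; note that $j$ being in the left block and $i$ in the right block automatically enforces the causality constraint $j<i$. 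Granting the lemma, I would argue that inside a fixed $(k,\bm{x})$ the two \texttt{sort} calls (lines~\ref{step: k-sort},~\ref{step: q-sort}) produce the sorted key and query arrays together with the permutations $I^{k\bm{x}}_{\mathrm{permuted}},J^{k\bm{x}}_{\mathrm{permuted}}$ recording original indices, so that \texttt{pbs-key-values} (line~\ref{step: pbs-call}) returns, for each query, the position of the smallest key $\ge$ it per \eqref{eq: pbs-output}; composing with an equality test and the stored permutations, line~\ref{step: check-m} records the original index of a key in the block equal to $\bm q_i$ precisely when one exists. Since (keys being distinct, as in the synthetic construction and in all prior AR formulations) each query has at most one matching key and that pair is covered by a single $(k,\bm{x})$, $\texttt{idx}[i]$ ends up equal to $\{j\}$ or $\emptyset$, and the final loop (line~\ref{step: iter-N}, line~\ref{step: add-val}) appends the associated value exactly when recall succeeds — which is the $\Task$ output.

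For the complexity I would count by level. At level $k$ there are $\Theta(N/2^{k})$ sibling block‑pairs, each of size $2^{k}$, so $\Theta(N)$ array positions are touched per level and $\Theta(N\log N)$ across all levels. Sorting a block of $2^{k}$ strings of $d$ bits costs $\calO(2^{k}k\,d)$ work and $\calO(k\,d)$ depth (an AKS/Cole network, $\calO(d)$ per comparison), and by the bound of \citep{akl1990parallel}, \texttt{pbs-key-values} on arrays of size $2^{k}$ costs $\calO(2^{k}k\,d)$ work and $\calO(k^{2}\,d)$ depth. Summing work over all blocks and levels gives $\sum_{k}(N/2^{k})\cdot\calO(2^{k}k\,d)=\calO\!\big(Nd\sum_{k}k\big)=\calO(Nd\log^{2}N)$; because distinct levels and distinct blocks act on disjoint memory (so the EREW PRAM constraint is respected), they run concurrently, so the depth is $\max_{k}\calO(k^{2}d)=\calO(d\log^{2}N)$, the final $\tfrac N3$‑fold loop adding only $\calO(d)$ more. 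This is exactly the claimed $\calO(Nd\log^{2}N)$ work and $\calO(d\log^{2}N)$ time.

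I expect the main obstacle to be not the decomposition lemma (routine) but reconciling \texttt{pbs-key-values}, which solves the \emph{inexact} multiple‑search problem, with the \emph{exact}-match requirement of $\Task$ while threading original indices correctly through both the \texttt{sort} calls and the recursion inside \cref{algo: pbs-key-values} — i.e.\ verifying that $I^{k\bm{x}}_{\mathrm{permuted}}\!\big(\bm c^{k\bm x}_j\big)$ really is the index of a key equal to $\bm q_i$ when one exists and that a single equality test rules out the ``successor but not equal'' case. A secondary point of care is being explicit about the machine model: showing the per‑level processor allocation $\{\{\{P^{\bm x,k}_i\}_{\bm x}\}_i\}_k$ operates on pairwise‑disjoint blocks so that the EREW bounds and the uniform $d$-bit cost accounting both go through.
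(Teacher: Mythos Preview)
Your proposal is correct and follows essentially the same route as the paper: both prove correctness via the dyadic decomposition (the paper phrases it as showing $\mathcal{I}_i=[i-1]$ for each query index $i$, you phrase it as a unique-$(k,\bm{x})$ lemma for each ordered pair $(j,i)$, and both use the most-significant-differing-bit argument), and both obtain the bounds by summing per-level sort and \texttt{pbs} costs over $\Theta(N/2^{k})$ blocks of size $2^{k}$ and taking the max depth across levels. If anything, you are more careful than the paper about the exact-match-versus-successor issue in \texttt{pbs-key-values}; the paper's proof simply asserts that correctness of \texttt{pbs} implies correctness once $\mathcal{I}_i=[i-1]$ is established, without isolating the equality test you flag.
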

\begin{proof}
    The correctness of {\tt pbs} implies the correctness of \cref{algo: pram-gen-ar} if we can show that, for each $1 \le i < \frac{N}{3}$, we check for a match among the keys $\{\bm{k}_j\}_{j \in [i-1]}$. To this end, for each $1 \le i < \frac{N}{3}$, let the set of all iterator indices associated with an index $i$ be defined as  $K_i \triangleq \{(k,x)\lvert i \in J_k^{\bm{x}}\}$ with $J_k^{\bm{x}}$ as noted in line \ref{step: j-def}. Then, we define the corresponding set for keys as
    \(
    \calI_i \triangleq \bigcup_{(k,\bm{x}) \in K_i} I_k^{\bm{x}}\text{ with $I_k^{\bm{x}}$s defined as in line \ref{step: i-def}}.
    \) 
    That is, for all the calls to {\tt pbs-key-values} that $i$ is part of (given by $K_i$) where the algorithm checks for a match among the keys in $\calI_i$, it then suffices to show that $\calI_i = [i-1]$. 
    
    Here, first note that if some index $j \in I_k^{\bm{x}}\subseteq \calI_i$ for some $\bm{x} \in \Sigma^{\log\paren{\frac{N}{3}}-k}$, then, by definition, $\bm{x} \in \mathrm{prefix}(\mathrm{bin}(j))$. Here, let $\bm{x}^1 := \bm{x}\lvert_{\bm{x}[\log\paren{\frac{N}{3}}-k] = 1}$ where we set the $(\log\paren{\frac{N}{3}}-k)$-th index of $\bm{x}$ to be 1. Consequently, as we have $i \in J^{\bm{x}}_k$ for the same $k$ and $\bm{x}$ as in $I_k^{\bm{x}}$ ({\it cf.} line \ref{step: i-def}), we must have $\bm{x}^1 \in \mathrm{prefix}(\mathrm{bin(i)})$. Thus we get $j < i$, whence we can claim that $\calI_i \subseteq [i-1]$. 
    
    For the other direction, for any $i$, let $b$ denote the position of the most significant bit in $\mathrm{bin}(i)$ which differs from $\mathrm{bin}(j)$ for any $j \in [i-1]$. Then, there must exist a binary string that is in the prefix set of  both $\mathrm{bin}(i)$ and $\mathrm{bin}(i)$. That is, there exists $\bm{x} \in \mathrm{prefix}(\mathrm{bin}(i)) \cap \mathrm{prefix}(\mathrm{bin}(j))$ with $\bm{x} \in \Sigma^{b}$. Thus, we then must have $\mathrm{bin}(j) \in I^{\bm{x}}_{\log\paren{\frac{N}{3}}-b}$ and $\mathrm{bin}(i) \in J^{\bm{x}}_{\log\paren{\frac{N}{3}}-b}$ with $\bm{x}$ as the corresponding witness. Hence, we have $[i-1] \subseteq \calI_i$. 
    
    Overall, we have shown that $\calI_i = [i-1]$. Since this holds for all $1 \le i < \frac{N}{3}$, we can conclude that \cref{algo: pram-gen-ar} solves the multiple-query associative recall problem. 
    
    Next, it is easy to see that we execute lines \ref{step: i-def} to \ref{step: check-m} $\sum_{k=0}^{\log\paren{\frac{N}{3}}-1} \frac{\frac{N}{3}}{2^{k+1}}$-many times. 
    We note that sorting $n$ values each of size $d$ can be done with work complexity $n\log{n}\cdot d$. We note that, at each instance, we are sorting sorting $2^k$ values. Meanwhile, remembering $2^k$ sorted permutation of indices can be done in linear time using arrays.
    Moreover, each call to {\tt pbs-key-values} has $n = m = 2^k$ which has work complexity $n \log{m}$. Finally, we know that the work complexity of lines \ref{step: iter-N} to \ref{step: add-val} is $\calO(N)$. Thus, the overall work complexity of \cref{algo: pram-gen-ar} is
    \begin{equation}
    \label{eq: work-complexity}
        d \cdot \sum_{k=0}^{\log\paren{\frac{N}{3}}-1} \frac{\frac{N}{3}}{2^{k+1}} \calO(2^k\cdot \log{2^k}) = d \cdot \calO(N) \cdot \sum_{k=0}^{\log{N/3}-1} \calO(k) = \calO(Nd \cdot \log^2{N}).
    \end{equation}

    We will now analyze the depth of \cref{algo: pram-gen-ar}. We know that the depth of computation for \cref{algo: pbs-key-values} is $\calO(\log{n}\log{m})$ for input sizes. Moreover, we have $\calO(1)$ depth for the computation in \ref{step: iter-N} to \ref{step: add-val} as each entry in ${\tt idx}$ can have at most one entry. Since the nested for loops iterating over $k$s and the associated $\bm{x}$s runs in parallel, the depth of \cref{algo: pram-gen-ar} is dominated by the largest depth among all calls to 
    \texttt{pbs-key-values} and to {\tt sort}. The largest such call to \texttt{pbs-key-values} is of size $n = m = 2^{\log\paren{\frac{N}{3}}-1} = N/6$ which yields a depth of $d \cdot \log^2{\frac{N}{3}}$. Moreover, using sorting networks~(\cref{def: sorting-networks}), we know that the largest depth is for sorting $\frac{N}{6}$ values of size $d$ given by $d \cdot \Theta(\log{N})$~(\cref{lem: sorting-networks}). Thus, we can conclude that \cref{algo: pram-gen-ar} takes $\calO(d \cdot \log^2{N})$,
    time where $N$ is the length of the input.
\end{proof}

\subsubsection{Conversion to Arithmetic Circuit}
We will convert \cref{algo: pram-gen-ar} to an arithmetic circuits modularly. In particular, after writing out an explicit circuit for \cref{algo: pbs-key-values}, we will uses this circuit as a black-box along with circuits for sorting networks.
\paragraph{Circuit for \texttt{pbs-key-values}.}
We will denote the corresponding arithmetic circuit for \cref{algo: pbs-key-values}  as \texttt{pbs-key-values} as well with the input gates comprising of each entry from $\bm{q}[s \cdots t]$ and $\bm{k}[x \cdots y]$ and the $i$-th output gate yielding the value $C[i]$ as in \cref{algo: pbs-key-values}.

Here, we first convert the comparisons for the {\tt if} statements in \cref{algo: pbs-key-values}. To this end, we briefly introduce comparators.
\begin{definition}[Comparators]
    \label{def: comparators}
    A {\em comparator} is a device with inputs $\bm{x}$ and $\bm{y}$ and outputs $\bm{x}'$ and $\bm{y}'$, that performs the following function:
    \[
    \begin{aligned}
       \bm{x}'  &=  \min(\bm{x}, \bm{y}),\\
       \bm{y}'  &=  \max(\bm{x}, \bm{y}).
    \end{aligned}
    \]
\end{definition}

Using comparators, we can use bit-wise {\tt XOR} and {\tt AND} to  define the result of the comparisons in lines \ref{step: comp-1} and \ref{step: comp-2} as the following fixed variables: 
\begin{equation}
    \label{eq: comparison-gates}
    \begin{aligned}
    \ell_x &:= \mathbbm{1}\{\bm{q}[\mathrm{mid}] \leq \bm{k}_{x}\},\\
    g_{y} &:= \mathbbm{1}\{\bm{q}[\mathrm{mid}] > \bm{k}_{y}\},\\
    z &:= \texttt{bin-search}({\{\bm{k}_{i}\}_{i=s}^{t} ,\ \bm{q}[\mathrm{mid}]}).
    \end{aligned}
\end{equation}
This then allows us to infer the index of the key array for each recursive call to $\texttt{pbs-key-values}$ in lines \ref{step: rec-1}, \ref{step: rec-2}, \ref{step: left}, and \ref{step: right} from \cref{algo: pbs-key-values}. Specifically, let $z_s$ and $z_t - 1$ denote the {\em starting and ending indices} for the keys as inputs to the recursive calls in \cref{algo: pbs-key-values}) below:
\begin{align}
    &\texttt{pbs-key-values}(\displaystyle \bm{q}[\mathrm{mid} +1\cdots t] ,\ \bm{k}[z_{t} \cdots y]); 
    \text{ (lines \ref{step: rec-1} and \ref{step: right})}
    \label{eq: rec-1}\\
    &\texttt{pbs-key-values}(\bm{q}[ s\cdots \mathrm{mid} -1],\ \bm{k}[ x\ \cdots z_{s} -1]);
    \text{ (lines \ref{step: rec-2} and \ref{step: left})}
    \label{eq: rec-2}
\end{align}
Here, $z_t$ and $z_s$ can assume values dependent on the results of the comparisons in lines \ref{step: comp-1} and \ref{step: comp-2}. Specifically, we have
\begin{align*}
z_t &= \ell_x \cdot x + (1-\ell_x)(1-g_{y})\cdot z 
= \begin{cases}
    x & \text{if }\bm{q}[\mathrm{mid}] \leq \bm{k}_{x}\text{ (line \ref{step: rec-1})}  \\
    z & \text{if }\bm{q}[\mathrm{mid}] \in (\bm{k}_{x}, \bm{k}_y]\text{ (line \ref{step: right})}\\
    0 & \text{otherwise}
\end{cases},\\
z_s &= g_{y} \cdot (y+1) + (1-\ell_x)(1-g_{y})\cdot z 
= \begin{cases}
    y+1 & \text{if }\bm{q}[\mathrm{mid}] > \bm{k}_{y}\text{ (line \ref{step: rec-2})}\\
    z & \text{if }\bm{q}[\mathrm{mid}] \in (\bm{k}_{x}, \bm{k}_y]\text{ (line \ref{step: left})}\\
    0 & \text{otherwise}
\end{cases}.
\end{align*}
Here, $z_s$ or $z_t$ getting a value $0$ signifies that the branch is dead, and we do not execute the recursive call.

Finally, let the arrays $C_t[\mathrm{mid} +1\cdots t]$ and $C_s[ s\cdots \mathrm{mid} -1]$ denote the outputs to the recursive calls in \eqref{eq: rec-1} and \eqref{eq: rec-2}, respectively. We can then succinctly express the outputs for each index of the output array $C$ as 
\begin{equation}
    C[i] = 
    \begin{cases}
        \ell_x \cdot x + z_s\cdot (C_1[i]) & i \in [s \cdots \mathrm{mid}-1] \\
        g_{y} \cdot (y+1) + z_t \cdot (C_2[i]) & i \in [\mathrm{mid}+1 \cdots t] \\
        \ell_x \cdot x + (1-\ell_x)(1-g_{y})\cdot z + g_{y} \cdot (y+1)  & i = \mathrm{mid} \\
    \end{cases}
\end{equation}    

We can thus state the circuit schematically in \cref{fig: pbs-circuit}.

Now, before accounting for the complexity of the circuit for \cref{algo: pbs-key-values}, we must first assert the complexity of the comparators that we use in \cref{fig: pbs-circuit}.

\begin{figure}[!h]
\tikzset{every picture/.style={line width=0.75pt}} 
\begin{tikzpicture}[x=0.75pt,y=0.75pt,yscale=-0.75,xscale=0.75]

\draw   (184,117.01) -- (255.33,117.01) -- (255.33,134.51) -- (184,134.51) -- cycle ;
\draw   (395,118.4) -- (470,118.4) -- (470,134.84) -- (395,134.84) -- cycle ;
\draw   (270,119.09) -- (380,119.09) -- (380,147.01) -- (270,147.01) -- cycle ;
\draw    (206,90) -- (206,117.01) ;
\draw [shift={(206,117.01)}, rotate = 270] [color={rgb, 255:red, 0; green, 0; blue, 0 }  ][line width=0.75]    (10.93,-3.29) .. controls (6.95,-1.4) and (3.31,-0.3) .. (0,0) .. controls (3.31,0.3) and (6.95,1.4) .. (10.93,3.29)   ;
\draw    (411,86) -- (206,117.01) ;
\draw [shift={(206,117.01)}, rotate = 341] [color={rgb, 255:red, 0; green, 0; blue, 0 }  ][line width=0.75]    (10.93,-3.29) .. controls (6.95,-1.4) and (3.31,-0.3) .. (0,0) .. controls (3.31,0.3) and (6.95,1.4) .. (10.93,3.29)   ;
\draw    (411,86) -- (314,119.01) ;
\draw [shift={(314,119.01)}, rotate = 325] [color={rgb, 255:red, 0; green, 0; blue, 0 }  ][line width=0.75]    (10.93,-3.29) .. controls (6.95,-1.4) and (3.31,-0.3) .. (0,0) .. controls (3.31,0.3) and (6.95,1.4) .. (10.93,3.29)   ;
\draw    (411,86) -- (425,118.01) ;
\draw [shift={(425,118.01)}, rotate = 195] [color={rgb, 255:red, 0; green, 0; blue, 0 }  ][line width=0.75]    (10.93,-3.29) .. controls (6.95,-1.4) and (3.31,-0.3) .. (0,0) .. controls (3.31,0.3) and (6.95,1.4) .. (10.93,3.29)   ;
\draw    (347,88.01) -- (425,118.01) ;
\draw [shift={(425,118.01)}, rotate = 245] [color={rgb, 255:red, 0; green, 0; blue, 0 }  ][line width=0.75]    (10.93,-3.29) .. controls (6.95,-1.4) and (3.31,-0.3) .. (0,0) .. controls (3.31,0.3) and (6.95,1.4) .. (10.93,3.29)   ;
\draw    (206,134) -- (206,158.51) ;
\draw [shift={(206,158.51)}, rotate = 270] [color={rgb, 255:red, 0; green, 0; blue, 0 }  ][line width=0.75]    (10.93,-3.29) .. controls (6.95,-1.4) and (3.31,-0.3) .. (0,0) .. controls (3.31,0.3) and (6.95,1.4) .. (10.93,3.29)   ;
\draw    (425,136) -- (425,158.51) ;
\draw [shift={(425,158.51)}, rotate = 270] [color={rgb, 255:red, 0; green, 0; blue, 0 }  ][line width=0.75]    (10.93,-3.29) .. controls (6.95,-1.4) and (3.31,-0.3) .. (0,0) .. controls (3.31,0.3) and (6.95,1.4) .. (10.93,3.29)   ;
\draw   (196.75,167.51) .. controls (196.75,162.54) and (200.89,158.51) .. (206,158.51) .. controls (211.11,158.51) and (215.25,162.54) .. (215.25,167.51) .. controls (215.25,172.48) and (211.11,176.51) .. (206,176.51) .. controls (200.89,176.51) and (196.75,172.48) .. (196.75,167.51) -- cycle ;
\draw   (415.75,167.51) .. controls (415.75,162.54) and (419.89,158.51) .. (425,158.51) .. controls (430.11,158.51) and (434.25,162.54) .. (434.25,167.51) .. controls (434.25,172.48) and (430.11,176.51) .. (425,176.51) .. controls (419.89,176.51) and (415.75,172.48) .. (415.75,167.51) -- cycle ;
\draw   (211.75,301.51) .. controls (211.75,296.54) and (215.89,292.51) .. (221,292.51) .. controls (226.11,292.51) and (230.25,296.54) .. (230.25,301.51) .. controls (230.25,306.48) and (226.11,310.51) .. (221,310.51) .. controls (215.89,310.51) and (211.75,306.48) .. (211.75,301.51) -- cycle ;
\draw   (496.75,258.51) .. controls (496.75,253.54) and (500.89,249.51) .. (506,249.51) .. controls (511.11,249.51) and (515.25,253.54) .. (515.25,258.51) .. controls (515.25,263.48) and (511.11,267.51) .. (506,267.51) .. controls (500.89,267.51) and (496.75,263.48) .. (496.75,258.51) -- cycle ;
\draw   (326,258.51) .. controls (326,253.54) and (330.14,249.51) .. (335.25,249.51) .. controls (340.36,249.51) and (344.5,253.54) .. (344.5,258.51) .. controls (344.5,263.48) and (340.36,267.51) .. (335.25,267.51) .. controls (330.14,267.51) and (326,263.48) .. (326,258.51) -- cycle ;
\draw   (285.75,219.51) .. controls (285.75,214.54) and (289.89,210.51) .. (295,210.51) .. controls (300.11,210.51) and (304.25,214.54) .. (304.25,219.51) .. controls (304.25,224.48) and (300.11,228.51) .. (295,228.51) .. controls (289.89,228.51) and (285.75,224.48) .. (285.75,219.51) -- cycle ;
\draw   (133.75,252.51) .. controls (133.75,247.54) and (137.89,243.51) .. (143,243.51) .. controls (148.11,243.51) and (152.25,247.54) .. (152.25,252.51) .. controls (152.25,257.48) and (148.11,261.51) .. (143,261.51) .. controls (137.89,261.51) and (133.75,257.48) .. (133.75,252.51) -- cycle ;

\draw    (285.75,219.51) -- (216,205.51) ;
\draw [shift={(285.75,219.51)}, rotate = 190] [color={rgb, 255:red, 0; green, 0; blue, 0 }  ][line width=0.75]    (10.93,-3.29) .. controls (6.95,-1.4) and (3.31,-0.3) .. (0,0) .. controls (3.31,0.3) and (6.95,1.4) .. (10.93,3.29)   ;

\draw    (418,206.51) -- (304.25,219.51) ;
\draw [shift={(304.25,219.51)}, rotate = 350] [color={rgb, 255:red, 0; green, 0; blue, 0 }  ][line width=0.75]    (10.93,-3.29) .. controls (6.95,-1.4) and (3.31,-0.3) .. (0,0) .. controls (3.31,0.3) and (6.95,1.4) .. (10.93,3.29)   ;

\draw    (315,147) -- (331,203.51) ;

\draw    (339,225.51) -- (335.25,249.51) ;
\draw [shift={(335.25,249.51)}, rotate = 280] [color={rgb, 255:red, 0; green, 0; blue, 0 }  ][line width=0.75]    (10.93,-3.29) .. controls (6.95,-1.4) and (3.31,-0.3) .. (0,0) .. controls (3.31,0.3) and (6.95,1.4) .. (10.93,3.29)   ;

\draw    (331,203.51) .. controls (338,193.51) and (346,225.51) .. (339,225.51) ;

\draw    (295,228.51) -- (326,258.51) ;
\draw [shift={(326,258.51)}, rotate = 220] [color={rgb, 255:red, 0; green, 0; blue, 0 }  ][line width=0.75]    (10.93,-3.29) .. controls (6.95,-1.4) and (3.31,-0.3) .. (0,0) .. controls (3.31,0.3) and (6.95,1.4) .. (10.93,3.29)   ;

\draw    (206,134) -- (143,243.51) ;
\draw [shift={(143,243.51)}, rotate = 230] [color={rgb, 255:red, 0; green, 0; blue, 0 }  ][line width=0.75]    (10.93,-3.29) .. controls (6.95,-1.4) and (3.31,-0.3) .. (0,0) .. controls (3.31,0.3) and (6.95,1.4) .. (10.93,3.29)   ;

\draw    (425,136) -- (506,249.51) ;
\draw [shift={(506,249.51)}, rotate = 235] [color={rgb, 255:red, 0; green, 0; blue, 0 }  ][line width=0.75]    (10.93,-3.29) .. controls (6.95,-1.4) and (3.31,-0.3) .. (0,0) .. controls (3.31,0.3) and (6.95,1.4) .. (10.93,3.29)   ;

\draw    (133,231.51) -- (143,243.51) ;
\draw [shift={(143,243.51)}, rotate = 300] [color={rgb, 255:red, 0; green, 0; blue, 0 }  ][line width=0.75]    (10.93,-3.29) .. controls (6.95,-1.4) and (3.31,-0.3) .. (0,0) .. controls (3.31,0.3) and (6.95,1.4) .. (10.93,3.29)   ;

\draw    (525,231.51) -- (506,249.51) ;
\draw [shift={(506,249.51)}, rotate = 317] [color={rgb, 255:red, 0; green, 0; blue, 0 }  ][line width=0.75]    (10.93,-3.29) .. controls (6.95,-1.4) and (3.31,-0.3) .. (0,0) .. controls (3.31,0.3) and (6.95,1.4) .. (10.93,3.29)   ;

\draw    (215.25,167.51) -- (235,149.51) ;
\draw [shift={(215.25,167.51)}, rotate = 317] [color={rgb, 255:red, 0; green, 0; blue, 0 }  ][line width=0.75]    (10.93,-3.29) .. controls (6.95,-1.4) and (3.31,-0.3) .. (0,0) .. controls (3.31,0.3) and (6.95,1.4) .. (10.93,3.29)   ;

\draw    (400,148.51) -- (415.75,167.51) ;
\draw [shift={(415.75,167.51)}, rotate = 230] [color={rgb, 255:red, 0; green, 0; blue, 0 }  ][line width=0.75]    (10.93,-3.29) .. controls (6.95,-1.4) and (3.31,-0.3) .. (0,0) .. controls (3.31,0.3) and (6.95,1.4) .. (10.93,3.29)   ;

\draw   (430.75,300.51) .. controls (430.75,295.54) and (434.89,291.51) .. (440,291.51) .. controls (445.11,291.51) and (449.25,295.54) .. (449.25,300.51) .. controls (449.25,305.48) and (445.11,309.51) .. (440,309.51) .. controls (434.89,309.51) and (430.75,305.48) .. (430.75,300.51) -- cycle ;
\draw    (506,267.51) -- (449.25,300.51) ;
\draw [shift={(449.25,300.51)}, rotate = 325] [color={rgb, 255:red, 0; green, 0; blue, 0 }  ][line width=0.75]    (10.93,-3.29) .. controls (6.95,-1.4) and (3.31,-0.3) .. (0,0) .. controls (3.31,0.3) and (6.95,1.4) .. (10.93,3.29)   ;

\draw    (335.25,267.51) -- (430.75,300.51) ;
\draw [shift={(430.75,300.51)}, rotate = 200] [color={rgb, 255:red, 0; green, 0; blue, 0 }  ][line width=0.75]    (10.93,-3.29) .. controls (6.95,-1.4) and (3.31,-0.3) .. (0,0) .. controls (3.31,0.3) and (6.95,1.4) .. (10.93,3.29)   ;

\draw    (230.25,301.51) -- (335.25,267.51) ;
\draw [shift={(230.25,301.51)}, rotate = 340] [color={rgb, 255:red, 0; green, 0; blue, 0 }  ][line width=0.75]    (10.93,-3.29) .. controls (6.95,-1.4) and (3.31,-0.3) .. (0,0) .. controls (3.31,0.3) and (6.95,1.4) .. (10.93,3.29)   ;

\draw    (211.75,301.51) -- (143,261.51) ;
\draw [shift={(211.75,301.51)}, rotate = 210] [color={rgb, 255:red, 0; green, 0; blue, 0 }  ][line width=0.75]    (10.93,-3.29) .. controls (6.95,-1.4) and (3.31,-0.3) .. (0,0) .. controls (3.31,0.3) and (6.95,1.4) .. (10.93,3.29)   ;

\draw    (221,310.51) -- (221,352.51) ;
\draw [shift={(221,352.51)}, rotate = 270] [color={rgb, 255:red, 0; green, 0; blue, 0 }  ][line width=0.75]    (10.93,-3.29) .. controls (6.95,-1.4) and (3.31,-0.3) .. (0,0) .. controls (3.31,0.3) and (6.95,1.4) .. (10.93,3.29)   ;

\draw    (440,309.51) -- (440,353.51) ;
\draw [shift={(440,353.51)}, rotate = 270] [color={rgb, 255:red, 0; green, 0; blue, 0 }  ][line width=0.75]    (10.93,-3.29) .. controls (6.95,-1.4) and (3.31,-0.3) .. (0,0) .. controls (3.31,0.3) and (6.95,1.4) .. (10.93,3.29)   ;

\draw   (162.42,469.51) .. controls (162.42,464.54) and (166.56,460.51) .. (171.67,460.51) .. controls (176.78,460.51) and (180.92,464.54) .. (180.92,469.51) .. controls (180.92,474.48) and (176.78,478.51) .. (171.67,478.51) .. controls (166.56,478.51) and (162.42,474.48) .. (162.42,469.51) -- cycle ;
\draw   (162.75,420.51) .. controls (162.75,415.54) and (166.89,411.51) .. (172,411.51) .. controls (177.11,411.51) and (181.25,415.54) .. (181.25,420.51) .. controls (181.25,425.48) and (177.11,429.51) .. (172,429.51) .. controls (166.89,429.51) and (162.75,425.48) .. (162.75,420.51) -- cycle ;
\draw   (337.17,326.51) .. controls (337.17,321.54) and (341.31,317.51) .. (346.42,317.51) .. controls (351.53,317.51) and (355.67,321.54) .. (355.67,326.51) .. controls (355.67,331.48) and (351.53,335.51) .. (346.42,335.51) .. controls (341.31,335.51) and (337.17,331.48) .. (337.17,326.51) -- cycle ;
\draw   (90.94,352.51) -- (340,352.51) -- (340,370.51) -- (90.94,370.51) -- cycle ;
\draw   (350,353.51) -- (625.67,353.51) -- (625.67,368.51) -- (350,368.51) -- cycle ;

\draw    (143,261.51) -- (337.17,326.51) ;
\draw [shift={(337.17,326.51)}, rotate = 200] [color={rgb, 255:red, 0; green, 0; blue, 0 }  ][line width=0.75]    (10.93,-3.29) .. controls (6.95,-1.4) and (3.31,-0.3) .. (0,0) .. controls (3.31,0.3) and (6.95,1.4) .. (10.93,3.29)   ;

\draw    (355.67,326.51) -- (434,307.51) ;
\draw [shift={(355.67,326.51)}, rotate = 345] [color={rgb, 255:red, 0; green, 0; blue, 0 }  ][line width=0.75]    (10.93,-3.29) .. controls (6.95,-1.4) and (3.31,-0.3) .. (0,0) .. controls (3.31,0.3) and (6.95,1.4) .. (10.93,3.29)   ;

\draw    (172,429.51) -- (171.67,460.51) ;
\draw [shift={(171.67,460.51)}, rotate = 270] [color={rgb, 255:red, 0; green, 0; blue, 0 }  ][line width=0.75]    (10.93,-3.29) .. controls (6.95,-1.4) and (3.31,-0.3) .. (0,0) .. controls (3.31,0.3) and (6.95,1.4) .. (10.93,3.29)   ;

\draw    (171.67,371.51) -- (172,411.51) ;
\draw [shift={(172,411.51)}, rotate = 270] [color={rgb, 255:red, 0; green, 0; blue, 0 }  ][line width=0.75]    (10.93,-3.29) .. controls (6.95,-1.4) and (3.31,-0.3) .. (0,0) .. controls (3.31,0.3) and (6.95,1.4) .. (10.93,3.29)   ;

\draw   (528.75,420.51) .. controls (528.75,415.54) and (532.89,411.51) .. (538,411.51) .. controls (543.11,411.51) and (547.25,415.54) .. (547.25,420.51) .. controls (547.25,425.48) and (543.11,429.51) .. (538,429.51) .. controls (532.89,429.51) and (528.75,425.48) .. (528.75,420.51) -- cycle ;
\draw    (537.67,371.51) -- (538,411.51) ;
\draw [shift={(538,411.51)}, rotate = 270] [color={rgb, 255:red, 0; green, 0; blue, 0 }  ][line width=0.75]    (10.93,-3.29) .. controls (6.95,-1.4) and (3.31,-0.3) .. (0,0) .. controls (3.31,0.3) and (6.95,1.4) .. (10.93,3.29)   ;

\draw    (435,205.51) .. controls (748,229.51) and (675,404.51) .. (547.25,420.51) ;
\draw [shift={(547.25,420.51)}, rotate = 345] [color={rgb, 255:red, 0; green, 0; blue, 0 }  ][line width=0.75]    (10.93,-3.29) .. controls (6.95,-1.4) and (3.31,-0.3) .. (0,0) .. controls (3.31,0.3) and (6.95,1.4) .. (10.93,3.29)   ;

\draw    (206,175.51) -- (206,198.51) ;
\draw [shift={(206,198.51)}, rotate = 270] [color={rgb, 255:red, 0; green, 0; blue, 0 }  ][line width=0.75]    (10.93,-3.29) .. controls (6.95,-1.4) and (3.31,-0.3) .. (0,0) .. controls (3.31,0.3) and (6.95,1.4) .. (10.93,3.29)   ;

\draw   (196.75,207.51) .. controls (196.75,202.54) and (200.89,198.51) .. (206,198.51) .. controls (211.11,198.51) and (215.25,202.54) .. (215.25,207.51) .. controls (215.25,212.48) and (211.11,216.51) .. (206,216.51) .. controls (200.89,216.51) and (196.75,212.48) .. (196.75,207.51) -- cycle ;

\draw    (198,205.51) .. controls (-104,246.51) and (78.33,400.51) .. (162.75,420.51) ;
\draw [shift={(162.75,420.51)}, rotate = 200] [color={rgb, 255:red, 0; green, 0; blue, 0 }  ][line width=0.75]    (10.93,-3.29) .. controls (6.95,-1.4) and (3.31,-0.3) .. (0,0) .. controls (3.31,0.3) and (6.95,1.4) .. (10.93,3.29)   ;

\draw    (426,175.51) -- (426,198.51) ;
\draw [shift={(426,198.51)}, rotate = 270] [color={rgb, 255:red, 0; green, 0; blue, 0 }  ][line width=0.75]    (10.93,-3.29) .. controls (6.95,-1.4) and (3.31,-0.3) .. (0,0) .. controls (3.31,0.3) and (6.95,1.4) .. (10.93,3.29)   ;

\draw   (416.75,207.51) .. controls (416.75,202.54) and (420.89,198.51) .. (426,198.51) .. controls (431.11,198.51) and (435.25,202.54) .. (435.25,207.51) .. controls (435.25,212.48) and (431.11,216.51) .. (426,216.51) .. controls (420.89,216.51) and (416.75,212.48) .. (416.75,207.51) -- cycle ;

\draw (235.33,177.91) node [anchor=north west][inner sep=0.75pt]  [font=\tiny]  {$1$};

\draw    (215,203) -- (235.33,187.91) ;
\draw [shift={(215,203)}, rotate = 325] [color={rgb, 255:red, 0; green, 0; blue, 0 }  ][line width=0.75]    (10.93,-3.29) .. controls (6.95,-1.4) and (3.31,-0.3) .. (0,0) .. controls (3.31,0.3) and (6.95,1.4) .. (10.93,3.29)   ;

\draw (383.33,176.91) node [anchor=north west][inner sep=0.75pt]  [font=\tiny]  {$1$};

\draw    (417,203) -- (390.33,186.91) ;
\draw [shift={(417,203)}, rotate = 210] [color={rgb, 255:red, 0; green, 0; blue, 0 }  ][line width=0.75]    (10.93,-3.29) .. controls (6.95,-1.4) and (3.31,-0.3) .. (0,0) .. controls (3.31,0.3) and (6.95,1.4) .. (10.93,3.29)   ;

\draw    (538,429.51) -- (538.67,459.51) ;
\draw [shift={(538.67,459.51)}, rotate = 270] [color={rgb, 255:red, 0; green, 0; blue, 0 }  ][line width=0.75]    (10.93,-3.29) .. controls (6.95,-1.4) and (3.31,-0.3) .. (0,0) .. controls (3.31,0.3) and (6.95,1.4) .. (10.93,3.29)   ;

\draw    (346.42,335.51) -- (344.67,510.51) ;
\draw [shift={(344.67,510.51)}, rotate = 270] [color={rgb, 255:red, 0; green, 0; blue, 0 }  ][line width=0.75]    (10.93,-3.29) .. controls (6.95,-1.4) and (3.31,-0.3) .. (0,0) .. controls (3.31,0.3) and (6.95,1.4) .. (10.93,3.29)   ;

\draw    (143,261.51) .. controls (60.33,351.51) and (61.33,412.51) .. (162.42,469.51) ;
\draw [shift={(162.42,469.51)}, rotate = 215] [color={rgb, 255:red, 0; green, 0; blue, 0 }  ][line width=0.75]    (10.93,-3.29) .. controls (6.95,-1.4) and (3.31,-0.3) .. (0,0) .. controls (3.31,0.3) and (6.95,1.4) .. (10.93,3.29)   ;

\draw    (171.67,478.51) -- (171.33,509.51) ;
\draw [shift={(171.33,509.51)}, rotate = 270] [color={rgb, 255:red, 0; green, 0; blue, 0 }  ][line width=0.75]    (10.93,-3.29) .. controls (6.95,-1.4) and (3.31,-0.3) .. (0,0) .. controls (3.31,0.3) and (6.95,1.4) .. (10.93,3.29)   ;

\draw   (529.42,468.51) .. controls (529.42,463.54) and (533.56,459.51) .. (538.67,459.51) .. controls (543.78,459.51) and (547.92,463.54) .. (547.92,468.51) .. controls (547.92,473.48) and (543.78,477.51) .. (538.67,477.51) .. controls (533.56,477.51) and (529.42,473.48) .. (529.42,468.51) -- cycle ;

\draw    (506,267.51) .. controls (687,371.51) and (660,384.51) .. (547.92,468.51) ;
\draw [shift={(547.92,468.51)}, rotate = 320] [color={rgb, 255:red, 0; green, 0; blue, 0 }  ][line width=0.75]    (10.93,-3.29) .. controls (6.95,-1.4) and (3.31,-0.3) .. (0,0) .. controls (3.31,0.3) and (6.95,1.4) .. (10.93,3.29)   ;

\draw    (538.67,477.51) -- (538.33,508.51) ;
\draw [shift={(538.33,508.51)}, rotate = 270] [color={rgb, 255:red, 0; green, 0; blue, 0 }  ][line width=0.75]    (10.93,-3.29) .. controls (6.95,-1.4) and (3.31,-0.3) .. (0,0) .. controls (3.31,0.3) and (6.95,1.4) .. (10.93,3.29)   ;

\draw (184,120) node [anchor=north west][inner sep=0.75pt]  [font=\tiny]  {$\bm{q}[\mathrm{mid}] \leq \bm{k}_{x}$};
\draw (396,120) node [anchor=north west][inner sep=0.75pt]  [font=\tiny]  {$\bm{q}[\mathrm{mid}] > \bm{k}_{y}$};
\draw (275,118.09) node [anchor=north west][inner sep=0.75pt]  [font=\scriptsize] [align=left] {\texttt{bin-search}};
\draw (270,128) node [anchor=north west][inner sep=0.75pt]  [font=\tiny]  {$({\{\bm{k}_{i}\}_{i=s}^{t} ,\ \bm{q}[\mathrm{mid}]})$};

\draw (-70, 70.0) node [anchor=north west][inner sep=0.75pt]  [font=\tiny]   {$\texttt{pbs-key-values}\paren{\bm{q}[s \cdots t],\ \bm{k}[x \cdots y], n, m}$};

\draw   (-75,65.5) -- (175,65.5) -- (175,85.5) -- (-75,85.5) -- cycle ;

\draw   (-100,60.0) -- (680,60.0) -- (680,550) -- (-100,550) -- cycle ;

\draw (357,70.0) node [anchor=north west][inner sep=0.75pt] [font=\scriptsize]   {$\bm{q}_{s}$};
\draw (397,67.0) node [anchor=north west][inner sep=0.75pt]  [font=\scriptsize]  {$\bm{q}[\mathrm{mid}]$};
\draw (458,70.0) node [anchor=north west][inner sep=0.75pt]  [font=\scriptsize]  {$\bm{q}_{t}$};
\draw (375,78.0) node [anchor=north west][inner sep=0.75pt]  [font=\scriptsize]  {$.....$};
\draw (202,70.0) node [anchor=north west][inner sep=0.75pt]   [font=\scriptsize] {$\bm{k}_{x}$};
\draw (331,70.0) node [anchor=north west][inner sep=0.75pt] [font=\scriptsize]   {$\bm{k}_{y}$};
\draw (225,78.0) node [anchor=north west][inner sep=0.75pt] [font=\scriptsize]   {$.......................$};
\draw (433,78.0) node [anchor=north west][inner sep=0.75pt]  [font=\scriptsize]  {$.....$};
\draw (121,222.4) node [anchor=north west][inner sep=0.75pt]  [font=\scriptsize]  {$x$};
\draw (526,228.4) node [anchor=north west][inner sep=0.75pt]  [font=\scriptsize]  {$y+1$};
\draw (286.75,210.91) node [anchor=north west][inner sep=0.75pt]    {$\times $};
\draw (326.75,249.91) node [anchor=north west][inner sep=0.75pt]    {$\times $};
\draw (134.75,243.91) node [anchor=north west][inner sep=0.75pt]    {$\times $};
\draw (497.75,250.91) node [anchor=north west][inner sep=0.75pt]    {$\times $};
\draw (235.33,137.91) node [anchor=north west][inner sep=0.75pt]  [font=\tiny]  {$-1$};
\draw (383.33,136.91) node [anchor=north west][inner sep=0.75pt]  [font=\tiny]  {$-1$};
\draw (213,293.4) node [anchor=north west][inner sep=0.75pt]    {$+$};
\draw (432,292.4) node [anchor=north west][inner sep=0.75pt]    {$+$};
\draw (205,340.91) node [anchor=north west][inner sep=0.75pt] [font=\scriptsize] {$z_{s}$};
\draw (90.94,356.07) node [anchor=north west][inner sep=0.75pt]  [font=\tiny] [align=left] {\texttt{pbs-key-values}($\displaystyle \bm{q}[ s\cdots \mathrm{mid} -1,\ \bm{k}[ z_{s} \cdots y]$)};
\draw (350.94,355.07) node [anchor=north west][inner sep=0.75pt]  [font=\tiny] [align=left] {\texttt{pbs-key-values}($\displaystyle \bm{q}[\mathrm{mid} +1\cdots t] ,\ \bm{k}[ x\ \cdots z_{t} -1]$)};
\draw (208,137.4) node [anchor=north west][inner sep=0.75pt]  [font=\tiny]  {$\ell_{x}$};
\draw (410,137.4) node [anchor=north west][inner sep=0.75pt]  [font=\tiny]  {$g_{y}$};
\draw (306,148.4) node [anchor=north west][inner sep=0.75pt]  [font=\tiny]  {$z$};
\draw (446,340.91) node [anchor=north west][inner sep=0.75pt]  [font=\tiny]  {$z_{t} $};
\draw (339,318.4) node [anchor=north west][inner sep=0.75pt]    {$+$};
\draw (163.75,412.91) node [anchor=north west][inner sep=0.75pt]    {$\times $};
\draw (529.75,412.91) node [anchor=north west][inner sep=0.75pt]    {$\times $};
\draw (172.25,489.91) node [anchor=north west][inner sep=0.75pt]    {$\dotsc $};
\draw (521,522.4) node [anchor=north west][inner sep=0.75pt]    {$...........$};
\draw (198,159.4) node [anchor=north west][inner sep=0.75pt]    {$\times$};

\draw (198,199.4) node [anchor=north west][inner sep=0.75pt]    {$+$};

\draw (417,159.4) node [anchor=north west][inner sep=0.75pt]    {$\times$};

\draw (417,199.4) node [anchor=north west][inner sep=0.75pt]    {$+$};

\draw (164,462.4) node [anchor=north west][inner sep=0.75pt]    {$+$};
\draw (530,460.4) node [anchor=north west][inner sep=0.75pt]    {$+$};
\draw (509.25,488.91) node [anchor=north west][inner sep=0.75pt]    {$\dotsc $};
\draw (323,515.4) node [anchor=north west][inner sep=0.75pt]  [font=\scriptsize]  {$C[\mathrm{mid}]$};
\draw (455,515.4) node [anchor=north west][inner sep=0.75pt]  [font=\scriptsize]  {$C[\mathrm{mid} -1]$};
\draw (120,515.4) node [anchor=north west][inner sep=0.75pt]  [font=\scriptsize]  {$C[ s]$};
\draw (205,515.4) node [anchor=north west][inner sep=0.75pt]  [font=\scriptsize]  {$C[\mathrm{mid} -1]$};
\draw (148,522.4) node [anchor=north west][inner sep=0.75pt]    {$...........$};
\draw (575,515.06) node [anchor=north west][inner sep=0.75pt]  [font=\scriptsize]  {$C[ t]$};
\end{tikzpicture}
\caption{$\texttt{pbs-key-values}\paren{\bm{q}[s \cdots t],\ \bm{k}[x \cdots y], n, m}$ as a circuit with recursive calls and subprocedures as ``black boxes.''}
\label{fig: pbs-circuit}
\end{figure}
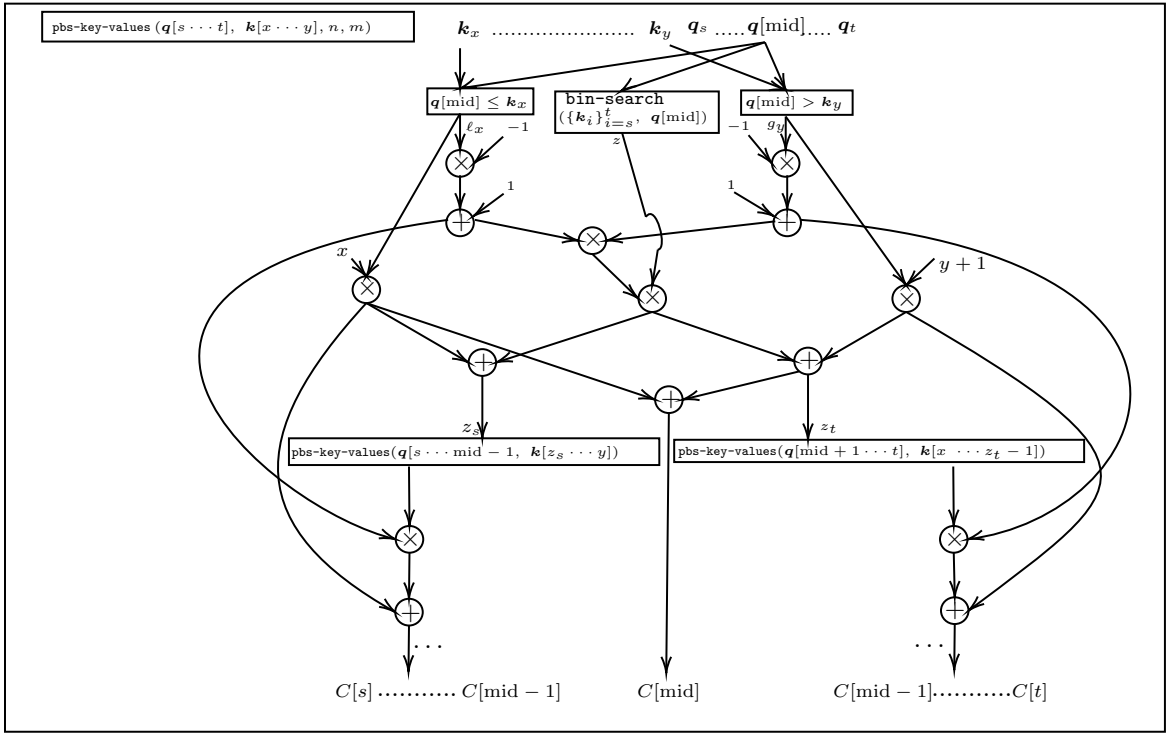
\begin{lemma}[\cite{cormen2022introduction}]
    \label{lem: comparators}
    For binary strings $\bm{x}, \bm{y} \in \Sigma^{d}$ of length $d$, there exists a comparison network of size $\calO(d)$, width $\calO(d)$, and depth $\calO(\log{d})$.
\end{lemma}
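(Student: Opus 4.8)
The plan is to realize the comparator of \cref{def: comparators} as the composition of three circuit stages: a constant-depth bitwise preprocessing layer, a logarithmic-depth reduction that resolves the order of $\bm{x}$ and $\bm{y}$ viewed as integers, and a constant-depth selection layer that emits $\min(\bm{x},\bm{y})$ and $\max(\bm{x},\bm{y})$. In the first stage, for each position $i\in\{0,\ldots,d-1\}$ I would compute the three mutually exclusive indicators $g_i := x_i(1-y_i)$, $\ell_i := (1-x_i)y_i$, and $e_i := 1 - (x_i-y_i)^2 = 1 - g_i - \ell_i$, recording that bit $i$ of $\bm{x}$ exceeds, is less than, or equals that of $\bm{y}$; this uses $\calO(d)$ gates, width $\calO(d)$, and depth $\calO(1)$. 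Packaging $s_i := g_i - \ell_i \in \{-1,0,1\}$, and recalling that $\bm{x}[0]$ is the least significant bit so that index $d-1$ dominates, the comparison of $\bm{x}$ and $\bm{y}$ is decided by the value $s_i$ at the most-significant index with $s_i \neq 0$.

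The crux is to carry out this ``first disagreeing bit wins'' scan in $\calO(\log d)$ depth. I would use the binary operator $f(a,b) := a + (1-a^2)\,b$ on $\{-1,0,1\}$, which returns $a$ whenever $a$ is already decided ($a\neq 0$) and otherwise passes through $b$; associativity is an immediate case check, and since $f$ maps $\{-1,0,1\}^2$ into $\{-1,0,1\}$ we have $1-a^2\in\{0,1\}$ throughout. Folding the leaves $s_0,\ldots,s_{d-1}$ with a balanced binary tree of $f$-gates, oriented so that the more-significant argument takes precedence, produces at the root a value $r\in\{-1,0,1\}$ equal to $+1$ iff $\bm{x}>\bm{y}$, to $-1$ iff $\bm{x}<\bm{y}$, and to $0$ iff $\bm{x}=\bm{y}$. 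Each $f$-gate is a constant-size gadget, the tree has $d-1$ of them, and its levels number $\lceil\log_2 d\rceil$, so this stage has size $\calO(d)$, width $\calO(d)$, and depth $\calO(\log d)$ (routing the input bits down to the selection layer keeps the width at $\calO(d)$ at every cut); the arithmetic degree of the resulting circuit grows to $\calO(d)$, but degree plays no role in \cref{thm: gen-ac} and is thus immaterial downstream. From $r$ I would then read off, in constant depth, the indicators actually used in \cref{fig: pbs-circuit}, e.g. $\mathbbm{1}\{\bm{x}>\bm{y}\} = \tfrac12 r(r+1)$, $\mathbbm{1}\{\bm{x}\le\bm{y}\} = 1 - \tfrac12 r(r+1)$, and likewise for $<$ and $\ge$.

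Finally, with $g := \mathbbm{1}\{\bm{x}\ge\bm{y}\}$ in hand, the comparator outputs are given coordinatewise by $\max(\bm{x},\bm{y})[i] = g\,x_i + (1-g)\,y_i$ and $\min(\bm{x},\bm{y})[i] = (1-g)\,x_i + g\,y_i$, which costs $\calO(d)$ gates, width $\calO(d)$, and depth $\calO(1)$ once $g$ is available. Stacking the three stages yields a comparison network of size $\calO(d)$, width $\calO(d)$, and depth $\calO(\log d)$, proving the lemma. I expect the only non-routine point to be the middle stage: checking that the scan for the leading disagreeing bit is an associative fold and that a balanced tree therefore compresses it to logarithmic depth with linear size and width — the standard parallel-prefix / carry-lookahead phenomenon — whereas the bitwise and selection layers are elementary constant-depth arithmetic.
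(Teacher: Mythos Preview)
Your construction is correct: the bitwise preprocessing, the associative fold with $f(a,b)=a+(1-a^2)b$ to recover the sign of the leading disagreement, and the constant-depth selection together give a circuit of size and width $\calO(d)$ and depth $\calO(\log d)$. The associativity check you sketch is the only point requiring care, and it goes through (case-split on $a=0$ versus $a\neq 0$).

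Note, however, that the paper does not supply a proof of this lemma at all: it is stated as a citation to \cite{cormen2022introduction} and used as a black box in bounding the complexity of the \texttt{pbs-key-values} circuit. So there is no ``paper's own proof'' to compare against; your argument simply fills in what the paper defers to the reference. Your route via an arithmetic associative scan is the natural one given the paper's arithmetic-circuit setting, and it is essentially the carry-lookahead idea adapted from Boolean to $\{-1,0,1\}$-valued signals.

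One minor inaccuracy worth fixing: the degree of the circuit after $\lceil\log_2 d\rceil$ levels of $f$-gates is not $\calO(d)$ but roughly $d^{\log_2 3}$, since each application of $f$ more than doubles the degree of its first argument. As you already observe, \cref{thm: gen-ac} depends only on size, depth, and width, so this does not affect anything downstream.
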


We use this lemma to compute the complexity of the arhmetic circuit from \cref{fig: pbs-circuit}.
\begin{proposition}
    \label{prop: pbs-circuit}
    There exists an $(\calO((n+m)d), \calO(nd\cdot(\log{m}+\log{n})), \calO(\log{n}(\log{m}+\log{n}\log{d})), \calO(nd))$-arithmetic circuit\footnote{Recall that a $(n,s,\Delta, w)$-arithmetic circuit is an $n$-variate circuit with size $s$, depth at most $\Delta$, and width $w$.} equivalent to {\tt pbs-key-values} (\cref{algo: pbs-key-values}) with inputs $\bm{q}$ and $\bm{k}$ of lengths $n$ and $m$ with $d$-bit entries. 
\end{proposition}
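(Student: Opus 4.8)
The plan is to analyze the circuit depicted schematically in \cref{fig: pbs-circuit} by tracking how the four complexity parameters --- number of input gates, size, depth, and width --- evolve across the recursion, treating the comparators, binary searches, and recursive calls as black boxes whose costs are known. First I would account for the ``local'' gadgets at a single level of recursion on inputs of lengths $n' = t-s+1$ and $m' = y-x+1$: the two comparisons $\bm{q}[\mathrm{mid}] \le \bm{k}_x$ and $\bm{q}[\mathrm{mid}] > \bm{k}_y$ cost $\calO(d)$ size, $\calO(d)$ width, and $\calO(\log d)$ depth each by \cref{lem: comparators}; the single \texttt{bin-search} for $z$ is $\calO(\log m')$ iterations of an $\calO(d)$-size comparator, so $\calO(d \log m')$ size and $\calO(\log m' \log d)$ depth; and the arithmetic combining $z_s, z_t$ and writing the $C[i]$'s from \eqref{eq: comparison-gates} onward is $\calO(n')$ many constant-size multiply/add gates of $\calO(\log m')$ bit-width, contributing $\calO(n' \log m')$ size and $\calO(\log \log m')$ or so depth. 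So one level contributes size $\calO(n' \log m' + d \log m')$, depth $\calO(\log m' \log d)$, width $\calO(n' + d)$.

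Next I would set up the recursion. The recursive calls partition the query array (the $n'$ side) while the key array (the $m'$ side) only shrinks, and crucially, all recursive calls at a given ``binary-search depth'' are on disjoint query sub-arrays whose total length is at most $n'$; the recursion depth is $\calO(\log n')$. Summing the per-level size contributions over all $\calO(\log n')$ levels: each level sums to $\calO(n' \log m' + (\text{number of nodes at that level}) \cdot d \log m')$; the number of nodes can be as large as $\calO(n')$ at a level (many tiny subproblems), so each level is $\calO(n' d \log m')$ and over $\calO(\log n')$ levels we get total size $\calO(n' d \log m' \log n')$ --- but one should be more careful and note that when subproblems are tiny ($n'=\calO(1)$) the $d \log m'$ cost applies per node yet $m'$ is also shrinking geometrically along root-to-leaf paths, so a telescoping/charging argument gives size $\calO(nd(\log m + \log n))$ as stated. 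For width, since all recursive calls at one recursion level run in parallel on disjoint query pieces, the width at each level is $\calO(n' + d) = \calO(nd)$ (absorbing $d$ per processor), and width does not compound, so width $= \calO(nd)$. For depth, recursion depth is $\calO(\log n')$ and each level costs $\calO(\log m' \log d)$ for its internal binary searches plus $\calO(\log m')$ for the comparisons, giving $\calO(\log n \cdot (\log m + \log n \log d))$ --- here the $\log n \log d$ term comes from the fact that the \texttt{bin-search} depth can be $\calO(\log m \log d)$ but on the deeper recursive branches $m' $ can be as small as constant while $d$ stays fixed, so it is safest to bound it by $\calO(\log d)$ extra per level times $\calO(\log n)$ levels, matching the claimed $\calO(\log n (\log m + \log n \log d))$. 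Finally the number of input gates is just the total bits fed in: $n'$ queries and $m'$ keys of $d$ bits each, i.e. $\calO((n+m)d)$.

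The main obstacle I anticipate is the size bound: a naive level-by-level sum overcounts because it multiplies the worst-case node count per level by the worst-case per-node cost, whereas the per-node cost $\calO(d\log m')$ has $m'$ shrinking geometrically along any root-to-leaf path in the key dimension. The clean way to handle this is a charging argument: charge the $\calO(n' \log m')$ query-side work of a node to its query elements (each query element is charged $\calO(\log m)$ total across the $\calO(\log m + \log n)$... actually $\calO(\log n)$ levels it survives, but the $m'$ it sees telescopes), and separately bound the key-side binary-search overhead $\sum_{\text{nodes}} \calO(d \log m')$ by observing the key array is itself recursively bisected so this is a standard $\calO(m d \log m)$-type sum, all told $\calO(nd(\log m + \log n))$. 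I would also need to confirm the arithmetic-circuit modeling is legitimate: comparisons and the control flow (which branch is ``dead'' when $z_s$ or $z_t$ is $0$) are expressed purely via the multiply-by-indicator trick already set up in \eqref{eq: comparison-gates} and the $z_s, z_t$ formulas, so the circuit is genuinely oblivious (its wiring does not depend on the inputs) --- this is the subtle point that makes the ``arithmetic circuit'' claim sound, and I would spell it out explicitly as the justification that \cref{fig: pbs-circuit} is a bona fide circuit rather than a data-dependent computation.
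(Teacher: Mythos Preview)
Your plan is broadly sound but takes a harder road than the paper. The paper's proof is short: it identifies the circuit size with the work complexity of \cref{algo: pbs-key-values} as a PRAM algorithm (known to be $\calO(n\log m)$ from \citep{akl1990parallel}), scaled by $d$ because every unit-cost comparison becomes an $\calO(d)$-size comparator via \cref{lem: comparators}; the binary-search subroutine, invoked $\calO(\log n)$ times along the recursion, then contributes an additive $\calO(nd\log n)$. Depth is read off the PRAM time $\calO(\log n\log m)$ plus the $\calO(\log d)$ depth overhead per comparator on the critical path, and width is inherited from the $n$-processor bound. No charging argument is needed --- the PRAM work bound is the black box that absorbs all of your level-by-level accounting.

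Your charging sketch, as written, has a gap: you assume $m'$ shrinks geometrically along every root-to-leaf path, but in the single-recursive-call branches of \cref{algo: pbs-key-values} (lines~\ref{step: rec-1} and~\ref{step: rec-2}) the key subarray is passed through unchanged, so $m'$ need not shrink along such a path and the telescoping fails. You can repair the bound by instead counting nodes in the recursion tree directly --- there are at most $\calO(n)$ of them since the query array halves at every step --- and bounding each node's binary-search cost by $\calO(d\log m)$; but at that point you have essentially rederived the PRAM work bound, which is exactly the paper's starting point. The one genuine addition in your write-up is the explicit verification that the control flow is rendered oblivious by the multiply-by-indicator construction of $z_s,z_t$ and the $C[i]$ formulas, so that \cref{fig: pbs-circuit} is a bona fide data-independent arithmetic circuit; the paper leaves this implicit in the figure, and it is worth retaining.
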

\begin{proof}
The size of the circuit for \texttt{pbs-key-values} should equal the work-complexity of \cref{algo: pbs-key-values} but we also need to account for the comparison gates in \eqref{eq: comparison-gates}. Further, the circuit for binary search also has a size of $\calO(d \cdot n)$ instead of $\calO(d \cdot \log{n})$ work as in \cref{algo: pbs-key-values}. Using \cref{lem: comparators}, along with the fact that the comparison gates and the binary search are used at most $\calO(\log{n})$ times, we deduce that we are adding $\calO(nd\log{n})$ size to the circuit in addition to the work complexity of the parallel algorithm. Thus, the overall size of the arithmetic circuit for \texttt{pbs-key-values} is $\calO(dn\log{m}+nd\log{n})$. Further, the depth of the circuit here is determined by the runtime of \cref{algo: pbs-key-values} along with the depth of the comparison gates and binary search $\calO(\log{n}\log{d})$, and finally, at most $n$ processors are used for the parallel algorithm in \cref{algo: pbs-key-values}, which yields the width of the circuit.
\end{proof}

\paragraph{Circuit for \texttt{Parallel-MQAR}.}
Now, we will call the circuit for \texttt{Parallel-MQAR} with the same name while the input gates contain the inputs of \cref{algo: pram-gen-ar}. Indeed, we can directly ``translate" \cref{algo: pram-gen-ar} to an arithmetic circuit as the values for $I_{k}^{\bm{x}}$ and $I_{k}^{\bm{x}}$ for each $\bm{x}$ and $k$ are predetermined from $N$. Thus, we start by placing the corresponding sorting networks which feeds into the $\texttt{pbs-key-values}\paren{\bm{q}[s \cdots t],\ \bm{k}[x \cdots y], n, m}$ circuit for \cref{algo: pbs-key-values} in \cref{fig: pbs-circuit} so that the output values from the calls to \texttt{pbs-key-values} result in the checks as in line \ref{step: check-m} of \cref{algo: pram-gen-ar}. That is, we get outputs $C_k[J_{\mathrm{permuted}}^{k\bm{x}}(\mathrm{dec}(i))]$ from each call to \texttt{pbs-key-values}. We can then use a comparison gate to check if this value equals $2^k$, and if not, we have found a match $C_k[J_{\mathrm{permuted}}^{k\bm{x}}(\mathrm{dec}(i))]$ for the query $\bm{q}_i$ which results in the output of the associated value $\bm{v}_{C_k[J_{\mathrm{permuted}}^{k\bm{x}}(\mathrm{dec}(i))]+1}$, exactly as in \cref{algo: pram-gen-ar}. That is, we first define the following variable as the output of the comparison gate:
\begin{equation}
    \label{eq: comparison-gate-neq}
    \bm{c}_{\mathrm{dec}(i)}^k := \mathbbm{1}\{C_k[J_{\mathrm{permuted}}^{k\bm{x}}(\mathrm{dec}(i))] \neq 2^k\}.
\end{equation}
Here, as $C_k[J_{\mathrm{permuted}}^{k\bm{x}}(\mathrm{dec}(i))] \neq 2^k$ implies that $I_{\mathrm{permuted}}^{k\bm{x}}\paren{C_k[J_{\mathrm{permuted}}^{k\bm{x}}(\mathrm{dec}(i))]}$ equals the index of the matching key $\bm{k}_j$ corresponding to the query $\bm{q}_i$, the $i$th output is then simply given by $\bm{c}_{\mathrm{dec}(i)}^k \cdot I_{\mathrm{permuted}}^{k\bm{x}}\paren{C_k[J_{\mathrm{permuted}}^{k\bm{x}}(\mathrm{dec}(i))]}$, where the $0$ output implies that there does not exist a matching key.

Here, we also briefly introduce the the sorting networks that we use to sort the keys and queries:
\begin{definition}[Informal]
\label{def: sorting-networks}
{\em Sorting networks} are circuits with gates and wires where the gates of the circuit are comparators (\cref{def: comparators}) connecting two wires. Each such circuit can perform sorting on a fixed number of values.  
\end{definition}

We can then show the circuit schematically as in \cref{fig: pram-gen-ar-circuit}.
\begin{figure}[!h]
    \centering
\begin{tikzpicture}[x=0.6pt,y=0.7pt,yscale=-1,xscale=1]

\draw   (22.33,134.58) .. controls (22.33,130.67) and (25.5,127.51) .. (29.4,127.51) -- (129.93,127.51) .. controls (133.84,127.51) and (137,130.67) .. (137,134.58) -- (137,155.78) .. controls (137,159.68) and (133.84,162.84) .. (129.93,162.84) -- (29.4,162.84) .. controls (25.5,162.84) and (22.33,159.68) .. (22.33,155.78) -- cycle ;

\draw   (143.96,136.04) .. controls (143.96,132.62) and (146.74,129.84) .. (150.16,129.84) -- (240.8,129.84) .. controls (244.22,129.84) and (247,132.62) .. (247,136.04) -- (247,154.64) .. controls (247,158.07) and (244.22,160.84) .. (240.8,160.84) -- (150.16,160.84) .. controls (146.74,160.84) and (143.96,158.07) .. (143.96,154.64) -- cycle ;
\draw   (20,131.18) .. controls (20,126.57) and (23.73,122.84) .. (28.33,122.84) -- (276.67,122.84) .. controls (281.27,122.84) and (285,126.57) .. (285,131.18) -- (285,156.18) .. controls (285,160.78) and (281.27,164.51) .. (276.67,164.51) -- (28.33,164.51) .. controls (23.73,164.51) and (20,160.78) .. (20,156.18) -- cycle ;
\draw   (410.33,137.98) .. controls (410.33,134.04) and (413.53,130.84) .. (417.47,130.84) -- (505.87,130.84) .. controls (509.81,130.84) and (513,134.04) .. (513,137.98) -- (513,159.38) .. controls (513,163.32) and (509.81,166.51) .. (505.87,166.51) -- (417.47,166.51) .. controls (413.53,166.51) and (410.33,163.32) .. (410.33,159.38) -- cycle ;
\draw   (520.33,137.78) .. controls (520.33,133.95) and (523.44,130.84) .. (527.27,130.84) -- (613.07,130.84) .. controls (616.9,130.84) and (620,133.95) .. (620,137.78) -- (620,158.58) .. controls (620,162.41) and (616.9,165.51) .. (613.07,165.51) -- (527.27,165.51) .. controls (523.44,165.51) and (520.33,162.41) .. (520.33,158.58) -- cycle ;
\draw   (401,132.84) .. controls (401,127.87) and (405.03,123.84) .. (410,123.84) -- (669.33,123.84) .. controls (674.3,123.84) and (678.33,127.87) .. (678.33,132.84) -- (678.33,159.84) .. controls (678.33,164.81) and (674.3,168.84) .. (669.33,168.84) -- (410,168.84) .. controls (405.03,168.84) and (401,164.81) .. (401,159.84) -- cycle ;
\draw    (313.33,54.84) -- (111.23,122.21) ;
\draw [shift={(109.33,122.84)}, rotate = 341.57] [color={rgb, 255:red, 0; green, 0; blue, 0 }  ][line width=0.75]    (10.93,-3.29) .. controls (6.95,-1.4) and (3.31,-0.3) .. (0,0) .. controls (3.31,0.3) and (6.95,1.4) .. (10.93,3.29)   ;
\draw    (371.33,55.84) -- (577.42,119.26) ;
\draw [shift={(579.33,119.84)}, rotate = 197.1] [color={rgb, 255:red, 0; green, 0; blue, 0 }  ][line width=0.75]    (10.93,-3.29) .. controls (6.95,-1.4) and (3.31,-0.3) .. (0,0) .. controls (3.31,0.3) and (6.95,1.4) .. (10.93,3.29)   ;
\draw    (50,163) -- (62.62,227.55) ;
\draw [shift={(63,229.51)}, rotate = 258.94] [color={rgb, 255:red, 0; green, 0; blue, 0 }  ][line width=0.75]    (10.93,-3.29) .. controls (6.95,-1.4) and (3.31,-0.3) .. (0,0) .. controls (3.31,0.3) and (6.95,1.4) .. (10.93,3.29)   ;
\draw    (111,164) -- (64.18,227.9) ;
\draw [shift={(63,229.51)}, rotate = 306.23] [color={rgb, 255:red, 0; green, 0; blue, 0 }  ][line width=0.75]    (10.93,-3.29) .. controls (6.95,-1.4) and (3.31,-0.3) .. (0,0) .. controls (3.31,0.3) and (6.95,1.4) .. (10.93,3.29)   ;
\draw   (12.7,235.27) .. controls (12.7,231.46) and (15.79,228.37) .. (19.6,228.37) -- (234.14,228.37) .. controls (237.95,228.37) and (241.04,231.46) .. (241.04,235.27) -- (241.04,255.95) .. controls (241.04,259.76) and (237.95,262.84) .. (234.14,262.84) -- (19.6,262.84) .. controls (15.79,262.84) and (12.7,259.76) .. (12.7,255.95) -- cycle ;
\draw   (5,231.04) .. controls (5,225.96) and (9.12,221.84) .. (14.2,221.84) -- (276.8,221.84) .. controls (281.88,221.84) and (286,225.96) .. (286,231.04) -- (286,258.64) .. controls (286,263.72) and (281.88,267.84) .. (276.8,267.84) -- (14.2,267.84) .. controls (9.12,267.84) and (5,263.72) .. (5,258.64) -- cycle ;
\draw   (405,236.16) .. controls (405,231.94) and (408.42,228.51) .. (412.65,228.51) -- (623.35,228.51) .. controls (627.57,228.51) and (631,231.94) .. (631,236.16) -- (631,259.12) .. controls (631,263.35) and (627.57,266.77) .. (623.35,266.77) -- (412.65,266.77) .. controls (408.42,266.77) and (405,263.35) .. (405,259.12) -- cycle ;
\draw   (402,232.96) .. controls (402,228.07) and (405.97,224.11) .. (410.86,224.11) -- (681.14,224.11) .. controls (686.03,224.11) and (690,228.07) .. (690,232.96) -- (690,259.53) .. controls (690,264.42) and (686.03,268.39) .. (681.14,268.39) -- (410.86,268.39) .. controls (405.97,268.39) and (402,264.42) .. (402,259.53) -- cycle ;
\draw    (480.27,168.99) -- (496.46,226.59) ;
\draw [shift={(497,228.51)}, rotate = 254.3] [color={rgb, 255:red, 0; green, 0; blue, 0 }  ][line width=0.75]    (10.93,-3.29) .. controls (6.95,-1.4) and (3.31,-0.3) .. (0,0) .. controls (3.31,0.3) and (6.95,1.4) .. (10.93,3.29)   ;
\draw    (557.67,168.51) -- (498.42,227.1) ;
\draw [shift={(497,228.51)}, rotate = 315.32] [color={rgb, 255:red, 0; green, 0; blue, 0 }  ][line width=0.75]    (10.93,-3.29) .. controls (6.95,-1.4) and (3.31,-0.3) .. (0,0) .. controls (3.31,0.3) and (6.95,1.4) .. (10.93,3.29)   ;
\draw    (521.6,267.74) -- (522.64,294.51) ;
\draw [shift={(522.72,296.51)}, rotate = 267.77] [color={rgb, 255:red, 0; green, 0; blue, 0 }  ][line width=0.75]    (10.93,-3.29) .. controls (6.95,-1.4) and (3.31,-0.3) .. (0,0) .. controls (3.31,0.3) and (6.95,1.4) .. (10.93,3.29)   ;
\draw   (143.74,329.38) .. controls (143.74,326.15) and (146.36,323.53) .. (149.59,323.53) -- (294.87,323.53) .. controls (298.1,323.53) and (300.71,326.15) .. (300.71,329.38) -- (300.71,346.92) .. controls (300.71,350.15) and (298.1,352.76) .. (294.87,352.76) -- (149.59,352.76) .. controls (146.36,352.76) and (143.74,350.15) .. (143.74,346.92) -- cycle ;
\draw   (135,325.5) .. controls (135,320.81) and (138.81,317) .. (143.5,317) -- (330.5,317) .. controls (335.19,317) and (339,320.81) .. (339,325.5) -- (339,351.01) .. controls (339,355.7) and (335.19,359.51) .. (330.5,359.51) -- (143.5,359.51) .. controls (138.81,359.51) and (135,355.7) .. (135,351.01) -- cycle ;
\draw    (199,265) -- (199.94,325.51) ;
\draw [shift={(200,327.51)}, rotate = 268.24] [color={rgb, 255:red, 0; green, 0; blue, 0 }  ][line width=0.75]    (10.93,-3.29) .. controls (6.95,-1.4) and (3.31,-0.3) .. (0,0) .. controls (3.31,0.3) and (6.95,1.4) .. (10.93,3.29)   ;
\draw    (166,353) -- (166,419.51) ;
\draw [shift={(166,421.51)}, rotate = 270] [color={rgb, 255:red, 0; green, 0; blue, 0 }  ][line width=0.75]    (10.93,-3.29) .. controls (6.95,-1.4) and (3.31,-0.3) .. (0,0) .. controls (3.31,0.3) and (6.95,1.4) .. (10.93,3.29)   ;
\draw    (199,265) .. controls (13,300.51) and (-28.17,396.68) .. (155.83,431.68) ;
\draw [shift={(155.83,431.68)}, rotate = 190.77] [color={rgb, 255:red, 0; green, 0; blue, 0 }  ][line width=0.75]    (10.93,-3.29) .. controls (6.95,-1.4) and (3.31,-0.3) .. (0,0) .. controls (3.31,0.3) and (6.95,1.4) .. (10.93,3.29)   ;
\draw   (155.83,431.68) .. controls (155.83,426.06) and (160.39,421.51) .. (166,421.51) .. controls (171.61,421.51) and (176.17,426.06) .. (176.17,431.68) .. controls (176.17,437.29) and (171.61,441.84) .. (166,441.84) .. controls (160.39,441.84) and (155.83,437.29) .. (155.83,431.68) -- cycle ;
\draw    (166,441.84) -- (165.37,478.84) ;
\draw [shift={(165.33,480.84)}, rotate = 270.98] [color={rgb, 255:red, 0; green, 0; blue, 0 }  ][line width=0.75]    (10.93,-3.29) .. controls (6.95,-1.4) and (3.31,-0.3) .. (0,0) .. controls (3.31,0.3) and (6.95,1.4) .. (10.93,3.29)   ;
\draw   (1.33,7) -- (699.33,7) -- (699.33,517.51) -- (1.33,517.51) -- cycle ;
\draw   (9,14) -- (215.33,14) -- (215.33,30.51) -- (9,30.51) -- cycle ;

\draw (218,40.4) node [anchor=north west][inner sep=0.75pt]  [font = \scriptsize]  {$( \bm{k}_{1} ,\ \bm{v}_{1} ,\ \bm{q}_{1}) ,\ \dotsc \dotsc \dotsc \dotsc ,\ ( \bm{k}_{\frac{N}{3}-1} ,\ \bm{v}_{\frac{N}{3}-1} ,\ \bm{q}_{\frac{N}{3}-1})$};
\draw (260,158.4) node [anchor=north west][inner sep=0.75pt]    {$\dotsc $};
\draw (250,126.4) node [anchor=north west][inner sep=0.75pt]  [font=\tiny]  {$k=0$};
\draw (642.63,163.4) node [anchor=north west][inner sep=0.75pt]    {$\dotsc $};
\draw (623.63,128.4) node [anchor=north west][inner sep=0.75pt]  [font=\tiny]  {$k=\log \frac{N}{3}$};
\draw (643.63,138.4) node [anchor=north west][inner sep=0.75pt]  [font=\tiny]  {$-1$};
\draw (300,146.4) node [anchor=north west][inner sep=0.75pt]    {$\dotsc \dotsc \dotsc \dotsc $};
\draw (453.72,134.88) node [anchor=north west][inner sep=0.75pt]  [font=\tiny] [align=left] {\tt sort};
\draw (419,145.09) node [anchor=north west][inner sep=0.75pt]  [font=\tiny]  {$(\{\bm{q}_{\mathrm{dec}( j)}\}_{j\in J_{k}^{x}})$};
\draw (556.35,132.88) node [anchor=north west][inner sep=0.75pt]  [font=\tiny] [align=left] {\tt sort};
\draw (526.33,148.64) node [anchor=north west][inner sep=0.75pt]  [font=\tiny]  {$(\{\bm{k}_{\mathrm{dec}( i)}\}_{i\in I_{k}^{x}})$};
\draw (250.37,226.4) node [anchor=north west][inner sep=0.75pt]  [font=\tiny]  {$k=0$};
\draw (185.72,135.88) node [anchor=north west][inner sep=0.75pt]  [font=\tiny] [align=left] {\tt sort};
\draw (150,145.09) node [anchor=north west][inner sep=0.75pt]  [font=\tiny]  {$(\{\bm{k}_{\mathrm{dec}( i)}\}_{i\in I_{0}^{x}})$};
\draw (210,288.4) node [anchor=north west][inner sep=0.75pt]    {$\dotsc $};
\draw (178,273.4) node [anchor=north west][inner sep=0.75pt]  [font=\tiny]  {$C_{0}$};
\draw (143.46,334.75) node [anchor=north west][inner sep=0.75pt]  [font=\scriptsize]  {$C_{0}[J_{\mathrm{permuted}}^{k\bm{x}}(\mathrm{dec}( i))] \neq 2^{0}$};

\draw (170,363.4) node [anchor=north west][inner sep=0.75pt]  [font=\tiny]  {$\bm{c}^{0}_{\mathrm{dec}( i)}$};

\draw (303,325.51) node [anchor=north west][inner sep=0.75pt]  [font=\tiny]  {$i \in I^{\bm{x}}_0$};

\draw (350,335.51) node [anchor=north west][inner sep=0.75pt]  [font=\tiny]  {$k \in \{0, \ldots, \log{\frac{N}{3}}-1\}$};

\draw (350,350.51) node [anchor=north west][inner sep=0.75pt]    {$\dotsc \dotsc \dotsc \dotsc $};

\draw (157,425.74) node [anchor=north west][inner sep=0.75pt]    {$\times $};
\draw (181,479.4) node [anchor=north west][inner sep=0.75pt]    {$\dotsc \dotsc \dotsc \dotsc $};
\draw (60.79,389.64) node [anchor=north west][inner sep=0.75pt]    {$\dotsc $};
\draw (37,360.4) node [anchor=north west][inner sep=0.75pt]  [font=\tiny]  {$I_{\mathrm{permuted}}^{k\bm{x}}\paren{\bm{c}^{0}_{\mathrm{dec}( i)}}$};
\draw (161,486.74) node [anchor=north west][inner sep=0.75pt]  [font=\tiny]  {$\bm{v}_{j}$};
\draw (185,419.4) node [anchor=north west][inner sep=0.75pt]    {$\dotsc \dotsc \dotsc \dotsc $};
\draw (125,16.4) node [anchor=north west][inner sep=0.75pt]  [font=\tiny]  {$(\hyenaInput[ 0\cdots N-1])$};
\draw (11,17) node [anchor=north west][inner sep=0.75pt]  [font=\tiny] [align=left] {\tt Parallel-General-AR};
\draw (67.82,135.53) node [anchor=north west][inner sep=0.75pt]  [font=\tiny] [align=left] {\tt sort};
\draw (33.79,144.6) node [anchor=north west][inner sep=0.75pt]  [font=\tiny]  {$(\{\bm{q}_{\mathrm{dec}( j)}\}_{j\in J_{0}^{x}})$};
\draw (304,262.4) node [anchor=north west][inner sep=0.75pt]    {$\dotsc \dotsc \dotsc \dotsc $};
\draw (659.86,260.66) node [anchor=north west][inner sep=0.75pt]    {$\dotsc $};
\draw (633.51,227.46) node [anchor=north west][inner sep=0.75pt]  [font=\tiny]  {$k=\log{\frac{N}{3}}$};
\draw (655.51,237.46) node [anchor=north west][inner sep=0.75pt]  [font=\tiny]  {$-1$};
\draw (481.85,232.75) node [anchor=north west][inner sep=0.75pt]  [font=\tiny] [align=left] {\tt pbs-key-values};
\draw (405.97,247.65) node [anchor=north west][inner sep=0.75pt]  [font=\tiny]  {$(\mathrm{\tt sort}\{\bm{q}_{\mathrm{dec}( j)}\}_{j\in J_{k}^{x}} ,\mathrm{\tt sort}\{\bm{k}_{\mathrm{dec}( i)}\}_{i\in I_{k}^{x}} \ )$};
\draw (82.61,233.05) node [anchor=north west][inner sep=0.75pt]  [font=\tiny] [align=left] {\tt pbs-key-values};
\draw (20.16,246.73) node [anchor=north west][inner sep=0.75pt]  [font=\tiny]  {$(\mathrm{\tt sort}\{\bm{q}_{\mathrm{dec}( j)}\}_{j\in J_{k}^{x}} ,\mathrm{\tt sort}\{\bm{k}_{\mathrm{dec}( i)}\}_{i\in I_{k}^{x}} \ )$};
\end{tikzpicture}
    \caption{\texttt{Parallel-MQAR}$(\hyenaInput[0 \cdots N-1])$ as a circuit that includes sorting networks and the circuit for \texttt{pbs-key-values} as subroutines.}
    \label{fig: pram-gen-ar-circuit}
\end{figure}

We now dilineate the complexity of the circuit, starting with the complexity of the sorting networks.
\begin{lemma}[\cite{ajtai19830}]
    \label{lem: sorting-networks}
    Let $A$ be an array with $d$-bit entries of size $n$. Then, one can implement a sorting network to sort the array $A$ with size $\calO(d \cdot n\log{n})$ and depth $\calO(\log{d}\log{n})$.
\end{lemma}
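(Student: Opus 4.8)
The plan is to prove \cref{lem: sorting-networks} by invoking the $\calO(n\log n)$-size, $\calO(\log n)$-depth sorting network of \citep{ajtai19830} and then replacing each of its comparators by the $d$-bit comparison subcircuit guaranteed by \cref{lem: comparators}. First I would recall the AKS construction: for every $n$ there is a comparison network on $n$ wires that sorts any input, uses $\calO(n\log n)$ comparator gates, and has comparator-depth $\calO(\log n)$, i.e. the comparators can be partitioned into $\calO(\log n)$ layers, each layer consisting of comparators acting on pairwise-disjoint wires. Next I would note that a single comparator -- a gate taking two $d$-bit inputs $\bm{x},\bm{y}$ and returning the pair $(\min(\bm{x},\bm{y}),\max(\bm{x},\bm{y}))$ in the sense of \cref{def: comparators} -- is realized, by \cref{lem: comparators}, as a circuit of size $\calO(d)$, width $\calO(d)$, and depth $\calO(\log d)$ over the (boolean/arithmetic) gates used throughout: one computes the comparison bit $b = \mathbbm{1}\{\bm{x}\le\bm{y}\}$ bit-by-bit in depth $\calO(\log d)$ and then outputs $b\cdot\bm{x} + (1-b)\cdot\bm{y}$ and $(1-b)\cdot\bm{x} + b\cdot\bm{y}$.

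Then I would perform the substitution: in the AKS network, wherever a comparator connects two wires, splice in the corresponding comparison subcircuit and route its two $d$-bit outputs to exactly the wires the original comparator wrote. Since there are $\calO(n\log n)$ comparators and each contributes $\calO(d)$ gates, the resulting circuit has size $\calO(dn\log n)$, matching the claimed bound. For the depth, each of the $\calO(\log n)$ comparator-layers of the sorting network is replaced by a single layer of comparison subcircuits that run in parallel (their inputs are disjoint) and contribute $\calO(\log d)$ to the depth; composing the $\calO(\log n)$ layers gives total depth $\calO(\log n)\cdot\calO(\log d) = \calO(\log d\log n)$, as required. Correctness is immediate: the wiring pattern is unchanged, so the composed circuit computes exactly the sorted permutation of its inputs.

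There is no genuinely hard step here; the argument is a black-box composition of a known sorting network with a known $d$-bit comparator implementation. The only point that needs a little care is verifying that a comparator truly produces \emph{both} the min and the max (not merely a comparison bit) and that the two outputs are delivered in the correct order -- but this is precisely what \cref{def: comparators} and \cref{lem: comparators} supply. I would also remark that one could substitute Batcher's bitonic network for AKS, at the cost of weakening the comparator-depth from $\calO(\log n)$ to $\calO(\log^2 n)$ and hence the circuit depth to $\calO(\log d\log^2 n)$; AKS is used exactly to obtain the sharper depth bound stated in the lemma, which is what feeds into the depth analysis of \cref{algo: pram-gen-ar} in \cref{prop: pram-gen-ar}.
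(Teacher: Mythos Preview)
Your proposal is correct. The paper does not actually supply a proof of \cref{lem: sorting-networks}; it simply cites the result to \citep{ajtai19830} and uses it as a black box. Your argument---taking the AKS $\calO(n\log n)$-size, $\calO(\log n)$-depth comparison network and instantiating each comparator with the $\calO(d)$-size, $\calO(\log d)$-depth subcircuit of \cref{lem: comparators}---is exactly the standard derivation that the citation is meant to summarize, and your size and depth accounting is right.
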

\begin{proposition}\label{prop: pram-circuit}
There exists an {\em $(\inputLength\cdot d, \calO(Nd\cdot \log^2{N}), \calO(\log{d}\log^2{N}), \calO(Nd\log{N}))$-arithmetic circuit} that solves the multiple-query associative recall problem.
\end{proposition}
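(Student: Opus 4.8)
The plan is to ``unroll'' the parallel algorithm \cref{algo: pram-gen-ar} into a single fixed arithmetic circuit. The enabling observation is that the entire control flow of \cref{algo: pram-gen-ar} --- the ranges of all the nested \texttt{for} loops, the index sets $I_k^{\bm{x}}$ and $J_k^{\bm{x}}$ of lines \ref{step: i-def} and \ref{step: j-def}, the sizes $2^k$ passed to every call of \texttt{pbs-key-values}, and the processor assignments --- depends only on $N$ and not on the input $\bm{u}$. Hence every input-dependent decision (the branches inside \texttt{pbs-key-values}, the ``is there a match'' tests) can be realised by multiplexing gates, exactly as was already done for \texttt{pbs-key-values} in \cref{fig: pbs-circuit}, and the whole computation collapses to a data-independent DAG; this is also what rules out the naive route through \cref{algo: seq-gen-ar}, whose circuit would have $\Omega(N)$ depth.

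Concretely, I would assemble the circuit in the three modular stages of \cref{fig: pram-gen-ar-circuit}. First, for each level $k\in\{0,\dots,\log(N/3)-1\}$ and each admissible $\bm{x}$, place a pair of sorting networks (\cref{def: sorting-networks}, \cref{lem: sorting-networks}) producing $\bm{k}_{\mathrm{sorted}}^{k\bm{x}}$ and $\bm{q}_{\mathrm{sorted}}^{k\bm{x}}$; to recover the sorting permutations $I_{\mathrm{permuted}}^{k\bm{x}},J_{\mathrm{permuted}}^{k\bm{x}}$ one attaches to each element its (constant, $N$-determined) original index and lets each comparator swap the attached index whenever it swaps the value, a constant-factor overhead. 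Second, feed the two sorted arrays into a copy of the \texttt{pbs-key-values} circuit of \cref{prop: pbs-circuit} with $n=m=2^k$. Third, following \eqref{eq: comparison-gate-neq}, hang a comparison network (\cref{lem: comparators}) off each output entry to produce $\bm{c}_{\mathrm{dec}(i)}^{k}=\mathbbm{1}\{C_k[J_{\mathrm{permuted}}^{k\bm{x}}(\mathrm{dec}(i))]\neq 2^{k}\}$ and emit $\bm{c}_{\mathrm{dec}(i)}^{k}\cdot I_{\mathrm{permuted}}^{k\bm{x}}(C_k[\cdot])$; since each query index $i$ appears in only $\calO(\log N)$ of the calls and at most one contributes a genuine match, the per-$i$ outputs are combined by a single summation gate.

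The complexity accounting then essentially transcribes the proof of \cref{prop: pram-gen-ar}, replacing PRAM work and time by circuit size and depth and correcting only the two operations whose circuit cost differs from their RAM cost: a $d$-bit comparison is a comparison network of size $\calO(d)$ and depth $\calO(\log d)$ by \cref{lem: comparators} (rather than $\Theta(d)$ sequential work), and a binary search over a sorted length-$n$ array of $d$-bit entries becomes a subcircuit of size $\calO(nd)$ and depth $\calO(\log n\log d)$, precisely as already charged inside \cref{prop: pbs-circuit}. At level $k$ there are $\tfrac{N}{6\cdot 2^{k}}$ blocks, and each block --- its two sorting networks and its \texttt{pbs-key-values} circuit alike --- has size $\calO(d\,2^{k}\log 2^{k})$; summing over the blocks and then over $k$ gives total size $\calO(Nd\log^{2}N)$, with the $\calO(Nd\log N)$ output-combination gates lower order. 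For the depth, the $k$-loop and the $\bm{x}$-loop are parallel, so the critical path is a constant-length composition of one sorting network of depth $\calO(\log d\log N)$, one size-$(N/6)$ instance of \texttt{pbs-key-values} of depth $\calO(\log^{2}N\log d)$ by \cref{prop: pbs-circuit}, and the output comparator and multiplier, for a total of $\calO(\log d\log^{2}N)$ --- the point where comparison networks beat the $\calO(d\log^{2}N)$ PRAM time. The width is at most the $\calO(N\log N)$ processors of \cref{algo: pram-gen-ar} times the $\calO(d)$ gates to handle a $d$-bit word, i.e. $\calO(Nd\log N)$, and the number of input variables is the $\inputLength\,d$ bits of $\bm{u}$.

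The step I expect to be the real obstacle is turning ``query $i$ matched the key at sorted position $C_k[\cdot]$'' into the value $\bm{v}_{j+1}$ that must actually be output: a naive index-to-value multiplexer costs $\calO(Nd)$ per query and would blow the size up to $\calO(N^{2}d)$, breaking the bound. I would avoid this by threading the companion value $\bm{v}_{j+1}$ next to its key $\bm{k}_j$ everywhere --- the sorting networks permute the value array identically, and the binary-search subcircuit of \texttt{pbs-key-values}, together with the ``fell off the left end'' wires of \cref{fig: pbs-circuit} that read off $\bm{k}_x$, is modified to also carry out the accompanying value rather than only its position. The care needed is to check that this augmentation slots into the recursive wiring of \cref{fig: pbs-circuit} without changing its asymptotics (it only duplicates $\calO(d)$ wires per gate) and that the final selection correctly suppresses the spurious $C_k[\cdot]=2^{k}$ case through $\bm{c}_{\mathrm{dec}(i)}^{k}$; granting that, the claimed $(\inputLength\cdot d,\ \calO(Nd\log^{2}N),\ \calO(\log d\log^{2}N),\ \calO(Nd\log N))$-arithmetic circuit is obtained.
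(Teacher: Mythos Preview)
Your proposal is correct and follows essentially the same approach as the paper: unroll \cref{algo: pram-gen-ar} into the three modular stages of \cref{fig: pram-gen-ar-circuit} (sorting networks, \texttt{pbs-key-values} circuits, output comparisons), then do the same per-level size/depth/width accounting via \cref{lem: sorting-networks}, \cref{prop: pbs-circuit}, and \cref{lem: comparators}. Your extra care about threading $\bm{v}_{j+1}$ alongside $\bm{k}_j$ through the sorting networks and the binary-search wiring is a useful elaboration of a step the paper leaves implicit in \cref{fig: pram-gen-ar-circuit}, but it is a refinement of the same argument rather than a different route.
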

\begin{proof}
We note here that for each $k$, there are $\frac{\frac{N}{3}}{2^{k+1}}$ parallel problems of size $2^k$ for both the sorting networks and the ${\tt pbs-key-values}$ circuit. Using \cref{lem: sorting-networks}, the cumulative size of these sorting networks is $\calO(d \cdot N\log^2{N})$ (see \eqref{eq: work-complexity}) with overall depth $\calO(\log{d}\log{N})$.

Similarly, the next layer again runs $\sum_{k=0}^{\log{\frac{N}{3}}-1}\frac{\frac{N}{3}}{2^{k+1}}$-many circuits for ${\tt pbs-key-values}$ each of which has size $\calO(2^kd(\log{2^k} + \log{2^k})) = O(d\cdot 2^k\log{2^k})$, depth $\calO(\log^2{2^k}\log{d})$ and width $\calO(2^k)$ (\cref{prop: pbs-circuit}). Again, the cumulative size of this layer is given by $\calO(Nd\cdot\log^2{N})$~(see \eqref{eq: work-complexity}).
Since we run each of these circuits in parallel, the depth of this layer is again $\calO(\log{d}\log^2(N))$ while the width is $\calO(N \cdot \log{N})$. 

Finally, we perform $\frac{N}{3}\log{\frac{N}{3}}$ comparisons at the end of $d$-bit strings in parallel which results in size $\calO(N\log{N}\cdot d)$, depth $\calO(\log{d})$ and width $\calO(N\log{N}\cdot d)$~(\cref{lem: comparators}). Therefore, the resulting arithmetic circuit has size $\calO(d \cdot N\log^2{N} + Nd \cdot \log^2{N} + N\log{N}\cdot d) = O(Nd\log^2{N})$, depth $\calO(\log{d}\log^2{N})$ and width $\calO(Nd\log{N})$.
\end{proof}
\subsubsection{The Resulting \Coyote{} Model}
\label{sec:general-AR-coyote}
As we have an arithemtic circuit for solving the multiple-query associative recall problem, we can now invoke \cref{thm: gen-ac} to claim that there is a corresponding \Coyote{} model that solves the multiple-query associative recall problem with $\tilde{\calO}(N\log{c})$ parameters and $\tilde{\calO(1)}$ layers.
\begin{theorem}
    \label{thm: gen-ar-coyote}
     There exists a $\coyoteTuple{\inputLength d}{\tilde{\calO}(1)}{\tilde{\calO}(1)}{\tilde{\calO}(N d)}{\tilde{\calO}(1)}$ solves the multiple-query associative recall problem.
\end{theorem}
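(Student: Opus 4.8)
The plan is to push a parallel algorithm for $\Task$ all the way down to a {\Coyote} operator, using \cref{thm: gen-ac} (equivalency of arithmetic circuits and {\Coyote}) as the final bridge. The chain is: parallel algorithm $\to$ uniform arithmetic circuit of small depth and width $\to$ {\Coyote} model with $\tilde{\calO}(1)$ layers. First I would note why the naive route fails: the associative-array algorithm of \cref{prop: seq-gen-ar} solves $\Task$ in $\calO(dN\log N)$ time but is inherently sequential, so the obvious circuit realizing it has depth $\Omega(N)$; feeding that into \cref{thm: gen-ac} produces a {\Coyote} model with $\Omega(N)$ layers and hence near-quadratic cost, which defeats the purpose. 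So the first real step is \cref{algo: pram-gen-ar}: split the index range along dyadic intervals so that every pair $(i,j)$ with $j<i$ is compared inside exactly one sub-block (the invariant $\calI_i = [i-1]$ of \cref{prop: pram-gen-ar}), and within a sub-block reduce key-matching to the multiple-search problem solved by parallel binary search (\cref{algo: pbs-key-values}). This gives work $\calO(Nd\log^2 N)$ but, crucially, depth only $\calO(d\log^2 N)$.

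Second, I would compile the parallel algorithm into a \emph{uniform} arithmetic circuit, which is the content of \cref{prop: pram-circuit}. The point that makes this possible is that the combinatorial skeleton --- which indices $I_k^{\bm x}, J_k^{\bm x}$ feed which sorting network and which call to \texttt{pbs-key-values} --- depends only on $N$ and can be hard-wired; only the comparisons depend on the input. Concretely: (a) replace each sort by an AKS-style sorting network over $d$-bit entries (\cref{lem: sorting-networks}: size $\calO(dn\log n)$, depth $\calO(\log d\log n)$); (b) realize each branch test $\bm q[\mathrm{mid}]\le\bm k_x$, $\bm q[\mathrm{mid}]>\bm k_y$ and the inner binary search by comparator networks (\cref{lem: comparators}, \cref{prop: pbs-circuit}); and (c) make the control flow of \texttt{pbs-key-values} straight-line by the dead-branch trick, carrying the start/end key indices as arithmetic quantities $z_s, z_t$ that collapse to $0$ when a recursive branch is not taken, so a skipped subproblem contributes nothing. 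Summing the geometric series over the $\log(N/3)$ dyadic levels (as in \eqref{eq: work-complexity}) yields an arithmetic circuit of size $\calO(Nd\log^2 N)$, depth $\calO(\log d\log^2 N)$, and width $\calO(Nd\log N)$ that outputs, for each query, the associated value (or $0$ if there is no match).

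Third, I would invoke \cref{thm: gen-ac}: an $(nd, s, \Delta, w)$-arithmetic circuit is simulated by a {\Coyote} model with inner width $\calO(w)$, $\calO(\Delta\log w)$ layers, and inner head dimension $d$. Plugging in $s=\calO(Nd\log^2N)$, $\Delta=\calO(\log d\log^2 N)$, $w=\calO(Nd\log N)$ gives $\calO(\Delta\log w)=\tilde{\calO}(1)$ layers and inner width $\tilde{\calO}(Nd)$, and by \cref{prop: single-baseconv} each layer costs $\tilde{\calO}(Nd)$ parameters and runtime; so the whole model is $\coyoteTuple{\inputLength d}{\tilde{\calO}(1)}{\tilde{\calO}(1)}{\tilde{\calO}(Nd)}{\tilde{\calO}(1)}$. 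Since distinct tokens may be embedded into $\{0,1\}^{\calO(\log c)}$, taking $d=\calO(\log c)$ recovers the informal $\tilde{\calO}(N\log c)$-parameter, $\tilde{\calO}(1)$-layer statement (\cref{thm: indep-genar}).

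I expect the main obstacle to be the arithmetic-circuit compilation of \texttt{pbs-key-values} --- making the recursion data-oblivious and bounding its depth by $\calO(\log n(\log m + \log n\log d))$ rather than the $\Omega(n)$ one gets naively --- which is exactly where the dyadic / parallel-binary-search structure and the careful comparator-depth accounting do the heavy lifting. Everything after that (assembling sorting networks and \texttt{pbs-key-values} into the circuit for \texttt{Parallel-MQAR}, then applying \cref{thm: gen-ac}) is bookkeeping of poly-log factors.
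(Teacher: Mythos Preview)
Your proposal is correct and follows essentially the same route as the paper: build the dyadic-interval parallel algorithm (\cref{algo: pram-gen-ar}, \cref{prop: pram-gen-ar}), compile it via sorting networks and comparator gadgets into the arithmetic circuit of \cref{prop: pram-circuit}, and then apply \cref{thm: gen-ac} to read off the {\Coyote} parameters. The paper's actual proof of \cref{thm: gen-ar-coyote} is the single line ``directly apply \cref{thm: gen-ac} to the circuit of \cref{prop: pram-circuit},'' and you have accurately reconstructed the full chain of supporting results that precedes it.
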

\begin{proof}
    Directly applying \cref{thm: gen-ac} yields a \Coyote\ model with the number of layers $\calO(\log{d} \cdot \log^2{N} \cdot \log{Nd\log{N}}) = \calO(1)$ layers while the claim on the input and inner dimensions follow trivially.
\end{proof}

 \subsection{Data-Dependent Convolutions}
 \label{sec: data-dep-ar}
\subsubsection{Introduction}
In this section, we are again concerned with solving the multiple-query associative recall problem (\cref{sec: intro-general-ar}). However, in contrast to \cref{sec: pram-algo}, which yields a circuit that is unchanged and works for {\em all} inputs, we instead take the viewpoint of adapting the sequence mixing weights (\cref{sec:ar-background}) with respect to the particular sequence that the model receives as input. More specifically, we take the distance between the tokens in the sequence as a measure for designing data-dependent convolutions. 
\paragraph{Setup.}
To formally setup the problem, as in our discussion of designing a parallel algorithm, we consider the following problem description of the multiple-query associative recall problem. 
\begin{displayquote}
    Suppose we are given an input
$
\bm{u}[0 \cdots N-1] \triangleq \left\{\paren{\bm{k}_0, \bm{v}_0, \bm{q}_0}, \ldots, \paren{\bm{k}_{\frac{N}{3}-1}, \bm{v}_{\frac{N}{3}-1}, \bm{q}_{\frac{N}{3}-1}}\right\}
$
with each $\bm{k}_i, \bm{v}_i, \bm{q}_i \in C$. Here, each token is embedded using the standard one-hot encoding in $\{0,1\}^c$ (i.e. we assume $d=c$).\footnote{Our arguments do need $c=d$. However we do not need $d=N$ but we made this simplification for ease of presentation.} Our goal is again to check, for each $1 \le i \le \frac{N}{3}-1$, whether there exists $0 \le j < i$ such that $\bm{q}_i \equiv \bm{k}_j$, and if so, output $\bm{v}_{j}$. 

Here, we define the interaction distance between the $i$th query $\bm{q}_i$ and the matching key $\bm{k}_j$ as $i-j$. We then also assume that {\em number of distinct interaction distances} is bounded by $t$. 
\end{displayquote}
We can identify these distances using an autocorrelation~\citep{chatfield1995the}, which has an elegant underlying formulation. We will briefly introduce the relevant mathematical machinery in the context of elucidating the data-dependent model that we seek to develop in the sequel. 

\paragraph{Autocorrelations}
We introduce autocorrelation convolutions. Let $\tilde{\bm{u}}[t]:= \bm{u}[-t]$, then the cross correlation of two vectors $\bm{u}$ and $\bm{v}$ is given by 
\[
\bm{u} \star \bm{v} \triangleq \tilde{\bm{u}} \ast \bm{v}.
\]
The {\em autocorrelation} of a vector $\bm{u} \in \R^{n}$ is the cross correlation of $\bm{u}$ with itself. Moreover, in terms of polynomials, we have
$\tilde{\bm{u}}(X) = X^{n-1}\cdot \bm{u}(1/X)$. Thus, in analogy with our interpretation of convolution in terms of polynomial multiplication, we characterize the autocorrelation of a vector $\bm{u} \in \R^n$, given by $\tbf{w} \in \R^n$ as follows:
\begin{equation*}
    \label{eq: auto-cor}
    \tbf{w} = \coeff\paren{\bm{u}(X) \cdot \tilde{\bm{u}}(X) \mod{X^{n}-1}}.
\end{equation*}

\subsubsection{\Coyote\ with kernels generated using Auto-Correlation}
\label{app-theory-autocorrelation}
We are now ready to describe the model that solves the multiple-query associative recall problem using data-dependent kernels derived using auto-correlations. 
\paragraph{The Input-Dependent Kernels.}
There are several potential strategies to identify the best token-interaction distances for an input using a convolution. Here we will focus on an autocorrelation convolution for exposition. Autocorrelation allows us to identify the top $t$ distinct shifts of the sequence that result in highly overlapping values (e.g. matches between query and keys in our associative recall setting). We can then construct convolution filters that perform each of these $t$ shifts. 
That is, we define a function ${\tt Top}$ such that ${\tt Top}\paren{u \star u, t}$ returns a list of the top $t$ shifts $\{s_{\ell}\}_{\ell \in [t]}$. We then use these top $t$ distances $\{s_{\ell}\}_{\ell \in [t]}$ to define the following two kernels:
\begin{equation}
\label{eq: data-dependent-kernels}
\begin{aligned}
\tbf{h}^{k}(X) &\equiv \sum_{\ell \in [t]} X^{s_{\ell} + (\ell -1) \cdot \frac{N}{3}},\\
\tbf{h}^{v}(X) &\equiv \sum_{\ell \in [t]} X^{s_{\ell} - 1 + (\ell -1)\cdot \frac{N}{3}}.\\
\end{aligned}
\end{equation}
Here, we note that we only only have two kernels as we will assume $N'=t\frac{N}{3}$ and the shift will be done in "parallel." Obviously, one can instead define $t$ distinct shift kernels but then there is a cost of $\calO(t)$ in the number of layers.

\paragraph{Projections.}
We define the following projections $\tbf{K}, \tbf{Q}, \tbf{V} \in \{0,1\}^{N\times d}$ of the input that we shall use below using \eqref{eq: kqv-indices}.
\begin{equation}
\label{eq: kqv-projections}
\begin{aligned}
\tbf{K}[i,:] &:= 
\begin{cases}
\bm{u}[i,:] &\text{if }i \in \calK,\\
\tbf{0}^d &\text{otherwise}
\end{cases},\\
\tbf{Q}[i,:] &:= \begin{cases}
\bm{u}[i,:] &\text{if }i \in \calQ,\\
\tbf{0}^d &\text{otherwise}
\end{cases},\\
\tbf{V}[i,:] &:= \begin{cases}
\bm{u}[i,:] &\text{if }i \in \calV,\\
\tbf{0}^d &\text{otherwise}
\end{cases},\\
\end{aligned}
\end{equation}

Finally, we present the \Coyote{} model that solves the multiple-query associative recall problem with input-dependent kernels using $\calO(1)$ layer and $\calO(t \cdot Nd)$-many parameters.
\begin{theorem}\label{thm: input-dep-genar}
    There exists a $\Coyote$ model with gated data-dependent convolutions that solves the multiple-query associative recall problem on inputs from $\{0,1\}^{3N\times c}$ with the total number of distinct interaction distances bounded by $t$ in $\calO(1)$ layers and $\calO(t\cdot Nc)$ total parameters.
\end{theorem}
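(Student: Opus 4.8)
The plan is to use the autocorrelation of the input to read off the (at most $t$) interaction distances $\{s_\ell\}_{\ell\in[t]}$ occurring in the sequence, to build the two data-dependent shift kernels $\bm{h}^k,\bm{h}^v$ of \eqref{eq: data-dependent-kernels}, and then to run a constant-depth stack of $\Coyote$ layers that lays out $t$ appropriately shifted copies of the key, value, and query streams side by side inside the inner sequence dimension $N'=\calO(tN)$, uses one gating step to detect query--key matches and a second to route each matching value to its query position, and finally folds the $t$ blocks back down. Every individual operation is either a plain $\Coyote$ layer or one of the $\calO(1)$-layer gadgets ({\tt shift}, {\tt remember}, linear maps, elementwise gating) from \cref{lmm:primitives}, \cref{prop: prim-shift}, and \cref{sec: primitives}, composed via \cref{lem: stacking-layers}, so the whole construction has $\calO(1)$ layers.

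In more detail I would proceed as follows. \textbf{(Step 1: projections.)} Using the one-hot block structure of \cref{rem: attn-input}/\eqref{eq: kqv-indices}, a single $\Coyote$ layer with an $\calO(c^2)$-parameter linear map produces the masked streams $\bm{Q},\bm{K},\bm{V}\in\{0,1\}^{N\times c}$ supported on the query, key, and value positions respectively. \textbf{(Step 2: find the distances and build the kernels.)} Form the (cross-)correlation of the query and key streams as a data-dependent convolution against the reversed input, followed by a channel-summing linear map; the shifts at which the result is nonzero are precisely the interaction distances, of which there are at most $t$, so ${\tt Top}(\cdot,t)$ returns $\{s_\ell\}_{\ell\in[t]}$, from which $\bm{h}^k,\bm{h}^v$ are assembled exactly as in \eqref{eq: data-dependent-kernels}. \textbf{(Step 3: lay out shifted copies.)} A fixed kernel $\sum_{\ell\in[t]}X^{(\ell-1)N/3}$ replicates $\bm{Q}$ into the $t$ blocks of the length-$N'$ inner dimension; convolving $\bm{K}$ with $\bm{h}^k$ and $\bm{V}$ with $\bm{h}^v$ places, in block $\ell$, the key $\bm{k}_j$ and its value $\bm{v}_j$ exactly at the slot occupied by the query $\bm{q}_{j+s_\ell}$ (the $-1$ offset in $\bm{h}^v$ relative to $\bm{h}^k$ absorbs the fixed gap between a key slot and its value slot). \textbf{(Step 4: match and route.)} Gating the replicated queries against the shifted keys yields $\bm{q}_i\odot\bm{k}_{i-s_\ell}$ in block $\ell$, which, both being one-hot, equals $\bm{q}_i$ when $\bm{q}_i\equiv\bm{k}_{i-s_\ell}$ and $\bm{0}$ otherwise; a channel-sum-and-broadcast (an all-ones $c\times c$ map) turns this into a $\{0,1\}$ mask, which we gate once more against the shifted values to zero out every non-matching slot. \textbf{(Step 5: fold.)} Shifting the $t$ blocks back onto one another (the {\tt shift} primitive, or a fixed kernel) and adding them, each query position $i$ receives $\sum_{\ell}(\text{mask})\cdot\bm{v}_{i-s_\ell}$; since keys are distinct, exactly one block fires per query, so the output at $i$ is the desired $\bm{v}_j$.

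For the resource count: the two data-dependent kernels $\bm{h}^k,\bm{h}^v$ have support of size $N'=\calO(tN)$ in each of the $c$ channels, contributing $\calO(t\cdot Nc)$ parameters; all other kernels are fixed and of the same size or smaller, and all linear maps are $c\times c$, hence $\calO(c^2)\le\calO(tNc)$. Together with the $\calO(1)$ layer count from \cref{lem: stacking-layers}, this yields a $\Coyote$ model with $\calO(1)$ layers, inner sequence length $\calO(tN)$, inner feature dimension $c$, and $\calO(t\cdot Nc)$ parameters, as claimed.

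The step I expect to be the main obstacle is the correctness of the autocorrelation-based distance selection in Step 2: the raw autocorrelation $\bm{u}\star\bm{u}$ scores \emph{every} coincidental token repeat at a given shift (value--value and query--query repeats as well as the query--key coincidences that matter), so one must argue that every genuine interaction distance still survives ${\tt Top}(\cdot,t)$ --- the clean fix is to correlate the query stream against the key stream, after which the supported shifts are exactly the $\le t$ interaction distances and any extra shift entering ${\tt Top}$ is harmless (at such a shift the Step-4 gating simply finds no match and contributes $\bm{0}$). A secondary fuss is the mod-$3$ index bookkeeping that pins down the precise exponents in $\bm{h}^k,\bm{h}^v$, together with the fact, already flagged in the main text, that these input-dependent filters are not causal; making the construction causal is not required by the statement and would need extra care.
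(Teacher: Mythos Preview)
Your proposal is correct and follows essentially the same construction as the paper: lift to inner length $N'=\Theta(tN)$, replicate $\bm{Q}$ across $t$ blocks while the data-dependent kernels $\bm{h}^k,\bm{h}^v$ deposit each key and value shifted by $s_\ell$ into block $\ell$, gate to detect matches, ``smear'' the hit into an all-ones mask (the paper calls this map ${\tt Linear}_{\tbf{E}}$), gate once more against the shifted values, and sum the blocks. Your cross-correlation of the query stream against the key stream in Step~2 is a cleaner variant of the paper's autocorrelation step, and your flagged caveats (spurious shifts being harmless, the mod-$3$ index bookkeeping, non-causality) match the paper's own caveats.
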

\begin{proof}
    We note that we have the input dimension $d = c$. Now, for any input sequence $\bm{u} \in \{0,1\}^{N\times d}$, we get the input-dependent kernels as in \eqref{eq: data-dependent-kernels} using autocorrelation of the input. We will now outline the following computations for the \Coyote{} layers:
    \begin{align}
        \bm{y}  &= {\tt Linear}_{\tbf{Q}}(\bm{u}) \odot \paren{\tbf{h}^{K} * {\tt Linear}_{\tbf{K}}(\bm{u})}\nonumber \\
                &= \tbf{Q} \odot \paren{\tbf{h}^{K} * \tbf{K}} \label{eq: y-data-ind}\\
        \bm{z}  &= {\tt Linear}_{\tbf{E}}(\bm{y}) \odot \paren{\tbf{h}^{V} * {\tt Linear}_{\tbf{V}}(\bm{u})}\nonumber \\
                &= \tbf{E} \odot \paren{\tbf{h}^{V} * \tbf{V}},  \label{eq: z-data-ind}   
    \end{align}
    where we have the linear projections ${\tt Linear}_{\tbf{Q}}(\bm{u}) = \tbf{Q}$, ${\tt Linear}_{\tbf{K}}(\bm{u}) = \tbf{K}$, ${\tt Linear}_{\tbf{V}}(\bm{u}) = \tbf{V}$ and ${\tt Linear}_{\tbf{E}}(\bm{y}) = \tbf{E}$ defined as 
    \begin{equation}
    \label{eq: E-def}
        \tbf{E}[i,:] := {\tt Linear}_{\tbf{E}}(\bm{y})[i,:] = 
        \begin{cases}
            \tbf{1}^d &\text{if }\exists\ j \in [d]\text{ such that }\bm{y}[i,j] = 1\\
            \tbf{0}^{d} &\text{otherwise}
        \end{cases}.
    \end{equation}
    Here, we will first present the argument for the special case when we have $t = 1$ as that will help us elucidate the general case. To this end, as the kernels from \eqref{eq: data-dependent-kernels} for $t=1$ are given by
    \begin{equation}
    \label{eq: data-dependent-kernels-t1}
    \begin{aligned}
    \tbf{h}^{k}(X) &\equiv X^{s_{1}};\\
    \tbf{h}^{v}(X) &\equiv X^{s_{1} - 1},
    \end{aligned}
    \end{equation}
    we observe that convolving with these kernels $\tbf{h} \ast \bm{y}$ is equivalent to operating with the following primitives (\cref{sec: primitives}):
    \begin{equation}
    \label{eq: data-dependent-kernels-prim1}
    \begin{aligned}
    &{\tt shift\_down}(\bm{y}, s_{1});\\
    &{\tt shift\_down}(\bm{y}, s_{1}-1).
    \end{aligned}
    \end{equation}
    We note that we shift down instead of shifting up as the index of the top-left entry is $(0,0)$. We can then write down the computations performed in \eqref{eq: y-data-ind} and \eqref{eq: z-data-ind} as follows:
    \begin{align}
        \bm{y}  &= \tbf{Q} \odot {\tt shift\_down}(\tbf{K}, s_{1}) \label{eq: y-data-ind-t1}\\
        \bm{z}  &= \tbf{E} \odot {\tt shift\_down}(\tbf{V}, s_{1}-1),  \label{eq: z-data-ind-t1}   
    \end{align}
    We begin by examining $\bm{y}$ below:
    \begin{align}
        \bm{y}[i,:] 
        &= \paren{\tbf{Q} \odot {\tt shift\_down}(\tbf{K}, s_1)}[i,:]\nonumber \\
        &= \tbf{Q}[i,:] \odot {\tt shift\_down}(\tbf{K}, s_1)[i,:] \label{eq: gated-row-independency}  \\
        &= \paren{
        \begin{cases}
            \bm{u}[i,:] &\text{if }i \in \calQ\\
            \tbf{0}^d &\text{otherwise}
        \end{cases}
        } \odot \paren{
        \begin{cases}
            \bm{u}[i-s_1,:] &\text{if }i - s_1 \in \calK,\\
            \tbf{0}^d &\text{otherwise}
        \end{cases}
        } \label{eq: key-indices-shift}\\
        &= \begin{cases}
            \bm{u}[i,:] \odot \bm{u}[i-s_1,:] &\text{if }i \in \calQ \text{ and }i-s_1 \in \calK,\\
            \tbf{0}^d &\text{otherwise}
        \end{cases} \nonumber
   \end{align}
    Here, we use the fact that the Hadamard product is row-independent in \eqref{eq: gated-row-independency}, and the definitions of the projections from \eqref{eq: kqv-projections} in \eqref{eq: key-indices-shift}. Examining the $j$th entry, we get
    \begin{align*}
        \bm{u}[i,j] \odot \bm{u}[i-s_1,j] &= 
        \begin{cases}
            1 & \text{if }i \in \calQ, i-s_1 \in \calK\text{ and }\bm{q}_{i} \equiv \bm{k}_{i-s_1} \equiv \tbf{e}_j\\
            0 & \text{otherwise.}
        \end{cases} 
   \end{align*}
   That is, we can express
   \begin{align}
        \bm{y}[i,:] 
        &= \begin{cases}
            \tbf{e}_j &\text{if }i \in \calQ, i-s_1 \in \calK\text{ and }\bm{q}_{i} \equiv \bm{k}_{i-s_1} \equiv \tbf{e}_j\\
            \tbf{0}^d &\text{otherwise}
        \end{cases} \label{eq: y-t1}.
   \end{align}
   Consequently, as per the definition in \eqref{eq: E-def}, we get
   \begin{align}
       \tbf{E}[i,:] = 
        \begin{cases}
            \tbf{1}^d &\text{if } i \in \calQ, i-s_1 \in \calK\text{ and }\bm{q}_{i} \equiv \bm{k}_{i-s_1} \\
            \tbf{0}^{d} &\text{otherwise}
        \end{cases} \label{eq: E-t1}
   \end{align}
   We can now finally specify the output $\bm{z}$ from \eqref{eq: z-data-ind-t1} as follows:
   \begin{align}
        \bm{z}[i,:] 
        &= \paren{\tbf{E} \odot {\tt shift\_down}(\tbf{V}, s_1-1)}[i,:]\nonumber \\
        &= \tbf{E}[i,:] \odot {\tt shift\_down}(\tbf{V}, s_1-1)[i,:] \label{eq: gated-row-independency-2}  \\
        &= \paren{
        \begin{cases}
            \tbf{1}^d &\text{if } i \in \calQ, i-s_1 \in \calK\text{ and }\bm{q}_{i} \equiv \bm{k}_{i-s_1} \\
            \tbf{0}^{d} &\text{otherwise}
        \end{cases}
        } \odot \paren{
        \begin{cases}
            \bm{u}[i-s_1+1,:] &\text{if }i - s_1 + 1 \in \calV,\\
            \tbf{0}^d &\text{otherwise}
        \end{cases}
        } \label{eq: value-indices-shift}\\
        &= \begin{cases}
            \bm{u}[i-s_1+1,:] &\text{if }i \in \calQ, i-s_1 \in \calK, i - s_1 + 1 \in \calV \text{ and }\bm{q}_{i} \equiv \bm{k}_{i-s_1}\\
            \tbf{0}^d &\text{otherwise}
        \end{cases} \nonumber\\
        &= \begin{cases}
            \bm{v}_{i-s_1} &\text{if }\bm{q}_{i} \equiv \bm{k}_{i-s_1}\\
            \tbf{0}^d &\text{otherwise}
        \end{cases} \label{eq: output-t1}
   \end{align}
   Again, we use the fact that the Hadamard product is row-independent in \eqref{eq: gated-row-independency-2}, and the definitions of the projections from \eqref{eq: kqv-projections} in \eqref{eq: value-indices-shift}. Overall, we have solved associative recall for all queries that have interaction distance exactly equal to $s_1$. 

   In order to generalize this to arbitrary $t \le \frac{N}{3}$, we first increase the internal dimension so that the input to the kernels $\bm{u}' \in \R^{(N\cdot t)\times d}$ in \eqref{eq: data-dependent-kernels} and the projections $\tbf{K',Q',V'} \in \R^{(N\cdot t)\times d}$ are given by 
   \[
   \bm{u}' \equiv 
   \begin{pmatrix}
                \tbf{0}^d\\
                \hline \\
                \vdots\\
                \hline\\
                \tbf{0}^d\\
                \hline\\
                 \bm{u}
    \end{pmatrix},
    \bm{K}' \equiv
   \begin{pmatrix}
                \tbf{0}^d\\
                \hline \\
                \vdots\\
                \hline\\
                \tbf{0}^d\\
                \hline\\
                 \tbf{K}
    \end{pmatrix}, 
    \tbf{Q}' \equiv 
   \begin{pmatrix}
               \tbf{Q}\\
                \hline \\
                \vdots\\
                \hline\\
                \tbf{Q}\\
                \hline\\
                 \tbf{Q}
    \end{pmatrix}, \tbf{V}' \equiv 
   \begin{pmatrix}
                \tbf{0}^d\\
                \hline \\
                \vdots\\
                \hline\\
                \tbf{0}^d\\
                \hline\\
                 \tbf{V}
    \end{pmatrix}, 
   \]
   We then observe that for $\tbf{h}^k_{\ell}(X) := X^{s_{\ell}}$ and $\tbf{h}^v_{\ell}(X) := X^{s_{\ell} - 1}$, we have
   \[
   \begin{aligned}
       \tbf{h}^{k}(X) &\equiv \sum_{\ell \in [t]} \tbf{h}^k_{\ell}(X)\cdot X^{(\ell -1) \cdot \frac{N}{3}},\\
        \tbf{h}^{v}(X) &\equiv \sum_{\ell \in [t]} \tbf{h}^k_{\ell}(X)\cdot X^{(\ell -1)\cdot \frac{N}{3}}.
   \end{aligned}
   \]
   In analogy with \eqref{eq: data-dependent-kernels-prim1}, we can then equivalently write
   \begin{align}
       \paren{\tbf{h}^{K} \ast \tbf{K}} 
       &\equiv 
       \begin{pmatrix}
                 \tbf{h}^{K}_{t}\\
                \hline \\
                \vdots\\
                \hline\\
                 \tbf{h}^{K}_{2}\\
                \hline\\
                 \tbf{h}^{K}_{1}
        \end{pmatrix}
        \ast
        \begin{pmatrix}
                \tbf{0}^d\\
                \hline \\
                \vdots\\
                \hline\\
                \tbf{0}^d\\
                \hline\\
                 \tbf{K}
    \end{pmatrix} \nonumber \\
    &\equiv \begin{pmatrix}
                 \tbf{h}^{K}_{t} \ast \tbf{K}\\
                \hline \\
                \vdots\\
                \hline\\
                 \tbf{h}^{K}_{2} \ast \tbf{K}\\
                \hline\\
                 \tbf{h}^{K}_{1} \ast \tbf{K}
    \end{pmatrix}
     \nonumber \\
     &\equiv \begin{pmatrix}
                {\tt shift\_down}(\tbf{K}, s_{t}) \\
                \hline \\
                \vdots\\
                \hline\\
                {\tt shift\_down}(\tbf{K}, s_{2})\\
                \hline\\
                {\tt shift\_down}(\tbf{K}, s_{1})
    \end{pmatrix}
     \nonumber.
   \end{align}
   Similarly, we also have
   \begin{align}
       \paren{\tbf{h}^{V} \ast \tbf{V}'} 
     &\equiv \begin{pmatrix}
                {\tt shift\_down}(\tbf{V}, s_{t}-1) \\
                \hline \\
                \vdots\\
                \hline\\
                {\tt shift\_down}(\tbf{V}, s_{2}-1)\\
                \hline\\
                {\tt shift\_down}(\tbf{V}, s_{1}-1)
    \end{pmatrix}
     \nonumber.
   \end{align}
   That is, the argument for $t=1$ now applies to each of the $t$ shifts as we now have ({\em cf.} \eqref{eq: y-data-ind}) 
   \begin{align*}
       \bm{y}' 
       &\equiv \tbf{Q}' \odot \paren{\tbf{h}^{V} \ast \tbf{V}}  \\
       &\equiv \begin{pmatrix}
               \tbf{Q}\\
                \hline \\
                \vdots\\
                \hline\\
                \tbf{Q}\\
                \hline\\
                 \tbf{Q}
    \end{pmatrix} \odot \begin{pmatrix}
                {\tt shift\_down}(\tbf{V}, s_{t}-1) \\
                \hline \\
                \vdots\\
                \hline\\
                {\tt shift\_down}(\tbf{V}, s_{2}-1)\\
                \hline\\
                {\tt shift\_down}(\tbf{V}, s_{1}-1)
    \end{pmatrix}  \\
    &\equiv \begin{pmatrix}
                \tbf{Q} \odot {\tt shift\_down}(\tbf{V}, s_{t}-1) \\
                \hline \\
                \vdots\\
                \hline\\
                \tbf{Q} \odot {\tt shift\_down}(\tbf{V}, s_{2}-1)\\
                \hline\\
                \tbf{Q} \odot {\tt shift\_down}(\tbf{V}, s_{1}-1)
    \end{pmatrix} \\
    &\equiv \begin{pmatrix}
                \bm{y}_t \\
                \hline \\
                \vdots\\
                \hline\\
                \bm{y}_2 \\
                \hline\\
                \bm{y}_1 \\
    \end{pmatrix},
   \end{align*}
   where, for each $\ell \in [t]$, we have ({\it cf.} \eqref{eq: y-t1})
   \[
   \bm{y}_{\ell}[i,:] \equiv \begin{cases}
            \tbf{e}_j &\text{if }i \in \calQ, i-s_{\ell} \in \calK\text{ and }\bm{q}_{i} \equiv \bm{k}_{i-s_{\ell}} \equiv \tbf{e}_j,\\
            \tbf{0}^d &\text{otherwise}
        \end{cases}.
   \]
   We then analogously get $\tbf{E}'$ as follows:
   \[
   \tbf{E}' \equiv {\tt Linear}_{\tbf{E}}(\bm{y}') \equiv
   \begin{pmatrix}
               {\tt Linear}_{\tbf{E}}(\bm{y}_t)\\
                \hline \\
                \vdots\\
                \hline\\
                {\tt Linear}_{\tbf{E}}(\bm{y}_2)\\
                \hline\\
                 {\tt Linear}_{\tbf{E}}(\bm{y}_1)
    \end{pmatrix}  \equiv  \begin{pmatrix}
               \tbf{E}_t\\
                \hline \\
                \vdots\\
                \hline\\
                \tbf{E}_2\\
                \hline\\
                 \tbf{E}_1
    \end{pmatrix}, 
   \]
   where, for each $\ell \in [t]$, we have ({\it cf.} \eqref{eq: E-t1})
   \[
   \tbf{E}_{\ell}[i,:] = 
        \begin{cases}
            \tbf{1}^d &\text{if } i \in \calQ, i-s_{\ell} \in \calK\text{ and }\bm{q}_{i} \equiv \bm{k}_{i-s_{\ell}} \\
            \tbf{0}^{d} &\text{otherwise}
        \end{cases}
   \]
   The output in the general case is then given by
   \begin{align}
       \bm{z}' 
       &\equiv \tbf{E}' \odot \paren{\tbf{h}^{V} * \tbf{V}'} \nonumber\\
       &\equiv \begin{pmatrix}
               \tbf{E}_t\\
                \hline \\
                \vdots\\
                \hline\\
                \tbf{E}_2\\
                \hline\\
                 \tbf{E}_1
    \end{pmatrix} \odot \begin{pmatrix}
                {\tt shift\_down}(\tbf{V}, s_{t}-1) \\
                \hline \\
                \vdots\\
                \hline\\
                {\tt shift\_down}(\tbf{V}, s_{2}-1)\\
                \hline\\
                {\tt shift\_down}(\tbf{V}, s_{1}-1)
    \end{pmatrix}  \nonumber \\
    &\equiv \begin{pmatrix}
                \tbf{E}_t \odot {\tt shift\_down}(\tbf{V}, s_{t}-1) \\
                \hline \\
                \vdots\\
                \hline\\
                 \tbf{E}_2 \odot {\tt shift\_down}(\tbf{V}, s_{2}-1)\\
                \hline\\
                 \tbf{E}_1 \odot {\tt shift\_down}(\tbf{V}, s_{1}-1)
    \end{pmatrix}  \nonumber \\
    &\equiv \begin{pmatrix}
                \bm{z}_t \\
                \hline \\
                \vdots\\
                \hline\\
                \bm{z}_2 \\
                \hline\\
                \bm{z}_1 \\
    \end{pmatrix}  \nonumber,
   \end{align}
   where, for each $\ell \in [t]$, we have ({\it cf.} \eqref{eq: output-t1})
   \[
   \bm{z}_{\ell}[i,:] \equiv 
        \begin{cases}
            \bm{v}_{i-s_{\ell}} &\text{if }\bm{q}_{i} \equiv \bm{k}_{i-s_{\ell}}\\
            \tbf{0}^d &\text{otherwise}
        \end{cases}
   \]
   Finally, we define the last output layer to compute 
   \(
   \bm{z}_{out}\equiv {\tt Linear}_{\mathrm{sum}}(\bm{z}') \equiv \sum_{\ell \in [t]} \bm{z}_{\ell}
   \)
   so that we have
   \[
   \bm{z}_{out}[i,:] \equiv 
        \begin{cases}
            \bm{v}_{i-s_{\ell}} &\text{if }\bm{q}_{i} \equiv \bm{k}_{i-s_{\ell}}\text{ for some }\ell \in [t]\\
            \tbf{0}^d &\text{otherwise}
        \end{cases}
   \]
To recall, we retrieved the top $t$ interaction distances of the input $\bm{u}$ using auto-correlation and defined the corresponding convolution kernels (\eqref{eq: data-dependent-kernels}). We then shifted djown the keys $\tbf{K}$ using the first kernel and gated with the corresponding queries $\tbf{Q}$ so that we got a match exactly when there exists a key that is at $s_{\ell}$ interaction distance from the corresponding query. After ``smearing'' this match to get $\tbf{E}$, we used it as a mask to retrieve the value in the next layer. Overall, since we have $t$ atomic kernels that perform $t$ shifts with each of these kernels using $\calO(Nd)$ parameters, we can conclude that the output solves the associative recall problem for all queries with exactly $\ell$ interaction distance from the corresponding keys for all $\ell \in [t]$ using $\calO(1)$ layers and $\calO(t \cdot Nc)$ parameters as we have $d = c$.
\end{proof}

\begin{table}[h!]
    \caption{Attention Training Settings}
    \centering
    \begin{tabular}{rccc}
    \toprule
    {} & 73M & 125M & 360M  \\
    \midrule
    Optimizer & \multicolumn{3}{c}{Adam} \\
    Optimizer momentum & \multicolumn{3}{c}{$\beta_1, \beta_2=0.9, 0.95$} \\
    Optimizer eps & \multicolumn{3}{c}{$1e-8$} \\
    Precision &  \multicolumn{3}{c}{BFloat16} \\
    \midrule
    Learning rate decay & \multicolumn{3}{c}{Cosine} \\
    Learning rate (min, base) & \multicolumn{3}{c}{8e-5, 8e-4} \\
    Global batch size & \multicolumn{3}{c}{256} \\ 
    Training iterations & \multicolumn{3}{c}{20000} \\
    Warmup Duration (Linear) & \multicolumn{3}{c}{0.01} \\
    Weight decay & \multicolumn{3}{c}{0.1} \\
    \midrule
    Num Layers & 6 & 12 & 24 \\
    Hidden Size & 704  & 768 & 1024 \\
    FFN Width & \multicolumn{3}{c}{4} \\
    Position Embeddings & \multicolumn{3}{c}{Rotary} \\
    Weight Tying &  \multicolumn{3}{c}{True} \\
    \midrule
    Number of Heads ($H$) & 8 & 12 & 16 \\
    \bottomrule 
    \end{tabular}
    \label{tab:attn-training-details}
\end{table}

\begin{table}[h!]
    \caption{Attention FLOPs Computation}
    \centering
    \begin{tabular}{rc}
    \toprule
    {} & Equation  \\
    \midrule
    Input Layer & $B \times V \times N \times D$ \\
    \midrule
     Sequence Mixer $\mathbf{Q}, \mathbf{K}, \mathbf{V}$ Projections & $B \times N \times D \times D \times 3$ \\
     Sequence Mixer Attention & $B \times H \times H \times D + H \times N \times N + B \times N \times N \times D$ \\
     Sequence Mixer Output Projection & $B \times N \times D \times D$ \\
     \midrule
     Channel Mixer (FFN Width $4$) &  $B \times D \times D \times 8 \times \frac{2}{3} \times N$ \\
     \midrule
     Language Modeling Head & $B \times V \times N \times D$ \\
    \bottomrule 
    \end{tabular}
    \label{tab:attn-flops}
\end{table}

\begin{table}[h!]
    \caption{Hyena~\citep{poli2023hyena} Training Settings}
    \label{tab:hyena-training-details}
    \centering
    \begin{tabular}{rccc}
    \toprule
    {} & 72M & 158M & 358M  \\
    \midrule
    Optimizer & \multicolumn{3}{c}{Adam} \\
    Optimizer momentum & \multicolumn{3}{c}{$\beta_1, \beta_2=0.9, 0.98$} \\
    Precision &  \multicolumn{3}{c}{BFloat16} \\
    \midrule
    Learning rate decay & \multicolumn{3}{c}{Cosine} \\
    Learning rate (min, base) & \multicolumn{3}{c}{(1e-5, 8e-4)} \\
    Global batch size & \multicolumn{3}{c}{256} \\ 
    Training iterations & \multicolumn{3}{c}{20000} \\
    Warmup Duration (Linear) & \multicolumn{3}{c}{0.01} \\
    Weight decay & \multicolumn{3}{c}{0.1} \\
    \midrule
    Num Layers & 8 & 18 & 24 \\
    Hidden Size & 768  & 864 & 1024 \\
    FFN Width & \multicolumn{3}{c}{2} \\
    Position Embeddings & \multicolumn{3}{c}{None} \\
    Weight Tying &  \multicolumn{3}{c}{True} \\
    \midrule
    Short Conv. Filter Size & \multicolumn{3}{c}{3} \\
    Exp. Mod. Decay (Fast, Slow) & \multicolumn{3}{c}{0.3, 1.2} \\
    Filter Sine Freq. (w) & \multicolumn{3}{c}{14} \\
    Filter Order & \multicolumn{3}{c}{64} \\
    Filter Inner MLP & \multicolumn{3}{c}{2} \\
    Filter Weight Decay & \multicolumn{3}{c}{0} \\
    \bottomrule 
    \end{tabular}
\end{table}

\begin{table}[h!]
    \caption{Hyena FLOPs Computation}
    \label{tab:hyena-flops}
    \centering
    \begin{tabular}{rc}
    \toprule
    {} & Equation  \\
    \midrule
    Input Layer & $B \times V \times N \times D$ \\
    \midrule
     Sequence Mixer Input Projection & $B \times N \times D \times D \times 3 + B \times N \times 9 \times D$ \\
     Sequence Mixer Long Convolution & $10 \times N \times \log(N) \times D \times B$ \\
     Sequence Mixer Short Convolution & $3 \times B \times N \times D$ \\
     Sequence Mixer Implicit MLP (Order $64$) & $D \times 64$ \\
     Sequence Mixer Output Projection & $B \times N \times D \times D$ \\
     \midrule
     Channel Mixer (FFN Width $2$) &  $B \times D \times D \times 2 \times 2 \times N$ \\
     \midrule
     Language Modeling Head & $B \times V \times N \times D$ \\
    \bottomrule 
    \end{tabular}
\end{table}

\begin{table}[h!]
    \caption{H3~\citep{fu2023simple} Training Settings}
    \label{tab:h3-training-details}
    \centering
    \begin{tabular}{rccc}
    \toprule
    {} & 72M & 168M & 357M  \\
    \midrule
    Optimizer & \multicolumn{3}{c}{Adam} \\
    Optimizer momentum & \multicolumn{3}{c}{$\beta_1, \beta_2=0.9, 0.95$} \\
    Precision &  \multicolumn{3}{c}{BFloat16} \\
    \midrule
    Learning rate decay & \multicolumn{3}{c}{Cosine} \\
    Learning rate (min, base) & \multicolumn{3}{c}{(8e-5, 8e-4)} \\
    Global batch size & \multicolumn{3}{c}{256} \\ 
    Training iterations & \multicolumn{3}{c}{20000} \\
    Warmup Duration (Linear) & \multicolumn{3}{c}{0.01} \\
    Weight decay & \multicolumn{3}{c}{0.1} \\
    Weight Tying &  \multicolumn{3}{c}{True} \\
    \midrule
    Num Layers & 8 & 19 & 33 \\
    Hidden Size & 720  & 864 & 1024 \\
    FFN Width & \multicolumn{3}{c}{2} \\
    Position Embeddings & \multicolumn{3}{c}{None} \\
    \midrule
    State Space Model State Size & \multicolumn{3}{c}{64} \\
    \bottomrule 
    \end{tabular}
\end{table}
\begin{table}[h!]
    \caption{RWKV~\cite{peng2023rwkv} Training Settings}
    \centering
    \begin{tabular}{rccc}
    \toprule
    {} & 72M & 169M & 351M  \\
    \midrule
    Optimizer & \multicolumn{3}{c}{Adam} \\
    Optimizer momentum & \multicolumn{3}{c}{$\beta_1, \beta_2=0.9, 0.999$} \\
    Optimizer eps & \multicolumn{3}{c}{$1e-8$} \\
    Precision &  \multicolumn{3}{c}{BFloat16} \\
    \midrule
    Learning rate decay & \multicolumn{3}{c}{Cosine} \\
    Learning rate (min, base) & \multicolumn{3}{c}{1e-5, 8e-4} \\
    Global batch size & \multicolumn{3}{c}{256} \\ 
    Training iterations & \multicolumn{3}{c}{20000} \\
    Warmup Duration (Linear) & \multicolumn{3}{c}{0.01} \\
    Weight decay & \multicolumn{3}{c}{0.1} \\
    Weight Tying &  \multicolumn{3}{c}{False} \\
    \midrule
    Num Layers & 6 & 12 & 20 \\
    Hidden Size & 624 & 768 & 984 \\
    Position Embeddings & \multicolumn{3}{c}{None} \\
    Initialization & \multicolumn{3}{c}{From Reference Impl.} \\
    \bottomrule 
    \end{tabular}
    \label{tab:rwkv-training-details}
\end{table}

\begin{table}[h!]
    \caption{RetNet~\citep{sun2023retentive} Training Settings}
    \centering
    \begin{tabular}{rc}
    \toprule
    {} & 152M  \\
    \midrule
    Optimizer & \multicolumn{1}{c}{Adam} \\
    Optimizer momentum & \multicolumn{1}{c}{$\beta_1, \beta_2=0.9, 0.98$} \\
    Optimizer eps & \multicolumn{1}{c}{1.0e-8} \\
    Precision &  \multicolumn{1}{c}{BFloat16} \\
    \midrule
    Learning rate decay & \multicolumn{1}{c}{Linear} \\
    Learning rate (min, base) & \multicolumn{1}{c}{(1e-5, 8e-4)} \\
    Global batch size & \multicolumn{1}{c}{256} \\ 
    Training iterations & \multicolumn{1}{c}{20000} \\
    Warmup Duration (Linear) & \multicolumn{1}{c}{0.01} \\
    Weight decay & \multicolumn{1}{c}{0.05} \\
    Normalization & \multicolumn{1}{c}{Layernorm} \\
    Weight Tying &  \multicolumn{1}{c}{True} \\
    \midrule
    Num Layers & 12 \\
    Hidden Size & 768 \\
    Value Hidden Size & 1280 \\
    Window Size & 128 \\
    Position Embeddings & \multicolumn{1}{c}{xPos (relative position embeddings)} \\
    Initialization & DeepNet \\
    \bottomrule 
    \end{tabular}
    \label{tab:renet-training-details}
\end{table}

\begin{table}[h!]
    \caption{Simple Long Convolution~\cite{fu2023simple} Training Settings}
    \label{tab:lc-training-details}
    \centering
    \begin{tabular}{rccc}
    \toprule
    {} & 76M & 128M & 360M  \\
    \midrule
    Optimizer & \multicolumn{3}{c}{Adam} \\
    Optimizer momentum & \multicolumn{3}{c}{$\beta_1, \beta_2=0.9, 0.95$} \\
    Precision &  \multicolumn{3}{c}{BFloat16} \\
    \midrule
    Learning rate decay & \multicolumn{3}{c}{Linear} \\
    Learning rate (min, base) & \multicolumn{3}{c}{8e-5, 8e-4} \\
    Global batch size & \multicolumn{3}{c}{256} \\ 
    Training iterations & \multicolumn{3}{c}{20000} \\
    Warmup Duration (Linear) & \multicolumn{3}{c}{0.01} \\
    Weight decay & \multicolumn{3}{c}{0.1} \\
    \midrule
    Num Layers & 6 & 12 & 24 \\
    Hidden Size & 704  & 864 & 1024 \\
    FFN Width & \multicolumn{3}{c}{4} \\
    Position Embeddings & \multicolumn{3}{c}{-} \\
    Weight Tying &  \multicolumn{3}{c}{True} \\
    \midrule
    Channels & \multicolumn{3}{c}{1} \\
    Lam & \multicolumn{3}{c}{0.001} \\
    Kernel Dropout & \multicolumn{3}{c}{0.1} \\
    Kernel LR & \multicolumn{3}{c}{$5e-5$} \\
    Activation & \multicolumn{3}{c}{GeLU} \\
    Exponential Modulation & \multicolumn{3}{c}{True} \\
    \bottomrule 
    \end{tabular}
\end{table}

\begin{table}[h!]
    \caption{Simple Long Convolution FLOPs Computation}
    \label{tab:lc-flops}
    \centering
    \begin{tabular}{rc}
    \toprule
    {} & Equation  \\
    \midrule
    Input Layer & $B \times V \times N \times D$ \\
    \midrule
     Sequence Mixer Long Convolution & $10 \times N \times \log(N) \times D \times B$ \\
     Sequence Mixer Output Projection & $B \times N \times D \times D$ \\
     \midrule
     Channel Mixer (FFN Width $4$) &  $B \times D \times D \times 8 \times \frac{2}{3} \times N$ \\
     \midrule
     Language Modeling Head & $B \times V \times N \times D$ \\
    \bottomrule 
    \end{tabular}
\end{table}

\begin{table}[h!]
    \caption{{\Coyote} Training Settings}
    \label{tab:coyote-training-details}
    \centering
    \begin{tabular}{rcc}
    \toprule
    {} & 168M & 354M  \\
    \midrule
    Optimizer & \multicolumn{2}{c}{Adam} \\
    Optimizer momentum & \multicolumn{2}{c}{$\beta_1, \beta_2=0.9, 0.95$} \\
    Precision &  \multicolumn{2}{c}{BFloat16} \\
    \midrule
    Learning rate decay & \multicolumn{2}{c}{Cosine} \\
    Learning rate (min, base) & \multicolumn{2}{c}{8e-5, 8e-4} \\
    Global batch size & \multicolumn{2}{c}{256} \\ 
    Training iterations & \multicolumn{2}{c}{20000} \\
    Warmup Duration (Linear) & \multicolumn{2}{c}{0.01} \\
    Weight decay & \multicolumn{2}{c}{0.1} \\
    \midrule
    Num Layers & 30 & 48 \\
    Hidden Size   & 852 & 1080 \\
    FFN Width & \multicolumn{2}{c}{2} \\
    Position Embeddings & \multicolumn{2}{c}{-} \\
    Weight Tying &  \multicolumn{2}{c}{True} \\
    \midrule
    Short Conv. Filter Size & \multicolumn{2}{c}{3} \\
    Exp. Mod. Decay (Fast, Slow) & \multicolumn{2}{c}{0.3, 1.2} \\
    Filter Sine Freq. (w) & \multicolumn{2}{c}{14} \\
    Filter Order & \multicolumn{2}{c}{64} \\
    Filter Inner MLP & \multicolumn{2}{c}{2} \\
    Filter Weight Decay & \multicolumn{2}{c}{0} \\
    \bottomrule 
    \end{tabular}
\end{table}

\begin{table}[h!]
    \caption{{\Coyote} FLOPs Computation}
    \label{tab:coyote-flops}
    \centering
    \begin{tabular}{rc}
    \toprule
    {} & Equation  \\
    \midrule
    Input Layer & $B \times V \times N \times D$ \\
    \midrule
     Sequence Mixer Long Convolution & $10 \times N \times \log(N) \times 0.5(D) \times B$ \\
     Sequence Mixer Short Convolution & $B \times N \times 0.5(D)$ \\
     Sequence Mixer Implicit MLP (Order $64$) & $0.5(D) \times 64$ \\
     Sequence Mixer Linear Projection & $B \times N \times D \times D$ \\
     \midrule
     Channel Mixer (FFN Width $2$) &  $B \times D \times D \times 2 \times 2 \times N$ \\
     \midrule
     Language Modeling Head & $B \times V \times N \times D$ \\
    \bottomrule 
    \end{tabular}
\end{table}

\end{document}